\theoremstyle{plain}
\def\eqref#1{equation~\ref{#1}}
\def\1{\bm{1}}
\def \Poly{\operatorname{Poly}(L_0,L_1, \sigma_0,\sigma_1, \frac{1}{1-\bone}, f(\bw_1)-f^*)}
\def\bnu{\boldsymbol{\nu}}
\DeclareMathAlphabet{\mathsfit}{\encodingdefault}{\sfdefault}{m}{sl}
\SetMathAlphabet{\mathsfit}{bold}{\encodingdefault}{\sfdefault}{bx}{n}
\def\gF{{\mathcal{F}}}
\def\bw{\boldsymbol{w}}
\def\bg{\boldsymbol{g}}
\def\bG{\boldsymbol{G}}
\def\bom{\boldsymbol{m}}
\newcommand{\E}{\mathbb{E}}
\newcommand{\btwo}{\beta_2}
\newcommand{\dE}{\mathbb{E}}
\newcommand{\fil}{\mathcal{F}}
\def\bone{\beta_1}
\def\btwo{\beta_2}
\def\bO{\boldsymbol{O}}
\newcommand{\bnut}{\widetilde{\bnu}}
\newtheorem{assumption}{Assumption}
\newtheorem{lemma}{Lemma}
\newtheorem{proposition}{Proposition}
\newtheorem{theorem}{Theorem}
\newtheorem{remark}{Remark}
\def\bu{\boldsymbol{u}}
\icmltitlerunning{On the Convergence of Adam under Non-uniform Smoothness}
\begin{document}

\twocolumn[
\icmltitle{On the Convergence of Adam under Non-uniform Smoothness: Separability from SGDM and Beyond}



\icmlsetsymbol{equal}{*}

\begin{icmlauthorlist}
\icmlauthor{Bohan Wang}{yyy}
\icmlauthor{Huishuai Zhang}{xxx}
\icmlauthor{Qi Meng}{cas}
\icmlauthor{Ruoyu Sun}{cuhk}
\icmlauthor{Zhi-Ming Ma}{cas}
\icmlauthor{Wei Chen}{cas}
\end{icmlauthorlist}

\icmlaffiliation{yyy}{University of Science and Technology of China \& Microsoft Research}
\icmlaffiliation{xxx}{Peking University}
\icmlaffiliation{cas}{Chinese Academy of Science}
\icmlaffiliation{cuhk}{CUHK (Shenzhen)}

\icmlcorrespondingauthor{Huishuai Zhang}{zhanghuishuai@pku.edu.cn}
\icmlcorrespondingauthor{Wei Chen}{chenwei2022@ict.ac.cn}

\icmlkeywords{Machine Learning, ICML}

\vskip 0.3in
]



\printAffiliationsAndNotice{\icmlEqualContribution} 

\begin{abstract}

This paper aims to clearly distinguish between Stochastic Gradient Descent with Momentum (SGDM) and Adam in terms of their convergence rates. We demonstrate that Adam achieves a faster convergence compared to SGDM under the condition of non-uniformly bounded smoothness. Our findings reveal that: (1) in deterministic environments, Adam can attain the known lower bound for the convergence rate of deterministic first-order optimizers, whereas the convergence rate of Gradient Descent with Momentum (GDM) has higher order dependence on the initial function value; (2) in stochastic setting, Adam's convergence rate upper bound matches the lower bounds of stochastic first-order optimizers, considering both the initial function value and the final error, whereas there are instances where SGDM fails to converge with any learning rate. These insights distinctly differentiate Adam and SGDM regarding their convergence rates. Additionally, by introducing a novel stopping-time based technique, we further prove that if we consider the minimum gradient norm during iterations, the corresponding convergence rate can match the lower bounds across all problem hyperparameters. The technique  can also help proving that Adam with a specific hyperparameter scheduler is parameter-agnostic, which hence can be of independent interest.
\end{abstract}

\section{Introduction}
Among various optimization techniques, the Adam optimizer~\cite{kingma2014adam,loshchilov2019adamw} stands out due to its empirical success in a wide range of deep learning applications, especially for  pre-training large foundation models with enormous data~\cite{touvron2023llama,brown2020gpt3,zhang2022opt,rae2021scaling,chowdhery2022palm,du2021glm}. This popularity of Adam can be attributed to its adaptive learning rate mechanism, which smartly adjusts the step size for each parameter, allowing flexible and robust learning rate choices. Adam's versatility is further highlighted by its consistent performance in training various kinds of models, making it a preferred optimizer in both academic and industrial settings \cite{schneider2021HITY}. Its empirical success extends beyond standard benchmarks to real-world challenges, where it often delivers state-of-the-art results. This track record solidifies Adam's position as a fundamental tool for deep learning practitioners.


Exploring the theoretical foundations of the Adam optimizer, particularly why it often outperforms traditional optimizers like Stochastic Gradient Descent with Momentum (SGDM), is an intriguing yet complex task. Understanding Adam's convergence behavior is challenging, especially in settings defined by standard convergence rate analysis. In these settings, assumptions include uniformly bounded smoothness and finite gradient noise variance. Current research indicates that under these conditions, SGDM can attain the lower bound of the convergence rate for all first-order optimizers \cite{carmon2017lower}. This finding implies that, theoretically, Adam's convergence rate should not exceed that of SGDM. This theoretical result contrasts with practical observations where Adam frequently excels, presenting a fascinating challenge for researchers. It highlights the need for more refined theoretical models that can bridge the gap between Adam's empirical success and its theoretical understanding.


Recent research by \citet{zhang2019gradient} has provided valuable insights into the complexity of neural network optimization, particularly challenging the assumption of uniform bounded smoothness. Their observations indicate that smoothness often varies, showing a positive correlation with the norm of the gradient and experiencing considerable fluctuations during the optimization process. Building on this, they introduce the \((L_0, L_1)\)-smooth condition (detailed in our Assumption \ref{assum: objective}), which posits that local smoothness can be bounded in relation to the gradient norm. This concept presents an exciting opportunity to theoretically demonstrate that Adam could potentially converge faster than SGDM. However, even in the relatively simpler deterministic settings, no study has yet conclusively shown this to be the case.


To effectively compare the convergence rates of Adam and Stochastic Gradient Descent with Momentum (SGDM), it's essential to establish an upper bound on Adam's convergence rate and a lower bound for SGDM, and then prove Adam's superiority. This endeavor faces several challenges. First, the known lower bound for SGDM's convergence rate is only available in deterministic settings without momentum \cite{zhang2019gradient,crawshaw2022robustness}. Moreover, this result is based on a scenario where the counter-example objective function is selected after fixing the learning rate. This procedure deviates from more common practices where the learning rate is adjusted after defining the objective function \cite{drori2020complexity, carmon2017lower,arjevani2022lower}, casting doubts on the standard applicability of this lower bound. Secondly, for Adam, the current assumptions required to derive an upper bound for its convergence rate are quite strict. These include assumptions like bounded adaptive learning rates or deterministically bounded noise \cite{wang2022provable,li2023convergence}. However, even under these constraints, the convergence rates obtained for Adam are weaker than those of algorithms like clipped SGDM~\cite{zhang2019gradient}.

These complexities hinder a straightforward comparison between the convergence rates of Adam and SGDM, highlighting a significant gap in the theoretical understanding that remains to be bridged.


\textbf{Our contributions.} In this paper, we aim to bridge the gap and summarize our contributions as follows.
\vspace{-3mm}
\begin{itemize}
    \item We separate the convergence rate of Adam and SGDM under $(L_0,L_1)$-smooth condition both in the deterministic setting and in the stochastic setting. 
    \begin{itemize}
        \item 

        In the deterministic setting,  for the first time, we prove that under the \((L_0, L_1)\)-smooth condition, the convergence rate of the Adam optimizer can match the existing lower bound for first-order deterministic optimizers, up to numerical constants. Additionally, we establish a new lower bound for the convergence rate of  GDM, where one is allowed to tune the learning rate and the momentum coefficient after the problem is fixed. The lower bound exhibits a higher order dependence on the initial function value gap compared to the upper bound of Adam. This distinction clearly separates Adam and GDM for the deterministic setting. 
        
        \item 
        In the stochastic setting, for the first time, we prove that under the \((L_0, L_1)\)-smooth condition, the convergence rate of Adam matches the existing lower bound for first-order stochastic optimizers regarding the initial function value $f(\bw_1)-f^*$ and the final error $\varepsilon$. In contrast, counterexamples exist where SGDM fails to converge, irrespective of the learning rate and momentum coefficient. These findings distinctly separate the convergence properties of Adam and SGDM in stochastic settings.
    \end{itemize}
    \vspace{-2mm}
    \item With the aid of a novel stopping time based technique, we further demonstrate that the convergence rate of minimum error point of Adam can match the lower bound across all problem hyperparameters. We demonstrate that such a technique can be of independent interest by proving that Adam with specific scheduler is parameter-agnostic based on the stopping time.
\end{itemize}


\section{Related Works}
\textbf{Convergence analysis under non-uniform smoothness.}  Observations from empirical studies on deep neural network training indicate that local smoothness can vary significantly throughout the optimization process. In response to this, \citet{zhang2019gradient} introduced the $(L_0,L_1)$-smooth condition, which posits that local smoothness can be bounded by a linear function of the gradient norm. Subsequent works have extended this concept by generalizing the linear function to polynomials \cite{chen2023generalized, li2023convergence}, or to more general functions \cite{mei2021leveraging}.
Under non-uniform smoothness, convergence properties of various optimizers have been studied. For instance, upper bounds on the convergence rate have been established for optimizers such as Clipped SGDM \cite{zhang2020improved}, sign-based optimizers \cite{jin2021non,hubler2023parameter,sun2023rethinking}, AdaGrad \cite{faw2023beyond,wang2023convergence}, variance-reduction methods \cite{reisizadeh2023variance,chen2023generalized}, and trust-region methods \cite{xie2023trust}. However, research on lower bounds has been comparatively limited, with results primarily focusing on Gradient Descent.

\textbf{Convergence analysis of Adam.} The development of convergence analysis for Adam has been quite tortuous. While Adam was originally proposed with a convergence guarantee \citep{kingma2014adam}, subsequent analysis by \citet{reddi2018amsgrad} pointed out flaws in this initial analysis and provided counterexamples claiming that Adam could fail to converge. Only recently, \citet{shirmsprop} and \citet{zhang2022adam} have shown that the counterexamples in \citet{reddi2018amsgrad} only rule out the possibility that Adam can converge problem-agnostically, and it is still possible that Adam can converge with problem-dependent hyperparameters.

So far, several works have established the convergence of Adam under the $L$-smooth condition. \citet{zaheer2018adaptive} proved that Adam without momentum can converge to the neighborhood of stationary points by additionally assuming that $\lambda$ is large. \citet{de2018convergence} showed that Adam without momentum can converge to stationary points but under the strong assumption that the sign of gradients does not change during the optimization. \citet{zou2019sufficient}, \citet{defossezsimple}, and \citet{guo2021novel} derived the convergence of Adam by assuming the stochastic gradient is bounded. \citet{shirmsprop} and \citet{zhang2022adam} characterized the convergence of random-reshuffling Adam but suffer from sub-optimal rates. \citet{he2023convergence} studied the non-ergodic convergence of Adam under a bounded gradient assumption, while \citet{hong2023high} provided high-probability guarantees for Adam under a deterministically bounded noise assumption. A concurrent work by \citet{wang2023closing} shows that Adam can achieve the lower bound of first-order optimizers with respect to the final error $\varepsilon$ under standard assumptions, but it is unknown whether Adam can match the lower bound with respect to other problem specifics.

On the other hand, closely related to our work, there are only two works studying the convergence of Adam under non-uniform smoothness \citep{wang2022provable,li2023convergence}, both with restricted assumptions and results. We will provide a detailed discussion in Section \ref{sec: separate}.

\textbf{Parameter-agnostic optimization.} The term "parameter-agnostic" implies that the optimizer is capable of converging without the need for extensive hyperparameter tuning or detailed knowledge of the task characteristics. Designing parameter-agnostic or parameter-free optimizers is a significant challenge, as it can help avoid the extensive cost associated with hyperparameter search. Existing works on parameter-agnostic optimization can be categorized into several streams based on the settings they are predicated upon.
In the deterministic offline setting, it is widely acknowledged that GD is not parameter-agnostic, even under an $L$-smooth condition \citep{nesterov2018lectures}. However, this can be rectified by combining the GD with the Backtracking Line Search technique \citep{armijo1966minimization}.
In the stochastic offline setting, under the $L$-smooth condition, multiple algorithms have been shown to be parameter-agnostic \citep{yang2023two, ward2020adagrad, faw2022power, wang2023convergence,cutkosky2020momentum}. More recently, \citet{hubler2023parameter} demonstrated that Normalized-SGDM can be parameter-agnostic even under an $(L_0, L_1)$-smooth condition.
In the realm of online convex optimization, \citet{orabona2016coin, orabona2017training} have shown  there exist parameter-free algorithms achieving optimal dependence regarding not only the final error but also  other problem specifics.



\section{Preliminary}
\textbf{Notations.} In this paper, we will use asymptotic notations $\mathcal{O}, \Omega, \Theta$ to respectively denote asymptotically smaller, larger , and equivalent. We also use $\tilde{\mathcal{O}}, \tilde{\Omega}, \tilde{\Theta}$ to indicate that there is logarithmic factor hidden. We denote $\gF_t$ as the filter given by $\bw_1,\cdots,\bw_t$.

\textbf{Problem and Algorithm.} We study the unconstrained minimization problem $\min_{\bw} f(\bw)$. We present the psedo-code of Adam as follows.
\begin{algorithm}[H]
   \caption{Adam Optimizer}
   \label{alg: adam}
\begin{algorithmic}
   \STATE {\bfseries Input:} Stochastic oracle $\bO$, learning rate $\eta>0$, initial point $\bw_{1} \in \mathbb{R}^d$, initial conditioner $\bnu_{0
}\in \mathbb{R}^{+}$, initial momentum $\bom_0$, momentum parameter $\beta_1$, conditioner parameter $\beta_2$, number of epoch $T$ 
   \FOR{$t=1$ {\bfseries to} $T$}
   \STATE Generate a random $z_t$, and query stochastic oracle $\bg_t =\bO_f(\bw_t,z_t)$
   \STATE Calculate $\bnu_{t}=\btwo\bnu_{t-1}+ (1-\btwo)\Vert \bg_t \Vert^2 $
   \STATE Calculate $\bom_{t}=\bone\bom_{t-1}+ (1-\bone)\bg_t$
   \STATE Update $\bw_{t+1}= \bw_{t}-\eta \frac{1}{\lambda+\sqrt{\bnu_{t}}}   \bom_t$
   \ENDFOR
\end{algorithmic}
\end{algorithm}
In our paper, we present a slightly altered version of the Adam optimizer as delineated in Algorithm \ref{alg: adam}, diverging from the canonical form described by \cite{kingma2014adam}. We assert that these modifications are implemented not to undermine the generality of the algorithm but to facilitate more streamlined proofs. Furthermore, our analysis retains applicability to the conventional Adam algorithm, largely following the same approach, albeit with a more elaborate proof process. Specifically, our first modification involves the omission of the bias-correction factors $1-\bone^t$ and $1-\btwo^t$ from the first and second momentum terms, respectively. It is important to note that incorporating bias correction would not alter the convergence rate, as these terms approach unity at an exponential rate, thus having a negligible impact on convergence.

Our second adjustment pertains to adopting a scalar-based adaptive learning rate, in contrast to the per-coordinate modification utilized in the typical Adam algorithm. Employing a scalar—or "norm"—version of adaptive optimizers is a recognized simplification strategy in the analysis of adaptive optimizers, as evidenced by literature such as \cite{xing2021convergence,faw2022power,faw2023beyond,wang2023convergence}. Our proof is readily adaptable to the per-coordinate version by entailing a separate analysis for each dimension \footnote{It is worth mentioning that the convergence rate for the per-coordinate Adam is subject to the dimensionality $d$. Addressing the challenge of decoupling the convergence rate of per-coordinate adaptive optimizers from dimensionality remains an unresolved issue, one that we acknowledge but reserve for future investigation.}. 

We would like to highlight that all the analysis in this paper is for $\lambda = 0$. This is because $\lambda=0$  means we do not require the adaptive learning rate to be upper bounded (a restrictive assumption in existing works \cite{li2023convergence, guo2021novel}) and is most challenging. The proof can be immediately extended to $\lambda >0 $ without any modification.

Meanwhile, we briefly state the SGDM optimizer as follows: with initial point $\bw_1$ and initial momentum $\bom_0$, the update of $t$-th iteration of SGDM is given by
\small
\begin{gather*}
    \bom_t =  \beta \bom_{t-1} +(1-\beta )\bg_t,
    \bw_{t+1} = \bw_t - \eta \bom_t.
\end{gather*}
\normalsize
\textbf{Assumptions.} In this paper, all the analyses are established under the following two standard assumptions. 
\begin{assumption}[$(L_0,L_1)$-smooth condition]
\label{assum: objective}
We assume $f$ is differentiable and lower bounded, and there exist non-negative constants $L_0,L_1>0$, such that $\forall \bw_1, \bw_2 \in \mathbb{R}^d$ satisfying $\Vert \bw_1 -\bw_2 \Vert \le \frac{1}{L_1}$,
\begin{equation*}
    \Vert \nabla f(\bw_1) -\nabla f(\bw_2) \Vert \le (L_0+L_1 \Vert \nabla f(\bw_1) \Vert)\Vert \bw_1 -\bw_2 \Vert.
\end{equation*}
\end{assumption}
\begin{assumption}[Affine noise variance]
\label{assum: noise}
We assume that the stochastic noise $\bg_t$ is unbiased, i.e.,  $\mathbb{E}^{|\mathcal{F}_t} \bg_t=\bG_t$. We further assume $\bg_t$ has affine variance, i.e., there exists $\sigma_0 \ge 0, \sigma_1 \ge 1$,   $\mathbb{E}^{|\gF_t} [\Vert\bg_t \Vert^2 ] \le \sigma_0^2 +\sigma_1^2 \Vert \nabla f(\bw_t) \Vert^2$.
\end{assumption}
Assumption \ref{assum: objective} is a more general form of $(L_0,L_1)$-smooth condition and is equivalent to the Hessian-bound form \cite{zhang2019gradient} when Hessian exists. Assumption \ref{assum: noise} is one of the weakest assumptions on the noise in existing literature, and generalizes bounded variance assumption \cite{li2023convex}, bounded gradient assumption \cite{defossezsimple}, bounded noise assumption \cite{li2023convergence}.

\section{Separating the convergence rates of Adam and (S)GD}
\label{sec: separate}
In this section, we elucidate the disparate convergence rates of Adam and (S)GD under Assumptions \ref{assum: objective} and \ref{assum: noise}, examining both deterministic and stochastic settings. We commence with the deterministic scenario before delving into the stochastic complexities.

\subsection{Analysis for the deterministic setting}
\label{sec: gd}
As discussed in the introduction section, to discern the differential convergence rates of deterministic Adam and GD, it is necessary to establish not only Adam's upper bound but also GD's lower bound, given a consistent set of assumptions. Crucially, these bounds must be sufficiently tight to ensure that Adam's upper bound is indeed the lesser. To date, only a couple of studies have addressed the convergence of deterministic Adam. The first, referenced in \cite{wang2022provable}, indicates a convergence rate of $ \mathcal{O}(\frac{(f(\bw_1)-f^*)^2}{\varepsilon^4})$, which is sub-optimal compared to the classical deterministic rate of $\mathcal{O}(\frac{f(\bw_1)-f^*}{\varepsilon^2})$  \cite{zhang2019gradient,zhang2020improved} regarding both final error $\varepsilon$  and the initial function value gap $(f(\bw_1)-f^*)$. The second study, \cite{li2023convergence}, presents a convergence rate that  depends polynomially on $\frac{1}{\lambda}$, where $\lambda$ is the small constant introduced to prevent the adaptive learning rate from becoming infinity. Therefore, their result is only non-vacuous when $\lambda$ is large, which deviates from practical settings.   Additionally, their bound exhibits an exaggerated dependency on the initial function value gap, yielding $\min_{t\in [T]} \Vert\nabla f (\bw_t) \Vert = \mathcal{O}(\frac{(f(\bw_1)-f^*)^3}{\varepsilon^2})$. As we will see later, 
such dependencies create upper bounds that surpass the lower bounds of GD, making them unable to serve our purpose. To overcome these limitations and accurately assess the performance of deterministic Adam, we propose a new theorem that establishes an improved convergence rate for deterministic Adam.

\textbf{An upper bound for the convergence rate of  deterministic Adam.} 

\begin{theorem}[Informal]
\label{thm: deterministic Adam}
    Let Assumption \ref{assum: objective} hold. Then, $\forall \beta_1, \beta_2 \ge 0$ satisfying $ \bone^2 < \btwo <1$, $\lambda =0$, and $\varepsilon = \mathcal{O}(L_0/L_1)$, if $T \ge \Theta\left(\frac{L_0 (f(\bw_1)-f^*)}{\varepsilon^2}\right)$, then Algorithm \ref{alg: adam} satisfies
    \begin{equation*}
        \frac{1}{T}\sum_{t=1}^T \Vert \nabla f(\bw_t) \Vert \le \varepsilon.
    \end{equation*}
\end{theorem}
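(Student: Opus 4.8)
The plan is to run a descent-lemma argument engineered so that, after summation, the left-hand side is $\sum_t \Vert \nabla f(\bw_t)\Vert$ and everything on the right is lower order. Since the setting is deterministic, $\bg_t = \nabla f(\bw_t) =: \bG_t$, and the update (with $\lambda=0$) is $\bw_{t+1}-\bw_t = -\frac{\eta}{\sqrt{\bnu_t}}\bom_t$. First I would establish a uniform step-size bound. Writing (with zero initialization, for brevity) $\bom_t = (1-\bone)\sum_{i\le t}\bone^{t-i}\bg_i$ and $\bnu_t = (1-\btwo)\sum_{i\le t}\btwo^{t-i}\Vert\bg_i\Vert^2$, a Cauchy--Schwarz split of $\bone^{t-i} = (\bone^2/\btwo)^{(t-i)/2}\,\btwo^{(t-i)/2}$ yields
$$\frac{\Vert\bom_t\Vert^2}{\bnu_t} \le \frac{(1-\bone)^2}{(1-\btwo)\left(1-\bone^2/\btwo\right)} =: C_\beta,$$
where finiteness is exactly where the hypothesis $\bone^2<\btwo$ enters. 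Hence $\Vert\bw_{t+1}-\bw_t\Vert\le\eta\sqrt{C_\beta}$, so choosing $\eta$ small forces $\Vert\bw_{t+1}-\bw_t\Vert\le 1/L_1$ and Assumption \ref{assum: objective} gives $f(\bw_{t+1})\le f(\bw_t) - \frac{\eta}{\sqrt{\bnu_t}}\langle\bG_t,\bom_t\rangle + \frac{(L_0+L_1\Vert\bG_t\Vert)\eta^2}{2\bnu_t}\Vert\bom_t\Vert^2$.

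Next I would lower-bound the accumulated first-order term. The special case $\bone=0$ (so $\bom_t=\bG_t$) already contains the main idea: from $(1-\btwo)\Vert\bG_t\Vert^2 = \bnu_t-\btwo\bnu_{t-1}$ and $\sqrt{\bnu_t}\ge\sqrt{\btwo}\sqrt{\bnu_{t-1}}$ one gets $\frac{\Vert\bG_t\Vert^2}{\sqrt{\bnu_t}}\ge\frac{1}{1-\btwo}\left(\sqrt{\bnu_t}-\sqrt{\btwo}\sqrt{\bnu_{t-1}}\right)$, which telescopes to $\sum_t\frac{\Vert\bG_t\Vert^2}{\sqrt{\bnu_t}}\ge\frac{1}{1+\sqrt{\btwo}}\sum_t\sqrt{\bnu_t}\ge\frac{\sqrt{1-\btwo}}{1+\sqrt{\btwo}}\sum_t\Vert\bG_t\Vert$ up to a boundary term. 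On the second-order side I would use $\frac{\Vert\bom_t\Vert^2}{\bnu_t}\le C_\beta$ to split it into an $L_0$-part summing to $\Theta(TL_0\eta^2)$ and an $L_1$-part proportional to $\eta^2\sum_t\Vert\bG_t\Vert$; the latter is absorbed into half the first-order term precisely when $\eta=\mathcal{O}(1/L_1)$ (up to $\beta$-factors). Rearranging leaves $\frac1T\sum_t\Vert\bG_t\Vert \lesssim \frac{f(\bw_1)-f^*}{\eta T} + L_0\eta$.

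Optimizing this bound gives $\eta\asymp\varepsilon/L_0$ and $T=\Theta\!\left(\frac{L_0(f(\bw_1)-f^*)}{\varepsilon^2}\right)$, matching the claimed rate. The constraint $\varepsilon=\mathcal{O}(L_0/L_1)$ is exactly what makes the optimal choice $\eta\asymp\varepsilon/L_0$ compatible with the requirement $\eta=\mathcal{O}(1/L_1)$ used both for the descent lemma and for the absorption above; this is the single place the hypothesis $\varepsilon=\mathcal{O}(L_0/L_1)$ is consumed.

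The main obstacle is extending the first-order lower bound from $\bone=0$ to general $\bone$, where the descent term is $\langle\bG_t,\bom_t\rangle$ rather than $\Vert\bG_t\Vert^2$. I would write $\langle\bG_t,\bom_t\rangle = \Vert\bG_t\Vert^2 + \langle\bG_t,\bom_t-\bG_t\rangle$ and control the momentum lag through the recursion $\bom_t-\bG_t = \bone(\bom_{t-1}-\bG_{t-1}) + \bone(\bG_{t-1}-\bG_t)$, bounding each increment $\Vert\bG_{t-1}-\bG_t\Vert$ by smoothness times the already-controlled step size. Summing the resulting lag contributions produces only $\eta$-higher-order terms that are again absorbable under $\eta=\mathcal{O}(1/L_1)$, so the $\bone=0$ conclusion survives with $\beta$-dependent constants. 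Keeping these error terms commensurate with the conditioner $\sqrt{\bnu_t}$ is where $\bone^2<\btwo$ is used a second time, and carefully tracking the boundary terms from the telescoping and the momentum warm-up is the delicate bookkeeping that the formal proof must handle.
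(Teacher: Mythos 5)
Your proposal is sound and lands on the same rate with the same tuning ($\eta\asymp\varepsilon/L_0$, the constraint $\varepsilon=\mathcal{O}(L_0/L_1)$ consumed exactly where you say), and most of its skeleton coincides with the paper's: the Cauchy--Schwarz step-size bound $\Vert\bom_t\Vert^2/\bnu_t\le(1-\bone)^2/\bigl((1-\btwo)(1-\bone^2/\btwo)\bigr)$ is the paper's Lemma \ref{lem: bounded_update}, the split of the second-order term into an $L_0$-part of size $\Theta(TL_0\eta^2)$ and an absorbable $L_1$-part is identical, and the telescoping identity $\sqrt{\bnu_t}-\sqrt{\btwo\bnu_{t-1}}\le(1-\btwo)\Vert\bG_t\Vert^2/\sqrt{\bnu_t}$ is the same engine (you close with the pointwise bound $\sqrt{\bnu_t}\ge\sqrt{1-\btwo}\,\Vert\bG_t\Vert$ where the paper instead applies Cauchy--Schwarz to $(\sum_t\Vert\bG_t\Vert)^2\le(\sum_t\sqrt{\bnu_t})(\sum_t\Vert\bG_t\Vert^2/\sqrt{\bnu_t})$; both yield the same $(1-\btwo)$-dependence). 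Where you genuinely diverge is the treatment of momentum in the first-order term. You write $\langle\bG_t,\bom_t\rangle=\Vert\bG_t\Vert^2+\langle\bG_t,\bom_t-\bG_t\rangle$ and control the lag through $\bom_t-\bG_t=\bone(\bom_{t-1}-\bG_{t-1})+\bone(\bG_{t-1}-\bG_t)$, which forces a second invocation of the $(L_0,L_1)$-smoothness to bound $\Vert\bG_{t-1}-\bG_t\Vert$ and then a weighted double sum controlled by the geometric decay of $(\bone/\sqrt{\btwo})^{t-s}$. The paper instead substitutes $\bG_t=(\bom_t-\bone\bone\bom_{t-1})/(1-\bone)$ directly --- more precisely $\bG_t=(\bom_t-\bone\bom_{t-1})/(1-\bone)$ --- into the inner product, uses $\sqrt{\bnu_t}\ge\sqrt{\btwo\bnu_{t-1}}$ and Young's inequality, and obtains the Lyapunov telescoping $f(\bw_t)+\frac{\bone}{2(1-\bone)\sqrt[4]{\btwo}}\eta\Vert\bom_{t-1}\Vert^2/\sqrt{\bnu_{t-1}}$, keeping the whole first-order analysis in terms of $\Vert\bom_t\Vert^2/\sqrt{\bnu_t}$ and never touching gradient increments. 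The paper's route is purely algebraic at this step and avoids the double-sum bookkeeping; yours is more in the spirit of classical SGDM analyses and makes the role of the trajectory's bounded steps explicit, at the cost of an extra layer of error terms that must be shown absorbable (which, as you note, again uses $\bone^2<\btwo$). Both are valid and give the claimed $\mathcal{O}\bigl(\sqrt{L_0(f(\bw_1)-f^*)/T}\bigr)$ bound up to $\beta$-dependent constants.
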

\begin{proof}
Please see Appendix \ref{appen: deter_adam} for the formal statement of theorem and the proof.
\end{proof}
\vspace{-2mm}
Our result offers a tighter bound than those presented in prior studies \cite{wang2022provable,li2023convergence}. It is noteworthy that under the uniform smoothness constraint—where the objective function's smoothness is capped at $L$ (that is, when $L_0= L$ and $L_1 = 0$ as per Assumption 1, referred to as the $L$-smooth condition in existing literature \cite{arjevani2022lower,carmon2017lower,faw2022power})—Assumption 1 is met with $L_0 = L$ and any $L_1 \ge 0$. Consequently, the established lower bound for all first-order optimizers \cite{carmon2017lower}  pertaining to the $L$-smooth condition inherently provides a lower bound for the $(L_0, L_1)$-smooth condition, which is $\Omega\left(\frac{\sqrt{L_0 (f(\mathbf{w}_1)-f^*)}}{\sqrt{T}}\right)$. This coincides with our upper bound up to numerical constants. Such correspondence suggests that our proposed bound is, in fact, optimal.

Our proof strategy utilizes a distinctive Lyapunov function, $f(\bw_t) + \frac{\beta_1}{2(1-\beta_1)\sqrt[4]{\beta_2}}\eta \frac{||\bom_{t-1}||^2}{\lambda + \sqrt{\bnu_{t-1}}}$, which draws inspiration from the current analysis of Gradient Descent with Momentum (GDM) under the $L$-smooth condition \citep{sun2019non}. However, we have  introduced significant modifications to accommodate the integration of an adaptive learning rate. This carefully crafted Lyapunov function enables us to effectively control the deviation between the momentum term and the current gradient, even under $(L_0,L_1)$-smooth condition. Through this approach, we successfully establish the final optimal bound.

\begin{remark}[On the comparison with AdaGrad] Our result also suffices to separate Adam from AdaGrad. It is important to note that the convergence rate of AdaGrad under the $(L_0,L_1)$-smooth condition in a deterministic setting, as reported in\cite{wang2023convergence}, is $\frac{(f(\bw_1)-f^*)^2}{\varepsilon^2}$. This rate is outperformed by that of Adam\footnote{The state-of-art rate of AdaGrad under $(L_0,L_1)$-smooth condition and stochastic setting is $\frac{(f(\bw_1)-f^*)^2}{\varepsilon^4}$, which is also worse than the rate of Adam established latter in Theorem \ref{thm: stochastic  Adam}.}. In Appendix \ref{appen: adagrad}, we  show that the rate in \cite{wang2023convergence} is tight by providing a counterexample. The comparatively slower convergence rate of AdaGrad can be attributed to that $(L_0,L_1)$-smooth condition demands the update norm to be bounded by $\mathcal{O}(1)$ to prevent the local smoothness from exponentially increasing. This, in turn, necessitates a learning rate of $\mathcal{O}(1)$. However, the adaptive conditioner in AdaGrad, which accumulates over time, causes the adaptive learning rate to become excessively small during later training stages, resulting in reduced convergence speed. Conversely, Adam utilizes an exponential moving average for its adaptive learning rate, which prevents the conditioner from accumulating excessively. Consequently, Adam does not suffer from the aforementioned issue.
\end{remark}

\textbf{A lower bound for the convergence rate of GDM}

With Adam's upper bound, we then move on to a lower bound for the convergence rate of GDM. In fact, there has already been such lower bounds for GD in the existing literature \cite{zhang2019gradient,crawshaw2022robustness}, which we restate as follows: 
\begin{proposition}[Theorem 2, \cite{crawshaw2022robustness}]
\label{prop: crawshaw}
Fix $\varepsilon, L_0, L_1 $, and $\Delta_1$, {\color{blue}with learning rate $\eta$}, there exists objective function $f$ satisfying $(L_0,L_1)$-smooth condition and $f(\bw_1)-f^* =\Delta_1$, such that the minimum step $T$ of GD to achieve final error $\varepsilon$ (i.e., let $\{\bw_t\}_{t=1}^\infty$ be the iterates of GD, and  $T \triangleq \min \{t: \Vert \nabla f(\bw_t) \Vert < \varepsilon \}$) satisfies 
\begin{equation*}
    T = \tilde{\Omega}\left( \frac{L_1^2 \Delta_1^2 +L_0 \Delta_1}{\varepsilon^2} \right).
\end{equation*}
\end{proposition}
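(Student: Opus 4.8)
Since the learning rate $\eta$ is fixed \emph{before} the adversary selects $f$, the plan is to tailor a one-dimensional convex counterexample to the given $\eta$ and argue through a dichotomy governed by a single critical learning rate. The guiding intuition is the tension that fixed-step GD cannot resolve: near points with large gradient Assumption~\ref{assum: objective} permits the local smoothness to be as large as $L_0+L_1\Vert\nabla f\Vert$, so stability forces a small step, whereas far from the optimum a small step makes negligible progress. I would encode both difficulties into a single objective so that \emph{every} choice of $\eta$ is bad for that $f$.

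\textbf{Construction.} I would glue together two convex pieces. A \emph{high-curvature piece} near $\bw_1$, e.g. of the form $\frac{L_0}{L_1^2}(\cosh(L_1 x)-1)$ (or any convex profile saturating Assumption~\ref{assum: objective}), truncated so that it accounts for function-value decrease $\Theta(\Delta_1)$; here the gradient at $\bw_1$ is $\Theta(\min\{L_1\Delta_1,\sqrt{L_0\Delta_1}\})$ and, crucially, the local smoothness attains $\Theta(L_0+L_1^2\Delta_1)$. A \emph{slow-decay piece} of near-constant slope in $[\varepsilon,2\varepsilon]$ (smoothness $\Theta(L_0)$) lies between the high-curvature piece and the minimizer and also carries function-value decrease $\Theta(\Delta_1)$, hence has length $\Theta(\Delta_1/\varepsilon)$ in $x$. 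The two pieces are stitched with a $C^1$ transition preserving convexity, lower boundedness, and the smoothness inequality, while fixing $f(\bw_1)-f^*=\Delta_1$.

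\textbf{Dichotomy on $\eta$.} Set the critical rate $\eta_0=\Theta\!\left(\frac{1}{L_0+L_1^2\Delta_1}\right)$, the inverse of the maximal curvature. If $\eta>\eta_0$, then $\eta f''(\bw_1)>2$, so the first GD step overshoots the minimizer onto a point of strictly larger gradient magnitude; exploiting the super-linear growth of $\sinh$ I would show the iterates oscillate with increasing amplitude and never reach $\Vert\nabla f\Vert<\varepsilon$, so $T=\infty$ and the bound holds vacuously. If instead $\eta\le\eta_0\le 1/L_{\max}$, convexity forces monotone convergence, so the iterates must cross the entire slow-decay piece before the gradient first drops below $\varepsilon$. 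Each step advances by at most $2\eta\varepsilon$ in $x$ across a piece of length $\Theta(\Delta_1/\varepsilon)$, so the number of steps is at least $\Theta\!\left(\frac{\Delta_1}{\eta\varepsilon^2}\right)\ge\Theta\!\left(\frac{L_0\Delta_1+L_1^2\Delta_1^2}{\varepsilon^2}\right)$, which is the claimed bound.

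\textbf{Main obstacle.} The delicate point is not the counting but the construction: making the curvature profile interpolate from $\Theta(L_0+L_1^2\Delta_1)$ down to $\Theta(L_0)$ while keeping $f$ convex, lower bounded, and genuinely $(L_0,L_1)$-smooth \emph{everywhere}, so that the overshoot argument and the monotone-descent argument both apply to the \emph{same} $f$, and simultaneously pinning $f(\bw_1)-f^*=\Delta_1$ with the slow piece's slope exactly at the $\varepsilon$ scale. A secondary subtlety is verifying that \emph{every} $\eta>\eta_0$ (not merely very large $\eta$) yields genuine divergence rather than bounded oscillation, which is where the super-linear growth of the high-curvature piece is essential. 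The logarithmic factor hidden in $\tilde\Omega$ would emerge from controlling the transition region, where the gradient sweeps geometrically from $\Theta(L_1\Delta_1)$ toward $\varepsilon$.
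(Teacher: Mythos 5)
Your two regime arguments are individually sound and essentially reproduce the paper's own machinery (the exponential counterexample $f_1$ in Eq.~(\ref{lowerbound_f1}), which diverges for $\eta$ above a threshold $\tilde\Theta\bigl(\tfrac{1}{L_0+L_1^2\Delta_1}\bigr)$, and the near-linear counterexample $f_2$ in Eq.~(\ref{lowerbound_f2}), which for $\eta$ below that threshold forces $\tilde\Omega\bigl(\tfrac{L_0\Delta_1+L_1^2\Delta_1^2}{\varepsilon^2}\bigr)$ steps). But your insistence on gluing both behaviors into a \emph{single} one-dimensional function creates a genuine gap in an intermediate range of $\eta$. The overshoot/amplification argument for the steep piece works only when the minimizer sits at the center of that piece, so that a step exceeding $2|x_1|$ lands on an even steeper point; the relevant threshold is then $\eta\gtrsim 2x_1/f'(x_1)=\tilde\Theta\bigl(\tfrac{1}{L_1^2\Delta_1}\bigr)$ because the steep piece has width only $\tilde\Theta(1/L_1)$. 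Once you append a shallow valley of length $\Theta(\Delta_1/\varepsilon)$ between the steep piece and the minimizer, the first step with $\eta$ moderately above $\tfrac{1}{L_1^2\Delta_1}$ (say $\eta\approx\tfrac{1}{L_1\varepsilon}$) does \emph{not} overshoot the minimizer: it merely exits the steep piece and lands somewhere in the valley, where the gradient is $\Theta(\varepsilon)$, all subsequent steps are $\Theta(\eta\varepsilon)\le\Theta(1/L_1)$, and the iterate marches monotonically to the minimum in only $\Theta\bigl(\tfrac{\Delta_1}{\eta\varepsilon^2}\bigr)=\Theta\bigl(\tfrac{L_1\Delta_1}{\varepsilon}\bigr)$ steps --- far fewer than the claimed $\tfrac{L_1^2\Delta_1^2}{\varepsilon^2}$ when $\varepsilon\ll L_1\Delta_1$. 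No amount of care in the $C^1$ transition fixes this; the divergence threshold of the glued function is inflated to $\Theta\bigl(\tfrac{1}{L_1\varepsilon}\bigr)$ by the valley's length, and it no longer meets the slow-crossing threshold. (Your stated overshoot condition $\eta f''(\bw_1)>2$ is also not the right criterion --- overshoot requires $\eta f'(x_1)>2x_1$ --- though for the exponential profile the two agree up to logarithms.)

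The fix is exactly what the quantifier order of the proposition permits and what the paper's appendix implements: do not use one function. Since here $f$ may be chosen \emph{after} $\eta$, simply present $f_1$ when $\eta$ exceeds the threshold and $f_2$ otherwise, and the two lemmas in Appendix~\ref{appen: gd} already give the claim. For the stronger Theorem~\ref{thm: gd}, where $f$ must be fixed before $\eta$, the paper keeps the two counterexamples on \emph{orthogonal coordinates}, $f(x,y)=f_1(x)+f_2(y)$, so the steep coordinate's dynamics are decoupled from the valley and its divergence threshold is not contaminated; that is the device your one-dimensional gluing loses.
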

\vspace{-2mm}
However,  the proposition presents a limitation: the counter-example is chosen after the learning rate has been determined. This approach is inconsistent with standard practices, where hyperparameters are usually adjusted based on the specific task, and deviates from conventional lower bounds \cite{carmon2017lower,arjevani2022lower} that offer assurances for optimally-tuned hyperparameters. This type of result  does not eliminate the possibility that, if the learning rate were adjusted after selecting the objective function—as is common practice—Gradient Descent (GD) could potentially achieve a markedly faster convergence rate. This misalignment raises concerns about the appropriateness of the proposition’s methodology.  Moreover, this proposition does not take momentum into account, a technique that is commonly employed in conjunction with GD in practice.

To address these shortcomings, we introduce a new lower bound for GDM. This lower bound is applicable under the standard practice of adjusting hyperparameters after the objective function has been selected. Moreover, it encompasses scenarios where momentum is incorporated.
\begin{theorem}[Informal]
    \label{thm: gd}
    Fixing $\varepsilon, L_0, L_1 $, and $\Delta_1$,  there exists an objective function $f$ satisfying $(L_0,L_1)$-smooth condition and $f(\bw_1)-f^* =\Delta_1$, such that {\color{blue}for any learning rate $\eta >0$ and $\beta \in [0, 1]$}, the minimum step $T$ of GDM to achieve final error $\varepsilon$ satisfies 
\begin{equation*}
    T = \tilde{\Omega}\left( \frac{L_1^2 \Delta_1^2+L_0\Delta_1 }{\varepsilon^2} \right).
\end{equation*}
\end{theorem}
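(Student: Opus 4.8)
The plan is to exhibit a single one-dimensional instance $f$ (fixed \emph{before} $\eta$ and $\beta$ are chosen) on which GDM is necessarily slow, and to extract the two terms $L_1^2\Delta_1^2/\varepsilon^2$ and $L_0\Delta_1/\varepsilon^2$ from two distinct ``barriers'' placed in front of a long, almost-flat valley. Since $a+b=\Theta(\max\{a,b\})$, it suffices to force a step-size ceiling $\eta^\star=\Theta\!\left(\min\{1/L_0,\,1/(L_1^2\Delta_1)\}\right)$ and then to show that traversing the valley costs $\tilde{\Omega}(\Delta_1/(\eta^\star\varepsilon^2))$ iterations. The whole argument is a dichotomy on the effective step: if it is large the iterates overshoot a barrier and never drive the gradient below $\varepsilon$ (so $T=\infty$ and the bound holds vacuously), while if it is small, progress per step in function value is $\mathcal{O}(\eta^\star\varepsilon^2)$, so many steps are required.

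\textbf{The construction.} I would build $f$ on $\mathbb{R}$ with $f(\bw_1)=f^*+\Delta_1$ and minimizer at the far end of the valley, concatenating three smoothly glued pieces arranged from the initial point down to the minimum: (i) an \emph{exponential barrier} on which $f''\approx L_1\Vert\nabla f\Vert$, so $\Vert\nabla f\Vert$ sweeps from $\Theta(L_1\Delta_1)$ down to $\Theta(\varepsilon)$ over a length $\Theta\!\big(\tfrac{1}{L_1}\log\tfrac{L_1\Delta_1}{\varepsilon}\big)$; (ii) a \emph{quadratic barrier} with curvature $\Theta(L_0)$; and (iii) a \emph{valley} of length $D=\Theta(\Delta_1/\varepsilon)$ on which the gradient stays just above $\varepsilon$ and the curvature is negligible, carrying the bulk of the $\Delta_1$ budget (its function drop is $\approx\varepsilon D=\Theta(\Delta_1)$). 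One then checks directly that $f$ is differentiable, lower bounded, and satisfies Assumption~\ref{assum: objective}: on (i) the Hessian is $L_1\Vert\nabla f\Vert\le L_0+L_1\Vert\nabla f\Vert$, on (ii) it is $\le L_0$, and on (iii) it is small, with the gluing chosen so the bound also holds across the $1/L_1$-local windows of the assumption. The two barriers are what forbid a large step: stability on (i) at the worst point $\Vert\nabla f\Vert\approx L_1\Delta_1$ requires $\eta\Vert\nabla f\Vert\lesssim 1/L_1$, i.e.\ $\eta\lesssim 1/(L_1^2\Delta_1)$, while stability of the curvature-$L_0$ region (ii) requires $\eta\lesssim 1/L_0$; together they pin $\eta\lesssim\eta^\star$.

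\textbf{Counting steps.} Fix any $\eta,\beta$. If $\eta\gtrsim\eta^\star$ I would show the iterate overshoots one of the barriers, landing at a point with strictly larger $\Vert\nabla f\Vert$; iterating this drives the gradient norm away from $\varepsilon$, so $T=\infty$. If instead $\eta\lesssim\eta^\star$, the displacement per step is $\eta\Vert\bom_t\Vert$, and summing the GDM recursion via the identity $\sum_{t=1}^{T}\bom_t=(1-\beta)\sum_s\bg_s\sum_{t\ge s}\beta^{t-s}\approx\sum_{s}\bg_s$ gives total travel $\eta\sum_t\Vert\bom_t\Vert\approx\eta\sum_t\Vert\nabla f(\bw_t)\Vert$; since the valley has length $D=\Theta(\Delta_1/\varepsilon)$ with $\Vert\nabla f\Vert=\Theta(\varepsilon)$ there, reaching its end needs $T\gtrsim D/(\eta\varepsilon)=\Delta_1/(\eta\varepsilon^2)\gtrsim\Delta_1/(\eta^\star\varepsilon^2)=\tilde{\Omega}\!\big((L_1^2\Delta_1^2+L_0\Delta_1)/\varepsilon^2\big)$. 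Taking the worst $\eta$ the optimizer could pick, namely $\eta\approx\eta^\star$ (the largest non-diverging value), yields the claimed bound, and the logarithm hidden in $\tilde{\Omega}$ comes from the length of the exponential barrier.

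\textbf{Main obstacle: momentum.} The delicate part is uniformity over $\beta\in[0,1]$, because the momentum buffer has memory and can carry a large accumulated velocity from the barriers into the valley; for $\beta\to1$ this transient lasts $\Theta(1/(1-\beta))$ steps, during which the effective step may far exceed $\eta\varepsilon$, threatening a fast crossing. The key observation that saves the argument is that this cuts the right way: a large momentum-driven displacement in the valley does not shorten the run, because it overshoots the minimizer sitting at the valley's end into the rising region beyond, flipping the gradient and triggering a growing oscillation, hence $T=\infty$. Thus any \emph{convergent} trajectory must keep $\eta\Vert\bom_t\Vert$ controlled at every step, which is precisely the small-step regime, and the count above applies up to constant factors. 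Making this rigorous requires a potential/invariant argument on the joint state $(\bw_t,\bom_t)$ showing that keeping the effective step below the barrier/overshoot threshold is the \emph{only} way to avoid divergence, and that this invariant forces the $\tilde{\Omega}$ lower bound simultaneously for every $\beta$; taming the $\beta\to1$ memory is where most of the care is needed.
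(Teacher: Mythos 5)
There is a genuine gap, and it is fatal to the single-instance strategy. A single one-dimensional function, fixed before $\eta$ and $\beta$, cannot be a universal counterexample: for GD ($\beta=0$) the second iterate $x_2=x_1-\eta f'(x_1)$ sweeps out the entire descent ray as $\eta$ ranges over $(0,\infty)$, and any lower-bounded differentiable $f$ with $f'(x_1)\neq 0$ must contain a point $x^*$ on that ray with $|f'(x^*)|<\varepsilon$ (otherwise $f'$ would stay bounded away from zero with fixed sign and $f$ would be unbounded below). The learning rate $\eta=(x_1-x^*)/f'(x_1)$ then achieves $T=2$, so the claimed $\tilde\Omega$ bound fails for that $\eta$. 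Concretely, in your construction the dichotomy ``large $\eta$ $\Rightarrow$ overshoot a barrier $\Rightarrow$ strictly larger gradient'' breaks at the intermediate learning rates whose first step lands in the valley or in the small-gradient neighborhood of the minimizer; the landing set is a full ray, so such $\eta$ always exist. This is exactly the obstruction the paper's proof is designed around: it places \emph{independent} counterexamples on \emph{separate coordinates} (an exponential $f_1$ that diverges for large $\eta$ and small $\beta$, a linear valley $f_2$ that is slow for small $\eta$, and a third function $f_3$ for large $\eta$ and large $\beta$) and sums them, so that for every $(\eta,\beta)$ at least one coordinate's gradient stays above $\varepsilon$ for $\tilde\Omega\left((L_1^2\Delta_1^2+L_0\Delta_1)/\varepsilon^2\right)$ steps, which lower-bounds the norm of the full gradient. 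Without some such multi-coordinate (or otherwise non-scalar) device, your plan cannot be repaired.

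A secondary issue is your treatment of momentum. You argue that a large accumulated velocity entering the valley must overshoot the minimizer and trigger a growing oscillation, hence $T=\infty$; this is neither established nor what actually happens in the relevant regime. In the paper's large-$\beta$ counterexample the stale momentum from the initial steep region carries the iterate monotonically a distance $\Theta\bigl(\eta\,|f'(z_1)|/(1-\beta)\bigr)$ into a flat linear region of slope $\varepsilon$ (no oscillation, no divergence), after which it must walk back at speed $\eta\varepsilon$ per step; the lower bound comes from counting that return trip, not from instability. Your ``only way to avoid divergence is to keep the effective step small'' invariant is therefore false as stated, and the $\beta\to 1$ case needs its own dedicated construction rather than a stability argument.
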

\begin{proof}
Please see Appendix \ref{appen: gd} for the formal statement of theorem and the proof.
\end{proof}
\vspace{-2mm}
It should be noticed in the above theorem, the hyperparameters (i.e., the learning rate and the momentum coefficient) are chosen after the objective function  is determined, which agrees with practice and the settings of common lower bounds, and overcomes the shortcoming of Proposition \ref{prop: crawshaw}. Moreover, as shown in \citet{zhang2019gradient}, it is easy to prove that the upper bound of GD's convergence rate is also $\mathcal{O}\left( \frac{L_1^2 \Delta_1^2+L_0\Delta_1 }{\varepsilon^2} \right)$, which indicates such a lower bound is optimal.

The proof addresses two primary challenges outlined above. The first challenge involves handling momentum. To tackle this, we extend the counterexample provided in Proposition \ref{prop: crawshaw} for cases where the momentum coefficient $\beta$ is small. Additionally, we introduce a new counterexample for situations with a large  $\beta$, demonstrating how large momentum can bias the optimization process and decelerate convergence. The second challenge is how to derive a universal counterexample such that every hyperparameter setting will lead to slow convergence. We overcome this by a simple but effective trick: we independently put counterexamples for different hyperparameters in Proposition \ref{prop: crawshaw} over different coordinates and make it a whole counterexample. Therefore, for different hyperparameters, there will be at least one coordinate converge slowly, which leads to the final result.

\textbf{Separating deterministic Adam and GDM.}  Upon careful examination of Theorem \ref{thm: deterministic Adam} and Theorem \ref{thm: gd}, it becomes apparent that the convergence rate of GDM is inferior to that of Adam since $\frac{\sum_{t=1}^T \Vert \bG_t \Vert }{T} \ge \min_{t\in [T]} \Vert \bG_t \Vert$. Notably, GDM exhibits a more pronounced dependency on the initial function value gap in comparison to Adam. This implies that with a sufficiently poor initial point, the convergence of GDM can be significantly slower than that of Adam. The underlying reason for this disparity can be attributed to GDM's inability to adeptly manage varying degrees of sharpness within the optimization landscape. Consequently, GDM necessitates a learning rate selection that is conservative, tailored to the most adverse sharpness encountered—often present during the initial optimization stages.

\subsection{Analysis for the stochastic setting}
\label{sec: separability_result_for_stochastic}
Transitioning to the more complex stochastic setting, we extend our analysis beyond the deterministic framework. As with our previous approach, we start by reviewing the literature to determine if the existing convergence rates for Adam under the $(L_0, L_1)$-smooth condition can delineate a clear distinction between the convergence behaviors of Adam and Stochastic Gradient Descent with Momentum (SGDM). In fact, the only two studies that delve into this problem are the ones we discussed in Section \ref{sec: gd}, i.e., \cite{wang2022provable,li2023convergence}.  
However, these results pertaining to Adam  are contingent upon rather stringent assumptions. \citet{wang2022provable} postulates that stochastic gradients not only conform to the $(L_0, L_1)$-smooth condition but are also limited to a finite set of possibilities. These assumptions are more restrictive than merely assuming that the true gradients satisfy the $(L_0, L_1)$-smooth condition, and such strong prerequisites are seldom employed outside of the analysis of variance-reduction algorithms. Meanwhile, \citet{li2023convergence} aligns its findings on stochastic Adam with those on deterministic Adam, leading to a polynomial dependency on $1/\lambda$, which deviates from practical scenarios as discussed in Section \ref{sec: gd}. Furthermore, it presumes an a.s. bounded difference between stochastic gradients and true gradients, an assumption that closely resembles the boundedness of stochastic gradients and is more limiting than the standard assumption of bounded variance for stochastic gradients.  
   
These more restricted and non-standard assumptions cast challenges in establishing a lower bound for the convergence of SGDM in the relevant contexts, let alone attempting a comparison between SGDM and Adam. In addition to the fact that these upper bounds fail to facilitate a clear comparison between Adam and SGDM, there are also concerns regarding their convergence rates. \citet{wang2022provable} reports a convergence rate of $\frac{(f(\bw_1)-f^*)^2}{\varepsilon^8}$, which has a higher-order dependence on the initial function value gap and the final error than the $\frac{(f(\bw_1)-f^*)}{\varepsilon^4}$ rate established for Clipped SGDM under the $(L_0, L_1)$-smooth condition \cite{zhang2020improved}\footnote{While \citet{zhang2020improved} also assumes an a.s. bounded gap between stochastic gradients and true gradients.}.
Furthermore, \citet{li2023convergence} indicates a convergence rate of $\mathcal{O}(\frac{(f(\bw_1)-f^*)^4\operatorname{poly}(1/\lambda)}{\varepsilon^4 })$, which, aside from the previously mentioned dependency issues on $1/\lambda$, shows a significantly stronger dependence over the initial function value gap compared to the analysis of Clipped SGDM. This naturally leads to the question of whether such  rates for Adam can be improved to match Clipped SGDM.

To tackle these obstacles, we present the following upper bound for Adam.

\textbf{An upper bound for the convergence rate of Adam.}
\begin{theorem}[Informal]
\label{thm: stochastic  Adam}
    Let Assumptions \ref{assum: objective} and \ref{assum: noise} hold. Then, $\forall 1>\beta_1 \ge 0$ and $\lambda =0$, if $\varepsilon\le \frac{1}{\operatorname{poly}(f(\bw_1)-f^*, L_0,L_1, \sigma_0,\sigma_1)}$, with a proper choice of learning rate $\eta$ and momentum hyperparameter $\btwo$, we have if $T \ge \Theta\left( \frac{(L_0+L_1)\sigma_0^3\sigma_1^2 (f(\bw_1)-f^*) }{\varepsilon^4}\right)$,
    \small
    \begin{equation*}
        \frac{1}{T}\E\sum_{t=1}^T \Vert \nabla f(\bw_t) \Vert \le \varepsilon.
    \end{equation*}
    \normalsize
\end{theorem}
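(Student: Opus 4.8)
The plan is to run a descent argument on a momentum‑corrected Lyapunov potential, handle the coupling between the adaptive conditioner and the fresh stochastic gradient through a predictable surrogate, telescope to bound a \emph{normalized} gradient sum, and finally convert that normalized sum into $\frac1T\E\sum_t\|\nabla f(\bw_t)\|$ via Cauchy--Schwarz. The first enabling step is the self‑normalizing bound. Writing $\bom_t=(1-\bone)\sum_{s\le t}\bone^{t-s}\bg_s$ and $\bnu_t\ge(1-\btwo)\sum_{s\le t}\btwo^{t-s}\|\bg_s\|^2$, a Cauchy--Schwarz split $\bone^{t-s}=(\bone^{2}/\btwo)^{(t-s)/2}\btwo^{(t-s)/2}$ together with $\bone^2<\btwo$ gives a deterministic bound $\|\bom_t\|/\sqrt{\bnu_t}\le D(\bone,\btwo)$ with $D^2=\tfrac{(1-\bone)^2\btwo}{(1-\btwo)(\btwo-\bone^2)}$. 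Hence each step obeys $\|\bw_{t+1}-\bw_t\|=\eta\|\bom_t\|/\sqrt{\bnu_t}\le\eta D$, so choosing $\eta$ small enough (in terms of $L_1$) forces $\|\bw_{t+1}-\bw_t\|\le 1/L_1$ and legitimizes the $(L_0,L_1)$ descent inequality
\[
f(\bw_{t+1})\le f(\bw_t)-\eta\frac{\langle\nabla f(\bw_t),\bom_t\rangle}{\sqrt{\bnu_t}}+\frac{L_0+L_1\|\nabla f(\bw_t)\|}{2}\,\eta^2\frac{\|\bom_t\|^2}{\bnu_t}
\]
at every iteration, with the second‑order term already controlled by $\eta^2D^2$.

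\textbf{Decoupling conditioner and noise.} The difficulty is that $\bom_t/\sqrt{\bnu_t}$ has the new draw $\bg_t$ in both numerator and denominator, so $\E^{|\gF_t}$ cannot be passed through directly. I would replace $\bnu_t$ by the $\gF_t$‑measurable surrogate $\btwo\bnu_{t-1}$ via $\frac1{\sqrt{\bnu_t}}=\frac1{\sqrt{\btwo\bnu_{t-1}}}+\big(\frac1{\sqrt{\bnu_t}}-\frac1{\sqrt{\btwo\bnu_{t-1}}}\big)$. The predictable piece yields, after $\E^{|\gF_t}$, the clean descent contribution $\|\nabla f(\bw_t)\|^2/\sqrt{\btwo\bnu_{t-1}}$; using $\btwo\bnu_{t-1}-\bnu_t=-(1-\btwo)\|\bg_t\|^2$ and $\sqrt a-\sqrt b=(a-b)/(\sqrt a+\sqrt b)$, the correction carries a prefactor $(1-\btwo)$, which is precisely why driving $\btwo\to1$ suppresses the conditioner error. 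Assumption~\ref{assum: noise} is then invoked to bound the emerging term through $\E^{|\gF_t}\|\bg_t\|^2\le\sigma_0^2+\sigma_1^2\|\nabla f(\bw_t)\|^2$.

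\textbf{Momentum and telescoping.} The momentum lag between $\langle\nabla f(\bw_t),\bom_t\rangle$ and $\|\nabla f(\bw_t)\|^2$ is absorbed by the potential term $\frac{\bone}{2(1-\bone)\sqrt[4]{\btwo}}\eta\frac{\|\bom_{t-1}\|^2}{\sqrt{\bnu_{t-1}}}$ inherited from the deterministic analysis, whose one‑step change cancels the cross terms up to higher order (with extra bookkeeping because $\bnu$ moves between $t-1$ and $t$). Telescoping the potential and taking total expectation then gives a bound of the schematic form
\[
\E\sum_{t=1}^T\frac{\|\nabla f(\bw_t)\|^2}{\sqrt{\bnu_{t-1}}}\;\lesssim\;\frac{f(\bw_1)-f^*}{\eta}+(1-\btwo)\,(\text{noise})+\eta\Big(L_0T+L_1\,\E\sum_{t=1}^T\|\nabla f(\bw_t)\|\Big).
\]

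\textbf{Converting to the unnormalized sum --- the main obstacle.} Finally, Cauchy--Schwarz gives $\sum_t\|\nabla f(\bw_t)\|\le\big(\sum_t\frac{\|\nabla f(\bw_t)\|^2}{\sqrt{\bnu_{t-1}}}\big)^{1/2}\big(\sum_t\sqrt{\bnu_{t-1}}\big)^{1/2}$, and I would control $\E\sum_t\sqrt{\bnu_{t-1}}$ through $\E\bnu_{t-1}\le\btwo^{t-1}\bnu_0+(1-\btwo)\sum_{s<t}\btwo^{t-1-s}(\sigma_0^2+\sigma_1^2\E\|\nabla f(\bw_s)\|^2)$ and Jensen. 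This reintroduces $\E\sum_t\|\nabla f(\bw_t)\|$ on the right, so the quantity being bounded appears on both sides --- once through the conditioner in the denominator and once through the $L_1$ second‑order term --- and with mismatched powers, so a naive substitution does not close. I expect this to be the crux: resolving it requires separating the $\sigma_0$‑dominated regime $\sqrt{\bnu_{t-1}}\approx\sigma_0$ from the gradient‑dominated regime, bounding each separately, and then choosing $\eta=\Theta(\cdot)$ and $1-\btwo=\Theta(\cdot)$ as explicit functions of $T$ so that every error contribution is balanced. The smallness hypothesis $\varepsilon\le1/\operatorname{poly}(f(\bw_1)-f^*,L_0,L_1,\sigma_0,\sigma_1)$ is exactly what guarantees the resulting inequality can be solved for $\frac1T\E\sum_t\|\nabla f(\bw_t)\|\le\varepsilon$ once $T\ge\Theta\big(\frac{(L_0+L_1)\sigma_0^3\sigma_1^2(f(\bw_1)-f^*)}{\varepsilon^4}\big)$, matching the stochastic first‑order lower bound.
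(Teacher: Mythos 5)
Your overall architecture (momentum-corrected descent, a predictable surrogate for the conditioner, telescoping a normalized gradient sum, then Cauchy--Schwarz) matches the paper's skeleton, but two concrete steps would fail as written. First, your surrogate $\btwo\bnu_{t-1}$ is not enough. The correction term $\bigl(\tfrac{1}{\sqrt{\bnu_t}}-\tfrac{1}{\sqrt{\btwo\bnu_{t-1}}}\bigr)\langle\nabla f(\bw_t),\bom_t\rangle$, after conditioning and splitting $\sqrt{\E^{|\gF_t}\|\bg_t\|^2}\le\sigma_0+\sigma_1\|\bG_t\|$, leaves a piece of the form $\eta\,\sigma_0^2(1-\btwo)\,\tfrac{\|\bg_t\|^2}{\sqrt{\btwo\bnu_{t-1}}\,\bnu_t}$, i.e.\ a telescoping increment multiplied by the prefactor $1/\sqrt{\btwo\bnu_{t-1}}$, which has no lower bound and hence no uniform control when summed over $t$. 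The paper's proof avoids this by padding the surrogate to $\tilde{\bnu}_t=\btwo\bnu_{t-1}+(1-\btwo)\sigma_0^2$, so that $1/\sqrt{\tilde{\bnu}_t}\le 1/(\sqrt{1-\btwo}\,\sigma_0)$; this padding is exactly what produces the extra factor of $\sigma_0$ in the $\sigma_0^3$ rate you are trying to prove, so you cannot dispense with it (the alternative, keeping $\btwo\bnu_{t-1}$ and absorbing the bad term via a stopping time, is the content of the paper's Theorem 5, not Theorem 3). Relatedly, your per-step bound $\|\bom_t\|^2/\bnu_t\le D^2=\Theta(1/(1-\btwo))$ for the second-order term gives $L_0\eta^2 D^2 T=\Theta(L_0 T\cdot\eta^2/(1-\btwo))$, which is $\Theta(T)$ under the required scaling $\eta/\sqrt{1-\btwo}=\Theta(1/(\sigma_1^2(L_0+L_1)))$; to get the $\eta L_0 T$ you wrote in your schematic inequality you need the refined estimate $\sum_t\|\bom_t\|^2/\bnu_t\lesssim\frac{1}{1-\btwo}\ln(\bnu_T/\bnu_0)-\frac{T\ln\btwo}{1-\btwo}$ (the paper's Lemma on momentum sums), which in turn forces you to bound $\E\ln\bnu_T$ first --- the first round of the paper's two-stage divide-and-conquer.

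Second, your closing step does not close. Bounding $\E\sum_t\sqrt{\bnu_{t-1}}$ through $\E\bnu_{t-1}\le\btwo^{t-1}\bnu_0+(1-\btwo)\sum_{s<t}\btwo^{t-1-s}(\sigma_0^2+\sigma_1^2\E\|\nabla f(\bw_s)\|^2)$ reintroduces the \emph{unnormalized} second moment $\E\sum_s\|\nabla f(\bw_s)\|^2$, a quantity that nothing in the analysis controls (the descent lemma only controls $\E\sum_s\|\nabla f(\bw_s)\|^2/\sqrt{\tilde{\bnu}_s}$). You correctly identify this as the crux and gesture at a regime split, but the resolution requires a specific identity: splitting on the event $\{\|\bG_t\|\ge\sigma_0/\sigma_1\}$, one shows that the increments $\sqrt{\tilde{\bnu}_{t+1}}-\sqrt{\btwo\tilde{\bnu}_t}$ on that event are bounded by $\mathcal{O}\bigl((1-\btwo)\sigma_1^2\,\|\bG_t\|^2/\sqrt{\tilde{\bnu}_t}\bigr)$ and by $\mathcal{O}((1-\btwo)\sigma_0)$ per step off it, yielding $\sum_t\E\sqrt{\tilde{\bnu}_t}\lesssim\sigma_1^2\sum_t\E\bigl[\|\bG_t\|^2/\sqrt{\tilde{\bnu}_t}\bigr]+T\sigma_0$ --- i.e.\ the conditioner sum is bounded by the \emph{same normalized} quantity the descent lemma controls, which is what lets the Cauchy--Schwarz step terminate. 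Without these two devices the argument, as proposed, does not yield the claimed $\Theta\bigl((L_0+L_1)\sigma_0^3\sigma_1^2(f(\bw_1)-f^*)/\varepsilon^4\bigr)$ complexity.
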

\begin{proof}
    Please see Appendix \ref{appen: adam} for the formal statement of
theorem and the proof.
\end{proof}
\vspace{-2mm}
Below we include several discussions regarding Theorem \ref{thm: stochastic  Adam}. To begin with, one can immediately observe that Theorem \ref{thm: stochastic  Adam} only requires Assumptions \ref{assum: objective} and \ref{assum: noise}, and the convergence rate with respect to the initial function value gap and the final error $\frac{f(\bw_1)-f^*}{\varepsilon^4}$ matches that of Clipped SGDM \cite{zhang2020improved} even with a weaker noise assumption. Therefore, our result successfully mitigate these barriers raised above. Indeed, to the best of our knowledge, it is for the first time that an algorithm is shown to converge with rate $\mathcal{O}\left(\frac{f(\bw_1)-f^*}{\varepsilon^4}\right)$ only requiring Assumptions \ref{assum: objective} and \ref{assum: noise}, showcasing the advantage of Adam.

We briefly sketch the proof here before moving on to the result of SGDM. Specifically, the proof is inspired by recent analysis of Adam under $L$-smooth condition \cite{wang2023closing}, but several challenges arise during the proof:
\vspace{-2mm}
\begin{itemize}
    \item The first challenge lies in the additional error introduced by the $(L_0,L_1)$-smooth condition. We address this by demonstrating that the telescoping sum involving the auxiliary function $\frac{\Vert \bG_t \Vert^2}{\sqrt{\bnu_{t-1}}}$, as employed in \cite{wang2023closing}, can bound this additional error when the adaptive learning rate is upper bounded. Although the adaptive learning rate in the Adam algorithm is not inherently bounded, we establish that the deviation incurred by employing a bounded surrogate adaptive learning rate is manageable;
    \item The second challenge involves deriving the desired dependence on the initial function value gap. \citet{wang2023closing} introduces two distinct proof strategies for bounding the conditioner $\bnu_t$ and determining the final convergence rate. However, one strategy introduces an additional logarithmic dependence on $\varepsilon$, while the other exhibits sub-optimal dependence on the initial function value gap. We propose a novel two-stage divide-and-conquer approach to surmount this issue. In the first stage, we bound $\bnu_t$ effectively. Subsequently, we leverage this bound within the original descent lemma to achieve the optimal dependence on $f(\bw_1)-f^*$.
\end{itemize}

\begin{remark}[On the limitations]
\label{remark: limitation}
Although Theorem \ref{thm: stochastic Adam} addresses certain deficiencies identified in prior studies \cite{wang2022provable,li2023convergence}, it is not without its limitations. As noted by \citet{arjevani2022lower}, the established lower bound for the convergence rate of first-order optimization algorithms under the $L_0$-smooth condition with bounded noise variance (specifically, $\sigma_0 = \sigma_0$ and $\sigma_1 = 1$ as stated in Assumption \ref{assum: noise}) is $\mathcal{O}(\frac{(f(\bw_1)-f^*)L_0 \sigma_0^2}{\varepsilon^4})$. This sets a benchmark for the performance under Assumptions \ref{assum: objective} and \ref{assum: noise}. The upper bound of Adam's convergence rate as presented in Theorem \ref{thm: stochastic Adam} falls short when compared to this benchmark, exhibiting a weaker noise scale dependency ($\sigma_0^3$ as opposed to $\sigma_0^2$) and additional dependencies on $L_1$ and $\sigma_1$. 


To address these issues, we demonstrate in the subsequent section that by focusing on the convergence of the minimum gradient norm, $\E \min_{t\in [T]} \Vert \nabla f(\bw_t) \Vert$, we can attain an improved convergence rate of $\mathcal{O}(\frac{(f(\bw_1)-f^*)L_0 \sigma_0^2}{\varepsilon^4})$. This rate aligns with the aforementioned lower bound across all the problem hyperparameters. 
\end{remark}

We now establish the lower bound of SGDM. This is, however, more challenging than the deterministic case as to the best of our knowledge, there is no such a lower bound in existing literature (despite that the lower bounds of GD \citep{zhang2019gradient,crawshaw2022robustness} naturally offer a lower bound of SGD, which is considerably loose given the factor of $1/\varepsilon^2$). Intuitively, stochasticity can make the convergence of GDM even worse, as random fluctuations can inadvertently propel the iterations towards regions characterized by high smoothness even with a good initialization. We formulate this insight into the following theorem.

\textbf{A lower bound for the convergence rate of SGDM.}
\begin{theorem}[Informal]
    \label{thm: sgd}
    Fix $ L_0, L_1 $, and $\Delta_1$,  there exists objective function $f$ satisfying $(L_0,L_1)$-smooth condition and $f(\bw_1)-f^* =\Delta_1$, and a gradient noise oracle satisfying Assumption \ref{assum: noise}, such that {\color{blue}for any learning rate $\eta >0$ and $\beta \in [0, 1]$}, for all $T >0$, 
    \begin{equation*}
        \min_{t\in [T]} \E \Vert \nabla f(\bw_t) \Vert = \Vert \nabla f(\bw_1) \Vert \ge L_1\Delta_1.
    \end{equation*}
\end{theorem}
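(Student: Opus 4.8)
The plan is to exhibit a one-dimensional hard instance ($d=1$) on which the expected gradient norm of SGDM \emph{explodes} after a single step, so that the initial iterate is the unique minimizer of $\E\Vert\nabla f(\bw_t)\Vert$ over $t$. Concretely, I would take
\begin{equation*}
f(w) = \frac{L_0}{L_1^2}e^{-L_1 w},
\end{equation*}
so that $f''(w)=L_0 e^{-L_1 w}$ and $\Vert\nabla f(w)\Vert=\frac{L_0}{L_1}e^{-L_1 w}$. Since $|f''(w)| = L_0 e^{-L_1 w}\le L_0(1+e^{-L_1 w}) = L_0 + L_1\Vert\nabla f(w)\Vert$, this $f$ satisfies the Hessian form of Assumption \ref{assum: objective} and hence the assumption itself (by the equivalence noted after Assumption \ref{assum: objective}). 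It is lower bounded with $f^*=\inf f =0$, and choosing $\bw_1 = -\frac{1}{L_1}\log\frac{L_1^2\Delta_1}{L_0}$ gives both $f(\bw_1)-f^*=\Delta_1$ and $\Vert\nabla f(\bw_1)\Vert = \frac{L_0}{L_1}e^{-L_1\bw_1} = L_1\Delta_1$, matching the target value exactly.

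For the oracle I would set $\bg_t = \nabla f(\bw_t)+\xi_t$ with $\{\xi_t\}$ i.i.d., mean zero, $\E\xi_t^2\le\sigma_0^2$, and crucially with a right tail heavy enough that $\E[e^{c\xi_t}]=+\infty$ for every $c>0$ (for instance a centered variable whose right tail decays polynomially like $y^{-3}$, which has finite variance yet infinite exponential moments). This oracle is unbiased, and taking $\sigma_1=1$ it satisfies Assumption \ref{assum: noise}, since $\E\Vert\bg_t\Vert^2 = \Vert\nabla f(\bw_t)\Vert^2+\E\xi_t^2 \le \sigma_0^2 + \Vert\nabla f(\bw_t)\Vert^2$.

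The core computation tracks $X_t := e^{-L_1\bw_t}$, so that $\Vert\nabla f(\bw_t)\Vert = \frac{L_0}{L_1}X_t$. Unrolling $\bw_t=\bw_{t-1}-\eta\bom_{t-1}$ with $\bom_{t-1}=\beta\bom_{t-2}+(1-\beta)\bg_{t-1}$ shows that $\xi_{t-1}$ enters $\bw_t$ only through $\bom_{t-1}$, \emph{linearly}, with coefficient $-\eta(1-\beta)$, while the remainder of $\bw_t$ depends only on $\xi_1,\dots,\xi_{t-2}$. Hence $X_t = A_t\,e^{L_1\eta(1-\beta)\xi_{t-1}}$, where $A_t>0$ is $\mathcal F$-measurable and independent of $\xi_{t-1}$. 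For any $\eta>0$ and $\beta\in[0,1)$ the coefficient $L_1\eta(1-\beta)$ is strictly positive, so conditioning on $\xi_1,\dots,\xi_{t-2}$ and invoking the infinite exponential moment of $\xi_{t-1}$ yields $\E X_t=+\infty$, i.e. $\E\Vert\nabla f(\bw_t)\Vert=+\infty$ for every $t\ge 2$; the degenerate case $\beta=1$ (with $\bom_0=0$) freezes the iterate at $\bw_1$. In every case $\min_{t\in[T]}\E\Vert\nabla f(\bw_t)\Vert = \Vert\nabla f(\bw_1)\Vert = L_1\Delta_1$.

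The delicate point, and the reason a bounded-noise construction cannot work, is uniformity over \emph{all} learning rates, including arbitrarily small $\eta$. With light-tailed noise a single step perturbs $\E\Vert\nabla f\Vert$ only by $O(\eta)$, while the deterministic drift toward the flat region $w\to+\infty$ makes the expected gradient norm \emph{decrease} for small $\eta$, placing the minimum strictly after $t=1$. The heavy-tailed design circumvents this entirely: because $\E[e^{c\xi}]=+\infty$ for every $c>0$, the one-step expected gradient norm is infinite for every $\eta>0$, decoupling the conclusion from the magnitude of $\eta$ (and of $\beta$). The two things to verify carefully are (i) that the heavy tail sits on the side for which a large $\xi_{t-1}$ drives $\bw_t$ toward the large-gradient region $w\to-\infty$ (the sign of the coefficient above), and (ii) that $A_t$ is almost surely positive and finite, so that the conditional expectation is genuinely $+\infty$ rather than an indeterminate form.
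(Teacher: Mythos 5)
Your construction is correct and essentially reproduces the paper's own argument: the paper likewise uses a one-dimensional objective with exponentially growing gradient (the piecewise $f_1$ of Eq.\ (\ref{lowerbound_f1_restated})) together with additive, unbiased, finite-variance noise whose exponential moments are all infinite (density $\propto e^{-\sqrt{|z|}/a}$ in place of your polynomial tail), and concludes that a single SGDM step from any point yields $\E[\Vert \nabla f(\bw_{t+1})\Vert \mid \bw_t]=\infty$ for every $\eta>0$, so the minimum of the expected gradient norm is attained at $t=1$. Your explicit handling of the degenerate case $\beta=1$ (iterate frozen at $\bw_1$ when $\bom_0=0$) is a small point of extra care, since the paper's ``regardless of $\beta$'' limit computation silently degenerates there because the coefficient of $z$ in the exponent vanishes, though the conclusion is trivial in that case.
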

\begin{proof}
Please see Appendix \ref{appen: sgd} for the formal statement of theorem and the proof.
\end{proof}
\vspace{-2mm}
Theorem \ref{thm: sgd} provides concrete evidence for the challenges inherent in the convergence of SGDM. It shows that there are instances that comply with Assumption \ref{assum: objective} and Assumption \ref{assum: noise} for which SGDM fails to converge, regardless of the chosen learning rate and momentum coefficient. This outcome confirms our earlier hypothesis: the stochastic elements within SGDM can indeed adversely affect its convergence properties under non-uniform smoothness.

Our proof is founded upon a pivotal observation: an objective function that escalates rapidly can effectively convert non-heavy-tailed noise into a "heavy-tailed" one. In particular, under the $(L_0,L_1)$-smooth condition, the magnitude of the gradient is capable of exponential growth. As a result, even if the density diminishes exponentially, the expected value of the gradient norm may still become unbounded. This situation mirrors what occurs under the $L$-smooth condition when faced with heavy-tailed noise. Such a dynamic can lead to the non-convergence of SGDM.

\textbf{Separating Adam and SGDM.}  Considering that Adam can achieve convergence under Assumptions \ref{assum: objective} and \ref{assum: noise}, while SGD cannot, the superiority of Adam over SGDM becomes evident. It is important to note, however, a recent study by \cite{li2023convex}, which demonstrates that SGD can converge with high probability under the same assumptions, provided the noise variance is bounded. We would like to contextualize this finding in relation to our work as follows: First, this result does not conflict with our Theorem \ref{thm: sgd}, since our theorem pertains to bounds in expectation rather than with high probability. Second, our comparison of Adam and SGDM within an in-expectation framework is reasonable and aligns with the convention of most existing lower bounds in the literature \cite{carmon2017lower, drori2020complexity, arjevani2022lower}. Moreover, establishing high-probability lower bounds is technically challenging, and there are few references to such bounds in the existing literature. Lastly, while we have not derived a corresponding high-probability lower bound for SGD, the upper bound provided by \citet{li2023convex} is $\mathcal{O}(\frac{(f(\bw_1)-f^*)^4}{\varepsilon^4})$, which indicates a less favorable dependency on the initial function value gap compared to the bound for Adam.

\section{Can Adam reach the lower bound of the convergence rate under $(L_0,L_1)$-smooth condition?}
\label{sec: reach_lower_bound}
As we mentioned in Remark \ref{remark: limitation}, although Theorem \ref{thm: stochastic Adam} matches the lower bound established by \citet{arjevani2022lower} with respect to the initial function value gap $f(\bw_1)-f^*$, the final error $\varepsilon$, and the smoothness coefficient $L_0$, it exhibits sub-optimal dependence on the noise scale $\sigma_0$ and additional dependence on $L_1$ and $\sigma_1$. One may wonder whether these dependencies are inherently unavoidable or if they stem from technical limitations in our analysis.

Upon revisiting the proof, we identified that the sub-optimal dependencies arise from our strategy of substituting the original adaptive learning rate with a bounded surrogate. For example, the correlation between stochastic gradient and adaptive learning rate will introduce an error term $\eta\frac{\sigma_0^2 (1-\btwo)\Vert \bg_t \Vert^2}{\sqrt{\btwo \bnu_{t-1}}\bnu_t}$, detailed in Eq. (\ref{eq: approximation_error}). To bound this term, we add a constant $\lambda$ to $\btwo \bnu_{t-1}$, allowing us to upper bound $\frac{1}{\sqrt{\btwo \bnu_{t-1}+\lambda}}$. Consequently, the term $\eta\frac{\sigma_0^2 (1-\btwo)\Vert \bg_t \Vert2}{\sqrt{\btwo \bnu_{t-1}+\lambda}\bnu_t}$ can be bounded by $\eta\frac{\sigma_0^2 (1-\btwo)\Vert \bg_t \Vert^2}{\sqrt{\lambda}\bnu_t}$, which has the same order as a second-order Taylor expansion. To control the error introduced by adding $\lambda$, we cannot choose a value for $\lambda$ that is too large. The optimal choice of $\lambda$ for balancing the new error against the original error is $(1-\btwo)\sigma_0^2$. This selection results in the original error term $\eta\frac{\sigma_0 \sqrt{1-\btwo}\Vert \bg_t \Vert^2}{\bnu_t}$, which induces an additional $\sigma_0$ factor, ultimately leading to the sub-optimal dependence on $\sigma_0$. Therefore, we need to explore alternative methods to handle the error term to eliminate the sub-optimal dependence on $\sigma_0$.

We begin our analysis by observing that the term $\frac{(1-\btwo)\Vert \bg_t \Vert^2}{\sqrt{\btwo \bnu_{t-1}}\bnu_t}$ can in fact be bounded by an "approximate telescoping" series of $\frac{1}{\sqrt{\bnu_t}}$ (noting an additional coefficient $\frac{1}{\sqrt{\btwo}}$ in comparison to standard telescoping):

\small
\begin{equation*}
\frac{(1-\btwo)\Vert \bg_t \Vert^2}{\sqrt{\btwo \bnu_{t-1}}\bnu_t} \le \mathcal{O}\left(\frac{1}{ \sqrt{\btwo\bnu_{t-1}}} - \frac{1}{ \sqrt{\bnu_{t}}}\right).
\end{equation*}
\normalsize

Accordingly, summing $\eta\frac{\sigma_0^2 (1-\btwo)\Vert \bg_t \Vert2}{\sqrt{\btwo \bnu_{t-1}}\bnu_t}$ over $t$ yields a bound of $\mathcal{O}(\eta\sigma_0^2 \sum_t (1-\btwo) \frac{1}{ \sqrt{\bnu_{t}}} )$. However, this term could potentially be unbounded since $\sqrt{\bnu_{t}}$ is not lower bounded. To circumvent this issue, we consider the first-order Taylor's expansion of the descent lemma, which, gives $-\sum_t \eta \frac{\Vert \nabla f(\bw_t) \Vert2 }{\sqrt{\bnu_t}}$. Intuitively, if any $\Vert \nabla f(\bw_t) \Vert^2$ is of the order $\mathcal{O}(\sigma_0^2 (1-\btwo))$, our proof would be completed since we choose $1-\btwo= \Theta (\varepsilon^4)$. In the other case, the term $\mathcal{O}(\eta\sigma_0^2 \sum_t (1-\btwo) \frac{1}{ \sqrt{\bnu_{t}}} )$ can be offset by the negative term $-\sum_t \eta \frac{\Vert \nabla f(\bw_t) \Vert^2 }{\sqrt{\bnu_t}}$. However, formalizing this intuition into a proof is challenging in the context of stochastic analysis, where the randomness across iterations complicates the analysis. Specifically, if we condition on the event that "no gradient norm is as small as $\sigma_0^2 (1-\btwo)$," which is supported over the randomness of all iterations, it becomes difficult to express many expected values (such as those from the first-order Taylor expansion) in closed form.

We address this difficulty by introducing a stopping time $\tau\triangleq \min\{t: \Vert \nabla f(\bw_{t+1}) \Vert^2 \le \mathcal{O}(\sigma_0^2 (1-\btwo))\}$. By applying the optimal stopping theorem \cite{durrett2019probability}, we can maintain closed-form expressions for the expected values up to the stopping time, allowing the problematic error term to be absorbed within this interval. Building on this methodology, we formulate the following theorem.

\begin{theorem}[Informal]
\label{thm: attain_lower_bound}
    Let Assumptions \ref{assum: objective} and \ref{assum: noise} hold. Then, $\forall 1>\beta_1 \ge 0$, if $\varepsilon\le \frac{1}{\Poly}$, with a proper choice of learning rate $\eta$ and momentum hyperparameter $\btwo$, we have that if $T \ge \Theta(\frac{ L_0\sigma_0^2 (f(\bw_1)-f^*)}{\varepsilon^4})$
    \begin{equation*}
        \E \min_{t\in [1,T]}\Vert \nabla f(\bw_t) \Vert \le \varepsilon.
    \end{equation*}
\end{theorem}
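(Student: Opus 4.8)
The plan is to build on the descent analysis underlying Theorem~\ref{thm: stochastic Adam}, but to replace the lossy surrogate-conditioner argument (precisely what costs the extra $\sigma_0$, $\sigma_1$, and $L_1$ factors in that theorem) with a stopping-time localization. \textbf{First}, I would start from the same Lyapunov function $\Phi_t = f(\bw_t) + \frac{\beta_1}{2(1-\beta_1)\sqrt[4]{\beta_2}}\eta\frac{\Vert\bom_{t-1}\Vert^2}{\sqrt{\bnu_{t-1}}}$ and, using Assumption~\ref{assum: objective}, derive a one-step inequality whose leading descent term is $-\eta\frac{\Vert\nabla f(\bw_t)\Vert^2}{\sqrt{\bnu_t}}$ and whose dominant stochastic error is the correlation term $\eta\frac{\sigma_0^2(1-\btwo)\Vert\bg_t\Vert^2}{\sqrt{\btwo\bnu_{t-1}}\,\bnu_t}$ identified in Eq.~(\ref{eq: approximation_error}). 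The key departure is to \emph{not} add $\lambda$ to the denominator, but to keep this term intact.

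\textbf{Second}, I would control the error by the approximate-telescoping bound $\frac{(1-\btwo)\Vert\bg_t\Vert^2}{\sqrt{\btwo\bnu_{t-1}}\,\bnu_t}\le \mathcal{O}\big(\frac{1}{\sqrt{\btwo\bnu_{t-1}}}-\frac{1}{\sqrt{\bnu_t}}\big)$, so that summing over $t$ leaves a residual of order $\mathcal{O}(\eta\sigma_0^2\sum_t(1-\btwo)\frac{1}{\sqrt{\bnu_t}})$. This residual is not summable on its own, because $\sqrt{\bnu_t}$ has no deterministic lower bound; the idea is to cancel it against the negative first-order term $-\eta\frac{\Vert\nabla f(\bw_t)\Vert^2}{\sqrt{\bnu_t}}$ whenever the gradient exceeds the threshold $\mathcal{O}(\sigma_0^2(1-\btwo))$, since both carry the same $\frac{1}{\sqrt{\bnu_t}}$ weight.

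\textbf{Third} --- and this is the crux --- I would make that cancellation rigorous in expectation by introducing the stopping time $\tau=\min\{t:\Vert\nabla f(\bw_{t+1})\Vert^2\le \mathcal{O}(\sigma_0^2(1-\btwo))\}$. Because $\{\tau\ge t\}$ is $\gF_t$-measurable, the stopped process is adapted and the optional stopping theorem lets me keep closed-form conditional expectations up to $\tau\wedge T$; on the whole interval $t\le\tau$ the gradient is large, so the residual is pointwise dominated by half of the negative descent term, yielding $\E\sum_{t=1}^{\tau\wedge T}\eta\frac{\Vert\nabla f(\bw_t)\Vert^2}{\sqrt{\bnu_t}}\le \mathcal{O}(f(\bw_1)-f^*)$.

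\textbf{Finally}, I would split on $\{\tau\le T\}$ versus $\{\tau>T\}$ and set $1-\btwo=\Theta(\varepsilon^4)$. On $\{\tau\le T\}$ the definition of $\tau$ already exhibits an iterate with $\Vert\nabla f\Vert\le\mathcal{O}(\sigma_0\varepsilon^2)\le\varepsilon$ (using smallness of $\varepsilon$); on $\{\tau>T\}$ I would combine the summed bound with the first-stage upper bound on $\bnu_t$ (derived exactly as in the proof of Theorem~\ref{thm: stochastic Adam}) through Cauchy--Schwarz, $(\sum_t\Vert\nabla f(\bw_t)\Vert)^2\le(\sum_t\frac{\Vert\nabla f(\bw_t)\Vert^2}{\sqrt{\bnu_t}})(\sum_t\sqrt{\bnu_t})$, converting the weighted sum into a bound on the minimum gradient norm. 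Matching the two cases and optimizing $\eta$ then yields the claimed threshold $T\ge\Theta(\frac{L_0\sigma_0^2(f(\bw_1)-f^*)}{\varepsilon^4})$. I expect the main obstacle to be Step three: without the stopping time, conditioning on the event ``every gradient stays large'' is non-adapted and destroys the martingale structure needed to write the first-order Taylor terms as closed-form conditional expectations, so the careful verification that $\tau$ is a genuine stopping time and that optional stopping applies to the stopped Lyapunov sequence is where the real difficulty lies.
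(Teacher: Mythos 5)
Your proposal follows essentially the same route as the paper's proof: keep the correlation error term intact rather than padding the conditioner with $(1-\btwo)\sigma_0^2$, control it by the approximate telescoping of $\frac{1}{\sqrt{\btwo\bnu_{t-1}}}-\frac{1}{\sqrt{\bnu_t}}$, absorb the residual into the negative first-order term up to the stopping time $\tau=\min\{t:\Vert\nabla f(\bw_{t+1})\Vert^2\le\mathcal{O}(\sigma_0^2(1-\btwo))\}$ via optional stopping of the accumulated submartingale, and finish by the case split on $\tau$ together with the Cauchy--Schwarz conversion $(\sum_t\Vert\bG_t\Vert)^2\le(\sum_t\frac{\Vert\bG_t\Vert^2}{\sqrt{\bnu_t}})(\sum_t\sqrt{\bnu_t})$ and the first-stage bound on $\bnu_t$. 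The only cosmetic difference is that the paper tracks momentum through the auxiliary iterate $\bu_t=\frac{\bw_t-\frac{\bone}{\sqrt{\btwo}}\bw_{t-1}}{1-\frac{\bone}{\sqrt{\btwo}}}$ rather than the deterministic Lyapunov function you cite, which does not change the substance of the argument.
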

\begin{proof}
Please see Appendix \ref{appen: tight_bound} for the formal statement of theorem and the proof.
\end{proof}
\vspace{-2mm}
One can easily see that the convergence rate of Theorem \ref{thm: attain_lower_bound} matches the lower bound in \citet{arjevani2022lower} with respect to all problem hyperparameters up to numerical constants even under the weaker $(L_0,L_1)$-smooth condition. Therefore, such a rate is optimal and provides an affirmative answer to the question raised in the beginning of this section.

One may notice that in the construction of the stopping time, we set the threshold for the squared gradient norm to be $\mathcal{O}(1-\beta_2)$. As we set $1-\beta_2 =\Theta(\varepsilon^4)$, the threshold is actually much smaller than what we aim for, since our goal is to have $\Vert \nabla f(\mathbf{w}_t) \Vert^2 \le \varepsilon^2$. Therefore, based on the stopping-time technique, we can actually show that Adam can converge with an optimal rate of $\mathcal{O}(\varepsilon^{-4})$ when $1-\beta_2 = \varepsilon^2$, or $1/\sqrt{T}$ if expressed in terms of the iteration number $T$. To the best of our knowledge, this is the first time that Adam has been shown to converge with an optimal rate under the condition that $1-\beta_2 = \Omega(1/T)$, which greatly enlarges the hyperparameter range. Moreover, as we select $\eta = 1/\sqrt{T}$, choosing $1-\beta_2 = \Omega(1/T)$ has the advantage that the update norm decreases with respect to $T$. This makes Adam parameter-agnostic under the $(L_0,L_1)$-smooth condition, as the update norm will eventually become smaller than $\frac{1}{L_1}$ as $T$ increases.
\begin{theorem}
\label{thm: parameter_agnostic}
     Let Assumptions \ref{assum: objective} and \ref{assum: noise} hold. Then, at the $t$-th iteration, setting $\eta= \frac{1}{\sqrt{t}}$, $\btwo = 1- \frac{1}{\sqrt[4]{t^3}}$, we have that Algorithm \ref{alg: adam} satisfies
    \begin{equation*}
        \E \min_{t\in [1,T]}\Vert \nabla f(\bw_t) \Vert \le \tilde{\mathcal{O}} \left(\frac{1}{\sqrt[4]{T}}\right).
    \end{equation*}
\end{theorem}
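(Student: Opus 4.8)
The plan is to reduce Theorem~\ref{thm: parameter_agnostic} to a time-varying adaptation of the stopping-time analysis behind Theorem~\ref{thm: attain_lower_bound}, resting on two pillars. The first is a self-stabilization property of the schedule: although the hyperparameters $\eta_t = t^{-1/2}$ and $1-\btwo = t^{-3/4}$ carry no knowledge of $L_0,L_1,\sigma_0,\sigma_1$ or $f(\bw_1)-f^*$, the induced update norm provably shrinks, so that the $(L_0,L_1)$-smooth condition eventually behaves like ordinary $L$-smoothness and no tuning is required. The second is the optional-stopping device already used for Theorem~\ref{thm: attain_lower_bound}, which keeps the correlation between $\bg_t$ and the conditioner $\sqrt{\bnu_t}$ tractable in expectation.

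First I would bound the update norm. Since $\bnu_t \ge (1-\btwo)\Vert\bg_t\Vert^2$ and, more generally, a Cauchy--Schwarz comparison of the two exponential moving averages gives $\Vert\bom_t\Vert/\sqrt{\bnu_t} \le C(\bone)/\sqrt{1-\btwo}$ once $\bone^2 < \btwo$ (which holds for every $t$ past a constant, because the schedule drives $\btwo \to 1$), the step satisfies
\begin{equation*}
\Vert \bw_{t+1}-\bw_t\Vert = \eta_t\frac{\Vert\bom_t\Vert}{\sqrt{\bnu_t}} \le \frac{C(\bone)\,\eta_t}{\sqrt{1-\btwo}} = C(\bone)\, t^{-1/2}\, t^{3/8} = C(\bone)\, t^{-1/8}.
\end{equation*}
Hence there is a burn-in index $t_0 = \mathcal{O}(L_1^8)$, depending only on $L_1$ and $\bone$ and not on any tuned quantity, beyond which $\Vert\bw_{t+1}-\bw_t\Vert \le 1/L_1$. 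For $t\ge t_0$ the two iterates lie within the radius required by Assumption~\ref{assum: objective}, so the descent lemma applies with effective smoothness $L_0 + L_1\Vert\nabla f(\bw_t)\Vert$; the finitely many early steps contribute only an additive $\mathcal{O}(1)$ to the Lyapunov function and are absorbed into the $\tmO$.

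Next I would run the descent analysis of Theorem~\ref{thm: attain_lower_bound} with the time-varying coefficients for $t\ge t_0$. Concretely, I would reuse the Lyapunov function and the ``approximate telescoping'' bound $\frac{(1-\btwo)\Vert\bg_t\Vert^2}{\sqrt{\btwo\bnu_{t-1}}\,\bnu_t}\lesssim \frac{1}{\sqrt{\btwo\bnu_{t-1}}}-\frac{1}{\sqrt{\bnu_t}}$, and introduce the stopping time $\tau = \min\{t : \Vert\nabla f(\bw_{t+1})\Vert^2 \le \mathcal{O}(1-\btwo)\}$ with its now iteration-dependent threshold. Applying the optional stopping theorem keeps the first-order Taylor term $-\sum_t \eta_t \Vert\nabla f(\bw_t)\Vert^2/\sqrt{\bnu_t}$ available in closed form up to $\tau$, which absorbs the noise term $\mathcal{O}(\eta_t\sigma_0^2(1-\btwo)/\sqrt{\bnu_t})$ exactly as in the constant-rate proof. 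Summing the per-step inequalities over $t_0\le t\le T$ and balancing the decaying factors $\eta_t = t^{-1/2}$ and $1-\btwo = t^{-3/4}$ against the initial gap and the noise scale then yields $\E\min_{t\in[1,T]}\Vert\nabla f(\bw_t)\Vert \le \tmO(T^{-1/4})$.

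I expect the main obstacle to be the loss of clean geometric telescoping once $\btwo$ drifts with $t$: the conditioner recursion $\bnu_t = \btwo\bnu_{t-1}+(1-\btwo)\Vert\bg_t\Vert^2$ no longer has a single decay rate, so controlling $\bnu_t$ and verifying that the approximate-telescoping sum still collapses requires tracking the products $\prod_{s}\beta_{2,s}$ and showing the drift is slow enough that the residual stays manageable (the $t^{-3/4}$ choice is precisely what threads the needle between a shrinking update norm and a closed summation). A secondary difficulty is that the stopping-time threshold $\mathcal{O}(1-\btwo)$ itself decreases with $t$, so the event $\{t<\tau\}$ must be re-examined at each step; care is needed to ensure the optional stopping theorem still applies and that the burn-in segment does not disturb the final rate.
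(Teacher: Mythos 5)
Your proposal follows essentially the same route as the paper: a problem-dependent but $T$-independent burn-in after which the decaying step size forces $\Vert \bw_{t+1}-\bw_t\Vert \le 1/L_1$ (your $t^{-1/8}$ computation from Lemma \ref{lem: bounded_update} is the right one; the paper quotes $t^{-1/4}$, but either way the burn-in is a constant), followed by a rerun of the stopping-time argument of Theorem \ref{thm: attain_lower_bound} with time-varying coefficients. You are also more explicit than the paper about the real technical work — the loss of a single geometric decay rate in the $\bnu_t$ recursion and the approximate-telescoping sums — which the paper's two-bullet proof simply asserts away.

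One concrete point needs fixing: you define the stopping time with an \emph{iteration-dependent} threshold $\Vert\nabla f(\bw_{t+1})\Vert^2 \le \mathcal{O}(1-\beta_{2,t}) = \mathcal{O}(t^{-3/4})$. In the case $\tau < T$, the only bound you then get on $\min_{t\in[1,T]}\Vert\nabla f(\bw_t)\Vert$ is the threshold evaluated at $\tau$, namely $\mathcal{O}(\tau^{-3/8})$, which is $\Theta(1)$ if the stopping time fires early and therefore does not deliver $\tilde{\mathcal{O}}(T^{-1/4})$. The threshold must be tied to the final horizon, as in the paper's proof of Theorem \ref{thm: attain_lower_bound_appen} where $\tau = \min\{t: \Vert\nabla f(\bw_{t+1})\Vert \le C\,T^{-1/4}\}\wedge T$; with that change the $\tau<T$ branch gives the target bound immediately and the $\tau=T$ branch is handled by your summed descent inequality. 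This is a local repair rather than a flaw in the overall strategy, and you flag the issue yourself, but as written the $\tau<T$ case of your argument does not close.
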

It is shown in \cite{hubler2023parameter} that Normed-SGDM is parameter-agnostic. Here we show that Adam with a specific scheduler can achieve the same goal.

\section{Conclusion}
In this paper, we have conducted a mathematical examination of the performance of the Adam optimizer and SGDM within the context of non-uniform smoothness. Our convergence analysis reveals that Adam exhibits a faster rate of convergence compared to SGDM under these conditions. Moreover, we introduce a novel stopping time technique that demonstrates Adam's capability to achieve the existing lower bounds for convergence rates. This finding underscores the robustness of Adam in complex optimization landscapes and contributes to a deeper understanding of its theoretical properties.

\section*{Impact Statement}
	This paper investigates convergence of Adam and SGDM under non-uniform smoothness. The main contributions of this paper are theoretical. Thus, in our opinion, the paper has no potential ethical and societal problems.

\nocite{langley00}

\bibliography{related}
\bibliographystyle{icml2024}

\newpage
\appendix
\onecolumn

\section{Auxiliary Lemmas}
\label{appen: auxiliary}
In this section, we provide auxiliary results which will be used in subsequent results.
\begin{lemma}
\label{lem: bounded_update}
    We have $\forall t\ge 1$, $\Vert  \bw_{t+1}-\bw_{t}\Vert \le \eta \frac{1-\bone}{\sqrt{1-\btwo}\sqrt{1-\frac{\bone^2}{\btwo}}}$.
\end{lemma}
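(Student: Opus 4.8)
The plan is to write the update rule explicitly and reduce the claim to a pointwise bound on the ratio $\|\bom_t\|/\sqrt{\bnu_t}$. Since $\lambda = 0$, the update in Algorithm \ref{alg: adam} reads $\bw_{t+1}-\bw_t = -\eta\, \bom_t/\sqrt{\bnu_t}$, so $\|\bw_{t+1}-\bw_t\| = \eta\,\|\bom_t\|/\sqrt{\bnu_t}$, and it therefore suffices to establish
\begin{equation*}
\frac{\|\bom_t\|}{\sqrt{\bnu_t}} \le \frac{1-\bone}{\sqrt{1-\btwo}\,\sqrt{1-\bone^2/\btwo}}.
\end{equation*}

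First I would unroll both recursions. Taking the standard initialization $\bom_0 = \vzero$ (and $\bnu_0 = 0$), the momentum and conditioner expand as $\bom_t = (1-\bone)\sum_{s=1}^t \bone^{t-s}\bg_s$ and $\bnu_t = (1-\btwo)\sum_{s=1}^t \btwo^{t-s}\|\bg_s\|^2$, so by the triangle inequality $\|\bom_t\| \le (1-\bone)\sum_{s=1}^t \bone^{t-s}\|\bg_s\|$. The key step is then to bound this scalar sum by $\sqrt{\bnu_t}$ through a Cauchy--Schwarz inequality that matches the momentum weights to the conditioner weights: writing $\bone^{t-s} = (\bone/\sqrt{\btwo})^{t-s}\cdot \btwo^{(t-s)/2}$ and applying Cauchy--Schwarz gives
\begin{equation*}
\sum_{s=1}^t \bone^{t-s}\|\bg_s\| \le \sqrt{\sum_{s=1}^t (\bone^2/\btwo)^{t-s}}\;\cdot\; \sqrt{\sum_{s=1}^t \btwo^{t-s}\|\bg_s\|^2}.
\end{equation*}
Because $\bone^2 < \btwo$, the first factor is dominated by the convergent geometric sum $1/(1-\bone^2/\btwo)$, while the second equals $\sqrt{\bnu_t/(1-\btwo)}$. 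Substituting back and multiplying through by $\eta$ yields the stated bound.

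The main obstacle is really the crux rather than a technical difficulty: spotting the weight-splitting $\bone^{t-s} = (\bone/\sqrt{\btwo})^{t-s}\,\btwo^{(t-s)/2}$ that makes Cauchy--Schwarz produce exactly the factor $\sqrt{\bnu_t}$ together with a geometric series of ratio $\bone^2/\btwo$. The hypothesis $\bone^2 < \btwo$ from the theorem setup is precisely what guarantees that this series is summable, which is why the bound blows up as $\bone^2 \to \btwo$. A minor point worth checking is the effect of nonzero initial values: a nonzero $\bnu_0 \in \mathbb{R}^+$ only enlarges $\bnu_t$ and hence enlarges the denominator in our favor, so $\sum_{s=1}^t \btwo^{t-s}\|\bg_s\|^2 \le \bnu_t/(1-\btwo)$ still holds and the bound is preserved; with the standard choice $\bom_0 = \vzero$ the momentum expansion carries no residual term, keeping the argument clean.
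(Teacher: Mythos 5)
Your proposal is correct and follows essentially the same route as the paper's proof: unroll $\bom_t$ and $\bnu_t$, apply Cauchy--Schwarz with the weight splitting $\bone^{i} = (\bone/\sqrt{\btwo})^{i}\btwo^{i/2}$ so that the $\sqrt{\bnu_t}$ in the denominator cancels, and bound the remaining geometric series by $1/(1-\bone^2/\btwo)$ using $\bone^2<\btwo$. The paper handles the nonzero $\bnu_0$ exactly as you note (it only enlarges the denominator and is dropped), and likewise takes $\bom_0=\vzero$ in the expansion.
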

\begin{proof}
    We have that 
    \begin{align*}
       & \Vert  \bw_{t+1}-\bw_{t}\Vert =\eta \left \vert \frac{\bom_{t}}{\sqrt{\bnu_{t}}} \right\vert \le\eta\frac{\sum_{i=0}^{t-1} (1-\bone) \bone^{i} \Vert \bg_{t-i}\Vert }{\sqrt{\sum_{i=0}^{t-1} (1-\btwo) \btwo^{i} \Vert \bg_{t-i}\Vert^2+\btwo^t \bnu_{0}}}
        \\
        \le &\eta\frac{1-\bone}{\sqrt{1-\btwo}}\frac{\sqrt{\sum_{i=0}^{t-1} \btwo^{i} \Vert \bg_{t-i}\Vert^2}\sqrt{\sum_{i=0}^{t-1} \frac{\bone^{2i}}{\btwo^{i}} } }{\sqrt{\sum_{i=0}^{t-1} \btwo^{i} \Vert \bg_{t-i}\Vert^2}}\le \eta\frac{1-\bone}{\sqrt{1-\btwo}\sqrt{1-\frac{\bone^2}{\btwo}}}.
    \end{align*}
    Here the second inequality is due to Cauchy's inequality. The proof is completed.
\end{proof}

The following lemma provides a novel descent lemma under $(L_0,L_1)$-smooth condition.
\begin{lemma}
\label{lem: descent}
Let Assumption \ref{assum: objective} hold.
Then, for any three points $\bw^1, \bw^2,\bw^3\in \mathcal{X}$  satisfying $\Vert \bw^1-\bw^2\Vert \le \frac{1}{2 L_1}$ and $\Vert \bw^1-\bw^3\Vert \le \frac{1}{2 L_1}$, we have
\begin{equation*}
    f(\bw^2)\le  f(\bw^3)+\langle \nabla f(\bw^3), \bw^2-\bw^3\rangle + \frac{1}{2}(L_0+L_1 \Vert \nabla f(\bw^1)\Vert) \Vert\bw^2-\bw^3\Vert (\Vert \bw^1-\bw^3 \Vert + \Vert \bw^1-\bw^2 \Vert )
    . 
\end{equation*}
\end{lemma}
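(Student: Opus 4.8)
The plan is to read the left-hand side as a second-order (Bregman-type) remainder and to bound it via the fundamental theorem of calculus along the segment joining $\bw^3$ and $\bw^2$. Writing $\bw(s) = \bw^3 + s(\bw^2-\bw^3)$ for $s\in[0,1]$, I would first record the exact identity
\begin{equation*}
f(\bw^2) - f(\bw^3) - \langle \nabla f(\bw^3), \bw^2-\bw^3\rangle = \int_0^1 \langle \nabla f(\bw(s)) - \nabla f(\bw^3),\, \bw^2-\bw^3\rangle\, ds,
\end{equation*}
so that, after Cauchy--Schwarz to extract the factor $\Vert\bw^2-\bw^3\Vert$, everything reduces to controlling the gradient deviation $\Vert\nabla f(\bw(s)) - \nabla f(\bw^3)\Vert$ along the segment and integrating in $s$.

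Before invoking Assumption~\ref{assum: objective}, I would check that every pair of points I touch is admissible, i.e. at distance at most $1/L_1$. The key observation is that $\bw(s) - \bw^1 = (1-s)(\bw^3-\bw^1) + s(\bw^2-\bw^1)$ is a convex combination, so $\Vert\bw(s)-\bw^1\Vert \le (1-s)\Vert\bw^3-\bw^1\Vert + s\Vert\bw^2-\bw^1\Vert \le \tfrac{1}{2L_1}$; hence the whole segment stays in the ball of radius $\tfrac{1}{2L_1}$ around $\bw^1$, and any two of $\bw^1,\bw^3,\bw(s)$ are within $1/L_1$ (using $\Vert\bw^2-\bw^3\Vert\le\Vert\bw^1-\bw^3\Vert+\Vert\bw^1-\bw^2\Vert$ for the pair $\bw^3,\bw(s)$). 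I would then bound the deviation by anchoring the effective smoothness at $\bw^1$: Assumption~\ref{assum: objective} with base point $\bw^1$ controls $\nabla f(\bw(s))-\nabla f(\bw^1)$ and $\nabla f(\bw^3)-\nabla f(\bw^1)$ by the single constant $L_0 + L_1\Vert\nabla f(\bw^1)\Vert$ times their displacements from $\bw^1$. Integrating and using the clean estimate $\int_0^1 \Vert\bw(s)-\bw^1\Vert\, ds \le \tfrac12(\Vert\bw^1-\bw^3\Vert+\Vert\bw^1-\bw^2\Vert)$ would then yield a bound of the stated form.

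The main obstacle is obtaining the sharp constant $\tfrac12$ together with the symmetric factor $\Vert\bw^1-\bw^3\Vert+\Vert\bw^1-\bw^2\Vert$. A naive triangle inequality through $\bw^1$ is lossy: charging $\Vert\nabla f(\bw^3)-\nabla f(\bw^1)\Vert$ at full weight $\Vert\bw^1-\bw^3\Vert$, or converting $\Vert\nabla f(\bw^3)\Vert$ (resp. $\Vert\nabla f(\bw(s))\Vert$) back to $\Vert\nabla f(\bw^1)\Vert$ through the $\tfrac{1}{2L_1}$-radius bound, inflates the effective constant by a factor, so the delicate point is to arrange the estimate so that the Lipschitz constant appearing is genuinely $L_0+L_1\Vert\nabla f(\bw^1)\Vert$ while the distance contributions integrate to exactly $\tfrac12(\Vert\bw^1-\bw^3\Vert+\Vert\bw^1-\bw^2\Vert)$. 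I expect this bookkeeping --- exploiting the convex-combination structure of $\bw(s)-\bw^1$ --- to be where the real work lies. The $L$-smooth case ($L_1=0$), where the bound collapses to the classical descent lemma followed by $\Vert\bw^2-\bw^3\Vert\le\Vert\bw^1-\bw^3\Vert+\Vert\bw^1-\bw^2\Vert$ and is attained with equality when $\bw^1$ lies on the segment $[\bw^3,\bw^2]$, is a useful sanity check confirming that $\tfrac12$ is the correct and tight target constant.
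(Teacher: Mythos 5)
Your setup (fundamental theorem of calculus along the segment $\bw(s)=\bw^3+s(\bw^2-\bw^3)$, the convex-combination identity $\bw(s)-\bw^1=(1-s)(\bw^3-\bw^1)+s(\bw^2-\bw^1)$ to keep everything within radius $\tfrac{1}{2L_1}$ of $\bw^1$, anchoring the smoothness constant at $\bw^1$) matches the paper's, but you have correctly located the one step that does not go through and then left it unresolved. With your decomposition the remainder is $\int_0^1\langle\nabla f(\bw(s))-\nabla f(\bw^3),\bw^2-\bw^3\rangle\,ds$, and there is no way to bound $\Vert\nabla f(\bw(s))-\nabla f(\bw^3)\Vert$ by $(L_0+L_1\Vert\nabla f(\bw^1)\Vert)\Vert\bw(s)-\bw^1\Vert$ alone: routing through $\bw^1$ by the triangle inequality costs an extra additive $(L_0+L_1\Vert\nabla f(\bw^1)\Vert)\Vert\bw^3-\bw^1\Vert$ at every $s$, which integrates to a full $\Vert\bw^1-\bw^3\Vert$ rather than half of it; applying Assumption~\ref{assum: objective} directly with base point $\bw^3$ forces the constant $L_0+L_1\Vert\nabla f(\bw^3)\Vert$, and converting $\Vert\nabla f(\bw^3)\Vert$ back to $\Vert\nabla f(\bw^1)\Vert$ via the radius $\tfrac{1}{2L_1}$ inflates the constant by a factor $\tfrac32$. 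Your $L$-smooth sanity check only validates the target when $L_1=0$, where this difficulty vanishes. So the deferred ``bookkeeping'' is not bookkeeping --- it is the crux, and your chosen decomposition cannot deliver the stated constant.

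The paper's resolution is to change the expansion point of the \emph{first-order term} rather than sharpen the remainder estimate: it writes $f(\bw^2)=f(\bw^3)+\langle\nabla f(\bw^1),\bw^2-\bw^3\rangle+\int_0^1\langle\nabla f(\bw(a))-\nabla f(\bw^1),\bw^2-\bw^3\rangle\,da$, so the deviation to be controlled is from $\nabla f(\bw^1)$ directly. Assumption~\ref{assum: objective} with base point $\bw^1$ then yields the single constant $L_0+L_1\Vert\nabla f(\bw^1)\Vert$ times $\Vert a(\bw^2-\bw^1)+(1-a)(\bw^3-\bw^1)\Vert\le a\Vert\bw^2-\bw^1\Vert+(1-a)\Vert\bw^3-\bw^1\Vert$, which integrates to exactly $\tfrac12(\Vert\bw^1-\bw^2\Vert+\Vert\bw^1-\bw^3\Vert)$. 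The price is that the linear term in the conclusion is $\langle\nabla f(\bw^1),\bw^2-\bw^3\rangle$, not $\langle\nabla f(\bw^3),\bw^2-\bw^3\rangle$; that is the version the paper's proof actually establishes and the version invoked downstream (with $\bw^1=\bw_t$, the inner product used is $\langle\nabla f(\bw_t),\bu_{t+1}-\bu_t\rangle$), so the $\nabla f(\bw^3)$ in the lemma statement is best read as a typo. If you insist on proving the statement literally as written, you will not obtain the constant $\tfrac12(L_0+L_1\Vert\nabla f(\bw^1)\Vert)$ by your route.
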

\begin{proof}
By the Fundamental Theorem of Calculus, we have
\begin{align*}
   f(\bw^2)=& f(\bw^3)+\int_{0}^1 \langle \nabla f(\bw^3+a(\bw^2-\bw^3)), \bw^2-\bw^3\rangle
    \mathrm{d}a
    \\
    =& f(\bw^3)+\langle \nabla f(\bw^1), \bw^2-\bw^3\rangle +\int_{0}^1 \langle \nabla f(\bw^3+a(\bw^2-\bw^3))-\nabla f(\bw^1), \bw^2-\bw^3\rangle
    \mathrm{d}a
    \\
    \le & f(\bw^3)+\langle \nabla f(\bw^1), \bw^2-\bw^3\rangle +\int_{0}^1 \Vert \nabla f(\bw^3+a(\bw^2-\bw^3))-\nabla f(\bw^1)\Vert \Vert\bw^2-\bw^3\Vert
    \mathrm{d}a
    \\
    \overset{(\star)}{\le} &  f(\bw^3)+\langle \nabla f(\bw^1), \bw^2-\bw^3\rangle +\int_{0}^1 (L_0+L_1 \Vert \nabla f(\bw^1)\Vert )\Vert a(\bw^2-\bw^1)+(1-a) (\bw^3-\bw^1)\Vert \Vert\bw^2-\bw^3\Vert
    \mathrm{d}a
    \\
    \le & f(\bw^3)+\langle \nabla f(\bw^1), \bw^2-\bw^3\rangle + \frac{1}{2}(L_0+L_1 \Vert \nabla f(\bw^2)\Vert) \Vert\bw^2-\bw^3\Vert (\Vert \bw^1-\bw^3 \Vert + \Vert \bw^1-\bw^2 \Vert )
   ,
\end{align*}
where Inequality $(\star)$ is because due to 
\begin{equation*}
    \Vert \bw^3 +a (\bw^2-\bw^3) -\bw^1\Vert =\Vert a (\bw^2-\bw^1)+(1-a )(\bw^3-\bw^1)\Vert\le \frac{1}{L_1},
\end{equation*}
the definition of $(L_0, L_1)$-smooth condition can be applied.

The proof is completed.
\end{proof}

The following lemma is helpful when bounding the second-order term.

\begin{lemma}
\label{lem: sum_momentum}
Assume we have $0<\beta_1^2<\beta_2< 1$ and a sequence of real numbers $(a_n)_{n=1}^{\infty}$. Let $b_0>0$, $b_n= \btwo b_{n-1}+(1-\btwo) a_n^2$, $c_0=0$, and $c_n= \bone c_{n-1}+(1-\bone) a_n$. Then, we have
\begin{equation*}
     \sum_{n=1}^T \frac{\vert c_n \vert^2}{b_n}\le \frac{(1-\bone)^2}{(1-\frac{\bone}{\sqrt{\btwo}})^2(1-\btwo)} \left(\ln \left(\frac{b_T}{b_0}\right) - T \ln \btwo\right).
\end{equation*}
\end{lemma}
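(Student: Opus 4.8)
The plan is to decouple the two exponential moving averages by reducing the entire sum to the single scalar quantity $\sum_{n=1}^T a_n^2/b_n$, which admits a clean logarithmic telescoping bound. Throughout I would write $r \triangleq \bone/\sqrt{\btwo}$ and note that the hypothesis $\bone^2 < \btwo$ is exactly the statement $r < 1$, so every geometric sum in $r$ converges and is bounded by $1/(1-r)$.

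First I would dispose of the logarithmic term. Since $b_n = \btwo b_{n-1} + (1-\btwo)a_n^2$ and $b_0 > 0$ forces $b_n > 0$ for all $n$, I can write $\frac{(1-\btwo)a_n^2}{b_n} = 1 - \frac{\btwo b_{n-1}}{b_n}$. Applying the elementary inequality $\ln x \ge 1 - 1/x$ with $x = \frac{b_n}{\btwo b_{n-1}}$ and telescoping the resulting $\ln b_n - \ln b_{n-1} - \ln\btwo$ over $n=1,\dots,T$ yields $\sum_{n=1}^T \frac{(1-\btwo)a_n^2}{b_n} \le \ln(b_T/b_0) - T\ln\btwo$. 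Given this, it suffices to establish the ``noise-free'' comparison $\sum_{n=1}^T \frac{|c_n|^2}{b_n} \le \frac{(1-\bone)^2}{(1-r)^2}\sum_{n=1}^T \frac{a_n^2}{b_n}$, after which multiplying through by $\frac{1}{1-\btwo}$ reproduces the target constant $\frac{(1-\bone)^2}{(1-\bone/\sqrt{\btwo})^2(1-\btwo)}$.

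The heart of the argument is that comparison. Expanding the recursion for $c_n$ with $c_0 = 0$ gives $c_n = (1-\bone)\sum_{i=0}^{n-1}\bone^i a_{n-i}$. I would apply Cauchy--Schwarz to the inner sum under the splitting $\bone^i a_{n-i} = r^{i/2}\cdot\bigl(\bone^i r^{-i/2} a_{n-i}\bigr)$, which produces a first factor $\sum_{i} r^i \le \frac{1}{1-r}$ and a second factor $\sum_{i} \bone^i \btwo^{i/2} a_{n-i}^2$ (using the identity $\bone^{2i}/r^i = \bone^i\btwo^{i/2}$). Dividing by $b_n$ and invoking the monotonicity estimate $b_n \ge \btwo^i b_{n-i}$, which is immediate from $b_k \ge \btwo b_{k-1}$, converts each term $\bone^i\btwo^{i/2}a_{n-i}^2/b_n$ into $r^i\, a_{n-i}^2/b_{n-i}$. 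Finally I would swap the order of summation via the substitution $m = n-i$, so that for fixed $m$ the index $i$ runs from $0$ to $T-m$, and bound the inner geometric sum $\sum_i r^i$ by $\frac{1}{1-r}$ a second time. The two independent appearances of $\frac{1}{1-r}$ combine to give the $(1-r)^2$ in the denominator.

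The hard part will be arranging the two Cauchy--Schwarz and summation-swap steps so that they use compatible weightings: the split must be chosen so that the \emph{same} ratio $r = \bone/\sqrt{\btwo}$ simultaneously governs the geometric factor coming out of Cauchy--Schwarz and the telescoping estimate $b_n \ge \btwo^i b_{n-i}$. Any other choice of exponent in the split would leave a residual power of $\btwo$ that fails to cancel against $b_{n-i}$, so pinning down this exponent is the one genuinely delicate point; the remaining manipulations are routine. Once the comparison inequality is secured, combining it with the logarithmic bound from the first step gives the claim with exactly the stated constant.
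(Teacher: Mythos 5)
Your proposal is correct, and every step checks out: the elementary bound $\ln x \ge 1-1/x$ gives $\sum_{n=1}^T (1-\btwo)a_n^2/b_n \le \ln(b_T/b_0)-T\ln\btwo$, and the Cauchy--Schwarz split with ratio $r=\bone/\sqrt{\btwo}$ combined with $b_n \ge \btwo^i b_{n-i}$ and a summation swap yields the comparison $\sum_n |c_n|^2/b_n \le \frac{(1-\bone)^2}{(1-r)^2}\sum_n a_n^2/b_n$ with exactly the stated constant. The paper states this lemma without proof, but your argument is precisely the technique it uses for the companion results (Lemmas \ref{lem: momentum_sum_1} and \ref{lem: momentum_sum_2}), namely the same weighted Cauchy--Schwarz decoupling of the two moving averages followed by the geometric-decay estimate on $b_n$, so nothing further is needed.
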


\begin{lemma} 
\label{lem: momentum_sum_1}

If $\btwo\ge \bone$, then
    we have
    \begin{equation*}
        \frac{\Vert \bom_t \Vert^2}{ (\sqrt{\bnu_t})^3} \le 4 (1-\bone) \left( \sum_{s=1}^t\sqrt[4]{\bone^{t-s}} \frac{2}{1-\btwo} \left(\frac{1}{\sqrt{\btwo \bnu_{s-1}}}- \frac{1}{\sqrt{ \bnu_{s}}}\right) \right).
    \end{equation*}
\end{lemma}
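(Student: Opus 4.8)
The plan is to reduce the claim to a clean intermediate inequality and then connect each summand to the telescoping difference. Recall from the (zero-initialized) momentum recursion that $\bom_t = \sum_{s=1}^t (1-\bone)\bone^{t-s}\bg_s$, so by the triangle inequality $\Vert \bom_t\Vert \le \sum_{s=1}^t (1-\bone)\bone^{t-s}\Vert\bg_s\Vert$, and from the conditioner recursion that $\bnu_t \ge \btwo^{t-s}\bnu_s$ for every $s\le t$. My target is the two-step chain
\begin{equation*}
\frac{\Vert\bom_t\Vert^2}{(\sqrt{\bnu_t})^3} \le 4(1-\bone)\sum_{s=1}^t \bone^{(t-s)/4}\frac{\Vert\bg_s\Vert^2}{(\sqrt{\bnu_s})^3} \le 4(1-\bone)\sum_{s=1}^t\bone^{(t-s)/4}\frac{2}{1-\btwo}\left(\frac{1}{\sqrt{\btwo\bnu_{s-1}}}-\frac{1}{\sqrt{\bnu_s}}\right),
\end{equation*}
whose right-hand side is exactly the asserted bound.

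For the first inequality I would bound the ``half-power'' quantity $\Vert\bom_t\Vert/\bnu_t^{3/4}$ first. Dividing the triangle-inequality bound by $\bnu_t^{3/4}$ and inserting $\bnu_t^{3/4}\ge \btwo^{3(t-s)/4}\bnu_s^{3/4}$ term by term gives $\Vert\bom_t\Vert/\bnu_t^{3/4}\le (1-\bone)\sum_s \rho^{t-s}\Vert\bg_s\Vert/\bnu_s^{3/4}$ with $\rho \triangleq \bone/\btwo^{3/4}$. Squaring and applying Cauchy--Schwarz with the symmetric split $\rho^{t-s}=\rho^{(t-s)/2}\cdot\rho^{(t-s)/2}$ yields $\Vert\bom_t\Vert^2/\bnu_t^{3/2}\le \frac{(1-\bone)^2}{1-\rho}\sum_s\rho^{t-s}\Vert\bg_s\Vert^2/\bnu_s^{3/2}$, using $\sum_{i\ge0}\rho^i\le 1/(1-\rho)$. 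The hypothesis $\btwo\ge\bone$ enters here twice: it forces $\rho\le\bone^{1/4}$, so $\rho^{t-s}\le \bone^{(t-s)/4}$ matches the quartic-root weight in the statement, and via the weighted AM--GM bound $\bone^{1/4}\le (3+\bone)/4$ it gives $1-\rho\ge(1-\bone)/4$, hence the prefactor $\frac{(1-\bone)^2}{1-\rho}\le 4(1-\bone)$.

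The second inequality is a per-summand estimate. Writing $\bnu_s-\btwo\bnu_{s-1}=(1-\btwo)\Vert\bg_s\Vert^2$ and rationalizing,
\begin{equation*}
\frac{1}{\sqrt{\btwo\bnu_{s-1}}}-\frac{1}{\sqrt{\bnu_s}}=\frac{(1-\btwo)\Vert\bg_s\Vert^2}{(\sqrt{\bnu_s}+\sqrt{\btwo\bnu_{s-1}})\sqrt{\btwo\bnu_{s-1}}\sqrt{\bnu_s}}\ge \frac{(1-\btwo)\Vert\bg_s\Vert^2}{2\bnu_s\sqrt{\btwo\bnu_{s-1}}},
\end{equation*}
where I use $\sqrt{\btwo\bnu_{s-1}}\le\sqrt{\bnu_s}$ (itself from $\bnu_s\ge\btwo\bnu_{s-1}$) to bound the factor $\sqrt{\bnu_s}+\sqrt{\btwo\bnu_{s-1}}\le 2\sqrt{\bnu_s}$. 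Multiplying by $2/(1-\btwo)$ and bounding $\bnu_s\sqrt{\btwo\bnu_{s-1}}\le (\sqrt{\bnu_s})^3$ gives $\frac{2}{1-\btwo}(\tfrac{1}{\sqrt{\btwo\bnu_{s-1}}}-\tfrac{1}{\sqrt{\bnu_s}})\ge \Vert\bg_s\Vert^2/(\sqrt{\bnu_s})^3$, which is the second inequality termwise.

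The main obstacle is discovering the right intermediate target, in particular the decision to pass through the half-power object $\Vert\bom_t\Vert/\bnu_t^{3/4}$ rather than attacking $\Vert\bom_t\Vert^2/\bnu_t^{3/2}$ directly, together with the matching choice of exponent $3/4$ when distributing $\bnu_t$ across the momentum sum. Once this is fixed, the rest is bookkeeping: the two appearances of $\btwo\ge\bone$ are precisely what calibrate both the quartic weight $\bone^{(t-s)/4}$ and the constant $4$ (through weighted AM--GM), and the telescoping step is a routine difference-of-square-roots manipulation.
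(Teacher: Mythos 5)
Your proof is correct and follows essentially the same route as the paper's: expand $\bom_t$, divide by $\bnu_t^{3/4}$ using $\bnu_t\ge\btwo^{t-s}\bnu_s$, apply Cauchy--Schwarz with the symmetric split of the weight $\rho^{t-s}=(\bone/\btwo^{3/4})^{t-s}$, use $\btwo\ge\bone$ to get both $\rho\le\bone^{1/4}$ and the prefactor $4(1-\bone)$, and finish with the telescoping identity $\frac{1}{\sqrt{\btwo\bnu_{s-1}}}-\frac{1}{\sqrt{\bnu_s}}=\frac{(1-\btwo)\Vert\bg_s\Vert^2}{\sqrt{\btwo\bnu_{s-1}}\sqrt{\bnu_s}(\sqrt{\bnu_s}+\sqrt{\btwo\bnu_{s-1}})}$. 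The only cosmetic difference is that you make explicit the replacement $\rho^{t-s}\le\sqrt[4]{\bone^{t-s}}$, which the paper leaves implicit.
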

\begin{proof}
    To begin with, we have 
    \begin{align*}
        \frac{\Vert \bom_t \Vert }{\sqrt[4]{\bnu_t^3}} \le (1-\bone) \sum_{s=1}^t\frac{ \btwo^{t-s} \Vert \bg_s\Vert}{\sqrt[4]{\bnu_t^3}} \le (1-\bone) \sum_{s=1}^t\frac{ \bone^{t-s} \Vert \bg_s\Vert}{\sqrt[4]{\btwo^{3(t-s)}}\sqrt[4]{\bnu_s^3}}.
    \end{align*}
    Here in the last inequality we use $\bnu_t \ge \btwo^{t-s} \bnu_s$.

    By further applying Cauchy-Schwartz inequality, we obtain
    \begin{align*}
        \frac{\Vert \bom_t \Vert^2 }{\sqrt{\bnu_t^3}} \le&  (1-\bone)^2 \left( \sum_{s=1}^t\frac{ \bone^{t-s} \Vert \bg_s\Vert^2}{\sqrt[4]{\btwo^{3(t-s)}}\sqrt{\bnu_s^3}} \right) \left(\sum_{s=1}^t\frac{ \bone^{t-s} }{\sqrt[4]{\btwo^{3(t-s)}}}\right)
        \\
        \le & \frac{ (1-\bone)^2}{1-\frac{\bone}{\sqrt[4]{\btwo^3}}} \left( \sum_{s=1}^t\frac{ \bone^{t-s} \Vert \bg_s\Vert^2}{\sqrt[4]{\btwo^{3(t-s)}}\sqrt{\bnu_s^3}} \right)
        \\
        \le &4 (1-\bone) \left( \sum_{s=1}^t\frac{ \bone^{t-s} \Vert \bg_s\Vert^2}{\sqrt[4]{\btwo^{3(t-s)}}\sqrt{\bnu_s^3}} \right).
    \end{align*}
    As
$
    \frac{\Vert \bg_s \Vert^2}{\sqrt{\bnu_s^3}} \le \frac{2 \Vert \bg_s \Vert^2}{\sqrt{\bnu_s}\sqrt{\btwo\bnu_{s-1}}(\sqrt{\bnu_s}+\sqrt{\btwo\bnu_{s-1}})} =\frac{2}{1-\btwo} \left(\frac{1}{\sqrt{\btwo \bnu_{s-1}}}- \frac{1}{\sqrt{ \bnu_{s}}}\right)$,
the proof is completed.
\end{proof}

\begin{lemma} 
\label{lem: momentum_sum_2}

If $\btwo\ge \bone$, then
    we have
    \begin{equation*}
        \frac{\Vert \bom_t \Vert^2 \Vert \bG_t \Vert^2}{ \bnu_t \sqrt{\btwo \bnu_{t-1}}} \le 4 (1-\bone) \left( \sum_{s=1}^t\frac{ \sqrt[8]{\bone^{t-s}} \Vert \bg_s\Vert^2\Vert \bG_s \Vert^2}{\bnu_s\sqrt{\btwo\bnu_{s-1}}}\right) + 8 \frac{1-\bone}{1-\btwo} \frac{L_1^2}{L_0^2}\left(\sum_{s=1}^t\sqrt[8]{\bone^{t-s}}\left(\frac{1}{\sqrt{\btwo \bnu_{s-1}}} -\frac{1}{\sqrt{\bnu_s}}\right) \right).
    \end{equation*}
\end{lemma}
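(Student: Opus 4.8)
The plan is to mirror the scaffold of the proof of Lemma~\ref{lem: momentum_sum_1}, the only genuinely new ingredient being the treatment of the extra true-gradient factor $\Vert\bG_t\Vert^2$ that sits outside the momentum. First I would expand the momentum as $\bom_t=(1-\bone)\sum_{s=1}^t\bone^{t-s}\bg_s$, so that by the triangle inequality $\Vert\bom_t\Vert\Vert\bG_t\Vert\le(1-\bone)\sum_{s=1}^t\bone^{t-s}\Vert\bg_s\Vert\Vert\bG_t\Vert$, and then push the denominator $\sqrt{\bnu_t}\sqrt[4]{\btwo\bnu_{t-1}}$ inside the sum using the two monotonicity facts $\bnu_t\ge\btwo^{t-s}\bnu_s$ and $\btwo\bnu_{t-1}\ge\btwo^{t-s}\,\btwo\bnu_{s-1}$, which together replace $\sqrt{\bnu_t}\sqrt[4]{\btwo\bnu_{t-1}}$ by $\btwo^{3(t-s)/4}\sqrt{\bnu_s}\sqrt[4]{\btwo\bnu_{s-1}}$ at the cost of a geometric weight.

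The crucial step is to remove the mismatch between $\Vert\bG_t\Vert$ (the current true gradient) and the historical scale $\Vert\bG_s\Vert$ appearing on the right-hand side. I would do this by the triangle inequality $\Vert\bG_t\Vert\le\Vert\bG_s\Vert+\Vert\bG_t-\bG_s\Vert$, splitting the sum into a \emph{main} part carrying $\Vert\bG_s\Vert$ and a \emph{deviation} part carrying $\Vert\bG_t-\bG_s\Vert$, and bounding the square of their sum by twice the sum of squares. For the main part, applying Cauchy--Schwarz exactly as in Lemma~\ref{lem: momentum_sum_1} (splitting the geometric weight $\bone^{t-s}/\btwo^{3(t-s)/4}$ into two equal halves and summing the resulting geometric series via $\tfrac{1-\bone}{1-\bone/\btwo^{3/4}}\le 4$) produces precisely the first term $4(1-\bone)\sum_s\sqrt[8]{\bone^{t-s}}\frac{\Vert\bg_s\Vert^2\Vert\bG_s\Vert^2}{\bnu_s\sqrt{\btwo\bnu_{s-1}}}$, where I relax $\bone^{t-s}/\btwo^{3(t-s)/4}\le\bone^{(t-s)/4}\le\sqrt[8]{\bone^{t-s}}$ to leave room for the deviation part.

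The main obstacle, and the place where Assumption~\ref{assum: objective} enters, is controlling the deviation factor $\Vert\bG_t-\bG_s\Vert=\Vert\nabla f(\bw_t)-\nabla f(\bw_s)\Vert$. Here I would invoke the $(L_0,L_1)$-smooth condition together with the bounded-update estimate of Lemma~\ref{lem: bounded_update}, which guarantees that each step is short enough for the local smoothness inequality to apply; accumulating the single-step gradient changes along the trajectory yields a bound on $\Vert\bG_t-\bG_s\Vert$ proportional to $L_1/L_0$ (up to absolute constants). Feeding $\Vert\bG_t-\bG_s\Vert^2\lesssim L_1^2/L_0^2$ into the deviation sum, running the same Cauchy--Schwarz step, and finally invoking the approximate-telescoping identity $\frac{\Vert\bg_s\Vert^2}{\bnu_s\sqrt{\btwo\bnu_{s-1}}}\le\frac{2}{1-\btwo}\big(\frac{1}{\sqrt{\btwo\bnu_{s-1}}}-\frac{1}{\sqrt{\bnu_s}}\big)$ collapses the remaining $\Vert\bg_s\Vert^2$ weight into the telescoping differences, giving the second term $8\frac{1-\bone}{1-\btwo}\frac{L_1^2}{L_0^2}\sum_s\sqrt[8]{\bone^{t-s}}\big(\frac{1}{\sqrt{\btwo\bnu_{s-1}}}-\frac{1}{\sqrt{\bnu_s}}\big)$. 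I expect the genuinely delicate part to be this deviation estimate: because the $(L_0,L_1)$ condition controls gradient differences only locally and couples them to the gradient magnitude itself, obtaining a clean bound that is free of the intermediate gradients $\Vert\bG_r\Vert$ for $s<r<t$ and that produces exactly the $L_1/L_0$ scaling is where the care is needed, and it is the reason the decay is relaxed from the $\sqrt[4]{\bone^{t-s}}$ of Lemma~\ref{lem: momentum_sum_1} to $\sqrt[8]{\bone^{t-s}}$ here.
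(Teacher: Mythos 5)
Your scaffold (expanding $\bom_t$, pushing the denominator inside the sum via $\bnu_t\ge\btwo^{t-s}\bnu_s$ and $\btwo\bnu_{t-1}\ge\btwo^{t-s}\cdot\btwo\bnu_{s-1}$, Cauchy--Schwarz on the geometric weights, and the approximate-telescoping identity for the second term) matches the paper's proof. The gap is in the one genuinely new step, exactly where you flagged uncertainty: the claim that accumulating single-step gradient changes yields $\Vert\bG_t-\bG_s\Vert\lesssim L_1/L_0$ up to absolute constants is false. Under Assumption~\ref{assum: objective} with the step bound of Lemma~\ref{lem: bounded_update}, one step gives $\Vert\bG_{r+1}-\bG_r\Vert\le(L_0+L_1\Vert\bG_r\Vert)\cdot O(1/L_1)$, so iterating from $s$ to $t$ yields something like $\Vert\bG_t-\bG_s\Vert\le\bigl((1+c)^{t-s}-1\bigr)\bigl(\Vert\bG_s\Vert+L_0/L_1\bigr)$: the deviation grows geometrically in $t-s$ and is not free of $\Vert\bG_s\Vert$. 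Your own observation that the weight must be relaxed from $\sqrt[4]{\bone^{t-s}}$ to $\sqrt[8]{\bone^{t-s}}$ is inconsistent with a uniformly bounded deviation --- if $\Vert\bG_t-\bG_s\Vert$ really were $O(1)$ independent of $t-s$, no relaxation would be needed and both terms would keep the stronger decay.

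What the paper does instead is not a triangle-inequality split but a one-step recursion on the squared norm: using Young's inequality to absorb the $\Vert\bG_{t-1}\Vert$-dependent parts of the increment into a slightly inflated multiplicative factor, it establishes $\Vert\bG_t\Vert^2\le\bone^{-1/8}\Vert\bG_{t-1}\Vert^2+(1-\bone^{1/8})\cdot C$ with $C$ a pure constant, and iterates to get $\Vert\bG_t\Vert^2\le\bone^{-(t-s)/8}\Vert\bG_s\Vert^2+\bigl(\bone^{-(t-s)/8}-1\bigr)C$. The geometric growth $\bone^{-(t-s)/8}$ is then exactly cancelled against half of the available decay $\bone^{(t-s)/4}$, which is the real reason the final weights are $\sqrt[8]{\bone^{t-s}}$. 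To repair your argument you would need to replace the uniform deviation bound by such a recursion (or, equivalently, allow the deviation bound to carry the factor $\bone^{-(t-s)/8}$ and the term $\Vert\bG_s\Vert^2$, and check that the main and deviation parts still combine into the two stated sums).
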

\begin{proof}
    Similar to the proof of Lemma \ref{lem: momentum_sum_1}, we have
    \begin{align}
    \label{eq: lemma_5_mid_1}
        \frac{\Vert \bom_t \Vert^2 }{\sqrt{\btwo \bnu_{t-1}}\bnu_t} \le& 4 (1-\bone) \left( \sum_{s=1}^t\frac{ \bone^{t-s} \Vert \bg_s\Vert^2}{\sqrt[4]{\btwo^{3(t-s)}}\sqrt{\btwo \bnu_{s-1}}\bnu_s} \right).
    \end{align}
     Meanwhile, according to Assumption \ref{assum: objective}, we have 
     \begin{align*}
         \Vert \bG_t \Vert^2 \le & \Vert \bG_{t-1} \Vert^2 +2\Vert \bG_{t-1} \Vert\Vert \bG_t -\bG_{t-1} \Vert+ \Vert \bG_t -\bG_{t-1} \Vert^2
         \\
         \le & \Vert \bG_{t-1} \Vert^2 +2\Vert \bG_{t-1} \Vert(L_0+L_1 \Vert\bG_{t-1} \Vert) \Vert \bw_{t+1}-\bw_t\Vert + 2(L_0^2+L_1^2 \Vert\bG_{t-1} \Vert^2) \Vert \bw_{t+1}-\bw_t\Vert^2
         \\
         \le &  \Vert \bG_{t-1} \Vert^2 +\frac{1-\sqrt[8]{\bone}}{3\sqrt[8]{\bone}}\Vert \bG_{t-1} \Vert^2+ \frac{3\sqrt[8]{\bone}L_0^2}{1-\sqrt[8]{\bone}}\Vert \bw_{t+1}-\bw_t\Vert^2+2L_1 \Vert\bG_{t-1} \Vert^2 \Vert \bw_{t+1}-\bw_t \Vert 
         \\
         &+ 2(L_0^2+L_1^2 \Vert\bG_{t-1} \Vert^2) \Vert \bw_{t+1}-\bw_t\Vert^2
         \\
         \overset{(\star)}{\le } & \Vert \bG_{t-1} \Vert^2 +\frac{1-\sqrt[8]{\bone}}{3\sqrt[8]{\bone}}\Vert \bG_{t-1} \Vert^2+ \frac{1-\sqrt[8]{\bone}}{2} \frac{L_0^2}{L_1^2}+\frac{1-\sqrt[8]{\bone}}{3\sqrt[8]{\bone}} \Vert\bG_{t-1} \Vert^2
         \\
         &+ \frac{1-\sqrt[8]{\bone}}{2} \frac{L_0^2}{L_1^2}+ \frac{1-\sqrt[8]{\bone}}{3\sqrt[8]{\bone}}\Vert\bG_{t-1} \Vert^2
         \\
         \le & \frac{1}{\sqrt[8]{\bone}} \Vert \bG_{t-1} \Vert^2+ (1-\sqrt[8]{\bone}) \frac{L_1^2}{L_0^2} .
     \end{align*}
     Here inequality $(\star)$ is because $\Vert \bw_{t+1}-\bw_t \Vert\le \frac{1-\sqrt[8]{\bone}}{6L_1}$. Recursively applying the above inequality, we obtain that
     \begin{align*}
         \Vert \bG_t \Vert^2 \le \frac{1}{\sqrt[8]{\bone^{t-s}}} \Vert \bG_{s} \Vert^2 + \left(\left(\frac{1}{\sqrt[8]{\bone}}\right)^{t-s}-1\right) \frac{L_1^2}{L_0^2},
     \end{align*}
     which by Eq. (\ref{eq: lemma_5_mid_1}) further gives
     \begin{align*}
         \frac{\Vert \bom_t \Vert^2 \Vert \bG_t \Vert^2 }{\sqrt{\btwo \bnu_{t-1}}\bnu_t} \le& 4 (1-\bone) \left( \sum_{s=1}^t\frac{ \bone^{t-s} \Vert \bg_s\Vert^2\Vert \bG_t \Vert^2}{\sqrt[4]{\btwo^{3(t-s)}}\bnu_s\sqrt{\btwo\bnu_{s-1}}} \right)
         \\
         \le & 4 (1-\bone) \left( \sum_{s=1}^t\frac{ \sqrt[8]{\bone^{t-s}} \Vert \bg_s\Vert^2\Vert \bG_s \Vert^2}{\bnu_s\sqrt{\btwo\bnu_{s-1}}} +  \sum_{s=1}^t\frac{ \sqrt[8]{\bone^{t-s}} \Vert \bg_s\Vert^2}{\bnu_s\sqrt{\btwo\bnu_{s-1}}} \frac{L_1^2}{L_0^2}\right)
         \\
         \le & 4 (1-\bone) \left( \sum_{s=1}^t\frac{ \sqrt[8]{\bone^{t-s}} \Vert \bg_s\Vert^2\Vert \bG_s \Vert^2}{\bnu_s\sqrt{\btwo\bnu_{s-1}}}\right) + 8 \frac{1-\bone}{1-\btwo} \frac{L_1^2}{L_0^2}\left(\sum_{s=1}^t\sqrt[8]{\bone^{t-s}}\left(\frac{1}{\sqrt{\btwo \bnu_{s-1}}} -\frac{1}{\sqrt{\bnu_s}}\right) \right).
     \end{align*}
     Here the last inequality is based on the similar reasoning of Lemma \ref{lem: momentum_sum_2}. 
     
     The proof is completed.
\end{proof}

\section{Proofs for deterministic algorithms}

\subsection{Proof for deterministic Adam}
\label{appen: deter_adam}
We  will first provide the formal statement of Theorem \ref{thm: deterministic Adam}, and then show the corresponding proof.

\begin{theorem}[Theorem \ref{thm: deterministic Adam}, restated]
    Let Assumption \ref{assum: objective} hold. Then, $\forall \beta_1, \beta_2$ satisfying $0\le \bone^2 < \btwo <1$, if $T> \frac{L_1^2(f(\bw_1)-f^*)(1-\frac{\bone^2}{\btwo})}{L_0(1-\bone)^2}$, picking $\eta = \frac{\sqrt{f(\bw_1)-f^*}\sqrt{1-\frac{\bone^2}{\btwo}}}{\sqrt{T L_0} (1-\bone)}$, we have
    \begin{equation*}
        \frac{1}{T}\sum_{t=1}^T \Vert \nabla f(\bw_t) \Vert \le \frac{64}{(1-\btwo)(1-\frac{\bone^2}{\btwo})\left(1-\frac{\bone}{\sqrt[4]{\btwo}}\right)^2} \left(\frac{ \sqrt{L_0 (f(\bw_1)-f^*)}}{\sqrt{T}}\right).
    \end{equation*}
\end{theorem}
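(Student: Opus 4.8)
The plan is to combine the novel descent lemma (Lemma~\ref{lem: descent}) with a Lyapunov (potential) function that folds in a momentum-correction term, in the spirit of the GDM analysis under $L$-smoothness but adapted to the scalar adaptive stepsize $\eta/\sqrt{\bnu_t}$. Since the setting is deterministic we have $\bg_t=\bG_t=\nabla f(\bw_t)$, so the update reads $\bw_{t+1}-\bw_t=-\eta\bom_t/\sqrt{\bnu_t}$. First I would record that, by Lemma~\ref{lem: bounded_update} together with the stated choice of $\eta$ and the lower bound on $T$, every step satisfies $\Vert\bw_{t+1}-\bw_t\Vert\le\frac{1}{2L_1}$, so that Lemma~\ref{lem: descent}, applied with $\bw^1=\bw^3=\bw_t$ and $\bw^2=\bw_{t+1}$, yields the descent inequality
\[
f(\bw_{t+1})\le f(\bw_t)-\frac{\eta}{\sqrt{\bnu_t}}\langle\bG_t,\bom_t\rangle+\tfrac12(L_0+L_1\Vert\bG_t\Vert)\eta^2\frac{\Vert\bom_t\Vert^2}{\bnu_t}.
\]

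Next I would introduce the potential $\Phi_t=f(\bw_t)+\frac{\bone}{2(1-\bone)\sqrt[4]{\btwo}}\,\eta\frac{\Vert\bom_{t-1}\Vert^2}{\sqrt{\bnu_{t-1}}}$ and estimate $\Phi_{t+1}-\Phi_t$. Writing $\bom_t=\bone\bom_{t-1}+(1-\bone)\bG_t$ and expanding $\Vert\bom_t\Vert^2$ in the correction, the inner-product term $-\frac{\eta\bone}{\sqrt{\bnu_t}}\langle\bG_t,\bom_{t-1}\rangle$ from the descent step combines with the cross term $2\bone(1-\bone)\langle\bom_{t-1},\bG_t\rangle$ from the correction into a coefficient $\frac{\bone}{\sqrt[4]{\btwo}}-1\le0$; a Young split then routes its gradient part into the first-order descent and its momentum part into the correction. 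Using $1/\sqrt{\bnu_t}\le1/(\sqrt{\btwo}\sqrt{\bnu_{t-1}})$, the residual $\Vert\bom_{t-1}\Vert^2$ coefficient becomes proportional to $\frac{\bone^2}{\sqrt{\btwo}}-1$, which is negative precisely because $\bone^2<\btwo$ forces $\bone^4<\btwo^2<\btwo$, i.e.\ $\bone^2<\sqrt{\btwo}$; hence those terms telescope with the correct sign. Summing over $t$ and using $\bom_0=\vzero$ (so $\Phi_1=f(\bw_1)$) together with $\Phi_{T+1}\ge f^*$, I obtain a bound $c_1\eta\sum_{t=1}^T\frac{\Vert\bG_t\Vert^2}{\sqrt{\bnu_t}}\le f(\bw_1)-f^*+(\text{second-order sum})$ for an explicit constant $c_1(\bone,\btwo)$.

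The second-order sum splits according to $L_0+L_1\Vert\bG_t\Vert$. The $L_0$ part, $\frac{L_0\eta^2}{2}\sum_t\frac{\Vert\bom_t\Vert^2}{\bnu_t}$, is controlled directly by Lemma~\ref{lem: sum_momentum} (taking $a_n=\Vert\bg_n\Vert$ and $\Vert\bom_t\Vert\le\sum_i(1-\bone)\bone^{t-i}\Vert\bg_i\Vert$), whose bound is of order $\frac{(1-\bone)^2}{(1-\bone/\sqrt{\btwo})^2(1-\btwo)}(\ln(\bnu_T/\bnu_0)-T\ln\btwo)$; since $-\ln\btwo\asymp1-\btwo$ this is $\mathcal{O}(T)$ up to hyperparameter constants and a logarithmic term. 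I expect the genuinely new difficulty, absent from the $L$-smooth analysis, to be the $L_1\Vert\bG_t\Vert$ term $\frac{L_1\eta^2}{2}\sum_t\Vert\bG_t\Vert\frac{\Vert\bom_t\Vert^2}{\bnu_t}$. Here I would again invoke the bounded-update estimate $\eta\Vert\bom_t\Vert/\sqrt{\bnu_t}=\Vert\bw_{t+1}-\bw_t\Vert\le\frac{1}{2L_1}$ to write $L_1\Vert\bG_t\Vert\eta^2\frac{\Vert\bom_t\Vert^2}{\bnu_t}\le\frac12\Vert\bG_t\Vert\,\eta\frac{\Vert\bom_t\Vert}{\sqrt{\bnu_t}}$, reducing the degree of the offending factor so that it can be absorbed (after a further Young split) into the first-order descent term and into the already-controlled momentum sums. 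This is the step where the regime $\varepsilon=\mathcal{O}(L_0/L_1)$ and the lower bound on $T$ are really used.

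Finally I would convert the weighted sum into the target average. Cauchy--Schwarz gives $\big(\sum_t\Vert\bG_t\Vert\big)^2\le\big(\sum_t\frac{\Vert\bG_t\Vert^2}{\sqrt{\bnu_t}}\big)\big(\sum_t\sqrt{\bnu_t}\big)$, and expanding $\bnu_t$ as an exponential moving average yields $\sum_t\sqrt{\bnu_t}\le\frac{\sqrt{\bnu_0}}{1-\sqrt{\btwo}}+\frac{\sqrt{1-\btwo}}{1-\sqrt{\btwo}}\sum_t\Vert\bG_t\Vert$. Writing $A=\sum_t\Vert\bG_t\Vert$ and $S=\sum_t\frac{\Vert\bG_t\Vert^2}{\sqrt{\bnu_t}}$, this is a quadratic inequality $A^2\le S(\text{const}+CA)$, solving to $A\le CS+\sqrt{S\sqrt{\bnu_0}/(1-\sqrt{\btwo})}$. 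Substituting the telescoped bound $S\le\frac{1}{c_1\eta}(f(\bw_1)-f^*)+\frac{L_0\eta}{2c_1}\mathcal{O}(T)$ and then the prescribed $\eta=\Theta\big(\sqrt{(f(\bw_1)-f^*)/(TL_0)}\big)$ makes $S=\Theta\big(\sqrt{TL_0(f(\bw_1)-f^*)}\big)$, so the leading $CS$ term dominates and dividing by $T$ produces $\frac1T\sum_t\Vert\bG_t\Vert=\mathcal{O}\big(\sqrt{L_0(f(\bw_1)-f^*)}/\sqrt{T}\big)$ with the stated hyperparameter-dependent constant. The main obstacle throughout is the interplay between the adaptive denominator and the $L_1\Vert\bG_t\Vert$ curvature term: unlike the $L$-smooth case the second-order error is not uniformly bounded by the momentum sum alone, and it is only the bounded-update property together with the carefully tuned potential that keeps this term subordinate to the first-order descent.
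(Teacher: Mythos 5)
Your overall architecture is the same as the paper's: the bounded-update lemma to license the descent lemma, the identical Lyapunov function $f(\bw_t)+\frac{\bone}{2(1-\bone)\sqrt[4]{\btwo}}\eta\frac{\Vert\bom_{t-1}\Vert^2}{\sqrt{\bnu_{t-1}}}$, the split of the second-order error into its $L_0$ and $L_1\Vert\bG_t\Vert$ parts, and the Cauchy--Schwarz finish pairing $\sum_t\Vert\bG_t\Vert^2/\sqrt{\bnu_t}$ against $\sum_t\sqrt{\bnu_t}$ (your quadratic-inequality variant of the last step is a legitimate alternative to the paper's telescoping of $\sqrt{\bnu_t}-\sqrt{\btwo\bnu_{t-1}}$, and your use of Lemma~\ref{lem: sum_momentum} for the $L_0$ part only costs an extra additive log). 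The one place where your plan does not close is the $L_1\Vert\bG_t\Vert$ curvature term. You propose to cancel one factor of $\eta\Vert\bom_t\Vert/\sqrt{\bnu_t}$ against the bounded-update estimate, leaving $\tfrac12\eta\Vert\bG_t\Vert\Vert\bom_t\Vert/\sqrt{\bnu_t}$, and then to dispose of this by a Young split. But that leftover is of exactly the same magnitude as the first-order inner product itself, with the absolute constant $\tfrac12$. Any Young split $\tfrac12\eta\Vert\bG_t\Vert\frac{\Vert\bom_t\Vert}{\sqrt{\bnu_t}}\le\frac{c\eta}{4}\frac{\Vert\bG_t\Vert^2}{\sqrt{\bnu_t}}+\frac{\eta}{4c}\frac{\Vert\bom_t\Vert^2}{\sqrt{\bnu_t}}$ then faces a dilemma: the only negative $\Vert\bG_t\Vert^2/\sqrt{\bnu_t}$ budget available (obtained from $-\Vert\bom_t\Vert^2/\sqrt{\bnu_t}$ via $(1-\bone)\bG_t=\bom_t-\bone\bom_{t-1}$) carries the factor $(1-\bone)\bigl(1-\tfrac{\bone}{\sqrt[4]{\btwo}}\bigr)$, so you need $c\lesssim(1-\bone)\bigl(1-\tfrac{\bone}{\sqrt[4]{\btwo}}\bigr)$, which forces the $\Vert\bom_t\Vert^2/\sqrt{\bnu_t}$ piece to have coefficient $\gtrsim\frac{1}{(1-\bone)(1-\bone/\sqrt[4]{\btwo})}$, far exceeding the negative budget $\frac{1-\bone/\sqrt[4]{\btwo}}{1-\bone}\le1$ from the Lyapunov descent. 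For $\bone$ bounded away from $1$ this is a constant-factor issue, but the theorem is claimed for all $\bone$ with $\bone^2<\btwo$, and as $\bone\to1$ the argument breaks.

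The paper closes this step differently, exploiting the deterministic identity $\bg_t=\bG_t$: since $\bnu_t\ge(1-\btwo)\Vert\bg_t\Vert^2=(1-\btwo)\Vert\bG_t\Vert^2$, one has $\Vert\bG_t\Vert\le\sqrt{\bnu_t}/\sqrt{1-\btwo}$, so
\begin{equation*}
\frac{L_1\Vert\bG_t\Vert\eta^2}{2}\,\frac{\Vert\bom_t\Vert^2}{\bnu_t}\;\le\;\frac{L_1\eta^2}{\sqrt{1-\btwo}}\,\frac{\Vert\bom_t\Vert^2}{\sqrt{\bnu_t}},
\end{equation*}
which is proportional to the \emph{same} quantity $\Vert\bom_t\Vert^2/\sqrt{\bnu_t}$ appearing with a negative sign in the first-order descent, with a coefficient $L_1\eta^2/\sqrt{1-\btwo}$ that is driven below $\frac{\eta}{2}\frac{1-\bone/\sqrt[4]{\btwo}}{1-\bone}$ by the lower bound on $T$ (equivalently the smallness of $\eta$). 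No extra $\Vert\bG_t\Vert^2/\sqrt{\bnu_t}$ term is ever created, so no budget mismatch arises. If you replace your reduction of the $L_1$ term by this one, the rest of your outline goes through.
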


\begin{proof}
    To begin with, according to Lemma \ref{lem: bounded_update} and restriction on the value of $T$, we obtain that
    \begin{equation*}
        \forall t \in \mathbb{N} ~\& t ~\ge 1, \Vert \bw_{t+1}-\bw_t \Vert \le \frac{1}{4L_1}.
    \end{equation*}

    Therefore, the descent lemma can then be applied and thus $\forall t \in \mathbb{N} \& t \ge 1$,
    \begin{equation*}
        f(\bw_{t+1}) \le f(\bw_t) \underbrace{- \eta \left\langle \bG_t, \frac{\bom_t}{\lambda + \sqrt{\bnu_t}}\right\rangle}_{\text{First Order}} +\underbrace{ \eta^2\frac{L_0+L_1 \Vert \bG_t \Vert }{ 2} \frac{\Vert \bom_t \Vert^2}{(\lambda+\sqrt{\bnu_t})^2}}_{\text{Second Order}}.
    \end{equation*}

    To begin with, as for the "First Order" term, acording to $\bom_t = \bone \bom_{t-1}+(1-\bone) \bG_t$ we have that 
    \begin{align*}
        - \eta \left\langle \bG_t, \frac{\bom_t}{\lambda + \sqrt{\bnu_t}}\right\rangle =& - \eta \frac{1}{1-\bone}\left\langle \bom_t, \frac{\bom_t}{\lambda + \sqrt{\bnu_t}}\right\rangle +\eta \frac{\bone}{1-\bone }\left\langle \bom_{t-1}, \frac{\bom_t}{\lambda + \sqrt{\bnu_t}}\right\rangle
        \\
        \overset{(\star)}{\le } & - \eta \frac{1}{1-\bone}\frac{\Vert \bom_t \Vert^2}{\lambda + \sqrt{\bnu_t} } +\eta \frac{\bone}{(1-\bone)\sqrt[4]{\btwo}}\left\langle \bom_{t-1}, \frac{\bom_t}{\sqrt{\lambda + \sqrt{\bnu_t}}\sqrt{\lambda + \sqrt{\bnu_{t-1}}}}\right\rangle
        \\
        \overset{(\ast)}{\le } &  - \eta \frac{1}{1-\bone}\frac{\Vert \bom_t \Vert^2}{\lambda + \sqrt{\bnu_t} } +{\frac{\bone}{2(1-\bone)\sqrt[4]{\btwo}}}\eta \frac{\Vert \bom_t \Vert^2}{\lambda + \sqrt{\bnu_t} }+{\frac{\bone}{2(1-\bone)\sqrt[4]{\btwo}}}\eta \frac{\Vert \bom_{t-1} \Vert^2}{\lambda + \sqrt{\bnu_{t-1}} }
        \\
        =&  - \eta \frac{1-\frac{\bone}{\sqrt[4]{\btwo}}}{1-\bone}\frac{\Vert \bom_t \Vert^2}{\lambda + \sqrt{\bnu_t} } -{\frac{\bone}{2(1-\bone)\sqrt[4]{\btwo}}}\eta \frac{\Vert \bom_t \Vert^2}{\lambda + \sqrt{\bnu_t} }+{\frac{\bone}{2(1-\bone)\sqrt[4]{\btwo}}}\eta \frac{\Vert \bom_{t-1} \Vert^2}{\lambda + \sqrt{\bnu_{t-1}} }.
    \end{align*}
    where inequality $(\star)$ is due to that $\sqrt{\bnu_t}\ge \sqrt{\btwo \bnu_{t-1}}$ and inequality $(\ast)$ is due to Young's inequality.

    Meanwhile, as for the "Second Order" term, we have 
    \begin{align*}
        \eta^2\frac{L_0+L_1 \Vert \bG_t \Vert }{ 2} \frac{\Vert \bom_t \Vert^2}{(\lambda+\sqrt{\bnu_t})^2}
       \overset{(\bullet)}{\le } & L_0 \eta^2 \frac{(1-\bone)^2}{(1-\btwo)(1-\frac{\bone^2}{\btwo})} + \frac{L_1\eta^2}{\sqrt{1-\btwo}} \frac{\Vert \bom_t \Vert^2}{\lambda+\sqrt{\bnu_t}}
       \\
       \overset{(\circ)}{\le} &  L_0 \eta^2 \frac{(1-\bone)^2}{(1-\btwo)(1-\frac{\bone^2}{\btwo})} +  \frac{\eta}{2} \frac{1-\frac{\bone}{\sqrt[4]{\btwo}}}{1-\bone}\frac{\Vert \bom_t \Vert^2}{\lambda + \sqrt{\bnu_t} }.
    \end{align*}
    Here inequality $(\bullet)$  is due to Lemma \ref{lem: bounded_update} and
    \begin{equation*}
        \bnu_t \ge (1-\btwo) \Vert   \bG_t\Vert ^2,
    \end{equation*}
    and inequality $(\circ)$ is due to the requirement over $T$.

    Applying the estimations of both the  "First Order" and the "Second Order" terms, we obtain that
    \begin{align*}
        f(\bw_{t+1}) -f (\bw_t)\le& - \frac{\eta}{2} \frac{1-\frac{\bone}{\sqrt[4]{\btwo}}}{1-\bone}\frac{\Vert \bom_t \Vert^2}{\lambda + \sqrt{\bnu_t} } -{\frac{\bone}{2(1-\bone)\sqrt[4]{\btwo}}}\eta \frac{\Vert \bom_t \Vert^2}{\lambda + \sqrt{\bnu_t} }+{\frac{\bone}{2(1-\bone)\sqrt[4]{\btwo}}}\eta \frac{\Vert \bom_{t-1} \Vert^2}{\lambda + \sqrt{\bnu_{t-1}} }\\
        & +L_0 \eta^2 \frac{(1-\bone)^2}{(1-\btwo)(1-\frac{\bone^2}{\btwo})} .
        \end{align*}

    Summing the above inequality over $t\in \{1,\cdots,T\}$ then gives
    \begin{equation}
    \label{eq: descent_sum}
    \begin{aligned}
        &\sum_{t=1}^T \frac{\eta}{2} \frac{1-\frac{\bone}{\sqrt[4]{\btwo}}}{1-\bone}\frac{\Vert \bom_t \Vert^2}{\lambda + \sqrt{\bnu_t} }
        \\
        \le& f(\bw_1) -f(\bw_{T+1})-{\frac{\bone}{2(1-\bone)\sqrt[4]{\btwo}}}\eta \frac{\Vert \bom_{T} \Vert^2}{\lambda + \sqrt{\bnu_{T}} }+TL_0 \eta^2 \frac{(1-\bone)^2}{(1-\btwo)(1-\frac{\bone^2}{\btwo})}
        \\
        \le& f(\bw_1) -f(\bw_{T+1})+TL_0 \eta^2 \frac{(1-\bone)^2}{(1-\btwo)(1-\frac{\bone^2}{\btwo})}.
        \end{aligned}
        \end{equation}

    Furthermore, as $(1-\bone) \bG_t = \bom_t -\bone \bom_{t-1} $, we have that
    \begin{equation*}
        \Vert \bG_t \Vert^2 \le \frac{1}{(1-\bone)^2} \Vert \bom_t\Vert^2 +\frac{1}{(1-\bone)^2} \Vert \bom_{t-1}\Vert^2.
    \end{equation*}
    Applying the above inequality and $\lambda = 0$ to Eq. (\ref{eq: descent_sum}), we obtain that
    \begin{equation*}
        \sum_{t=1}^T \frac{\eta}{4} \left(1-\frac{\bone}{\sqrt[4]{\btwo}}\right)(1-\bone)\frac{\Vert \bG_t \Vert^2}{\sqrt{\bnu_t} }\le f(\bw_1) -f(\bw_{T+1})+TL_0 \eta^2 \frac{(1-\bone)^2}{(1-\btwo)(1-\frac{\bone^2}{\btwo})}.
    \end{equation*}

    Meanwhile, we have 
    \begin{equation*}
       \sqrt{\bnu_t}- \sqrt{\btwo \bnu_{t-1}} =\frac{(1-\btwo) \Vert \bG_t\Vert^2 }{\sqrt{\bnu_t}+\sqrt{\btwo \bnu_{t-1}}} \le (1-\btwo)\frac{\Vert \bG_t \Vert^2}{\sqrt{\bnu_t} }.
    \end{equation*}

    Therefore, applying the above inequality and dividing both sides by $\eta$, we have
    \begin{equation*}
       \frac{1}{4} \left(1-\frac{\bone}{\sqrt[4]{\btwo}}\right)(1-\bone) \sum_{t=1}^T (\sqrt{\bnu_t}- \sqrt{\btwo \bnu_{t-1}})\le \frac{f(\bw_1) -f(\bw_{T+1})}{\eta}+TL_0 \eta \frac{(1-\bone)^2}{(1-\btwo)(1-\frac{\bone^2}{\btwo})},
    \end{equation*}
    which by telescoping further leads to
    \begin{equation*}
        \frac{1}{4} \left(1-\frac{\bone}{\sqrt[4]{\btwo}}\right)(1-\bone) \sum_{t=1}^T (1-\btwo)\sqrt{\bnu_t}\le \frac{f(\bw_1) -f(\bw_{T+1})}{\eta}+TL_0 \eta \frac{(1-\bone)^2}{(1-\btwo)(1-\frac{\bone^2}{\btwo})}.
    \end{equation*}

    According to Cauchy-Schwartz's inequality, we then obtain
    \begin{align*}
        \left(\sum_{t=1}^T \Vert \bG_t \Vert\right)^2 \le & \left(\sum_{t=1}^T \sqrt{\bnu_t}\right) \left(\sum_{t=1}^T\frac{\Vert \bG_t \Vert^2}{\sqrt{\bnu_t} }\right)
        \\
        \le &\frac{1}{1-\btwo}\left(\frac{4(f(\bw_1) -f(\bw_{T+1}))}{\eta \left(1-\frac{\bone}{\sqrt[4]{\btwo}}\right)(1-\bone)}+TL_0 \eta \frac{(1-\bone)}{(1-\btwo)\left(1-\frac{\bone}{\sqrt[4]{\btwo}}\right)(1-\frac{\bone^2}{\btwo})}\right)^2
        \\
        =& \frac{1}{1-\btwo}\left(\frac{4(f(\bw_1) -f(\bw_{T+1}))}{\eta \left(1-\frac{\bone}{\sqrt[4]{\btwo}}\right)(1-\bone)}+4TL_0 \eta \frac{(1-\bone)}{(1-\btwo)\left(1-\frac{\bone}{\sqrt[4]{\btwo}}\right)(1-\frac{\bone^2}{\btwo})}\right)^2.
    \end{align*}

    The proof is completed by applying the value of $\eta$.
\end{proof}

\subsection{Proof for GDM}
\label{appen: gd}
This section collects the proof of Theorem \ref{thm: gd}. To begin with, given problem hyperparameters $\Delta_1$, $\varepsilon$, $L_0$, and $L_1$. We first construct three 1D functions as follows:

\begin{equation}
		f_1(x)=\left\{
		\begin{aligned}
		&\frac{L_0 e^{L_1x -1 }}{L_1^2}&,  x\in \left[\frac{1}{L_1},\infty\right), \\
		&\frac{L_0x^2}{2} + \frac{L_0}{2L_1^2}&,  x\in [-\frac{1}{L_1},\frac{1}{L_1}], \\
		  &\frac{L_0 e^{-L_1 x-1}}{L_1^2}&,  x\in \left(-\infty,-\frac{1}{L_1}\right].
		\end{aligned}
		\right.
   \label{lowerbound_f1}
\end{equation}
\begin{equation}
		f_2(y)=\left\{
		\begin{aligned}
		&\varepsilon(y-1)+\frac{\varepsilon}{2}&,  y\in [1,\infty), \\
		&\frac{\varepsilon }{2} y^2&,  y\in [-1,1], \\
		  &-\varepsilon(y+1)+\frac{\varepsilon}{2}&,  y\in (-\infty,-1].
		\end{aligned}
		\right.
  \label{lowerbound_f2}
\end{equation}
\begin{equation}
		f_3(z)=\left\{
		\begin{aligned}
		&\varepsilon(z-1)+\frac{\varepsilon}{2L_1}+ \frac{L_0}{2L_1^2}&,  z\in [1,\infty), \\
		&\frac{\varepsilon L_1 }{2} z^2+ \frac{L_0}{2L_1^2}&,  z\in [0,\frac{1}{L_1}], \\
		  &\frac{L_0z^2}{2} + \frac{L_0}{2L_1^2}&,  z\in [-\frac{1}{L_1},0], \\
		  &\frac{L_0 e^{-L_1 z-1}}{L_1^2}&,  z\in \left(-\infty,-\frac{1}{L_1}\right].
		\end{aligned}
		\right.
  \label{lowerbound_f3}
\end{equation}


It is easy to verify that these functions satisfy $(L_0,L_1)$-smooth condition as long as $\varepsilon \le L_0$. We then respectively the convergence of GDM over these three examples with different learning rate and momentum coefficient.

\begin{lemma}[Convergence over $f_1$]
    Assume $\Delta_1 \ge \frac{L_0}{L_1^2} (e-\frac{1}{2})$, $\varepsilon \le 1$ ,and let $x_1=\frac{1+\log (\frac{1}{2}+\frac{L_1^2}{L_0}\Delta_1)}{L_1}$. Then, we have $f_1(x_1)-f_1^* =\Delta_1$, and if $\eta \geq   \frac{(5+8\log \frac{1}{\varepsilon})(1+\log (\frac{1}{2}+\frac{L_1^2}{L_0}\Delta_1))}{L_1^2(\Delta_1+\frac{L_0}{2L_1^2})}$ and $\beta \le 1- 2\left(\frac{L_1^2}{L_0}e\right)^{-4\log \frac{1}{\varepsilon}-2} (\Delta_1+\frac{L_0}{2L_1^2})^{-4\log \frac{1}{\varepsilon}-2}$, we have that GDM satisfies that $\forall
    t \in [1, \infty)$, $ \vert f_1'(x_t) \vert \ge L_1 \Delta_1$.
\end{lemma}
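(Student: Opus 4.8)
The plan is to exploit the exponential growth of $f_1$ outside its quadratic core $[-1/L_1,1/L_1]$: the prescribed learning rate is so large that GDM repeatedly overshoots the minimiser, bouncing between the two exponential branches at points whose distance to the origin never decreases, so that $|f_1'(x_t)|$ can never fall back below its initial value.

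First I would dispatch the elementary claims. Since $\Delta_1\ge \frac{L_0}{L_1^2}(e-\tfrac12)$, the number $A:=\tfrac12+\frac{L_1^2}{L_0}\Delta_1$ satisfies $A\ge e$, hence $L_1x_1=1+\log A\ge 2$ and $x_1$ sits on the right exponential branch. Substituting into $f_1$ gives $f_1(x_1)=\frac{L_0}{L_1^2}e^{L_1x_1-1}=\frac{L_0}{2L_1^2}+\Delta_1$, so that $f_1(x_1)-f_1^*=\Delta_1$ (with $f_1^*=f_1(0)=\frac{L_0}{2L_1^2}$), and $|f_1'(x_1)|=\frac{L_0}{L_1}e^{L_1x_1-1}=L_1\Delta_1+\frac{L_0}{2L_1}$. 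The last identity already gives the base case $|f_1'(x_1)|>L_1\Delta_1$ and, crucially, a reserve margin of $\frac{L_0}{2L_1}$ that I will spend in the induction.

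The heart of the argument is an induction proving that $|x_t|\ge x_1$ for every $t$, from which $|f_1'(x_t)|\ge|f_1'(x_1)|\ge L_1\Delta_1$ follows by monotonicity of $|f_1'|$ on the exponential branch. In the memoryless case $\beta=0$ this is clean: on the exponential branch the step length $\eta|f_1'(x_t)|=\eta\frac{L_0}{L_1}e^{L_1|x_t|-1}$ is exponentially large in $|x_t|$, and rewriting the hypothesis on $\eta$ as $\eta|f_1'(x_1)|\ge(5+8\log\tfrac1\varepsilon)x_1$ (using $L_1^2(\Delta_1+\frac{L_0}{2L_1^2})=L_0A$ and $1+\log A=L_1x_1$) shows the very first step lands at $|x_2|=\eta|f_1'(x_1)|-x_1\ge(4+8\log\tfrac1\varepsilon)x_1$; thereafter $|x_{t+1}|=\eta|f_1'(x_t)|-|x_t|$ grows geometrically because the exponential dominates, closing the induction.

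The hard part will be the momentum. Writing $m_t=(1-\beta)\sum_{s=1}^{t}\beta^{t-s}f_1'(x_s)$, the step direction is governed by a sign-alternating average of past gradients, and one must show the newest gradient $f_1'(x_t)$ dominates so that the overshoot persists. The favourable structure is that $|f_1'(x_t)|$ grows \emph{doubly} exponentially (the exponential of a geometrically growing $|x_t|$), so each new gradient eventually dwarfs the entire accumulated history; concretely I would establish $(1-\beta)|f_1'(x_t)|\ge \beta|m_{t-1}|$, which is precisely what the hypothesis $1-\beta\ge 2(eA)^{-(4\log\frac1\varepsilon+2)}=2e^{-L_1x_1(4\log\frac1\varepsilon+2)}$ is scaled to deliver, the exponent $4\log\frac1\varepsilon+2$ quantifying how much growth must accumulate before the current gradient overpowers the momentum inherited from the (smaller) earlier iterates. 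I expect the genuinely delicate bookkeeping to live in the first few iterations, before the geometric growth has taken hold and while the momentum is still comparable to the current gradient; there I would lean on the large-$\eta$ hypothesis together with the reserve margin $\frac{L_0}{2L_1}$ to guarantee the iterate is expelled past the quadratic core rather than stalling inside it, and only then hand off to the geometric-growth regime.
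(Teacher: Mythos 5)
Your proposal matches the paper's argument: the paper likewise proves by induction that the iterates alternate sign with $\vert x_{k+1}\vert \ge (4+8\log\frac{1}{\varepsilon})\vert x_k\vert$, controls the momentum by collapsing the history onto the most recent (largest) past gradient and using the hypothesis $1-\beta \ge 2e^{-L_1(4\log\frac{1}{\varepsilon}+2)x_1}$ to let the current gradient dominate, and concludes $\vert f_1'(x_t)\vert \ge \vert f_1'(x_1)\vert \ge L_1\Delta_1$ from $\vert x_t\vert \ge x_1$. Your key inequality $(1-\beta)\vert f_1'(x_t)\vert \ge \beta\vert m_{t-1}\vert$ is exactly the step the paper carries out, so the approach is essentially identical.
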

\begin{proof}

We prove this lemma by proving that $\forall k \ge 1$, $\vert x_{k+1} \vert \ge (4+8 \log \frac{1}{\varepsilon}) \vert x_k \vert$ and $\operatorname{Sign} (x_{k+1}) = (-1)^{k+1}$ by induction.  When $k =1$, according to the update rule of GDM, we have 
\begin{equation*}
    x_{2} = x_1 -\eta f_1'(x_1).
\end{equation*}
As $\eta \geq   \frac{(5+8\log \frac{1}{\varepsilon})(1+\log (\frac{1}{2}+\frac{L_1^2}{L_0}\Delta_1))}{\Delta_1+\frac{L_0}{2L_1^2}} = -\frac{(5+8\log \frac{1}{\varepsilon}) x_1}{f_1'(x_1)}$, we have 
\begin{equation*}
    x_{2} \le -(4+8 \log \frac{1}{\varepsilon}) x_1,
\end{equation*}
which leads to the claim.

Now assuming that the claim has been proved for $k \le t-1$ ($t\ge 2$). Then, for $k = t$, with induction hypothesis we have 
\begin{equation*}
    x_{t+1} = x_t -\eta \bom_t =x_t - \eta \left(\beta^{t} f_1'(x_1) +(1-\beta)\sum_{s=1}^{t-1} \beta^{t-s} f_1'(x_s)+(1-\beta)f_1'(x_t)\right).
\end{equation*}
Without the loss of generality, we assume $t$ is even. By the induction hypothesis, we obtain that $f_1'(x_t) < 0$ and $f_1'(x_{t-1}) < 0$, and 
\begin{equation*}
    \vert f_1'(x_1) \vert \le \vert f_1'(x_2) \vert \le \cdots \le \vert f_1'(x_{t-1}) \vert.
\end{equation*}
Therefore, we have 
\begin{align*}
    x_{t+1} \ge&  x_t - \eta \left(\beta f_1'(x_{t-1})+(1-\beta)f_1'(x_t)\right)
    \\
    = &  x_t - \frac{L_0}{L_1}\eta \left(\beta e^{L_1 x_{t-1}-1}-(1-\beta)e^{-L_1 x_{t}-1}\right)
    \\
    \ge &  x_t - \frac{L_0}{L_1}\eta \left(\beta e^{-\frac{L_1 x_{t}}{8\log \frac{1}{\varepsilon}+4}-1}-(1-\beta)e^{-L_1 x_{t}-1}\right).
\end{align*}

Furthermore, according to the definition of $x_1$, we have 
\begin{equation*}
    1-\beta \ge  2e^{-L_1(4\log \frac{1}{\varepsilon}+2)x_1} \ge   2e^{\frac{L_1x_t}{2}},
\end{equation*}
which leads to 
\begin{align*}
    x_{t+1} \ge  x_t + \frac{L_0}{L_1}\eta e^{-\frac{L_1 x_{t}}{2}-1} \ge x_t+  \frac{(5+8\log \frac{1}{\varepsilon}) x_1}{e^{L_1 x_1}} e^{-\frac{L_1 x_{t}}{2}}\ge  x_t+  \frac{(5+8\log \frac{1}{\varepsilon}) x_1}{e^{L_1 x_1}} e^{L_1 x_{t}(2+4\log \frac{1}{\varepsilon})}.
\end{align*}

Then, as $\frac{e^{\frac{L_1x}{ 2}}}{x}$ is monotonously increasing for $x \in [\frac{2}{L_1}, \infty)$, and $x_1\ge \frac{2}{L_1}$, we have 
\begin{equation*}
     x_{t+1} \ge x_t+  \frac{(5+8\log \frac{1}{\varepsilon}) x_1}{e^{L_1 x_1}} e^{L_1 x_{t}(1+2\log \frac{1}{\varepsilon})}  \ge x_t - (5+8\log \frac{1}{\varepsilon}) x_t \ge -(4+8\log \frac{1}{\varepsilon}) x_t.
\end{equation*}

The proof is completed.
\end{proof}

\begin{lemma}[Convergence over $f_2$]
    Assume that $\Delta_1 \ge \frac{\varepsilon}{2}+ \frac{L_1}{L_0}$, and let $y_1 \triangleq \frac{\Delta_1}{\varepsilon} +\frac{1}{2}$. Then, if $ \eta \le\frac{(5+8\log \frac{1}{\varepsilon})(1+\log (\frac{1}{2}+\frac{L_1^2}{L_0}\Delta_1))}{L_1^2(\Delta_1+\frac{L_0}{2L_1^2})}$, we have that GDM satisfies that $\Vert \nabla  f_2(y_t) \Vert \ge \varepsilon$ if $T \le \tilde{\Theta} (\frac{L_1^2\Delta_1^2+L_0\Delta_1}{\varepsilon^2}) $.
\end{lemma}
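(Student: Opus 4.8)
The key structural observation is that $f_2$ is a Huber-type function whose gradient is \emph{saturated} on the linear pieces: $f_2'(y)=\varepsilon$ for $y\ge 1$, $f_2'(y)=\varepsilon y$ for $y\in[-1,1]$, and $f_2'(y)=-\varepsilon$ for $y\le -1$. Hence $\vert f_2'(y)\vert=\varepsilon$ exactly when $\vert y\vert\ge 1$ and $\vert f_2'(y)\vert<\varepsilon$ otherwise. Since $y_1=\Delta_1/\varepsilon+1/2>1$ under the stated lower bound $\Delta_1\ge \varepsilon/2+L_1/L_0$, establishing $\Vert\nabla f_2(y_t)\Vert\ge\varepsilon$ for all $t\in[1,T]$ reduces to the single claim that the GDM iterates stay in the right linear region, i.e. $y_t\ge 1$, throughout the first $T$ steps. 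The plan is therefore to lower bound the \emph{exit time} $\min\{t:y_t<1\}$ and match it with the advertised $\tilde\Theta$ horizon.

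First I would run an induction showing $y_t\ge 1$ for $t$ up to the exit-time bound, assuming as usual $\bom_0=0$. The crucial point is that as long as $y_s\ge 1$ for all $s\le t$, the gradient fed into the momentum recursion is the \emph{constant} $\varepsilon$, so $\bom_t=(1-\beta)\sum_{s=1}^{t}\beta^{t-s}\varepsilon=\varepsilon(1-\beta^t)\le\varepsilon$ for every $\beta\in[0,1]$. This yields a uniform per-step displacement bound $0<y_t-y_{t+1}=\eta\bom_t\le\eta\varepsilon$, and telescoping gives $y_{t+1}\ge y_1-\eta\varepsilon t$. Consequently $y_t\ge 1$ holds for every $t$ with $\eta\varepsilon(t-1)\le y_1-1$, so the exit time is at least $(y_1-1)/(\eta\varepsilon)+1$, which closes the induction.

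Finally I would substitute $y_1-1=\Delta_1/\varepsilon-1/2$ and the hypothesized upper bound $\eta\le \frac{(5+8\log\frac{1}{\varepsilon})(1+\log(\frac12+\frac{L_1^2}{L_0}\Delta_1))}{L_1^2\Delta_1+L_0/2}$ into $(y_1-1)/(\eta\varepsilon)$, obtaining
\begin{equation*}
\frac{y_1-1}{\eta\varepsilon}=\frac{\Delta_1/\varepsilon-1/2}{\eta\varepsilon}\ge\frac{(\Delta_1/\varepsilon-1/2)\,(L_1^2\Delta_1+L_0/2)}{\varepsilon\,(5+8\log\tfrac{1}{\varepsilon})\left(1+\log\left(\tfrac12+\tfrac{L_1^2}{L_0}\Delta_1\right)\right)}=\tilde{\Omega}\!\left(\frac{L_1^2\Delta_1^2+L_0\Delta_1}{\varepsilon^2}\right),
\end{equation*}
which is exactly the claimed $\tilde\Theta(\frac{L_1^2\Delta_1^2+L_0\Delta_1}{\varepsilon^2})$ horizon; so for all $t\le T$ with $T$ at most this quantity we have $y_t\ge 1$ and hence $\Vert\nabla f_2(y_t)\Vert=\varepsilon$.

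I expect the main (though modest) obstacle to be handling the momentum coefficient $\beta$ uniformly over $[0,1]$, i.e. ruling out the possibility that momentum \emph{accumulates} into a step larger than $\eta\varepsilon$ and thereby accelerates the exit. This is precisely resolved by the saturation observation above: because the gradient is the constant $\varepsilon$ on the relevant region, $\bom_t=\varepsilon(1-\beta^t)$ is a convex-type average bounded by $\varepsilon$ for every $\beta$, so no choice of $\beta$ can leave the region faster than the $\beta=0$ case, which already meets the bound. The remaining work is purely the polylogarithmic bookkeeping converting $(y_1-1)/(\eta\varepsilon)$ into the stated $\tilde\Theta$ expression, together with the boundary check $y_1>1$, which follows from $\Delta_1\ge\varepsilon/2+L_1/L_0$ and $\varepsilon\le 1$.
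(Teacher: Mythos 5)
Your proposal is correct and follows essentially the same route as the paper's proof: exploit the saturation of $f_2'$ at the constant value $\varepsilon$ on $[1,\infty)$, bound the per-step displacement by $\eta\varepsilon$, and count the $\approx (y_1-1)/(\eta\varepsilon)$ steps needed to exit the linear region. If anything, your treatment of the momentum term via $\bom_t=\varepsilon(1-\beta^t)\le\varepsilon$ is slightly more careful than the paper's flat assertion that $\bom_t=\varepsilon$.
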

\begin{proof}
    We have that $\bom_t  =\varepsilon$ before $y_t$ enters the region $(-\infty, 1]$. As the movement of each step before $y_t$ enters the region $(-\infty, 1]$ is $\eta\varepsilon$ and the total length to enter  $(-\infty, 1]$ is $y_1-1$, the proof is completed.
\end{proof}

\begin{lemma}[Convergence over $f_3$]
     Assume $\Delta_1 \ge \frac{L_0}{L_1^2} e+ 4e +\frac{L_0^2}{e^2L_1^2}$, $L_1 \ge 1$, $\varepsilon\le \frac{1}{2}$, and let $z_1=-\frac{1+\log (\frac{1}{2}+\frac{L_1^2}{L_0}\Delta_1)}{L_1}$. Then, we have $f_3(z_1)-f_3^* =\Delta_1$, and if $\eta \geq   \frac{(5+8\log \frac{1}{\varepsilon})(1+\log (\frac{1}{2}+\frac{L_1^2}{L_0}\Delta_1))}{L_1^2(\Delta_1+\frac{L_0}{2L_1^2})}$ and $\beta \ge 1- 2\left(\frac{L_1^2}{L_0}e\right)^{-4\log \frac{1}{\varepsilon}-2} (\Delta_1+\frac{L_0}{2L_1^2})^{-4\log \frac{1}{\varepsilon}-2}$, we have that GDM satisfies that $\forall
    t \in [1, \Theta(\frac{L_1^2\Delta_1^2}{\varepsilon^3}))$, $ \vert f_3'(x_t) \vert \ge \varepsilon$.
\end{lemma}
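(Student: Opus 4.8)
The plan is to run the same style of trajectory-tracking induction used just above for $f_1$ and $f_2$, now exploiting that the hypothesis forces $\beta$ extremely close to $1$. First I would record the two static facts. The function attains its minimum $f_3^{*}=\frac{L_0}{2L_1^2}$ at $z=0$, and since $-L_1 z_1-1=\log\bigl(\tfrac12+\tfrac{L_1^2}{L_0}\Delta_1\bigr)$ the choice of $z_1$ gives $e^{-L_1 z_1-1}=\tfrac12+\tfrac{L_1^2}{L_0}\Delta_1$, hence $f_3(z_1)=\frac{L_0}{L_1^2}\bigl(\tfrac12+\tfrac{L_1^2}{L_0}\Delta_1\bigr)=\frac{L_0}{2L_1^2}+\Delta_1$, so $f_3(z_1)-f_3^{*}=\Delta_1$ and $\vert f_3'(z_1)\vert=L_1\Delta_1+\frac{L_0}{2L_1}\ge L_1\Delta_1\ge\varepsilon$. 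Thus the iterate starts deep in the exponential branch with a large negative gradient, and the whole task is to certify that the gradient magnitude never drops to $\varepsilon$ within the stated horizon.

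Next I would isolate the ``valley'' $Z=\{z:\vert f_3'(z)\vert<\varepsilon\}$, a small neighbourhood of the minimiser $z=0$ which on the shallow (positive) side extends out to $z=\tfrac{1}{L_1}$, where the slope first reaches $\varepsilon$. The claim is equivalent to showing that GDM avoids $Z$ for every $t$ up to $\Theta(L_1^2\Delta_1^2/\varepsilon^3)$. The central mechanism is that, because $1-\beta\le 2\left(\frac{L_1^2}{L_0}e\right)^{-4\log\frac1\varepsilon-2}(\Delta_1+\frac{L_0}{2L_1^2})^{-4\log\frac1\varepsilon-2}$ is super-polynomially small, the momentum $\bom_t=(1-\beta)\sum_{s=1}^{t}\beta^{t-s}f_3'(z_s)$ builds up extremely slowly. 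Writing the displacement in closed form (with $\bom_0=0$), $z_{t+1}=z_1-\eta\sum_{s=1}^{t}f_3'(z_s)\bigl(1-\beta^{t-s+1}\bigr)$, and bounding $1-\beta^{t-s+1}\le(1-\beta)(t-s+1)$ over the polynomial horizon, I would prove by induction that the net motion of $z_t$ stays controlled: either the iterate remains in the steep branch (where $\vert f_3'\vert\ge L_1\Delta_1$), or, once the accumulated momentum finally carries it toward the minimiser, the momentum is large enough that the crossing step clears the narrow valley $Z$ in one jump and lands in the shallow branch, where $\vert f_3'\vert=\varepsilon$ exactly.

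Once on the shallow branch the analysis parallels the $f_2$ lemma: with constant slope $\varepsilon$ the momentum relaxes toward $\varepsilon$ and the iterate drifts back toward the valley at speed $\Theta(\eta\varepsilon)$ per step, while the overshoot distance fixed by the accumulated momentum is large; dividing distance by speed yields a return time of at least $\Theta(L_1^2\Delta_1^2/\varepsilon^3)$, throughout which $\vert f_3'\vert=\varepsilon$. Collecting the steep-branch and shallow-branch contributions then gives $\vert f_3'(z_t)\vert\ge\varepsilon$ for every $t$ in the stated range.

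The hardest part will be making the trajectory control uniform over the admissible hyperparameters, i.e.\ over every $\eta\ge\eta_{\min}$ and every $\beta\ge\beta^{*}$ at once. The delicate point is the crossing of the valley: I must rule out the iterate ever \emph{settling inside} $Z$ (which would make $\vert f_3'\vert<\varepsilon$ and end the count prematurely) and instead show that whenever a step reaches the positive side it overshoots past $z=\tfrac{1}{L_1}$. This demands precise geometric-series estimates of $\bom_t$ with ratio $\beta\to 1$, exactly of the kind used in the induction for $f_1$, and is the main calculation; it is the super-polynomial smallness of $1-\beta$ guaranteed by the hypothesis that simultaneously enforces the slow buildup and the clean single-step overshoot.
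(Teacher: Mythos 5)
There is a genuine gap, and it sits exactly where you flag ``the hardest part'': the crossing mechanism you describe is not the one that makes this lemma true, and as stated it fails. You initialize $\bom_0=0$ and argue that the momentum builds up slowly, so the iterate either stays in the steep branch or eventually clears the valley $Z=\{z:\vert f_3'(z)\vert<\varepsilon\}$ in one large jump. The paper's proof does the opposite: it seeds the momentum with the initial gradient, so the very first update is $z_2=z_1-\eta f_3'(z_1)\ge -(4+8\log\frac{1}{\varepsilon})z_1\ge\frac{1}{L_1}$ and the valley is cleared immediately, before any buildup question arises. Under your convention the crossing fails for $\eta$ near $\eta_{\min}$: while the iterate sits in the steep branch the step sizes obey $\eta\vert\bom_t\vert\le\eta\, t(1-\beta)\vert f_3'(z_1)\vert$, so it first reaches the edge of the quadratic region at a time $t_*\approx\bigl((1-\beta)(5+8\log\frac{1}{\varepsilon})\bigr)^{-1/2}$, at which point its step size is only about $\vert z_1\vert\sqrt{(1-\beta)\log\frac{1}{\varepsilon}}\lesssim\vert z_1\vert\,\varepsilon^2\sqrt{\log\frac{1}{\varepsilon}}$ --- far below the valley width $\frac{1}{L_1}$. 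It therefore creeps \emph{into} $Z$ rather than over it, so some iterate satisfies $\vert f_3'(z_t)\vert<\varepsilon$ long before the claimed horizon. Your companion estimate $1-\beta^{t-s+1}\le(1-\beta)(t-s+1)$ is also a dead end: over the horizon $T=\Theta(L_1^2\Delta_1^2/\varepsilon^3)$ it bounds the displacement by $\eta\vert f_3'(z_1)\vert(1-\beta)T^2$, and under the stated hypotheses $(1-\beta)T^2$ can be as large as order $L_0^2\Delta_1^2/\varepsilon^2$, which is not small, so ``the iterate remains in the steep branch'' cannot be extracted this way.

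The second missing ingredient is the size of the overshoot. A single crossing step moves the iterate by at most $\eta\vert f_3'(z_1)\vert$, and dividing by the return speed $\eta\varepsilon$ gives only $\vert f_3'(z_1)\vert/\varepsilon=\Theta(L_1\Delta_1/\varepsilon)$ steps, far short of $\Theta(L_1^2\Delta_1^2/\varepsilon^3)$. In the paper the dominant contribution comes from the $\lfloor\frac{1}{1-\beta}\rfloor$ steps \emph{after} the crossing: the induction maintains $\bom_k\le\frac{\beta^{k-1}f_3'(z_1)}{2}$, i.e.\ the momentum still remembers the huge negative initial gradient and keeps pushing the iterate rightward through the flat region, and summing the geometric series gives an excess displacement of order $\eta\vert f_3'(z_1)\vert/(1-\beta)$. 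It is the \emph{lower} bound on $\frac{1}{1-\beta}$ implied by $\beta\ge 1-2\bigl(\frac{L_1^2}{L_0}e\bigr)^{-4\log\frac{1}{\varepsilon}-2}(\Delta_1+\frac{L_0}{2L_1^2})^{-4\log\frac{1}{\varepsilon}-2}$ --- not merely the smallness of $1-\beta$ --- that turns this into the required $\frac{\eta}{16}\frac{L_1^2\Delta_1^2+L_0\Delta_1}{\varepsilon^2}$ of overshoot. You invoke the large-$\beta$ hypothesis only in the ``slow buildup'' direction; the proof needs it in the ``long memory after the crossing'' direction.
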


\begin{proof}
    To begin with, according to the definition of $z_1$, we have $\eta \ge   -\frac{(5+8\log \frac{1}{\varepsilon}) z_1}{f_3'(x_1)}$ and $ 1-\beta \ge  2e^{L_1(4\log \frac{1}{\varepsilon}+2)z_1}\ge \frac{1}{2}$. Also. as $\Delta_1 \ge \frac{L_0}{L_1^2} (e-\frac{1}{2})$, we have $z_1\le -\frac{2}{L_1}$, and thus
    \begin{equation*}
        f'_3(z_1) = -\frac{L_0}{L_1} e^{-L_1z_1-1} \le - L_1 \left(\Delta_1+\frac{L_0}{2L_1^2}\right) \le -4.
    \end{equation*}
    We will first prove the following claim by induction: for $k \in [2,  \lfloor\frac{1}{1-\beta}\rfloor]$, we have $z_k \ge \frac{1}{L_1}$, and  $\bom_k \le \frac{\beta^{k-1}  f_3'(z_1)}{2}.$

As for $k=2$, we have 
\begin{equation*}
    z_{2} =z_1 -\eta f_3'(z_1) \ge- \left(4+8\log \frac{1}{\varepsilon}\right) z_1. 
\end{equation*}
According to $\Delta_1 \ge \frac{L_0}{L_1^2} (e-\frac{1}{2})$, we have $z_1\le -\frac{2}{L_1}$, and thus $z_2\ge \frac{1}{L_1}$. Since $ \bom_2 = \beta f'(z_1)+(1-\beta) \varepsilon < \frac{  f_3'(z_1)}{2}$, the claim is proved for $k=2$.

Now assuming that we have prove the claim for $k\le t-1$.  According to the induction hypothesis, we have
\begin{equation*}
    f_3'(z_2) = \cdots = f_3'(z_{t-1}) =\varepsilon,
\end{equation*}
and thus
\begin{equation*}
    \bom_t = \beta^{t-1}  f_3'(z_1) +(1-\beta^{t-1}) \varepsilon \overset{(\star)}{\le}  \beta^{t-1}  f_3'(z_1) -\frac{\beta^{t-1}  f_3'(z_1)}{2} \le \frac{\beta^{t-1}  f_3'(z_1)}{2}. 
\end{equation*}
Here inequality $(\star)$ is due to $\beta^{\lfloor\frac{1}{1-\beta}\rfloor} \ge \frac{1}{4}$ as  $\beta \ge \frac{1}{2}$. Therefore, as $z_{t} = z_{t-1} -\eta \bom_t \ge z_{t-1} \ge \frac{1}{L_1}$, we prove the claim.

It should be noticed that $\forall t \in [1, \lfloor \frac{1}{1-\beta}\rfloor]$, $\Vert f_3'(z_t)\vert \ge \varepsilon$. Furthermore, according to the claim, $z_{\lfloor \frac{1}{1-\beta}\rfloor+1}$ can now be bounded as 
\begin{align*} 
    z_{\lfloor \frac{1}{1-\beta}\rfloor+1} =&z_1 -\eta \sum_{k=1}^{\lfloor \frac{1}{1-\beta}\rfloor} \bom_t
    \ge  \frac{\eta}{5+8\log \frac{1}{\varepsilon}}f_3'(z_1)  -\eta \sum_{k=1}^{\lfloor \frac{1}{1-\beta}\rfloor}  \frac{\beta^{k-1}  f_3'(z_1)}{2} \ge \frac{\eta}{5+8\log \frac{1}{\varepsilon}}f_3'(z_1)  -\eta \frac{1-\frac{1}{e}}{(1-\beta)}  \frac{  f_3'(z_1)}{2}
    \\
    \ge & \frac{1}{L_1} -\eta \frac{1-\frac{1}{e}}{(1-\beta)}  \frac{  f_3'(z_1)}{4} \ge  \frac{1}{L_1} -\eta \left(1-\frac{1}{e}\right)  \frac{  f_3'(z_1)}{8} \left(\frac{L_1^2}{L_0}e\right)^{4\log \frac{1}{\varepsilon}+2} \left(\Delta_1+\frac{L_0}{2L_1^2}\right)^{4\log \frac{1}{\varepsilon}+2}
    \\
    \ge & \frac{1}{L_1} +\frac{\eta}{16} \frac{L_1^2 \Delta_1^2+L_0 \Delta_1}{\varepsilon^2} .
\end{align*}

As $f_3'(z) = \varepsilon$ for all $z \ge \frac{1}{L_1}$, the iterates needs additional $\frac{\frac{\eta}{16} \frac{L_1^2 \Delta_1^2}{\varepsilon^2}}{\eta \varepsilon} = \frac{1}{16} \frac{L_1^2 \Delta_1^2}{\varepsilon^3}$ steps to make $f_3'(z_t) <  \varepsilon$. The proof is completed.
\end{proof}

\begin{theorem}[Theorem \ref{thm: gd}, restated]
\label{theorem_gd_appendix}
     Assume that $\Delta_1 \ge 4\frac{L_0}{L_1} e+ 16e +4\frac{L_0^2}{e^2L_1^2}$, $L_1 \ge 1$ and $\varepsilon \le 1$, then there exists objective function $f$ satisfying $(L_0,L_1)$-smooth condition and $f(\bw_1)-f^* =\Delta_1$, such that \textbf{for any learning rate $\eta >0$ and $\beta \in [0, 1]$}, the minimum step $T$ of GDM to achieve final error $\varepsilon$ satisfies 
\begin{equation*}
    T = \tilde{\Omega}\left( \frac{L_1^2 \Delta_1^2+L_0\Delta_1 }{\varepsilon^2} \right).
\end{equation*}
    
\end{theorem}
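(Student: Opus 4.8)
The plan is to assemble the claimed universal counterexample by stacking the three one-dimensional building blocks $f_1,f_2,f_3$ constructed above into a single separable objective on $\mathbb{R}^3$,
$f(\bw)=f_1(w^{(1)})+f_2(w^{(2)})+f_3(w^{(3)})$,
with initial point $\bw_1=(x_1,y_1,z_1)$ taken from the three lemmas and each coordinate allotted a $\Theta(\Delta_1)$ share (say $\Delta_1/3$) of the function value gap, so that $f(\bw_1)-f^\ast=\Delta_1$. First I would verify that $f$ inherits the $(L_0,L_1)$-smooth condition: since $f$ is separable its Hessian is diagonal, so $\Vert\nabla^2 f(\bw)\Vert=\max_i |f_i''(w^{(i)})|\le \max_i(L_0+L_1|f_i'(w^{(i)})|)\le L_0+L_1\Vert\nabla f(\bw)\Vert$, where the last step uses $\max_i|f_i'|\le\Vert\nabla f\Vert$. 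I would also check that the theorem's hypothesis $\Delta_1\ge 4\frac{L_0}{L_1}e+16e+4\frac{L_0^2}{e^2L_1^2}$ (with $L_1\ge 1$, $\varepsilon\le 1$) forces the per-coordinate hypotheses of all three lemmas to hold once each coordinate carries gap $\Delta_1/3$.

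The central structural observation I would then record is that, because $f$ is separable and GDM applies the \emph{scalar} hyperparameters $(\eta,\beta)$ identically to every coordinate, the update decouples: $m_{t}^{(i)}=\beta m_{t-1}^{(i)}+(1-\beta)f_i'(w_t^{(i)})$ and $w_{t+1}^{(i)}=w_t^{(i)}-\eta m_t^{(i)}$, so the three coordinates evolve as three independent one-dimensional GDM trajectories driven by $f_1,f_2,f_3$ with a common $(\eta,\beta)$. Hence the per-coordinate lemmas apply verbatim. Crucially, attaining $\Vert\nabla f(\bw_t)\Vert<\varepsilon$ forces every coordinate to obey $|f_i'(w_t^{(i)})|<\varepsilon$; equivalently, if even one coordinate keeps $|f_i'|\ge\varepsilon$, then GDM has not converged. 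Because all three lemmas are invoked with the \emph{same} per-coordinate gap $\Delta_1/3$, the thresholds $\eta^\ast=\frac{(5+8\log\frac1\varepsilon)(1+\log(\frac12+\frac{L_1^2}{L_0}\Delta_1))}{L_1^2(\Delta_1+\frac{L_0}{2L_1^2})}$ and $\beta^\ast=1-2(\frac{L_1^2}{L_0}e)^{-4\log\frac1\varepsilon-2}(\Delta_1+\frac{L_0}{2L_1^2})^{-4\log\frac1\varepsilon-2}$ coincide across coordinates, so the parameter plane partitions cleanly.

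I would then run a case analysis over all $(\eta,\beta)$. When $\eta\le\eta^\ast$, the $f_2$-coordinate advances only $\eta\varepsilon$ per step across a region of length $\Theta(\Delta_1/\varepsilon)$, so by the $f_2$ lemma its gradient stays $\ge\varepsilon$ for $\tilde{\Omega}(\frac{L_1^2\Delta_1^2+L_0\Delta_1}{\varepsilon^2})$ steps. When $\eta>\eta^\ast$ and $\beta\le\beta^\ast$, the $f_1$ lemma shows the $f_1$-coordinate oscillates with $|f_1'|\ge L_1\Delta_1\ge\varepsilon$ forever. When $\eta>\eta^\ast$ and $\beta>\beta^\ast$, the $f_3$ lemma shows the $f_3$-coordinate keeps $|f_3'|\ge\varepsilon$ for $\Theta(\frac{L_1^2\Delta_1^2}{\varepsilon^3})\ge\tilde{\Omega}(\frac{L_1^2\Delta_1^2+L_0\Delta_1}{\varepsilon^2})$ steps, using $\varepsilon\le 1$. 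Since these three regions cover the whole $(\eta,\beta)$ plane, in every case at least one coordinate blocks convergence for the claimed number of steps, which (after absorbing the $\Delta_1/3$ factors into the $\tilde{\Omega}$) yields the theorem; this also repairs the defect of Proposition \ref{prop: crawshaw}, since the function is fixed first and the bound holds for all $(\eta,\beta)$.

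I expect the main obstacle to be the large-momentum regime captured by $f_3$: one must design a landscape in which a large $\beta$ accumulates downhill momentum, carries the iterate far past the minimum into the opposite linear/flat region where the gradient is only $\varepsilon$, and then show the iterate needs $\Omega(\frac{L_1^2\Delta_1^2}{\varepsilon^3})$ tiny $\eta\varepsilon$-sized steps to crawl back, all while verifying the momentum does not prematurely reverse. The delicate interplay is between the momentum-accumulation estimate $\bom_k\le\frac{\beta^{k-1}f_3'(z_1)}{2}$, the resulting overshoot bound on $z_{\lfloor 1/(1-\beta)\rfloor+1}$, and the consistency of the $\eta^\ast,\beta^\ast$ thresholds across the three coordinates; by contrast, the separability, smoothness, and gap bookkeeping are routine.
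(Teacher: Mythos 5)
Your proposal is correct and follows essentially the same route as the paper: a separable sum $f_1+f_2+f_3$ over independent coordinates, each carrying a $\Delta_1/3$ share of the gap, with the observation that the scalar $(\eta,\beta)$ decouples the dynamics coordinate-wise and that the three lemmas' hyperparameter regimes ($\eta\le\eta^\ast$; $\eta>\eta^\ast,\beta\le\beta^\ast$; $\eta>\eta^\ast,\beta>\beta^\ast$) tile the whole parameter plane, so some coordinate always keeps its gradient above $\varepsilon$ for the claimed number of steps. Your write-up is in fact more explicit than the paper's (which compresses the case analysis to one sentence), and the added checks — the diagonal-Hessian argument for $(L_0,L_1)$-smoothness of the sum and the verification that $L_1\Delta_1\ge\varepsilon$ and $L_1^2\Delta_1^2/\varepsilon^3\gtrsim (L_1^2\Delta_1^2+L_0\Delta_1)/\varepsilon^2$ — are exactly the bookkeeping the paper leaves implicit.
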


\begin{proof}
  Construct the objective function as $f(x,y,z,u) =f_1(x) +f_2(y)+ f_3 (z)+f_4(u)$. Then, let $x_1$, $y_1$, $z_1$, $u_1$ be chosen as $f_1(x_1)-f_1^*=f_2(y_1)-f_2^*= f_3(z_1)-f_3^*  =\frac{\Delta_1}{3}
$ and $z_1\le 0$.  Then, for each learning rate and momentum coefficient, they will always be cover by one of the above lemmas, and applying the corresponding lemma gives the desired result.

  The proof is completed.
\end{proof}

\subsection{Proof for Deterministic AdaGrad} 
\label{appen: adagrad}
To begin with, we recall the following result from \citet{wang2023convergence}:
\begin{proposition}
    For every learning rate $\eta \ge \Theta (\frac{1}{L_1})$ and $\Delta_1$, there exist a lower-bounded objective function $g_1$
obeying Assumption \ref{assum: objective} and a corresponding initialization point $\bw_0$ with $g_1(\bw_1)-g_1^* = \Delta_1$, such that AdaGrad with learning
rate $\eta$ and initialized at $\bw_0$ diverges over $g_1$.
\end{proposition}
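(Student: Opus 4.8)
The plan is to recycle the symmetric exponential construction $f_1$ of Eq.~(\ref{lowerbound_f1}): it already obeys the $(L_0,L_1)$-smooth condition whenever $\varepsilon\le L_0$, its tails grow like $f_1(x)=\frac{L_0}{L_1^2}e^{L_1|x|-1}$ so that $|f_1'(x)|=\frac{L_0}{L_1}e^{L_1|x|-1}$ explodes with $|x|$, and it has a single flat quadratic core on $[-\frac1{L_1},\frac1{L_1}]$. I set $g_1=f_1$ and pick the initialization $x_1$ exactly as in the $f_1$-lemma of Appendix~\ref{appen: gd}, namely $x_1=\frac{1+\log(\frac12+\frac{L_1^2}{L_0}\Delta_1)}{L_1}$, so that $g_1(x_1)-g_1^*=\Delta_1$ and the initial gradient $|f_1'(x_1)|=L_1(\Delta_1+\frac{L_0}{2L_1^2})$ is already large. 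The mechanism I want to exploit is the same anti-descent phenomenon that drives the GDM lower bound over $f_1$: a displacement that carries the iterate from one exponential tail, across the narrow core, into the opposite tail at strictly larger amplitude strictly increases both the gradient norm and the function value.

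The heart of the proof is an induction, parallel to the $f_1$-lemma, establishing that the AdaGrad iterates overshoot with geometrically growing amplitude, $|x_{k+1}|\ge\rho|x_k|$ for some fixed $\rho>1$ and $\sign(x_{k+1})=-\sign(x_k)$. The point that makes this possible for AdaGrad --- in contrast to the trivial GDM case, where the raw step $\eta|f'(x_k)|$ is itself exponentially large --- is that the conditioner lags the current gradient: at step $t$ the effective step is $\eta|f_1'(x_t)|/\sqrt{v_{t-1}}$ with $v_{t-1}$ built from the accumulated past squared gradients $\sum_{s<t}|f_1'(x_s)|^2$. Because $|f_1'(x_t)|$ grows like $e^{L_1|x_t|}$ and the amplitudes grow, the current gradient dwarfs every earlier one, so $\sqrt{v_{t-1}}$ is dominated by a gradient exponentially smaller than $|f_1'(x_t)|$; hence the effective step is still enormous and overshoots the core into the deeper opposite tail. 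I would make this quantitative by carrying, alongside the amplitude bound, a lower bound on the ratio $|f_1'(x_t)|^2/v_{t-1}$, showing the accumulated conditioner never catches up to the exploding numerator. The requirement $\eta\ge\Theta(1/L_1)$ is exactly what guarantees the base case, i.e.\ that the first step already clears the core $[-\frac1{L_1},\frac1{L_1}]$; since the statement only asks for a $g_1$ existing per $\eta$, I am free to tune the exponential rate and $x_1$ against the given $\eta$ to close both the base case and the inductive ratio bound.

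Once the geometric blow-up $|x_k|\to\infty$ is in hand, $g_1(x_k)=\frac{L_0}{L_1^2}e^{L_1|x_k|-1}\to\infty$ and AdaGrad diverges. The main obstacle is precisely this self-sustaining escalation: I must rule out the competing scenario in which the conditioner grows fast enough to shrink the effective step below the $\Theta(1/L_1)$ threshold needed to overshoot, after which AdaGrad would instead creep back and (slowly) converge. Controlling this race between the accumulating conditioner and the doubly-exponentially exploding gradient --- and handling the single crossing of the quadratic core cleanly --- is where the real work lies, and it is what pins down the learning-rate threshold $\eta\ge\Theta(1/L_1)$.
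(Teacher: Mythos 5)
First, a point of reference: the paper does not prove this proposition at all --- it is explicitly \emph{recalled} from \citet{wang2023convergence} (``To begin with, we recall the following result from...''), so there is no in-paper proof to compare against. Your proposal must therefore stand on its own, and it has a genuine gap.

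The central mechanism you propose --- sign-alternating iterates on the symmetric exponential $f_1$ with geometrically growing amplitude $|x_{k+1}|\ge\rho|x_k|$ for a fixed $\rho>1$ --- is incompatible with the AdaGrad update. In the version of AdaGrad used in this paper (see the lemma ``Convergence over $g_2$'', where the step at iteration $s$ is $\eta\varepsilon/(\sqrt{s}\,\varepsilon)$, i.e.\ the conditioner is $\sum_{i\le s}\Vert \bg_i\Vert^2$ \emph{including} the current gradient), one always has $\sqrt{\bnu_t}\ge\Vert\bg_t\Vert$, so every displacement satisfies $|x_{t+1}-x_t|\le\eta$ no matter how large $|f_1'(x_t)|$ is. A sign-flipping overshoot with amplitude growth requires a single step of length at least $(1+\rho)|x_k|$, so your induction can only survive while $|x_k|<\eta/(1+\rho)$; moreover, once $|x_t|\le\eta$ every subsequent iterate stays in $[-\eta,\eta]$, where $f_1$ and $f_1'$ are bounded by constants depending only on $\eta$, $L_0$, $L_1$. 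Hence neither the function value nor the gradient diverges along your construction. Your expression for the effective step, $\eta|f_1'(x_t)|/\sqrt{v_{t-1}}$ with a \emph{lagged} conditioner, describes a different algorithm; only for that variant can the step exceed $\eta$, which is precisely what your ``conditioner lags the exploding gradient'' heuristic needs. The anti-descent blow-up you are importing from the GDM lower bound over $f_1$ relies on the raw step $\eta|f_1'(x_k)|$ being exponentially large in $|x_k|$, and normalization destroys exactly that. A correct divergence example for AdaGrad has to work with steps of size at most $\eta$ --- for instance a landscape with infinitely many critical points arranged so that $\Theta(\eta)$-length steps keep overshooting into regions whose valley floors rise without bound while $\bnu_t$ stays comparable to $\Vert\bg_t\Vert^2$ --- which is the kind of construction the cited result of \citet{wang2023convergence} supplies and your sketch does not.
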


We then define $g_2$ as the $f_2$ in the proof of Theorem \ref{thm: gd}, i.e., 
\begin{equation}
		g_2(y)=\left\{
		\begin{aligned}
		&\varepsilon(y-1)+\frac{\varepsilon}{2}&,  y\in [1,\infty), \\
		&\frac{\varepsilon }{2} y^2&,  y\in [-1,1], \\
		  &-\varepsilon(y+1)+\frac{\varepsilon}{2}&,  y\in (-\infty,-1].
		\end{aligned}
		\right.
  \label{lowerbound_g2}
\end{equation}
We then  have the following lemma characterizing the convergence of AdaGrad over $g_2$.

\begin{lemma}[Convergence over $g_2$]
    Assume that $\Delta_1 \ge \frac{\varepsilon}{2}+ \frac{L_1}{L_0}$, and let $y_1 \triangleq \frac{\Delta_1}{\varepsilon} +\frac{1}{2}$. Then, if $ \eta \le \Theta (\frac{1}{L_1})$, we have that AdaGrad satisfies that $\Vert \nabla  g_2(y_t) \Vert \ge \varepsilon$ if $T \le \tilde{\Theta} (\frac{L_1^2\Delta_1^2}{\varepsilon^2}) $.
\end{lemma}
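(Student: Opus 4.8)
The plan is to mirror the argument for $f_2$ (the ``Convergence over $f_2$'' lemma), adapting it to AdaGrad's shrinking effective step size. Since $\Delta_1 \ge \frac{\varepsilon}{2} + \frac{L_1}{L_0}$, the starting point $y_1 = \frac{\Delta_1}{\varepsilon} + \frac12$ satisfies $y_1 > 1$, so the iterate begins in the linear region $[1,\infty)$ where $g_2'(y) = \varepsilon$. Because AdaGrad performs gradient descent with a positive conditioner, $y_t$ decreases monotonically while it stays in this region, and as long as $y_t \ge 1$ we have $\Vert \nabla g_2(y_t) \Vert = \varepsilon$. The whole task therefore reduces to bounding how many steps are needed to travel the distance $y_1 - 1 = \frac{\Delta_1}{\varepsilon} - \frac12$ down to the boundary $y = 1$.

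First I would set up a first-exit-time argument. Let $\tau$ be the first index with $y_\tau < 1$, and suppose for contradiction that $\tau \le T$. For every $s < \tau$ the gradient equals $\varepsilon$, so the accumulator obeys $v_s = v_0 + s\varepsilon^2$ and the step at iteration $s$ is $\frac{\eta\varepsilon}{\sqrt{v_s}}$. The key quantitative step is to telescope the cumulative displacement: using $\sqrt{v_s} - \sqrt{v_{s-1}} = \frac{\varepsilon^2}{\sqrt{v_s}+\sqrt{v_{s-1}}} \ge \frac{\varepsilon^2}{2\sqrt{v_s}}$, one obtains $\frac{\eta\varepsilon}{\sqrt{v_s}} \le \frac{2\eta}{\varepsilon}(\sqrt{v_s} - \sqrt{v_{s-1}})$, and hence
\[
y_1 - y_\tau = \sum_{s=1}^{\tau - 1} \frac{\eta\varepsilon}{\sqrt{v_s}} \le \frac{2\eta}{\varepsilon}\left(\sqrt{v_{\tau-1}} - \sqrt{v_0}\right) \le \frac{2\eta}{\varepsilon}\sqrt{v_0 + T\varepsilon^2}.
\]
The punchline is that the cumulative progress is only $\Theta(\eta\sqrt{T})$, rather than the $\Theta(\eta\varepsilon T)$ enjoyed by constant-step GDM over $f_2$; this is exactly the slowness produced by the conditioner accumulating over time. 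I would then choose $T$ so that $\frac{2\eta}{\varepsilon}\sqrt{v_0 + T\varepsilon^2} \le y_1 - 1$: this forces $y_1 - y_\tau \le y_1 - 1$, i.e.\ $y_\tau \ge 1$, contradicting $y_\tau < 1$. Thus $y_t \ge 1$ for all $t \le T$, and $\Vert \nabla g_2(y_t)\Vert = \varepsilon$ throughout.

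Finally, solving the inequality gives $T \le \frac{(y_1-1)^2}{4\eta^2} - \frac{v_0}{\varepsilon^2}$, and substituting $y_1 - 1 = \frac{\Delta_1}{\varepsilon} - \frac12$ with $\eta = \Theta(1/L_1)$ yields the advertised $T = \tilde{\Theta}\!\left(\frac{L_1^2\Delta_1^2}{\varepsilon^2}\right)$, the logarithmic and constant factors from the precise admissible range of $\eta$ being absorbed into $\tilde{\Theta}$.

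I do not expect a serious obstacle; the argument is clean. The only point needing care is the first-exit bookkeeping—invoking the constant-gradient identity $v_s = v_0 + s\varepsilon^2$ solely on the stretch where $y_s \ge 1$ and handling the summation index up to $\tau-1$ correctly (avoiding an off-by-one at the boundary crossing). The conceptual heart, rather than a technical difficulty, is the telescoping observation that AdaGrad's displacement scales like $\sqrt{T}$, which is precisely what produces a lower bound matching the $\frac{L_1^2\Delta_1^2}{\varepsilon^2}$ rate and thereby separates AdaGrad from Adam.
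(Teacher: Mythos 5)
Your proposal is correct and follows essentially the same route as the paper: while $y_t\ge 1$ the gradient is identically $\varepsilon$, so the conditioner grows as $v_0+s\varepsilon^2$ and the cumulative displacement is only $\Theta(\eta\sqrt{T})$, which must cover the distance $y_1-1=\Theta(\Delta_1/\varepsilon)$, forcing $T=\Theta(\Delta_1^2/(\eta^2\varepsilon^2))=\Theta(L_1^2\Delta_1^2/\varepsilon^2)$. The paper evaluates $\sum_{s}1/\sqrt{s}=\Theta(\sqrt{t})$ directly where you telescope $\sqrt{v_s}-\sqrt{v_{s-1}}$, but this is the same computation; your explicit first-exit-time bookkeeping is a slightly more careful rendering of the identical argument.
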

\begin{proof}
    We have that $\bg_t  =\varepsilon$ before $y_t$ enters the region $(-\infty, 1]$. Therefore, the sum of movement of each step before $y_t$ enters the region $(-\infty, 1]$ is 
    \begin{equation*}
        \eta \sum_{s=1}^t \frac{\varepsilon}{\sqrt{s} \varepsilon} =\eta \Theta(\sqrt{t}).
    \end{equation*} Solving $\eta \Theta(\sqrt{t}) = \frac{\Delta_1}{\varepsilon} +\frac{1}{2} -1$  gives $t = \frac{L_1^2 \Delta_1^2}{\varepsilon^2}$, and the proof is completed.
\end{proof}

We then have the following lower bound for deterministic AdaGrad.

\begin{theorem}
     Assume that $\Delta_1 \ge \frac{\varepsilon}{2}+ \frac{L_1}{L_0}$. Then, there exists objective function $f$ satisfying $(L_0,L_1)$-smooth condition and $f(\bw_1)-f^* =\Delta_1$, such that \textbf{for any learning rate $\eta >0$ and $\beta \in [0, 1]$}, the minimum step $T$ of AdaGrad to achieve final error $\varepsilon$ satisfies 
     \begin{equation*}
         T = \Omega(\frac{L_1^2\Delta_1^2}{\varepsilon^2}).
     \end{equation*}
\end{theorem}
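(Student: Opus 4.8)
The plan is to build a single two--coordinate objective out of the two counterexamples already isolated above, exactly in the spirit of the coordinate--separation trick used for GDM in Theorem~\ref{thm: gd}. Concretely, I would set $f(x,y) = g_1(x) + g_2(y)$, where $g_1$ is the divergence example furnished by the Proposition of \citet{wang2023convergence} and $g_2$ is the slow--convergence example of the preceding lemma, and I would split the initial suboptimality so that $g_1(x_1)-g_1^* = g_2(y_1)-g_2^* = \Delta_1/2$ (taking $y_1 = \tfrac{\Delta_1}{2\varepsilon}+\tfrac12$ as in that lemma, up to the factor $2$, with the constant in the hypothesis $\Delta_1 \ge \varepsilon/2 + L_1/L_0$ adjusted accordingly). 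The governing observation is that the admissible learning rates partition into two regimes around the threshold $\Theta(1/L_1)$, and each regime is fatal for one of the two coordinates: for every $\eta$, at least one coordinate is guaranteed to either diverge or converge arbitrarily slowly.

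Before the case split I would verify the two structural prerequisites. First, $f$ is $(L_0,L_1)$--smooth: since the Hessian of a separable sum is block diagonal, $\Vert\nabla^2 f\Vert = \max\{|g_1''|,|g_2''|\}$, while $\Vert\nabla f\Vert \ge \max\{|g_1'|,|g_2'|\}$, so the componentwise bound $\Vert\nabla^2 g_i\Vert \le L_0+L_1|g_i'|$ upgrades to $\Vert\nabla^2 f\Vert \le L_0+L_1\Vert\nabla f\Vert$, and $f(\bw_1)-f^*=\Delta_1$ by construction. Second, $\Vert\nabla f(\bw_t)\Vert \ge \max\{|g_1'(x_t)|,|g_2'(y_t)|\}$, so it suffices to exhibit, in each regime, a coordinate whose gradient stays at least $\varepsilon$. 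If $\eta \ge \Theta(1/L_1)$ the Proposition yields divergence of the $x$--coordinate, so $|g_1'(x_t)|$ never drops below $\varepsilon$ and the required $T$ is infinite; if $\eta \le \Theta(1/L_1)$ the $g_2$--lemma forces $|g_2'(y_t)|\ge\varepsilon$ for all $t \le \tilde{\Theta}(L_1^2\Delta_1^2/\varepsilon^2)$. In either case the minimum step to reach error $\varepsilon$ is $\Omega(L_1^2\Delta_1^2/\varepsilon^2)$.

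The main obstacle is that the two coordinates are not genuinely independent under the scalar (norm) conditioner $\sqrt{\sum_{s\le t}\Vert\bg_s\Vert^2}$, so the single--coordinate lemmas cannot be quoted verbatim. I expect the small--$\eta$ direction to be easy: the extra mass that $g_1$ contributes to the conditioner can only shrink the $y$--steps, so the total displacement of $y$ is still at most $O(\eta\sum_{t\le T}1/\sqrt t)=O(\eta\sqrt T)$, and requiring this to exceed $y_1-1=\Theta(\Delta_1/\varepsilon)$ already gives $T=\Omega(\Delta_1^2/(\eta^2\varepsilon^2))=\Omega(L_1^2\Delta_1^2/\varepsilon^2)$. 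The delicate direction is large $\eta$, where the added conditioner mass works \emph{against} divergence; here I would argue that once overshooting begins, $g_1$'s gradient blows up geometrically under the $(L_0,L_1)$--condition and dominates the bounded contribution $\sum_{s\le t}|g_2'(y_s)|^2 \le \varepsilon^2 t$ of the $y$--coordinate, so after the first overshoot the $x$--dynamics track the standalone trajectory of the Proposition and divergence persists. A clean way to sidestep the coupling entirely is to prove the statement for per--coordinate AdaGrad, where the coordinates decouple and both lemmas apply directly; for the norm version it is precisely this domination estimate that must be made quantitative. Finally, the momentum parameter $\beta$ plays no essential role in either regime—both the divergence and the displacement bounds are driven by the learning rate, and momentum can be folded in as in Theorem~\ref{thm: gd}—so the conclusion holds for all $\beta\in[0,1]$.
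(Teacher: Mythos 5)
Your proposal is essentially the paper's own proof: the paper likewise defines $f(x,y)=g_1(x)+g_2(y)$ with $g_1$ the divergence example from \citet{wang2023convergence} and $g_2$ the slow linear ramp, splits the suboptimality between coordinates, and cases on whether $\eta$ is above or below the $\Theta(1/L_1)$ threshold. The coordinate-coupling issue you flag for the scalar conditioner is real but is not addressed by the paper either (its proof just says ``following the same routine''), so your sketch is, if anything, more careful on that point.
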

\begin{proof}
    The proof is completed by letting $f(x,y)=g_1(x)+g_2(y)$ following the same routine as Theorem \ref{theorem_gd_appendix}.
\end{proof}
\section{Proof for stochastic algorithms}

\subsection{Proof for Adam}
\label{appen: adam}
To begin with, we restate the theorem as follows:
\begin{theorem}[Theorem \ref{thm: stochastic  Adam}, restated]
\label{thm: stochastic  Adam_appen}
    Let Assumptions \ref{assum: objective} and \ref{assum: noise} hold. Then, $\forall \beta_1 \ge 0$ and $\lambda =0$, if $\varepsilon\le \frac{1}{\operatorname{poly}(f(\bw_1)-f^*, L_0,L_1, \sigma_0,\sigma_1)}$, with $\eta = \frac{\sqrt{f(\bw_1)-f^*}}{\sqrt{L_0+L_1} \sqrt{T \sigma_0\sigma_1^2}}$ and momentum hyperparameter $\btwo= 1- \eta ^2\left(\frac{1024\sigma_1^2(L_1+L_0)(1-\bone)}{\sqrt{1-\frac{\bone^2}{\btwo}}(1- \frac{\bone}{\sqrt{\btwo}})}\right)^2$, we have if $T \ge  \Theta\left(\frac{(L_0+L_1)\sigma_0^3\sigma_1^2 (f(\bw_1)-f^*) }{\varepsilon^4}\right)$, then Algorithm \ref{alg: adam} satisfies
     \small
    \begin{equation*}
        \frac{1}{T}\E\sum_{t=1}^T \Vert \nabla f(\bw_t) \Vert \le \varepsilon.
    \end{equation*}
    \normalsize
\end{theorem}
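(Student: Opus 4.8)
The plan is to establish a one-step descent inequality for the $(L_0,L_1)$-smooth objective, control the stochastic adaptive terms by comparing the denominator $\sqrt{\bnu_t}$ against a $\gF_t$-measurable surrogate, and then extract the sharp $f(\bw_1)-f^*$ dependence through a two-stage argument. First I would exploit the coupling of $\btwo$ to $\eta$ to certify that every update is small. By Lemma \ref{lem: bounded_update}, $\Vert \bw_{t+1}-\bw_t\Vert \le \eta\frac{1-\bone}{\sqrt{1-\btwo}\sqrt{1-\frac{\bone^2}{\btwo}}}$, and since $\btwo$ is chosen so that $\sqrt{1-\btwo}=\eta\cdot\frac{1024\sigma_1^2(L_0+L_1)(1-\bone)}{\sqrt{1-\bone^2/\btwo}(1-\bone/\sqrt{\btwo})}$, this bound collapses to the $\eta$-independent quantity $\frac{1-\bone/\sqrt{\btwo}}{1024\sigma_1^2(L_0+L_1)}\le\frac{1}{2L_1}$. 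Hence consecutive iterates stay within reach of Lemma \ref{lem: descent}, which I apply with $\bw^1=\bw^3=\bw_t$ and $\bw^2=\bw_{t+1}$ to get, for all $t$,
\begin{equation*}
 f(\bw_{t+1}) \le f(\bw_t) - \eta\left\langle \bG_t, \frac{\bom_t}{\sqrt{\bnu_t}}\right\rangle + \eta^2\frac{L_0+L_1\Vert \bG_t\Vert}{2}\frac{\Vert \bom_t\Vert^2}{\bnu_t}.
\end{equation*}
To absorb the momentum cross-terms I would pass to the Lyapunov function $f(\bw_t) + \frac{\bone}{2(1-\bone)\sqrt[4]{\btwo}}\eta\frac{\Vert\bom_{t-1}\Vert^2}{\sqrt{\bnu_{t-1}}}$, as in the deterministic proof of Theorem \ref{thm: deterministic Adam}, and invoke Lemmas \ref{lem: momentum_sum_1}--\ref{lem: momentum_sum_2} to reduce the momentum quantities $\frac{\Vert\bom_t\Vert^2}{(\sqrt{\bnu_t})^3}$ and $\frac{\Vert\bom_t\Vert^2\Vert\bG_t\Vert^2}{\bnu_t\sqrt{\btwo\bnu_{t-1}}}$ to telescoping series.

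The crux is the stochastic first-order term, where $\bg_t$ is correlated with $\bnu_t$. Taking $\E^{|\gF_t}$ and writing $\frac{1}{\sqrt{\bnu_t}}=\frac{1}{\sqrt{\btwo\bnu_{t-1}}}-\left(\frac{1}{\sqrt{\btwo\bnu_{t-1}}}-\frac{1}{\sqrt{\bnu_t}}\right)$, the $\gF_t$-measurable surrogate produces the genuine descent $-\eta\frac{\Vert\bG_t\Vert^2}{\sqrt{\btwo\bnu_{t-1}}}$ through $\E^{|\gF_t}\bg_t=\bG_t$, while the gap equals $\frac{(1-\btwo)\Vert\bg_t\Vert^2}{\sqrt{\btwo\bnu_{t-1}}\bnu_t(\cdots)}$ and yields the correlation error. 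Using Assumption \ref{assum: noise} to bound $\E^{|\gF_t}\Vert\bg_t\Vert^2 \le \sigma_0^2+\sigma_1^2\Vert\bG_t\Vert^2$, the $\sigma_1^2\Vert\bG_t\Vert^2$ piece is absorbed into the descent once $\btwo$ is near $1$, and the remaining error together with the second-order term is handled by the approximate telescoping $\frac{(1-\btwo)\Vert\bg_t\Vert^2}{\sqrt{\btwo\bnu_{t-1}}\bnu_t}\le\mathcal{O}\!\left(\frac{1}{\sqrt{\btwo\bnu_{t-1}}}-\frac{1}{\sqrt{\bnu_t}}\right)$. The genuinely new difficulty relative to \cite{wang2023closing} is the extra error carried by the $L_1\Vert\bG_t\Vert$ factor in the descent lemma; I would absorb it into the telescoping sum of the auxiliary quantity $\frac{\Vert\bG_t\Vert^2}{\sqrt{\bnu_{t-1}}}$, which is legitimate only while the adaptive step is bounded, so I would run the argument with a bounded surrogate step and separately estimate the (manageable) deviation it induces.

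Summing and taking total expectation then gives $\E\sum_t\frac{\Vert\bG_t\Vert^2}{\sqrt{\bnu_t}}\lesssim \frac{f(\bw_1)-f^*}{\eta}+(\text{telescoped terms})$, and I convert this to the claimed average-gradient bound via the two-stage scheme. In Stage~1, I use the recursion $\sqrt{\bnu_t}-\sqrt{\btwo\bnu_{t-1}}\le(1-\btwo)\frac{\Vert\bg_t\Vert^2}{\sqrt{\bnu_t}}$ with Assumption \ref{assum: noise} to obtain an a priori bound on $\E\sum_t\sqrt{\bnu_t}$ in terms of the problem constants. In Stage~2, I feed this bound back into the descent inequality and apply Cauchy--Schwarz in the form $\left(\sum_t\Vert\bG_t\Vert\right)^2\le\left(\sum_t\sqrt{\bnu_t}\right)\left(\sum_t\frac{\Vert\bG_t\Vert^2}{\sqrt{\bnu_t}}\right)$, exactly as in the proof of Theorem \ref{thm: deterministic Adam}, to turn the $\ell^2$-type estimate into the $\ell^1$-average; separating the two stages is precisely what avoids the spurious $\log(1/\varepsilon)$ factor and the inflated power of $f(\bw_1)-f^*$ present in \cite{wang2022provable,li2023convergence}. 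Substituting $\eta=\frac{\sqrt{f(\bw_1)-f^*}}{\sqrt{L_0+L_1}\sqrt{T\sigma_0\sigma_1^2}}$ and the stated $\btwo$, and imposing $\varepsilon\le 1/\operatorname{poly}(\cdots)$ so that the ``$\btwo$ close to $1$'' and ``small update'' requirements hold, yields the threshold $T\ge\Theta\!\left(\frac{(L_0+L_1)\sigma_0^3\sigma_1^2(f(\bw_1)-f^*)}{\varepsilon^4}\right)$.

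I expect the main obstacle to lie in the correlation step: simultaneously telescoping the $(L_0,L_1)$-induced $L_1\Vert\bG_t\Vert$ error through the bounded-surrogate device while keeping its deviation negligible, and calibrating the two-stage bound on $\bnu_t$ so that the dependencies on $\sigma_0$ and on $f(\bw_1)-f^*$ emerge at the stated orders rather than at the suboptimal powers obtained by a single-pass analysis. The delicate balance is that the surrogate constant cannot be taken too large without degrading the $\sigma_0$ scaling, which is exactly the tension later resolved, for the minimum-gradient formulation, by the stopping-time technique of Theorem \ref{thm: attain_lower_bound}.
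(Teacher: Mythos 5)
Your high-level architecture (small updates forced by the $\eta$--$\btwo$ coupling, a $\gF_t$-measurable surrogate denominator, approximate telescoping, and a two-stage divide-and-conquer feeding an a priori bound on $\sum_t\sqrt{\bnu_t}$ into Cauchy--Schwarz) matches the paper, but two of your central steps have genuine gaps. First, the momentum handling: you apply the descent lemma at $\bw_t\to\bw_{t+1}$ and propose to absorb the cross-terms with the deterministic Lyapunov function $f(\bw_t)+\frac{\bone}{2(1-\bone)\sqrt[4]{\btwo}}\eta\frac{\Vert\bom_{t-1}\Vert^2}{\sqrt{\bnu_{t-1}}}$. That trick rests on the identity $(1-\bone)\bG_t=\bom_t-\bone\bom_{t-1}$, which fails in the stochastic setting ($\bom_t$ is built from $\bg_t$, not $\bG_t$), so you are left with $\E^{|\gF_t}\langle\bG_t,\bom_t/\sqrt{\bnu_t}\rangle$ in which $\bom_t$ carries past stochastic gradients that are correlated with $1/\sqrt{\bnu_t}$ and whose inner products with $\bG_t$ have no sign. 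The paper avoids this entirely by running the descent lemma on the auxiliary sequence $\bu_t=\frac{\bw_t-(\bone/\sqrt{\btwo})\bw_{t-1}}{1-\bone/\sqrt{\btwo}}$, taking $\bw^1=\bw_t$, $\bw^2=\bu_{t+1}$, $\bw^3=\bu_t$ in Lemma \ref{lem: descent} (this is why that lemma is stated for three points); the increment $\bu_{t+1}-\bu_t$ is, to leading order, $-\eta\frac{1-\bone}{1-\bone/\sqrt{\btwo}}\,\bg_t/\sqrt{\bnut_t}$, so only the fresh gradient enters the main term and the conditional expectation closes.

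Second, your surrogate is the unshifted $\sqrt{\btwo\bnu_{t-1}}$, and you propose to dispose of the $\sigma_0^2$ part of the correlation error by the approximate telescoping $\frac{(1-\btwo)\Vert\bg_t\Vert^2}{\sqrt{\btwo\bnu_{t-1}}\bnu_t}\lesssim\frac{1}{\sqrt{\btwo\bnu_{t-1}}}-\frac{1}{\sqrt{\bnu_t}}$. Summed over $t$ this leaves a residual $(1/\sqrt{\btwo}-1)\sum_t 1/\sqrt{\bnu_t}$, which cannot be bounded because $\bnu_t$ has no lower bound---precisely the obstruction the paper describes in Section \ref{sec: reach_lower_bound}, and one that is resolved there only by the stopping-time argument of Theorem \ref{thm: attain_lower_bound} (for the $\min_t$ guarantee), not for the averaged guarantee of this theorem. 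The paper's proof of this theorem instead takes the shifted surrogate $\bnut_t=\btwo\bnu_{t-1}+(1-\btwo)\sigma_0^2$, so that $1/\sqrt{\bnut_t}\le 1/(\sqrt{1-\btwo}\,\sigma_0)$; that substitution is exactly what produces the extra factor of $\sigma_0$ and hence the $\sigma_0^3$ in the stated threshold $T\ge\Theta\bigl((L_0+L_1)\sigma_0^3\sigma_1^2(f(\bw_1)-f^*)/\varepsilon^4\bigr)$. Without committing to that shift your argument neither closes the $\sigma_0^2$ error term nor explains where the $\sigma_0^3$ dependence comes from.
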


\begin{proof}
    Let the approximate iterative sequence be defined as $\bu_t \triangleq \frac{\bw_{t}-\frac{\bone}{\sqrt{\btwo}}\bw_{t-1}}{1-\frac{\bone}{\sqrt{\btwo}}}$ and the surrogate second-order momentum be defined as $\bnut_t \triangleq \btwo \bnu_{t-1}+(1-\btwo) \sigma_0^2$. Then, as $\frac{\eta}{\sqrt{1-\btwo}} =  \frac{\sqrt{1-\frac{\bone^2}{\btwo}}(1- \frac{\bone}{\sqrt{\btwo}})}{1024\sigma_1^2(L_1+L_0)(1-\bone)}$, we have
    \begin{equation*}
        \Vert \bu_{t}-\bw_t \Vert  = \frac{\frac{\bone}{\sqrt{\btwo}} }{1- \frac{\bone}{\sqrt{\btwo}} }\Vert \bw_t-\bw_{t-1} \Vert  \overset{(*)}{\le } \eta \frac{\frac{\bone}{\sqrt{\btwo}} }{1- \frac{\bone}{\sqrt{\btwo}} } \frac{1-\bone}{\sqrt{1-\btwo} \sqrt{1-\frac{\bone^2}{\btwo}}}\le \frac{1}{4L_1},
    \end{equation*}
    and
    \begin{equation*}
        \Vert \bu_{t+1}-\bw_t \Vert  = \frac{1 }{1- \frac{\bone}{\sqrt{\btwo}} } \Vert \bw_{t+1}-\bw_{t} \Vert  \overset{(*)}{\le } \eta  \frac{1 }{1- \frac{\bone}{\sqrt{\btwo}} } \frac{1-\bone}{\sqrt{1-\btwo} \sqrt{1-\frac{\bone^2}{\btwo}}}\le \frac{1}{4L_1}.
    \end{equation*}

    Therefore, if choosing $\bw^1= \bw_t$, $\bw^2 = \bu_{t+1}$, and $\bw^3 = \bu_t$  in Lemma \ref{lem: descent}, we see the conditions of Lemma \ref{lem: descent} is satisfied, which after taking expectation gives
    \begin{align*}
        \E^{|\gF_t} f(\bu_{t+1})\le  f(\bu_t)+ \E^{|\gF_t}\langle \nabla f(\bw_t), \bu_{t+1}-\bu_t\rangle + \frac{1}{2}(L_0+L_1 \Vert \nabla f(\bw_t)\Vert) \E^{|\gF_t}(\Vert \bu_{t+1} -\bw_t\Vert+  \Vert \bu_t-\bw_t\Vert) \Vert\bu_{t+1}-\bu_t\Vert. 
    \end{align*}

    We call $\langle \nabla f(\bw_t), \bu_{t+1}-\bu_t\rangle$ the  first-order term and $ \frac{1}{2}(L_0+L_1 \Vert \nabla f(\bw_t)\Vert)(\Vert \bu_{t+1} -\bw_t\Vert+  \Vert \bu_t-\bw_t\Vert) \Vert\bu_{t+1}-\bu_t\Vert$ the second-order term, as they respectively correspond to the first-order and second-order Taylor's expansion. We then respectively bound these two terms as follows.

    \textbf{Analysis for the first-order term.} Before we start, denote $\bnut_t \triangleq \btwo \bnu_{t-1} +(1-\btwo) \sigma_0^2$ \begin{align*}
    \bu_{t+1}-\bu_t=&\frac{\bw_{t+1}-\bw_t}{1-\frac{\bone}{\sqrt{\btwo}}}-\frac{\bone}{\sqrt{\btwo}} \frac{\bw_{t}-\bw_{t-1}}{1-\frac{\bone}{\sqrt{\btwo}}}
    \\
    =& -\frac{\eta}{1-\frac{\bone}{\sqrt{\btwo}}} \frac{1}{\sqrt{\bnu_t}}  \bom_t+\bone \frac{\eta}{1-\frac{\bone}{\sqrt{\btwo}}} \frac{1}{\sqrt{\btwo\bnu_{t-1}}}  \bom_{t-1}
    \\
    =& -\frac{\eta}{1-\frac{\bone}{\sqrt{\btwo}}} \frac{1}{\sqrt{\tilde{\bnu}_t}}  \bom_t+\bone \frac{\eta}{1-\frac{\bone}{\sqrt{\btwo}}} \frac{1}{\sqrt{\bnut_{t}}}  \bom_{t-1}
 -\frac{\eta}{1-\frac{\bone}{\sqrt{\btwo}}} \left(\frac{1}{\sqrt{\bnu_t}}-\frac{1}{\sqrt{\tilde{\bnu}_t}}\right)  \bom_t
 \\
 &+\bone \frac{\eta}{1-\frac{\bone}{\sqrt{\btwo}}} \left(\frac{1}{\sqrt{\btwo\bnu_{t-1}}}-\frac{1}{\sqrt{\bnut_{t}}}\right)  \bom_{t-1}
    \\
 =& -\eta \frac{1-\bone}{1-\frac{\bone}{\sqrt{\btwo}}} \frac{1}{\sqrt{\tilde{\bnu}_t}}  \bg_t
     -\frac{\eta}{1-\frac{\bone}{\sqrt{\btwo}}} \left(\frac{1}{\sqrt{\bnu_t}}-\frac{1}{\sqrt{\tilde{\bnu}_t}}\right)  \bom_t+\bone \frac{\eta}{1-\frac{\bone}{\sqrt{\btwo}}} \left(\frac{1}{\sqrt{\btwo\bnu_{t-1}}}-\frac{1}{\sqrt{\bnut_{t}}}\right)  \bom_{t-1}.
\end{align*}

According to the above decomposition, we have the first-order term can also be decomposed into
\small
\begin{align}
\nonumber
    &\dE^{|\gF_t} \left[ \left\langle \nabla f(\bw_t) , \bu_{t+1}-\bu_t \right\rangle \right]
    \\
\nonumber
    =& \frac{1-\bone}{1-\frac{\bone}{\sqrt{\btwo}}}\dE^{|\gF_t} \left[ \left\langle\bG_t, -\eta  \frac{1}{\sqrt{\tilde{\bnu}_t}}  \bg_t \right\rangle \right]+\dE^{|\gF_t} \left[ \left\langle\bG_t,-\frac{\eta}{1-\frac{\bone}{\sqrt{\btwo}}} \left(\frac{1}{\sqrt{\bnu_t}}-\frac{1}{\sqrt{\tilde{\bnu}_t}}\right)  \bom_t\right\rangle \right]
    \\
\label{eq: first_order_decomp}
    &+\dE^{|\gF_t} \left[ \left\langle\bG_t,\bone \frac{\eta}{1-\frac{\bone}{\sqrt{\btwo}}} \left(\frac{1}{\sqrt{\btwo\bnu_{t-1}}}-\frac{1}{\sqrt{\bnut_{t}}}\right)  \bom_{t-1}\right\rangle \right].
\end{align}
\normalsize

As $\dE^{|\gF_t} \left[ \left\langle\bG_t, -\eta  \frac{1}{\sqrt{\tilde{\bnu}_t}}  \bg_t \right\rangle \right] = -\eta \frac{ \Vert \bG_t \Vert^2}{\sqrt{\bnut_t}}$, we have
\begin{equation*}
   \frac{1-\bone}{1-\frac{\bone}{\sqrt{\btwo}}} \dE^{|\gF_t} \left[ \left\langle\bG_t, -\eta  \frac{1}{\sqrt{\tilde{\bnu}_t}}  \bg_t \right\rangle \right] \le - \frac{ \Vert \bG_t \Vert^2}{\sqrt{\bnut_t}}.
\end{equation*}We then respectively bound the rest of the two terms in Eq. (\ref{eq: first_order_decomp}). To begin with,
\small
\begin{align}
\nonumber
    &\dE^{|\fil_t} \left[ \left\langle\bG_t,-\frac{\eta}{1-\frac{\bone}{\sqrt{\btwo}}} \left(\frac{1}{\sqrt{\bnu_t}}-\frac{1}{\sqrt{\tilde{\bnu}_t}}\right)    \bom_t\right\rangle \right] 
    \\
\nonumber
    =&  \dE^{|\fil_t} \left[ \left\langle\bG_t,-\frac{\eta}{1-\frac{\bone}{\sqrt{\btwo}}} \left(\frac{(1-\beta_2)(\sigma_0^2-\Vert \bg_t \Vert ^{    2})}{\sqrt{\bnu_t}\sqrt{\tilde{\bnu}_t}(\sqrt{\bnu_t}+\sqrt{\tilde{\bnu}_t})}\right)    \bom_t\right\rangle \right]
    \\
\nonumber
    \le &   \frac{\eta}{1-\frac{\bone}{\sqrt{\btwo}}}\dE^{|\fil_t} \left[ \Vert \bG_{t}\Vert  \left(\frac{(1-\beta_2)(\sigma_0^2+\Vert \bg_{t} \Vert^{ 2})}{\sqrt{\bnu_{t}}\sqrt{\tilde{\bnu}_{t}}(\sqrt{\bnu_{t}}+\sqrt{\tilde{\bnu}_{t}})}\right)\Vert  \bom_{t}\Vert  \right]
    \\
\label{eq: approximation_error}
    =&{  \frac{\eta}{1-\frac{\bone}{\sqrt{\btwo}}}\dE^{|\fil_t} \left[ \Vert \bG_{t}\Vert  \left(\frac{(1-\beta_2)\Vert \bg_{t}\Vert ^{ 2}}{\sqrt{\bnu_{t}}\sqrt{\tilde{\bnu}_{t}}(\sqrt{\bnu_{t}}+\sqrt{\tilde{\bnu}_{t}})}\right)\Vert  \bom_{t}\Vert  \right]}
+{  \frac{\eta}{1-\frac{\bone}{\sqrt{\btwo}}}\dE^{|\fil_t} \left[ \Vert \bG_{t}\Vert  \left(\frac{(1-\beta_2)\sigma_0^2}{\sqrt{\bnu_{t}}\sqrt{\tilde{\bnu}_{t}}(\sqrt{\bnu_{t}}+\sqrt{\tilde{\bnu}_{t}})}\right)\Vert  \bom_{t}\Vert  \right]}.
\end{align}
\normalsize

The first term in the right-hand-side of Eq. (\ref{eq: approximation_error}) can be bounded as 
\small
\begin{align}
\nonumber
&  \frac{\eta}{1-\frac{\beta_1}{\sqrt{\beta_2}}}\dE^{|\fil_t} \left[ \Vert \bG_{t}\Vert  \left(\frac{(1-\beta_2)\Vert \bg_{t} \Vert ^{ 2}}{\sqrt{\bnu_{t}}\sqrt{\tilde{\bnu}_{t}}(\sqrt{\bnu_{t}}+\sqrt{\tilde{\bnu}_{t}})}\right)\Vert  \bom_{t}\Vert  \right]
    \overset{(*)}{\le }   \frac{\eta(1-\bone)}{\left(\sqrt{1-\frac{\bone}{\sqrt{\btwo}}}\right)^3}\dE^{|\fil_t} \left[ \Vert \bG_{t}\Vert  \left(\frac{\sqrt{1-\beta_2} \Vert \bg_{t} \Vert^2 }{\sqrt{\tilde{\bnu}_{t}}(\sqrt{\bnu_{t}}+\sqrt{\tilde{\bnu}_{t}})}\right)  \right]
    \\
    \nonumber
     \overset{(\circ)}{\le } &    \frac{\eta(1-\bone)}{\left(\sqrt{1-\frac{\bone}{\sqrt{\btwo}}}\right)^3} \frac{\Vert \bG_{t}\Vert}{\sqrt{\tilde{\bnu}_{t}} }\sqrt{\dE^{|\fil_t}  \Vert \bg_{t} \Vert ^2  }\sqrt{\dE^{|\fil_t} \frac{  \Vert \bg_{t} \Vert ^2}{(\sqrt{\bnu_{t}}+\sqrt{\tilde{\bnu}_{t}})^2}  }
     \overset{(\bullet)}{\le }  \frac{\eta(1-\bone)\sqrt{1-\btwo}}{\left(\sqrt{1-\frac{\bone}{\sqrt{\btwo}}}\right)^3} \frac{\Vert \bG_{t}\Vert}{\sqrt{\tilde{\bnu}_{t}} }\sqrt{\sigma_0^2+\sigma_1^2 \Vert\bG_{t} \Vert^2 }\sqrt{\dE^{|\fil_t} \frac{  \Vert \bg_{t} \Vert ^2}{(\sqrt{\bnu_{t}}+\sqrt{\tilde{\bnu}_{t}})^2}  }
     \\
     \nonumber
     \le &   \frac{\eta(1-\bone)\sqrt{1-\btwo}}{\left(\sqrt{1-\frac{\bone}{\sqrt{\btwo}}}\right)^3} \frac{\Vert \bG_{t}\Vert}{\sqrt{\tilde{\bnu}_{t}} }(\sigma_0+\sigma_1 \Vert \bG_{t}\Vert)\sqrt{\dE^{|\fil_t} \frac{  \Vert \bg_{t} \Vert ^2}{(\sqrt{\bnu_{t}}+\sqrt{\tilde{\bnu}_{t}})^2}  },
\end{align}
\normalsize
where inequality $(*)$ uses Lemma \ref{lem: bounded_update}, inequality $(\circ)$ is due to Holder's inequality, and inequality $(\bullet)$ is due to Assumption \ref{assum: noise}. Applying mean-value inequality respectively to  $  \frac{\eta(1-\bone)\sqrt{1-\btwo}}{ \left(\sqrt{1-\frac{\bone}{\sqrt{\btwo}}}\right)^3}\dE^{|\fil_t} \frac{\Vert \bG_t\Vert}{\sqrt{\tilde{\bnu}_{t}} }\sigma_0\sqrt{\dE^{|\fil_t} \frac{  \Vert \bg_{t} \Vert^2}{(\sqrt{\bnu_{t}}+\sqrt{\tilde{\bnu}_{t}})^2}  }$ and $  \frac{\eta(1-\bone)\sqrt{1-\btwo}}{\left(\sqrt{1-\frac{\bone}{\sqrt{\btwo}}}\right)^3}\dE^{|\fil_t} \frac{\Vert \bG_t\Vert}{\sqrt{\tilde{\bnu}_{t}} }\sigma_1 \Vert \bG_{t}\Vert\sqrt{\dE^{|\fil_t} \frac{  \Vert \bg_t \Vert^2}{(\sqrt{\bnu_{t}}+\sqrt{\tilde{\bnu}_{t}})^2}  }$ and due to $\bone\le \btwo$, we obtain that the right-hand-side of the above inequality can be bounded by 
\begin{align}
\nonumber
    &\frac{1}{16}  \eta \frac{1-\bone}{1-\frac{\bone}{\sqrt{\btwo}}}\sqrt{1-\btwo}\sigma_0\frac{\Vert \bG_t\Vert^2}{\tilde{\bnu}_{t} }+\frac{4\eta\sqrt{1-\btwo}\sigma_0}{\left(1-\frac{\bone}{\sqrt{\btwo}}\right)^2} \dE^{|\fil_t} \frac{  \Vert \bg_t \Vert^2}{(\sqrt{\bnu_{t}}+\sqrt{\tilde{\bnu}_{t}})^2}
    \\
\nonumber
    &+ \frac{1}{16}    \eta\frac{1-\bone}{1-\frac{\bone}{\sqrt{\btwo}}}\frac{\Vert \bG_t\Vert^2}{\sqrt{\tilde{\bnu}_{t}} }+4\eta \frac{(1-\btwo)(1-\bone)}{(1-\frac{\bone}{\sqrt{\btwo}})^2} \sigma_1^2\frac{\Vert \bG_t\Vert^2}{\sqrt{\tilde{\bnu}_{t}} }\dE^{|\fil_t}  \frac{  \Vert \bg_t \Vert^2}{(\sqrt{\bnu_{t}}+\sqrt{\tilde{\bnu}_{t}})^2}
    \\
    \le & \frac{1}{8}  \eta \frac{\Vert \bG_t\Vert^2}{\sqrt{\tilde{\bnu}_{t} }}+\frac{4\eta \sqrt{1-\btwo}\sigma_0}{\left(1-\frac{\bone}{\sqrt{\btwo}}\right)^2} \dE^{|\fil_t} \frac{  \Vert \bg_t \Vert^2}{\bnu_{t}}
    \label{eq: mid_result}
    + \frac{1}{8}    \eta\frac{\Vert \bG_t\Vert^2}{\sqrt{\tilde{\bnu}_{t}} }+16\eta \frac{(1-\btwo)}{(1-\bone)} \sigma_1^2\frac{\Vert \bG_t\Vert^2}{\sqrt{\tilde{\bnu}_{t}} }\dE^{|\fil_t}  \frac{  \Vert \bg_t \Vert^2}{(\sqrt{\bnu_{t}}+\sqrt{\tilde{\bnu}_{t}})^2}.
\end{align}
Here the inequality is due to $\bnut_{t}=(1-\btwo) \sigma_0^2+\btwo \bnu_{t-1} \ge (1-\btwo) \sigma_0^2$. Meanwhile, we have
\begin{align*}
   &\left( \frac{1}{\sqrt{\btwo \bnut_{t}}}- \frac{1}{\sqrt{ \bnut_{t+1}}}\right) \Vert \bG_{t} \Vert^2
   \\
   =& \frac{\Vert \bG_{t} \Vert^2((1-\btwo)^2\sigma_0^2+\btwo(1-\btwo)  \Vert \bg_t \Vert^2)}{\sqrt{\btwo \bnut_{t}}\sqrt{ \bnut_{t+1}} (\sqrt{\btwo \bnut_{t}}+\sqrt{ \bnut_{t+1}})} \ge \frac{\Vert \bG_{t} \Vert^2\btwo(1-\btwo)  \Vert \bg_t \Vert^2}{\sqrt{\btwo \bnut_{t}}\sqrt{ \bnut_{t+1}} (\sqrt{\btwo \bnut_{t}}+\sqrt{ \bnut_{t+1}})}
   \\
   \ge & \frac{1}{4}\frac{\Vert \bG_{t} \Vert^2(1-\btwo)  \Vert \bg_t \Vert^2}{\sqrt{ \bnut_{t}} (\sqrt{ \bnu_{t}}+\sqrt{ \bnut_{t}})^2},
\end{align*}
where in the last inequality, we use $\sqrt{\btwo}\ge \frac{1}{2}$.
Applying the above inequality back to Eq. (\ref{eq: mid_result}), we obtain that
\begin{align}
\nonumber
    &  \frac{\eta}{1-\beta_1}\dE^{|\fil_t} \left[ \Vert \bG_{t}\Vert  \left(\frac{(1-\beta_2)\bg_{t}^{ 2}}{\sqrt{\bnu_{t}}\sqrt{\tilde{\bnu}_{t}}(\sqrt{\bnu_{t}}+\sqrt{\tilde{\bnu}_{t}})}\right)\Vert  \bom_{t}\Vert  \right]
    \\
    \le & \frac{1}{4}    \eta\frac{\Vert \bG_t\Vert^2}{\sqrt{\tilde{\bnu}_{t}} }+\frac{4\eta \sqrt{1-\btwo}\sigma_0}{\left(1-\frac{\bone^2}{\btwo}\right)^2} \dE^{|\fil_t} \frac{  \Vert \bg_t \Vert^2}{\bnu_{t}}
\label{eq: mid_result_2}
    + \eta \frac{64}{(1-\bone) } \sigma_1^2 \dE^{|\fil_t}\left( \frac{1}{\sqrt{\btwo \bnut_{t}}}- \frac{1}{\sqrt{ \bnut_{t+1}}}\right) \Vert \bG_{t} \Vert^2.
\end{align}

Furthermore, due to Assumption \ref{assum: objective}, we have (we define $G_0\triangleq G_1$)
\begin{align*}
    \Vert \bG_{t+1} \Vert^2\le & \Vert \bG_{t}\Vert^2+2\Vert \bG_t\Vert \Vert \bG_{t+1}-\bG_{t}\Vert  + \Vert\bG_{t+1}-\bG_{t}\Vert ^2
    \\
    \le & \Vert \bG_{t}\Vert^2+2(L_0+L_1 \Vert \bG_t \Vert)\Vert \bG_t\Vert \Vert \bw_{t+1}-\bw_{t}\Vert  + 2(L_0^2+L_1^2\Vert \bG_t \Vert^2) \Vert \bw_{t+1}-\bw_{t}\Vert^2,
\end{align*}
which by  $\frac{\eta}{\sqrt{1-\btwo}} =\frac{\sqrt{1-\frac{\bone^2}{\btwo}}(1- \frac{\bone}{\sqrt{\btwo}})^2}{1024\sigma_1^2(L_1+L_0)(1-\bone)}$ further leads to
\begin{align*}
    &\frac{1}{\sqrt{\btwo \bnut_{t+1}}} \Vert \bG_{t} \Vert^2
    \\
    \ge& \frac{1}{\sqrt{\btwo \bnut_{t+1}}} \left(\Vert \bG_{t+1}\Vert^2-2(L_0+L_1 \Vert \bG_t \Vert)\Vert \bG_t\Vert \Vert \bw_{t+1}-\bw_{t}\Vert  -2 (L_0^2+L_1^2\Vert \bG_t \Vert^2) \Vert \bw_{t+1}-\bw_{t}\Vert^2\right)
    \\
    \ge &  \left(\frac{1}{\sqrt{\btwo \bnut_{t+1}}}\Vert \bG_{t+1}\Vert^2-\frac{2L_0}{\sigma_0}\frac{(1-\bone)}{64\sigma_1^2} \Vert \bw_{t+1} -\bw_{t}\Vert^2 - \frac{3}{8} \frac{(1-\bone)}{64\sigma_1^2}  \frac{\Vert \bG_t\Vert^2}{\sqrt{ \bnut_{t}}} \right) .
\end{align*}

Applying the above inequality back to Eq. (\ref{eq: mid_result_2}) leads to that
\begin{align}
\nonumber
  &   \frac{\eta}{1-\frac{\beta_1}{\sqrt{\btwo}}}\dE^{|\fil_t} \left[ \Vert \bG_{t}\Vert  \left(\frac{(1-\beta_2)\bg_{t}^{ 2}}{\sqrt{\bnu_{t}}\sqrt{\tilde{\bnu}_{t}}(\sqrt{\bnu_{t}}+\sqrt{\tilde{\bnu}_{t}})}\right)\Vert  \bom_{t}\Vert  \right]
   \\
\nonumber
\le & \frac{5}{8}    \eta\frac{\Vert \bG_t\Vert^2}{\sqrt{\tilde{\bnu}_{t}} }+\frac{4\eta \sqrt{1-\btwo}\sigma_0}{\left(1-{\bone}\right)^2} \dE^{|\fil_t} \frac{  \Vert \bg_t \Vert^2}{\bnu_{t}}
    + \eta \frac{64}{(1-\bone)} \sigma_1^2 \dE^{|\fil_t}\left( \frac{\Vert \bG_{t}\Vert^2}{\sqrt{\btwo \bnut_{t}}}- \frac{\Vert \bG_{t+1} \Vert^2}{\sqrt{ \bnut_{t+1}}}\right)   
    \\
    \label{eq: estimation_I11}
    &+ 2 \frac{L_0}{\sigma_0} \E^{|\gF_t} \Vert \bw_{t+1} -\bw_t \Vert^2.
\end{align}

As for the second term in the right-hand-side of Eq. (\ref{eq: approximation_error}), we have
\begin{align}
\nonumber
&   \frac{\eta}{1-\frac{\bone}{\sqrt{\btwo}}}\dE^{|\fil_t} \left[ \Vert \bG_{t}\Vert  \left(\frac{(1-\beta_2)\sigma_0^2}{\sqrt{\bnu_{t}}\sqrt{\tilde{\bnu}_{t}}(\sqrt{\bnu_{t}}+\sqrt{\tilde{\bnu}_{t}})}\right)\Vert  \bom_{t}\Vert  \right]
     \\
\nonumber
     \le &  \frac{\eta}{1-\frac{\bone}{\sqrt{\btwo}}}\dE^{|\fil_t} \left[ \Vert \bG_{t}\Vert  \left(\frac{\sqrt[4]{1-\beta_2}\sqrt {\sigma_0}}{\sqrt[4]{\tilde{\bnu}_{t}}\sqrt{\bnu_{t}}}\right)\Vert  \bom_{t}\Vert  \right]
     \\
\label{eq: estimation_I12}
     \le & \frac{1}{8}  \eta \frac{\Vert \bG_t\Vert^2}{\sqrt{\tilde{\bnu}_{t}} }+  \frac{8\eta\sqrt{1-\beta_2}\sigma_0}{(1-\beta_1)^2}\dE^{|\fil_t} \left[   \left(\frac{\Vert  \bom_{t}\Vert^2}{\bnu_{t}}\right)  \right].
\end{align}
In the last inequality we use again $\btwo \ge \bone$. With Inequalities (\ref{eq: estimation_I11}) and (\ref{eq: estimation_I12}), we conclude that the first-order term can be bounded by
\begin{align}
\nonumber
   \dE^{|\gF_t} \left[ \left\langle \nabla f(\bw_t) , \bu_{t+1}-\bu_t \right\rangle \right]\le  & - \frac{1}{4}    \eta\dE\frac{\Vert \bG_t\Vert^2}{\sqrt{\tilde{\bnu}_{t}} }+\frac{4\eta \sqrt{1-\btwo}\sigma_0}{\left(1-\bone\right)^2} \dE^{|\fil_t} \frac{  \Vert \bg_t \Vert^2}{\bnu_{t}}
    + \eta \frac{64}{(1-{\bone})} \sigma_1^2 \dE^{|\fil_t}\left( \frac{\Vert \bG_{t}\Vert^2}{\sqrt{\btwo \bnut_{t}}}- \frac{\Vert \bG_{t+1} \Vert^2}{\sqrt{ \bnut_{t+1}}}\right)   
    \\
    \label{eq: estimate_first}
    &+ 2 \frac{L_0}{\sigma_0} \E^{|\gF_t} \Vert \bw_{t+1} -\bw_t \Vert^2 +  \frac{8\eta\sqrt{1-\beta_2}\sigma_0}{(1-\beta_1)^2}\dE^{|\fil_t} \left[   \left(\frac{\Vert  \bom_{t}\Vert^2}{\bnu_{t}}\right)  \right].
\end{align}

    \textbf{Analysis for the second-order term.} To recall, the second order term is $ \frac{1}{2}(L_0+L_1 \Vert \nabla f(\bw_t)\Vert)(\Vert \bu_{t+1} -\bw_t\Vert+  \Vert \bu_t-\bw_t\Vert) \Vert\bu_{t+1}-\bu_t\Vert$. Before we start, we have the following expansion for $\bu_{t+1}-\bu_t$: (here the operations are all coordinate-wisely)
    \begin{align}
    \nonumber
         \bu_{t+1} -\bu_t  = &  \frac{\bw_{t+1}-\bw_t-\frac{\bone}{\sqrt{\btwo}}(\bw_t-\bw_{t-1})}{1-\frac{\bone}{\sqrt{\btwo}}} 
        \\
    \nonumber
        =& \frac{-\eta \frac{\bom_t}{\sqrt{\bnu_{t}}}+\eta\frac{\bone}{\sqrt{\btwo}}\frac{\bom_{t-1}}{\sqrt{\bnu_{t-1}}}}{1-\frac{\bone}{\sqrt{\btwo}}} = \frac{-\eta \frac{\bom_t}{\sqrt{\bnu_{t}}}+\eta\bone\frac{\bom_{t-1}}{\sqrt{\bnu_{t}}}-\eta\bone\frac{\bom_{t-1}}{\sqrt{\bnu_{t}}}+\eta\frac{\bone}{\sqrt{\btwo}}\frac{\bom_{t-1}}{\sqrt{\bnu_{t-1}}}}{1-\frac{\bone}{\sqrt{\btwo}}}
        \\
    \label{eq: expansion_u_t}
        =&  \frac{-\eta \frac{(1-\bone)\bg_t}{\sqrt{\bnu_{t}}}+\eta\frac{\bone(1-\btwo) \Vert  \bg_t\Vert^2 }{\sqrt{\btwo}}\frac{\bom_{t-1}}{\sqrt{\bnu_{t-1}}\sqrt{\bnu_{t}}(\sqrt{\bnu_{t}}+\sqrt{\btwo\bnu_{t-1}})}}{1-\frac{\bone}{\sqrt{\btwo}}}
    \end{align} Then firstly, we have
    \small
    \begin{align*}
       &\frac{1}{2}L_0(\Vert \bu_{t+1} -\bw_t\Vert+  \Vert \bu_t-\bw_t\Vert) \Vert\bu_{t+1}-\bu_t\Vert
       \\
       \le & \frac{1}{2} L_0 \left(\Vert \bu_{t+1} -\bu_t\Vert^2+ \frac{1}{2} \Vert \bu_{t+1}-\bw_t \Vert^2 +\frac{1}{2}\Vert \bu_t -\bw_t \Vert^2\right)
       \\
       = & \frac{1}{2}L_0 \left(\left\Vert  \frac{-\eta \frac{(1-\bone)\bg_t}{\sqrt{\bnu_{t}}}+\eta\frac{\bone(1-\btwo) \Vert  \bg_t\Vert^2 }{\sqrt{\btwo}}\frac{\bom_{t-1}}{\sqrt{\bnu_{t-1}}\sqrt{\bnu_{t}}(\sqrt{\bnu_{t}}+\sqrt{\btwo\bnu_{t-1}})}}{1-\frac{\bone}{\sqrt{\btwo}}} \right\Vert^2 + \frac{1}{2} \left\Vert  \frac{\frac{\bone}{\sqrt{\btwo}} }{1- \frac{\bone}{\sqrt{\btwo}} }( \bw_t-\bw_{t-1} )\right\Vert^2+ \frac{1}{2} \left\Vert \frac{1 }{1- \frac{\bone}{\sqrt{\btwo}} } ( \bw_{t+1}-\bw_{t} ) \right\Vert^2 \right)
       \\
       \le & \frac{L_0\eta^2 }{2} \left(\left(\frac{1-\bone}{1-\frac{\bone}{\sqrt{\btwo}}}+ \frac{\bone(1-\bone)}{(\sqrt{\btwo}-\bone)\sqrt{1-\frac{\bone^2}{\btwo}}}\right)^2 \left\Vert \frac{\bg_t}{\sqrt{\bnu_t}} \right\Vert^2 +  \frac{1}{2}\left(\frac{\frac{\bone}{\sqrt{\btwo}} }{1- \frac{\bone}{\sqrt{\btwo}} }\right)^2\left\Vert \frac{\bom_{t-1}}{\sqrt{\bnu_{t-1}}} \right\Vert^2+\frac{1}{2}\left(\frac{1 }{1- \frac{\bone}{\sqrt{\btwo}} }\right)^2\left\Vert \frac{\bom_{t}}{\sqrt{\bnu_{t}}} \right\Vert^2 \right)
       \\
       \overset{(\bullet)}{\le } &\frac{L_0\eta^2 }{2} \left(2\left(\frac{1-\bone}{1-\frac{\bone}{\sqrt{\btwo}}}+ \frac{\bone(1-\bone)}{(\sqrt{\btwo}-\bone)\sqrt{1-\frac{\bone^2}{\btwo}}}\right)^2 \left\Vert \frac{\bg_t}{\sqrt{\bnu_t}} \right\Vert^2 +  \left(\frac{\frac{\bone}{\sqrt{\btwo}} }{1- \frac{\bone}{\sqrt{\btwo}} }\right)^2\left\Vert \frac{\bom_{t-1}}{\sqrt{\bnu_{t-1}}} \right\Vert^2\right).
    \end{align*}
    \normalsize
    Secondly, we have 
    \begin{align*}
         &\frac{1}{2}L_1\Vert \nabla f(\bw_t)\Vert (\Vert \bu_{t+1} -\bw_t\Vert+  \Vert \bu_t-\bw_t\Vert) \Vert\bu_{t+1}-\bu_t\Vert
         \\
         \le & \frac{1}{2}L_1\Vert \nabla f(\bw_t)\Vert (2\Vert \bu_{t+1} -\bw_t\Vert+  \Vert \bu_{t+1}-\bu_t\Vert)\left( \frac{\left\Vert\eta \frac{(1-\bone)\bg_t}{\sqrt{\bnu_{t}}}\right\Vert}{1-\frac{\bone}{\sqrt{\btwo}}}+\frac{\eta\frac{\bone(1-\btwo) \Vert  \bg_t\Vert^2 }{\sqrt{\btwo}}\frac{\Vert \bom_{t-1} \Vert}{\sqrt{\bnu_{t-1}}\sqrt{\bnu_{t}}(\sqrt{\bnu_{t}}+\sqrt{\btwo\bnu_{t-1}})}}{1-\frac{\bone}{\sqrt{\btwo}}}\right)
         \\
         \overset{(*)}{\le} & \frac{1}{2}L_1\Vert \nabla f(\bw_t)\Vert (2\Vert \bu_{t+1} -\bw_t\Vert+  \Vert \bu_{t+1}-\bu_t\Vert)\left( \frac{\left\Vert\eta \frac{(1-\bone)\bg_t}{\sqrt{\bnu_{t}}}\right\Vert}{1-\frac{\bone}{\sqrt{\btwo}}}+
         \frac{\eta\frac{\bone(1-\bone) }{\sqrt{\btwo}}\frac{\Vert \bg_{t} \Vert}{\sqrt{\bnu_{t}}}}{(1-\frac{\bone}{\sqrt{\btwo}})\sqrt{1-\frac{\bone^2}{\btwo}}}\right)
         \\
         =& \frac{L_1}{2}\eta \left(\frac{1-\bone}{1-\frac{\bone}{\sqrt{\btwo}}}+ \frac{\bone(1-\bone)}{(\sqrt{\btwo}-\bone)\sqrt{1-\frac{\bone^2}{\btwo}}}\right) \Vert \nabla f(\bw_t ) \Vert (2\Vert \bu_{t+1} -\bw_t\Vert+  \Vert \bu_t-\bu_{t+1}\Vert)  \frac{\Vert \bg_t \Vert}{\sqrt{\bnu_t}} 
         \\
         \overset{(\circ)}{=} &\frac{L_1}{2}\eta \left(\frac{1-\bone}{1-\frac{\bone}{\sqrt{\btwo}}}+ \frac{\bone(1-\bone)}{(\sqrt{\btwo}-\bone)\sqrt{1-\frac{\bone^2}{\btwo}}}\right) \Vert \bG_t \Vert \left(\Vert \bu_{t+1} -\bu_t \Vert +  2\frac{1 }{1- \frac{\bone}{\sqrt{\btwo}} } \eta \left\Vert \frac{\bom_t}{\sqrt{\bnu_t}}\right\Vert\right)  \frac{\Vert \bg_t \Vert}{\sqrt{\bnu_t}}.
    \end{align*}
    where inequality $(*)$ is due to that $\frac{\Vert \bom_{t-1} \Vert}{\sqrt{\bnu_{t-1}}} \le \frac{1-\bone}{\sqrt{1-\btwo} \sqrt{1-\frac{\bone^2}{\btwo}}}$, $\frac{\Vert \bg_t \Vert }{\sqrt{\bnu_t}} \le \frac{1}{\sqrt{1-\btwo}}$, and equation $(\circ)$ is due to $  \bu_{t}-\bw_t   = \frac{\frac{\bone}{\sqrt{\btwo}} }{1- \frac{\bone}{\sqrt{\btwo}} }( \bw_t-\bw_{t-1} ) $ and  $ \bu_{t+1}-\bw_t   = \frac{1 }{1- \frac{\bone}{\sqrt{\btwo}} } ( \bw_{t+1}-\bw_{t} )$. As for the term $\Vert \bG_t \Vert \frac{ \Vert\bom_t\Vert}{\sqrt{\bnu_t}}  \frac{\Vert \bg_t \Vert }{\sqrt{\bnu_t}} $, we first add additional denominator for it. Specifically, we have 
    \begin{align*}
        \Vert \bG_t \Vert \frac{ \Vert\bom_t\Vert}{\sqrt{\bnu_t}}  \frac{\Vert \bg_t \Vert }{\sqrt{\bnu_t}} = &\frac{ \Vert \bG_t \Vert  \Vert\bom_t\Vert \Vert \bg_t \Vert }{\bnu_t+(1-\btwo) \sigma_0^2}+\frac{ \Vert \bG_t \Vert  \Vert\bom_t\Vert \Vert \bg_t \Vert (1-\btwo) \sigma_0^2}{(\bnu_t+(1-\btwo) \sigma_0^2)\bnu_t}
        \\
        \le &\frac{ \Vert \bG_t \Vert  \Vert\bom_t\Vert \Vert \bg_t \Vert }{\bnu_t+(1-\btwo) \sigma_0^2}+\frac{ \Vert \bG_t \Vert  \Vert\bom_t\Vert   \sigma_0}{\sqrt{\bnu_t+(1-\btwo) \sigma_0^2}\sqrt{\bnu_t}}
        \\
        \le & \frac{ \Vert \bG_t \Vert  \Vert\bom_t\Vert \Vert \bg_t \Vert }{\bnu_t+(1-\btwo) \sigma_0^2}+\frac{1}{2}\frac{ \Vert \bG_t \Vert ^2   \sigma_0}{\bnu_t+(1-\btwo) \sigma_0^2} +\frac{1}{2} \sigma_0 \frac{\Vert \bom_t \Vert^2}{\bnu_t}
        \\
        \le & \frac{ \Vert \bG_t \Vert  \Vert\bom_t\Vert \Vert \bg_t \Vert }{\bnu_t+(1-\btwo) \sigma_0^2}+\frac{1}{2\sqrt{1-\btwo}}\frac{ \Vert \bG_t \Vert ^2  }{\sqrt{\bnu_t+(1-\btwo) \sigma_0^2}} +\frac{1}{2} \sigma_0 \frac{\Vert \bom_t \Vert^2}{\bnu_t}.
    \end{align*}
    
    We analyze the first term in the right-hand-side of above inequality more carefully. Specifically, this term with expectation can be bounded as 
    \begin{align*}
         &\E^{|\gF_t}\frac{ \Vert \bG_t \Vert  \Vert\bom_t\Vert \Vert \bg_t \Vert }{\bnu_t+(1-\btwo) \sigma_0^2}
         \\
         \le &  \E^{|\gF_t}\frac{ \Vert \bG_t \Vert  \Vert\bom_t\Vert \Vert \bg_t \Vert }{\sqrt{\bnu_t+(1-\btwo) \sigma_0^2}\sqrt{\btwo \bnu_{t-1}+(1-\btwo) \sigma_0^2}
         }
         \\
         \le & \frac{\Vert \bG_t\Vert }{\sqrt{\btwo \bnu_{t-1}+(1-\btwo) \sigma_0^2}}\sqrt{ \Vert  \bg_t \Vert^2}\sqrt{\E^{|\gF_t} \frac{   \Vert\bom_t\Vert^2  }{\bnu_t+(1-\btwo) \sigma_0^2}}
        \\
        \overset{(\star)}{\le}  & \frac{\Vert \bG_t\Vert }{\sqrt{\btwo \bnu_{t-1}+(1-\btwo) \sigma_0^2}}\sqrt{\sigma_1^2 \Vert  \bG_t \Vert^2+\sigma_0^2}\sqrt{\E^{|\gF_t} \frac{   \Vert\bom_t\Vert^2  }{\bnu_t+(1-\btwo) \sigma_0^2}} 
        \\
        \le & \frac{\Vert \bG_t\Vert }{\sqrt{\btwo \bnu_{t-1}+(1-\btwo) \sigma_0^2}}(\sigma_1 \Vert  \bG_t \Vert+\sigma_0)\sqrt{\E^{|\gF_t} \frac{   \Vert\bom_t\Vert^2  }{\bnu_t+(1-\btwo) \sigma_0^2}} 
        \\
        \le & \frac{1-\bone}{\sqrt{1-\btwo}\sqrt{1-\frac{\bone^2}{\btwo}}}\sigma_1\frac{\Vert \bG_t\Vert^2 }{\sqrt{\btwo \bnu_{t-1}+(1-\btwo) \sigma_0^2}} + \frac{1}{2\sqrt{1-\btwo}} \frac{\Vert \bG_t\Vert^2 }{\sqrt{\btwo \bnu_{t-1}+(1-\btwo) \sigma_0^2}} +\frac{\sigma_0}{2}  \E^{|\gF_t} \frac{   \Vert\bom_t\Vert^2  }{\bnu_t+(1-\btwo) \sigma_0^2},
    \end{align*} 
    where Eq. ($\star$) is due to Holder's inequality.

    Meanwhile, due to Eq. (\ref{eq: expansion_u_t}), we have that the term $\vert \bG_t \Vert \Vert \bu_{t+1} -\bu_t \Vert \frac{\Vert \bg_t \Vert }{\sqrt{\bnu_t}}$ can be be bounded as 
    \begin{align*}
        \vert \bG_t \Vert \Vert \bu_{t+1} -\bu_t \Vert \frac{\Vert \bg_t \Vert }{\sqrt{\bnu_t}} \le \eta \left(\frac{1-\bone}{1-\frac{\bone}{\sqrt{\btwo}}}+ \frac{\bone(1-\bone)}{(\sqrt{\btwo}-\bone)\sqrt{1-\frac{\bone^2}{\btwo}}}\right) \Vert \bG_t \Vert  \frac{ \Vert \bg_t\Vert}{\sqrt{\bnu_t}}\frac{\Vert \bg_t \Vert }{\sqrt{\bnu_t}}.
    \end{align*}

    Then, following the similar reasoning above, we have $\vert \bG_t \Vert \Vert \bu_{t+1} -\bu_t \Vert \frac{\Vert \bg_t \Vert }{\sqrt{\bnu_t}}$ can be bounded as
    \begin{align*}
        &\E^{|\gF_t}\Vert \bG_t \Vert  \frac{ \Vert \bg_t\Vert}{\sqrt{\bnu_t}}\frac{\Vert \bg_t \Vert }{\sqrt{\bnu_t}}
        \\
        \le &  
 \frac{1}{\sqrt{1-\btwo}}\sigma_1\frac{\Vert \bG_t\Vert^2 }{\sqrt{\btwo \bnu_{t-1}+(1-\btwo) \sigma_0^2}} + \frac{1}{2\sqrt{1-\btwo}} \frac{\Vert \bG_t\Vert^2 }{\sqrt{\btwo \bnu_{t-1}+(1-\btwo) \sigma_0^2}} +\sigma_0  \E^{|\gF_t} \frac{   \Vert\bg_t\Vert^2  }{\bnu_t+(1-\btwo) \sigma_0^2}
 \\
 &+\frac{1}{2\sqrt{1-\btwo}}\frac{ \Vert \bG_t \Vert ^2  }{\sqrt{\bnu_t+(1-\btwo) \sigma_0^2}} +\frac{1}{2} \sigma_0 \E^{|\gF_t} \frac{\Vert \bg_t \Vert^2}{\bnu_t}.
    \end{align*}

    Putting all the estimations together, we have that the second-order term can be bounded by

    \begin{align}
    \nonumber
        &\E^{|\gF_t}\frac{1}{2}(L_0+L_1 \Vert \nabla f(\bw_t)\Vert)(\Vert \bu_{t+1} -\bw_t\Vert+  \Vert \bu_t-\bw_t\Vert) \Vert\bu_{t+1}-\bu_t\Vert
    \\
    \nonumber
    \le & \frac{ L_1\eta^2}{1-\frac{\bone}{\sqrt{\btwo}}}\left( \frac{1-\bone}{1-\frac{\bone}{\sqrt{\btwo}}}+\frac{\bone(1-\bone)}{(\sqrt{\btwo}-\bone)\sqrt{1-\frac{\bone^2}{\btwo}}}\right) \left(\frac{2}{\sqrt{1-\btwo}}\frac{\Vert \bG_t\Vert^2 }{\sqrt{\btwo \bnu_{t-1}+(1-\btwo) \sigma_0^2}}+\frac{\sigma_0}{2}\E^{|\gF_t} \frac{\Vert \bg_t \Vert^2}{\bnu_t}  \right)
    \\
    \nonumber
    & +\frac{L_0\eta^2 }{2} \left(2\left(\frac{1-\bone}{1-\frac{\bone}{\sqrt{\btwo}}}+ \frac{\bone(1-\bone)}{(\sqrt{\btwo}-\bone)\sqrt{1-\frac{\bone^2}{\btwo}}}\right)^2 \E^{|\gF_t}\left\Vert \frac{\bg_t}{\sqrt{\bnu_t}} \right\Vert^2 +  \left(\frac{\frac{\bone}{\sqrt{\btwo}} }{1- \frac{\bone}{\sqrt{\btwo}} }\right)^2\left\Vert \frac{\bom_{t-1}}{\sqrt{\bnu_{t-1}}} \right\Vert^2\right)
    \\
    \nonumber
    \le & 4\frac{ L_1\eta^2}{1-\bone}\left( 1 +\frac{1}{\sqrt{1-\bone}}\right) \left(\frac{2}{\sqrt{1-\btwo}}\frac{\Vert \bG_t\Vert^2 }{\sqrt{\btwo \bnu_{t-1}+(1-\btwo) \sigma_0^2}}+\frac{\sigma_0}{2}\E^{|\gF_t} \frac{\Vert \bg_t \Vert^2}{\bnu_t}  \right)
    \\    \nonumber
    & +2L_0\eta^2  \left(2\left(1 +\frac{1}{\sqrt{1-\bone}}\right)^2 \E^{|\gF_t}\left\Vert \frac{\bg_t}{\sqrt{\bnu_t}} \right\Vert^2 +  \left(\frac{1 }{1- \bone }\right)^2\left\Vert \frac{\bom_{t-1}}{\sqrt{\bnu_{t-1}}} \right\Vert^2\right)
     \\
    \le & \frac{1}{8 } \eta \frac{\Vert \bG_t \Vert^2}{\sqrt{\bnut_t}}+4\frac{ L_1\eta^2\sigma_0}{(1-\bone)^{\frac{3}{2}}} \E^{|\gF_t} \frac{\Vert \bg_t \Vert^2}{\bnu_t}  
       \label{eq: estimate_second_order}
     +2L_0\eta^2  \left(8\frac{1}{1-\bone} \E^{|\gF_t}\left\Vert \frac{\bg_t}{\sqrt{\bnu_t}} \right\Vert^2 +  \left(\frac{1 }{1- \bone }\right)^2\left\Vert \frac{\bom_{t-1}}{\sqrt{\bnu_{t-1}}} \right\Vert^2\right).
    \end{align}

    Here in the second inequality we use $\btwo\ge \bone$, and in the last inequality we use $\frac{\eta}{\sqrt{1-\btwo}} =\frac{\sqrt{1-\frac{\bone^2}{\btwo}}(1- \frac{\bone}{\sqrt{\btwo}})^2}{1024\sigma_1^2(L_1+L_0)(1-\bone)}$.

Applying the estimations of the first-order term (Eq. (\ref{eq: estimate_first})) and the  second-order term (Eq. (\ref{eq: estimate_second_order})) back into the descent lemma, we derive that 
\begin{align*}
        \E^{|\gF_t} f(\bu_{t+1})\le & f(\bu_t) - \frac{1}{8}    \eta\frac{\Vert \bG_t\Vert^2}{\sqrt{\tilde{\bnu}_{t}} }+\frac{4\eta \sqrt{1-\btwo}\sigma_0}{\left(1-\bone\right)^2} \dE^{|\fil_t} \frac{  \Vert \bg_t \Vert^2}{\bnu_{t}}
    + \eta \frac{64}{(1-{\bone})} \sigma_1^2 \dE^{|\fil_t}\left( \frac{\Vert \bG_{t}\Vert^2}{\sqrt{\btwo \bnut_{t}}}- \frac{\Vert \bG_{t+1} \Vert^2}{\sqrt{ \bnut_{t+1}}}\right)   
    \\
    &+ 2  \frac{L_0}{\sigma_0}\E^{|\gF_t} \Vert \bw_{t+1} -\bw_t \Vert^2 +  \frac{8\eta\sqrt{1-\beta_2}\sigma_0}{(1-\beta_1)^2}\dE^{|\fil_t} \left[   \left(\frac{\Vert  \bom_{t}\Vert^2}{\bnu_{t}}\right)  \right]
    \\
    &+4\frac{ L_1\eta^2\sigma_0}{(1-\bone)^{\frac{3}{2}}} \E^{|\gF_t} \frac{\Vert \bg_t \Vert^2}{\bnu_t}  
     +2L_0\eta^2  \left(8\frac{1}{1-\bone} \E^{|\gF_t}\left\Vert \frac{\bg_t}{\sqrt{\bnu_t}} \right\Vert^2 +  \left(\frac{1 }{1- \bone }\right)^2\left\Vert \frac{\bom_{t-1}}{\sqrt{\bnu_{t-1}}} \right\Vert^2\right). 
    \end{align*}

    Taking expectation to the above inequality and summing it over $t \in [1,T]$ then gives 
    \begin{align*}
        \frac{1}{8} \eta\sum_{t=1}^T \E \frac{\Vert \bG_t \Vert^2}{\sqrt{\bnut_t}} \le & f(\bu_1) -f^* + \eta \frac{64}{(1-{\bone})} \sigma_1^2 \frac{\Vert \bG_{1}\Vert^2}{\sqrt{\btwo \bnut_{1}}} +\eta \frac{64}{(1-{\bone})} \sigma_1^2\left(\frac{1}{\sqrt{\btwo}}-1\right)\sum_{t=1}^T\dE\frac{\Vert \bG_{t}\Vert^2}{\sqrt{ \bnut_{t}}}
        \\
         &+ \left(\frac{4\eta \sqrt{1-\btwo}\sigma_0}{\left(1-\bone\right)^2}+4\frac{ L_1\eta^2\sigma_0}{(1-\bone)^{\frac{3}{2}}}+\frac{16L_0 \eta^2}{1-\bone}\right)\sum_{t=1}^T \E \frac{\Vert \bg_t \Vert^2}{\bnu_t}
         \\
         &+ \left(2\frac{L_0}{\sigma_0}\eta^2+\frac{8\eta\sqrt{1-\beta_2}\sigma_0}{(1-\beta_1)^2}+\frac{2L_0\eta^2}{(1-\bone)^2}\right)\sum_{t=1}^T \E\left\Vert \frac{\bom_{t}}{\sqrt{\bnu_{t}}} \right\Vert^2.
    \end{align*}
    Since $\btwo \ge \frac{1}{2} $ and $1-\btwo \le \frac{1-\bone}{1024 \sigma_1^2}$, we have 
    \begin{equation*}
        \eta \frac{64}{(1-{\bone})} \sigma_1^2\left(\frac{1}{\sqrt{\btwo}}-1\right)\sum_{t=1}^T\dE\frac{\Vert \bG_{t}\Vert^2}{\sqrt{ \bnut_{t}}} \le \frac{1}{16}    \eta\sum_{t=1}^T \E \frac{\Vert \bG_t \Vert^2}{\sqrt{\bnut_t}}.
    \end{equation*}
    By further applying Lemma \ref{lem: sum_momentum} and $\btwo \ge \bone$, we obtain
     \begin{align}
     \nonumber
        &\frac{1}{16} \eta\sum_{t=1}^T \E \frac{\Vert \bG_t \Vert^2}{\sqrt{\bnut_t}}
        \\
        \le & f(\bu_1) -f^* + \eta \frac{64}{(1-{\bone})} \sigma_1^2 \frac{\Vert \bG_{1}\Vert^2}{\sqrt{\btwo \bnut_{1}}} 
        \\
        \nonumber
         &+\frac{1}{1-\btwo} \left(\frac{36\eta \sqrt{1-\btwo}\sigma_0}{\left(1-\bone\right)^2}+4\frac{ L_1\eta^2\sigma_0}{(1-\bone)^{\frac{3}{2}}}+\frac{24L_0 \eta^2}{1-\bone}+8\frac{L_0}{\sigma_0}\eta^2 \right)\left( \E \ln \bnu_T -T\ln\btwo \right)
         \\
         \nonumber
         \le & f(\bw_1) -f^* + \eta \frac{64}{(1-{\bone})} \sigma_1^2 \frac{\Vert \bG_{1}\Vert^2}{\sqrt{\btwo \bnut_{1}}} 
        \\
        \label{eq: first_stage}
         &+ \frac{1}{1-\btwo}\left(\frac{147456\eta^2 (L_0+L_1)\sigma_1^2\sigma_0}{\left(1-\bone\right)^\frac{5}{2}}+4\frac{ L_1\eta^2\sigma_0}{(1-\bone)^{\frac{3}{2}}}+\frac{24L_0 \eta^2}{1-\bone}+8\frac{L_0}{\sigma_0}\eta^2 \right)\left( \E \ln \bnu_T -T\ln\btwo \right). 
    \end{align}

Here last inequality we apply $\frac{\eta}{\sqrt{1-\btwo}} =\frac{\sqrt{1-\frac{\bone^2}{\btwo}}(1- \frac{\bone}{\sqrt{\btwo}})^2}{1024\sigma_1^2(L_1+L_0)(1-\bone)}$. 

Below we transfer the above bound to the bound of $\sum_{t=1}^T \Vert \bG_t \Vert$ by two rounds of divide-and-conquer. In the first round, we will bound $\E \ln \bnu_T$. To start with, we have that
\begin{align*}
   & \frac{\Vert \bG_{t}\Vert^2}{\sqrt{\bnut_{t}}}\mathds{1}_{\Vert G_{t}\Vert \ge \frac{\sigma_0}{\sigma_1}}\ge \frac{\frac{1}{2\sigma_1^2}\mathbb{E}^{|\gF_t}\Vert \bg_{t}\Vert^2}{\sqrt{\bnut_{t}}}\mathds{1}_{\Vert G_{t}\Vert \ge \frac{\sigma_0}{\sigma_1}}
    \\
    =& \frac{\frac{1}{2\sigma_1^2}\mathbb{E}^{|\gF_t}\Vert \bg_{t}\Vert^2}{\sqrt{\btwo\bnu_{t-1}+(1-\btwo)\sigma_0^2}}\mathds{1}_{\Vert G_{t}\Vert \ge \frac{\sigma_0}{\sigma_1}}
    \\
    \ge &\frac{1}{2\sigma_1^2}\mathbb{E}^{|\gF_t} \frac{ \btwo^{T-t}\Vert \bg_{t}\Vert^2}{\sqrt{\bnu_T+(1-\btwo) \sigma_0^2}}\mathds{1}_{\Vert G_{t}\Vert \ge \frac{\sigma_0}{\sigma_1}},
\end{align*}
where the last inequality is due to that
\begin{align}
    \btwo\bnu_{t-1}+(1-\btwo) \sigma_0^2\le  \btwo^{t-T}\bnu_{T}+(1-\btwo) \sigma_0^2
   \label{eq: estimation_specific}
   \le (\bnu_T+(1-\btwo) \sigma_0^2) \btwo^{2(t-T)}.
\end{align}

Furthermore, we have 
\begin{align}
\nonumber
   & \frac{\sigma_0^2+\frac{\btwo^T \bnu_{0}}{1-\btwo}}{\sqrt{\bnu_T+(1-\btwo) \sigma_0^2}}+\sum_{t=1}^T\mathbb{E} \frac{ \btwo^{T-t}\Vert \bg_{t}\Vert^2}{\sqrt{\bnu_T+(1-\btwo) \sigma_0^2}}\mathds{1}_{\Vert \bG_{t}\Vert  < \frac{\sigma_0}{\sigma_1}}
    \\
\nonumber
    \le &\frac{\sigma_0^2+\frac{\btwo^T\bnu_{0}}{1-\btwo}}{\sqrt{\bnu_0 \btwo^T+\sum_{s=1}^T \btwo^{T-s} \Vert g_{s}\Vert^2\mathds{1}_{\Vert \bG_{s}\Vert  < \frac{\sigma_0}{\sigma_1}}+(1-\btwo)\sigma_0^2}}+\sum_{t=1}^T\mathbb{E} \frac{\btwo^{T-s}\Vert \bg_{t}\Vert^2}{\sqrt{\bnu_0 \btwo^T+\sum_{s=1}^T \btwo^{T-s}  \Vert g_{s}\Vert^2\mathds{1}_{\Vert \bG_{s}\Vert  < \frac{\sigma_0}{\sigma_1}}+(1-\btwo)\sigma_0^2}}\mathds{1}_{\Vert \bG_{t}\Vert  < \frac{\sigma_0}{\sigma_1}}
    \\
    =& \frac{1}{1-\btwo}\dE\sqrt{\bnu_0 \btwo^T+\sum_{s=1}^T \btwo^{T-s} \Vert g_{s}\Vert^2\mathds{1}_{\Vert \bG_{s}\Vert  < \frac{\sigma_0}{\sigma_1}}+(1-\btwo)\sigma_0^2}
    \label{eq: thm2_mid}
    \le \frac{1}{1-\btwo}\sqrt{\btwo^T \bnu_{0} +2\sigma_0^2} .
\end{align}

Conclusively, we obtain 
\begin{align*}
&\dE\sqrt{\bnu_T +(1-\btwo) \sigma_0^2}
\\
    =&(1-\btwo)\left(\frac{\sigma_0^2+\frac{\btwo^T \bnu_{0}}{1-\btwo}}{\sqrt{\bnu_T+(1-\btwo) \sigma_0^2}}+\sum_{t=1}^T\mathbb{E} \frac{ \btwo^{T-t}\Vert \bg_{t}\Vert^2}{\sqrt{\bnu_T+(1-\btwo) \sigma_0^2}}\mathds{1}_{\Vert \bG_{t}\Vert  < \frac{\sigma_0}{\sigma_1}}\right.
    \\
    &+\left.\sum_{t=1}^T\mathbb{E} \frac{ \btwo^{T-t}\Vert \bg_{t}\Vert^2}{\sqrt{\bnu_T+(1-\btwo) \sigma_0^2}}\mathds{1}_{\Vert \bG_{t}\Vert  \ge \frac{\sigma_0}{\sigma_1}}\right)
    \\
    \le & \sqrt{\btwo^T \bnu_{0} +2\sigma_0^2}+2(1-\btwo)\sigma_1^2\dE\sum_{t=1}^T\frac{\Vert \bG_{t}\Vert^2}{\sqrt{\bnut_{t}}}\mathds{1}_{\Vert \bG_{t}\Vert\ge \frac{\sigma_0}{\sigma_1}}
    \\
    \le & \sqrt{\btwo^T \bnu_{0} +2\sigma_0^2}+2(1-\btwo)\sigma_1^2\dE\sum_{t=1}^T\frac{\Vert \bG_{t}\Vert^2}{\sqrt{\bnut_{t}}}.
\end{align*}

Substituting $\dE\sum_{t=1}^T\frac{\Vert \bG_{t}\Vert^2}{\sqrt{\bnut_{t}}}$ according to Eq. (\ref{eq: first_stage}), we obtain that

\begin{align*}
   &\dE\sqrt{\bnu_T +(1-\btwo) \sigma_0^2}
\\
    \le & \sqrt{\btwo^T \bnu_{0} +2\sigma_0^2}+\frac{2(1-\btwo)\sigma_1^2}{\eta} \eta\dE\sum_{t=1}^T\frac{\Vert \bG_{t}\Vert^2}{\sqrt{\bnut_{t}}}
    \\
    \le &  \sqrt{\btwo^T \bnu_{0} +2\sigma_0^2}+\frac{2(1-\btwo)\sigma_1^2}{\eta} \left(f(\bw_1) -f^* + \eta \frac{64}{(1-{\bone})} \sigma_1^2 \frac{\Vert \bG_{1}\Vert^2}{\sqrt{\btwo \bnut_{1}}} \right.
        \\
         &+\left. \frac{1}{1-\btwo}\left(\frac{147456\eta^2 (L_0+L_1)\sigma_1^2\sigma_0}{\left(1-\bone\right)^\frac{5}{2}}+4\frac{ L_1\eta^2\sigma_0}{(1-\bone)^{\frac{3}{2}}}+\frac{24L_0 \eta^2}{1-\bone}+8\frac{L_0}{\sigma_0}\eta^2 \right)\left( \E \ln \bnu_T -T\ln\btwo \right)\right)
    \\
  \le &   \sqrt{\btwo^T \bnu_{0} +2\sigma_0^2} + \sigma_0 + \frac{1}{4} \E \ln \bnu_T 
  \\
  \le & \sqrt{\btwo^T \bnu_{0} +2\sigma_0^2} + \sigma_0 + \frac{1}{2} \E  \sqrt{\bnu_T +(1-\btwo) \sigma_0^2}.
\end{align*}
where the third inequality is due to
\begin{align*}
T \ge &\frac{36*2048^4(L_0+L_1)^3\sigma_1^{12}(f(\bw_1)-f^*)}{(1-\bone)^6\sigma_0^2}+\frac{768* 2048^2(f(\bw_1)-f^*) \sigma_1^{8}(8L_1^2(f(\bw_1)-f^*)^2+4L_0(f(\bw_1)-f^*))}{(1-\bone)^4 \sigma_0^2}
\\
 &+ \frac{24^2*147456(L_0+L_1)\sigma_1^8(f(\bw_1)-f^*)\sigma_0^2}{(1-\btwo)^5}+\frac{128^2(L_0+L_1)(f(\bw_1)-f^*)\sigma_1^4}{\sigma_0^2}
 \\
&+ \frac{24^2*147456*2048^2(L_0+L_1)^3\sigma_1^{16}(f(\bw_1)-f^*)^3}{(1-\btwo)^{11}}+\frac{128^2*2048^2(L_0+L_1)^3(f(\bw_1)-f^*)^3\sigma^{12}}{\sigma_0^4(1-\bone)^6},
\end{align*} and the last inequality is due to $ \ln x \le x$. Solving the above inequality with respect to $\dE\sqrt{\bnu_T +(1-\btwo) \sigma_0^2}$ and applying $\bnu_0 =\sigma_0^2$ then gives
\begin{align}
\E  \sqrt{\bnu_T }  \le  \E  \sqrt{\bnu_T +(1-\btwo) \sigma_0^2}
   \label{eq: d_3}
   \le & 6\sigma_0.
\end{align}

Therefore, Eq. (\ref{eq: first_stage}) can be rewritten as 
\begin{align}
     \nonumber
        &\frac{1}{16} \eta\sum_{t=1}^T \E \frac{\Vert \bG_t \Vert^2}{\sqrt{\bnut_t}}
        \\
         \nonumber
         \le & f(\bw_1) -f^* + \eta \frac{64}{(1-{\bone})} \sigma_1^2 \frac{\Vert \bG_{1}\Vert^2}{\sqrt{\btwo \bnut_{1}}} 
        \\
\label{eq: first_stage_renewed}
         &+ \frac{1}{1-\btwo}\left(\frac{147456\eta^2 (L_0+L_1)\sigma_1^2\sigma_0}{\left(1-\bone\right)^\frac{5}{2}}+4\frac{ L_1\eta^2\sigma_0}{(1-\bone)^{\frac{3}{2}}}+\frac{24L_0 \eta^2}{1-\bone}+8\frac{L_0}{\sigma_0}\eta^2 \right)\left( 2\ln 6\sigma_0 -T\ln\btwo \right). 
    \end{align}

We then execute the second round of divide-and-conquer. To begin with, we have that
\begin{equation}
\label{eq: core_2}
    \sum_{t=1}^T\dE\left[  \frac{\Vert \bG_{t}\Vert^2}{\sqrt{\bnut_{t}}}\mathds{1}_{\Vert G_{t}\Vert \ge \frac{\sigma_0}{\sigma_1}} \right]\le \sum_{t=1}^T\dE\left[  \frac{\Vert \bG_{t}\Vert^2}{\sqrt{\bnut_{t}}} \right].
\end{equation}

On the other hand, we have that
\begin{align*}
   & \frac{\Vert \bG_{t}\Vert^2}{\sqrt{\bnut_{t}}}\mathds{1}_{\Vert G_{t}\Vert \ge \frac{\sigma_0}{\sigma_1}}
   \ge \frac{\frac{2}{3}\Vert \bG_{t}\Vert^2+\frac{1}{3}\frac{\sigma^2_0}{\sigma_1^2}}{\sqrt{\bnut_{t}}}\mathds{1}_{\Vert G_{t}\Vert \ge \frac{\sigma_0}{\sigma_1}}
   \ge \frac{\frac{\btwo}{3\sigma_1^2}\mathbb{E}^{|\gF_t}\Vert \bg_{t}\Vert^2+\frac{1-\btwo}{3}\frac{\sigma^2_0}{\sigma_1^2}}{\sqrt{\bnut_{t}}}\mathds{1}_{\Vert G_{t}\Vert \ge \frac{\sigma_0}{\sigma_1}}
    \\
    =& \dE^{|\fil_t}\frac{\frac{\btwo}{3\sigma_1^2}\Vert \bg_{t}\Vert^2+\frac{1-\btwo}{3\sigma_1^2}\sigma_0^2}{\sqrt{\bnut_{t}}}\mathds{1}_{\Vert G_{t}\Vert \ge \frac{\sigma_0}{\sigma_1}}
    \ge \frac{1}{2} \dE^{|\fil_t}\frac{\frac{\btwo}{3\sigma_1^2}\Vert \bg_{t}\Vert^2+\frac{1-\btwo}{3\sigma_1^2}\sigma_0^2}{\sqrt{\bnut_{t+1}}+\sqrt{\btwo \bnut_{t}}}\mathds{1}_{\Vert G_{t}\Vert \ge \frac{\sigma_0}{\sigma_1}}.
\end{align*}
 As a conclusion, 
\begin{align*}
      &\sum_{t=1}^T\dE\left[  \frac{\Vert \bG_{t}\Vert^2}{\sqrt{\bnut_{t}}}\mathds{1}_{\Vert G_{t}\Vert \ge \frac{\sigma_0}{\sigma_1}} \right] \ge  \frac{1}{2}\sum_{t=1}^T\dE\left[\frac{\frac{\btwo}{3\sigma_1^2}\Vert \bg_{t}\Vert^2+\frac{1-\btwo}{3\sigma_1^2}\sigma_0^2}{\sqrt{\bnut_{t+1}}+\sqrt{\btwo \bnut_{t}}}\mathds{1}_{\Vert G_{t}\Vert \ge \frac{\sigma_0}{\sigma_1}}\right]
      \\
      \ge& \frac{1}{6(1-\btwo)\sigma_1^2}\sum_{t=1}^T \dE\left[\left(\sqrt{\bnut_{t+1}}-\sqrt{\btwo\bnut_{t}}\right)\mathds{1}_{\Vert G_{t}\Vert \ge \frac{\sigma_0}{\sigma_1}}\right].
\end{align*}

Meanwhile, for convenience, we define $\{\bar{\bnu}_{t}\}_{t=0}^{\infty}$ as $\bar{\bnu}_{0}= \bnu_{0}$, $\bar{\bnu}_{t}= \btwo\bar{\bnu}_{t-1}+(1-\btwo)\vert g_{t} \vert^2\mathds{1}_{\Vert \bG_{t}\Vert  < \frac{\sigma_0^2}{\sigma_1^2}}$. One can easily observe that $\bar{\bnu}_{t}\le \bnu_{t} $, and thus
\begin{align*}
    &\sum_{t=1}^T \dE\left[\left(\sqrt{\bnut_{t+1}}-\sqrt{\btwo\bnut_{t}}\right)\mathds{1}_{\Vert \bG_{t}\Vert  < \frac{\sigma_0^2}{\sigma_1^2}} \right]
    \\
    =& \sum_{t=1}^T \dE\left(\sqrt{\btwo^2 \bnu_{t-1}+ \btwo(1-\btwo)\Vert g_{t}\Vert^2 + (1-\btwo )\sigma_0^2}-\sqrt{\btwo(\btwo\bnu_{t-1}+ (1-\btwo )\sigma_0^2)}\right)\mathds{1}_{\Vert \bG_{t}\Vert  < \frac{\sigma_0^2}{\sigma_1^2}}
    \\
    \le &\sum_{t=1}^T \dE\left(\sqrt{\btwo^2 \bar{\bnu}_{t-1}+ \btwo(1-\btwo)\Vert g_{t}\Vert^2 + (1-\btwo )\sigma_0^2}-\sqrt{\btwo(\btwo\bar{\bnu}_{t-1}+ (1-\btwo )\sigma_0^2)}\right)\mathds{1}_{\Vert \bG_{t}\Vert  < \frac{\sigma_0^2}{\sigma_1^2}}
    \\
    \le & \sum_{t=1}^T \dE\left(\sqrt{\btwo^2 \bar{\bnu}_{t-1}+ \btwo(1-\btwo)\Vert g_{t}\Vert^2\mathds{1}_{\Vert \bG_{t}\Vert  < \frac{\sigma_0^2}{\sigma_1^2}} + (1-\btwo )\sigma_0^2}-\sqrt{\btwo(\btwo\bar{\bnu}_{t-1}+ (1-\btwo )\sigma_0^2)}\right)
    \\
    =& \sum_{t=1}^T \dE\left(\sqrt{\btwo\bar{\bnu}_{t} + (1-\btwo )\sigma_0^2}-\sqrt{\btwo(\btwo\bar{\bnu}_{t-1}+ (1-\btwo )\sigma_0^2)}\right)
    \\
    =& \dE \sqrt{\btwo\bar{\bnu}_{t} + (1-\btwo )\sigma_0^2}+(1-\sqrt{\btwo})\sum_{t=1}^{T-1} \dE\sqrt{\btwo\bar{\bnu}_{t} + (1-\btwo )\sigma_0^2} - \dE\sqrt{\btwo(\btwo\bar{\bnu}_{0}+ (1-\btwo )\sigma_0^2)}.
\end{align*}
All in all, summing the above two inequalities together, we obtain that
\small
\begin{align}
\nonumber
&\dE \sqrt{\bnut_{t+1} }+(1-\sqrt{\btwo})\sum_{t=2}^{T} \dE\sqrt{\bnut_{t} } - \sqrt{\btwo{\bnut}_{1}}
\\
\nonumber
    =&\sum_{t=1}^T \dE \left(\sqrt{\bnut_{t} }-\sqrt{\btwo{\bnut}_{t-1}}\right)
    \\
\nonumber
    \le &\sum_{t=1}^T \dE \left(\sqrt{\bnut_{t} }-\sqrt{\btwo{\bnut}_{t-1}}\right)\mathds{1}_{\Vert G_{t}\Vert \ge \frac{\sigma_0}{\sigma_1}}
    +\sum_{t=1}^T \dE \left(\sqrt{\bnut_{t} }-\sqrt{\btwo{\bnut}_{t-1}}\right)\mathds{1}_{\Vert \bG_{t}\Vert  < \frac{\sigma_0^2}{\sigma_1^2}}
    \\
\nonumber
\le & \frac{3(1-\btwo)\sigma_1^2}{\sqrt{\btwo}}\sum_{t=1}^T\dE\left[  \frac{\Vert \bG_{t}\Vert^2}{\sqrt{\bnut_{t}}} \right]+ \dE \sqrt{\btwo\bar{\bnu}_{t} + (1-\btwo )\sigma_0^2}+(1-\sqrt{\btwo})\sum_{t=1}^{T-1} \dE\sqrt{\btwo\bar{\bnu}_{t} + (1-\btwo )\sigma_0^2} - \sqrt{\btwo(\btwo\bar{\bnu}_{0}+ (1-\btwo )\sigma_0^2)}.
\end{align}
\normalsize
Since $\forall t \ge 1$,
\begin{align*}
    \dE\sqrt{\btwo \bar{\bnu}_{t} + (1-\btwo )\sigma_0^2} \le \sqrt{\btwo\dE\bar{\bnu}_{t} + (1-\btwo )\sigma_0^2} \le \sqrt{\sigma_0^2+\bnu_{0}} \le \sqrt{2}\sigma_0,
\end{align*}
combining with $\sqrt{\btwo{\bnut}_{1}}=\sqrt{\btwo(\btwo\bar{\bnu}_{0}+ (1-\btwo )\sigma_0^2)}$ and $\dE \sqrt{\bnut_{t+1} } =\dE \sqrt{\btwo{\bnu}_{t} + (1-\btwo )\sigma_0^2} \ge \dE \sqrt{\btwo\bar{\bnu}_{t} + (1-\btwo )\sigma_0^2}$, we obtain

\begin{align}
\nonumber
    (1-\sqrt{\btwo})\sum_{t=1}^{T} \dE\sqrt{\bnut_{t} }
\le&  \frac{3(1-\btwo)\sigma_1^2}{\sqrt{\btwo}}\sum_{t=2}^T\dE\left[  \frac{\Vert \bG_{t}\Vert^2}{\sqrt{\bnut_{t}}} \right]+ +(1-\sqrt{\btwo})\sum_{t=1}^{T} \dE\sqrt{\btwo\bar{\bnu}_{t} + (1-\btwo )\sigma_0^2} 
\\
\nonumber
\le &\frac{3(1-\btwo)\sigma_1^2}{\sqrt{\btwo}}\sum_{t=1}^T\dE\left[  \frac{\Vert \bG_{t}\Vert^2}{\sqrt{\bnut_{t}}} \right]+ \sqrt{2}(1-\sqrt{\btwo})T\sigma_0.
.
\end{align}
\normalsize

Dividing both sides of the above equation by $1-\sqrt{\btwo}$ then gives 
\begin{align}
\nonumber
    \sum_{t=1}^{T} \dE\sqrt{\bnut_{t} }
\le&  \frac{3(1-\btwo)\sigma_1^2}{\sqrt{\btwo}}\sum_{t=2}^T\dE\left[  \frac{\Vert \bG_{t}\Vert^2}{\sqrt{\bnut_{t}}} \right]+(1-\sqrt{\btwo})\sum_{t=1}^{T} \dE\sqrt{\btwo\bar{\bnu}_{t} + (1-\btwo )\sigma_0^2} 
\\
\label{eq: estimation_2}
\le &12\sigma_1^2\sum_{t=1}^T\dE\left[  \frac{\Vert \bG_{t}\Vert^2}{\sqrt{\bnut_{t}}} \right]+ \sqrt{2}T\sigma_0.
\end{align}
By applying Eq. (\ref{eq: first_stage_renewed}) and the constraint of $T$, we obtain that 
\begin{align*}
    \sum_{t=1}^{T} \dE\sqrt{\bnut_{t} } \le &\sqrt{2}T\sigma_0+12\frac{\sigma_1^2}{\eta} \left(f(\bw_1) -f^* + \eta \frac{64}{(1-{\bone})} \sigma_1^2 \frac{\Vert \bG_{1}\Vert^2}{\sqrt{\btwo \bnut_{1}}} \right.
        \\
         &+ \left.\frac{1}{1-\btwo}\left(\frac{147456\eta^2 (L_0+L_1)\sigma_1^2\sigma_0}{\left(1-\bone\right)^\frac{5}{2}}+4\frac{ L_1\eta^2\sigma_0}{(1-\bone)^{\frac{3}{2}}}+\frac{24L_0 \eta^2}{1-\bone}+8\frac{L_0}{\sigma_0}\eta^2 \right)\left( 2\ln 6\sigma_0 -T\ln\btwo \right)\right)
         \\
         \le & 4T\sigma_0.
\end{align*}
Combining the above inequality and Eq. (\ref{eq: first_stage_renewed}) and applying Cauchy's inequality, we obtain that
\begin{align*}
    \left(\E \sum_{t=1}^T \Vert \nabla f(\bw_t) \Vert\right)^2 \le & \left( \sum_{t=1}^{T} \dE\sqrt{\bnut_{t} } \right) \left(\sum_{t=1}^T\dE\left[  \frac{\Vert \bG_{t}\Vert^2}{\sqrt{\bnut_{t}}} \right]\right)
    \\
    \le & 4T\sigma_0\times \frac{1}{\eta} \left(f(\bw_1) -f^* + \eta \frac{64}{(1-{\bone})} \sigma_1^2 \frac{\Vert \bG_{1}\Vert^2}{\sqrt{\btwo \bnut_{1}}} \right.
        \\
         &+ \left.\frac{1}{1-\btwo}\left(\frac{147456\eta^2 (L_0+L_1)\sigma_1^2\sigma_0}{\left(1-\bone\right)^\frac{5}{2}}+4\frac{ L_1\eta^2\sigma_0}{(1-\bone)^{\frac{3}{2}}}+\frac{24L_0 \eta^2}{1-\bone}+8\frac{L_0}{\sigma_0}\eta^2 \right)\left( 2\ln 6\sigma_0 -T\ln\btwo \right)\right).
\end{align*}

By $\eta = \frac{\sqrt{f(\bw_1)-f^*}}{\sqrt{L_0+L_1} \sqrt{T \sigma_0\sigma_1^2}}$ and the constraint of $T$, the proof is completed.
 \end{proof}

\subsection{Proof for SGDM}
\label{appen: sgd}

\begin{theorem}[Informal]
    \label{thm: sgd_appen}
    Fix $ L_0\ge 0, L_1 >0$, and $\Delta_1 \ge 0$,  there exists objective function $f$ satisfying $(L_0,L_1)$-smooth condition and $f(\bw_1)-f^* =\Delta_1$, and a noise oracle $\mathcal{O}(\bw,z)$ generating stochastic gradient by $\bg_t = \nabla f(\bw_t) + \mathcal{O}(\bw_t,z_t)$ and satisfying Assumption \ref{assum: noise} ($z_t$ is i.i.d. sampled from some underlying distribution), such that \textbf{for any learning rate $\eta >0$ and $\beta \in [0, 1]$}, for all $T >0$, 
    \begin{equation*}
        \min_{t\in [T]} \E \Vert \nabla f(\bw_t) \Vert = \Vert \nabla f(\bw_1) \Vert \ge L_1\Delta_1.
    \end{equation*}
\end{theorem}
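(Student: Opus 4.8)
The plan is to exhibit a single one-dimensional instance on which the expected gradient norm can never drop below its initial value, no matter how the learning rate $\eta$ and the momentum parameter $\beta$ are tuned. The guiding principle, as previewed in the main text, is that an exponentially growing gradient turns a light-looking, finite-variance noise into one whose induced gradient has \emph{infinite} expectation. Concretely, I take $f$ to be the symmetric function $f_1$ from \eqref{lowerbound_f1} used in the proof of Theorem~\ref{thm: gd}; it satisfies Assumption~\ref{assum: objective}, has $f^\ast=\frac{L_0}{2L_1^2}$, and outside the region $|x|\le 1/L_1$ obeys $|f'(x)|=\frac{L_0}{L_1}e^{L_1|x|-1}$, i.e.\ the gradient norm grows exponentially. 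I choose $w_1>0$ with $f(w_1)-f^\ast=\Delta_1$; a direct check (splitting into the quadratic regime $w_1\le 1/L_1$ and the exponential regime $w_1\ge 1/L_1$) gives $\|\nabla f(w_1)\|\ge L_1\Delta_1$ in every case, which already pins down the right-hand side of the claim.

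For the oracle I set $\bg_t=\nabla f(w_t)+z_t$, where the $z_t$ are i.i.d., mean-zero, symmetric, and heavy-tailed with polynomial tail $\mathbb{P}(|z_t|>s)=\Theta(s^{-3})$ for large $s$. This makes the oracle unbiased and gives it a \emph{constant} variance $\mathbb{E} z_t^2=\sigma_0^2<\infty$, so Assumption~\ref{assum: noise} holds with this $\sigma_0$ and $\sigma_1=1$. The crucial property is that such a law has a finite second moment yet satisfies $\mathbb{E}\,e^{a|z_t|}=\infty$ for every $a>0$, and that this remains true after multiplication by any positive constant.

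The core of the argument treats $\beta\in[0,1)$ and any $\eta>0$. Writing $w_{t+1}=w_t-\eta\bom_t$ and unrolling the momentum recursion with $\bom_0=0$, the most recent noise $z_t$ enters $w_{t+1}$ only through $\bg_t$ inside $\bom_t$, with coefficient exactly $-\eta(1-\beta)\neq 0$; moreover $z_t$ is independent of $\mathcal{F}_t$, while $w_t$, $\bom_{t-1}=(w_{t-1}-w_t)/\eta$ and $\nabla f(w_t)$ are all $\mathcal{F}_t$-measurable. Hence, conditionally on $\mathcal{F}_t$, we may write $w_{t+1}=c-\eta(1-\beta)z_t$ for an $\mathcal{F}_t$-measurable constant $c$. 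Using $|f'(x)|\ge\frac{L_0}{L_1}e^{L_1|x|-1}$ on $|x|\ge 1/L_1$ together with the polynomial tail of $z_t$,
\[
\mathbb{E}\!\left[\,|f'(w_{t+1})|\;\middle|\;\mathcal{F}_t\right]\;\gtrsim\;\int^{\infty} e^{L_1\eta(1-\beta)s}\,\frac{\mathrm{d}s}{s^{4}}\;=\;\infty,
\]
since the exponential factor dominates the polynomial density. Taking full expectations gives $\mathbb{E}\,\|\nabla f(w_{t+1})\|=\infty$ for every $t\ge 1$, hence for all iterates of index $\ge 2$. The remaining case $\beta=1$ yields $\bom_t\equiv\bom_0=0$, so the iterate is frozen at $w_1$ and $\mathbb{E}\,\|\nabla f(w_t)\|=\|\nabla f(w_1)\|$ for all $t$. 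In both regimes the minimum over $t\in[T]$ of $\mathbb{E}\,\|\nabla f(w_t)\|$ is attained at $t=1$ (the only finite value), which equals $\|\nabla f(w_1)\|\ge L_1\Delta_1$, as claimed.

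The step I expect to be the main obstacle is making the divergence rigorous and genuinely uniform in $(\eta,\beta)$: the noise law must be fixed once and for all (before $\eta,\beta$ are revealed) so that its finite variance coexists with an exponential moment that is infinite for \emph{every} positive scaling $\eta(1-\beta)$, and one must then argue that rescaling a heavy-tailed variable by any positive constant preserves the divergence. The conditional-expectation bookkeeping—isolating the last noise $z_t$, exploiting its independence from $\mathcal{F}_t$, and ruling out cancellation against the $\mathcal{F}_t$-measurable part $c$—is the delicate technical point; the verification that $f_1$ obeys Assumption~\ref{assum: objective} and the computation of $\|\nabla f(w_1)\|$ are routine.
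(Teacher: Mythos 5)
Your proposal is correct and follows essentially the same route as the paper's proof: the same counterexample $f_1$, the same one-step conditional argument isolating the fresh noise $z_t$ entering $w_{t+1}$ with coefficient $-\eta(1-\beta)$, and the same mechanism of an exponentially growing gradient defeating a finite-variance noise all of whose exponential moments are infinite. The only differences are cosmetic: the paper uses a stretched-exponential tail $e^{-c\sqrt{|z|}}$ where you use a polynomial tail, and your explicit treatment of the degenerate case $\beta=1$ is, if anything, slightly more careful than the paper's.
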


\begin{proof}
   Define the objective function $f$ as $f_1$ used in the the proof of Theorem \ref{thm: gd}  as
   \begin{equation}
		f_1(x)=\left\{
		\begin{aligned}
		&\frac{L_0 e^{L_1x -1 }}{L_1^2}&,  x\in \left[\frac{1}{L_1},\infty\right), \\
		&\frac{L_0x^2}{2} + \frac{L_0}{2L_1^2}&,  x\in [-\frac{1}{L_1},\frac{1}{L_1}], \\
		  &\frac{L_0 e^{-L_1 x-1}}{L_1^2}&,  x\in \left(-\infty,-\frac{1}{L_1}\right].
		\end{aligned}
		\right.
   \label{lowerbound_f1_restated}
\end{equation}

It is easy to verify that $f_1$ obeys Assumption \ref{assum: objective}.  Then, we set $w_1$ as the solution of $f_1(x)-\frac{L_0}{2L_1^2}=\Delta_1$, thus $f(\bw_1)-f^* = \Delta_1$ is satisfied. We then construct the noise oracle as $O_f(\bw,z) =z $, where $z\sim e^{-\frac{\sqrt{\vert z\vert }}{\sqrt[6]{\sigma_0^2/960}}}$. One can easily verify that $\operatorname{Var}(z) =\sigma_0^2$ and Assumption \ref{assum: noise} is meet.

 Now, we prove the following claim: starting any point $w_t$  and with any previous momentum $\bom_{t-1}$, one step of SGDM
\begin{equation*}
    \E[\Vert \nabla f(\bw_{t+1}) \Vert |\bw_t] =\infty.
\end{equation*}

Specifically, we have one step of SGDM gives 
\begin{align*}
    \bw_{t+1} =\bw_t -\eta(1-\beta) \nabla f(\bw_t) -\eta \beta \bom_{t-1} -\eta(1-\beta) z_t.
\end{align*}

    Therefore, we have 
    \begin{align*}
        \E [\vert \nabla f(w_{t+1}) \vert| \bw_t]  \ge &  \E \left[\vert \nabla f(w_{t+1}) \vert \mathds{1}_{w_{t+1} \ge \max\{\frac{1}{L_1}, \bw_t-\eta(1-\beta) \nabla f(\bw_t) -\eta \beta \bom_{t-1}\}}\right]
        \\
         \ge &  \E \left[\vert \nabla f(w_{t+1}) \vert \mathds{1}_{z_t \le \min\{\frac{\bw_t-\frac{1}{L_1}}{\eta(1-\beta)}- \nabla f(\bw_t) -\frac{\beta }{1-\beta}\bom_{t-1}, 0\}}\right]
        \\
        \ge & \frac{1}{2}\int^{\min\{\frac{\bw_t-\frac{1}{L_1}}{\eta(1-\beta)}- \nabla f(\bw_t) -\frac{\beta }{1-\beta}\bom_{t-1}, 0\}}_{-\infty}  \frac{L_0}{L_1} e^{L_1(\bw_t -\eta(1-\beta) \nabla f(\bw_t) -\eta \beta \bom_{t-1} -\eta(1-\beta) z)-1} e^{-\frac{\sqrt{-z }}{\sqrt[6]{\sigma_0^2/960}}} \mathrm{d} z.
    \end{align*}
    Since $\lim_{z\rightarrow -\infty} e^{L_1(\bw_t -\eta(1-\beta) \nabla f(\bw_t) -\eta \beta \bom_{t-1} -\eta (1-\beta)z)-1} e^{-\frac{\sqrt{-z }}{\sqrt[6]{\sigma_0^2/960}}} = \infty$ regardless of $\eta$, $\beta$, and $\bom_{t-1}$, we have $ \E [\vert \nabla f(w_{t+1}) \vert|\bw_t] = \infty$ based on the above inequalities. This means that an update from any point over this example will always lead to the divergence on expected gradient norm, thus we have $\forall t >1$,
    \begin{equation*}
       \min_{t\in [T]} \E \vert \nabla f(w_t) \vert =  \vert \nabla f(w_1) \vert.
    \end{equation*}
    The proof is completed.
\end{proof}

\section{Proofs for Section \ref{sec: reach_lower_bound}}
\subsection{Proof for Theorem \ref{thm: attain_lower_bound}}
\label{appen: tight_bound}
\begin{theorem}[Theorem \ref{thm: attain_lower_bound}, restated]
\label{thm: attain_lower_bound_appen}
    Let Assumption \ref{assum: objective} hold. Then, $\forall \beta_1 \ge 0$, if $\varepsilon\le \frac{1}{\Poly}$, with $\eta = (1-\bone)\frac{\sqrt{L_0(f(\bw_1)-f^*)}}{\sqrt{T}}$ and momentum hyperparameter $\btwo = 1-\eta^2\frac{(256\sigma_1^2L_1)^2}{1-\bone}$, we have that if $T \ge \Theta(\frac{ L_0\sigma_0^2 (f(\bw_1)-f^*)}{\varepsilon^4})$
    \begin{equation*}
        \E \min_{t\in [1,T]}\Vert \nabla f(\bw_t) \Vert \le \varepsilon.
    \end{equation*}
\end{theorem}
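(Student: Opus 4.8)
The plan is to follow the descent-lemma architecture of the proof of Theorem~\ref{thm: stochastic  Adam} but to replace the lossy ``add-$\lambda$'' step (which is exactly what inflated the noise dependence from $\sigma_0^2$ to $\sigma_0^3$) with an exact approximate-telescoping bound, and then to rescue the resulting unbounded-looking error term via a stopping time. Concretely, I would reuse the auxiliary iterate $\bu_t \triangleq \frac{\bw_t - \frac{\bone}{\sqrt{\btwo}}\bw_{t-1}}{1-\frac{\bone}{\sqrt{\btwo}}}$ and the surrogate conditioner $\bnut_t \triangleq \btwo\bnu_{t-1}+(1-\btwo)\sigma_0^2$. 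Lemma~\ref{lem: bounded_update} with the prescribed $\eta,\btwo$ ensures $\Vert \bu_{t+1}-\bw_t\Vert,\Vert \bu_t-\bw_t\Vert\le\frac{1}{4L_1}$, so Lemma~\ref{lem: descent} applies and produces the usual first- and second-order terms; the first-order term carries the ``good'' negative contribution $-\eta\frac{\Vert \bG_t\Vert^2}{\sqrt{\bnut_t}}$ together with the correlation error of Eq.~(\ref{eq: approximation_error}).

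Next, instead of bounding $\frac{1}{\sqrt{\btwo\bnu_{t-1}+\lambda}}$, I would invoke the exact identity $\frac{(1-\btwo)\Vert \bg_t\Vert^2}{\sqrt{\btwo\bnu_{t-1}}\bnu_t}\le \mathcal{O}\!\left(\frac{1}{\sqrt{\btwo\bnu_{t-1}}}-\frac{1}{\sqrt{\bnu_t}}\right)$, the same approximate-telescoping relation already used inside Lemmas~\ref{lem: momentum_sum_1}--\ref{lem: momentum_sum_2}. Summing the correlation error then yields a quantity of order $\mathcal{O}\!\big(\eta\sigma_0^2\sum_t(1-\btwo)\frac{1}{\sqrt{\bnu_t}}\big)$, which is not bounded on its own since $\sqrt{\bnu_t}$ has no a priori lower bound.

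The crux is to introduce the stopping time $\tau \triangleq \min\{t:\Vert \nabla f(\bw_{t+1})\Vert^2\le C\sigma_0^2(1-\btwo)\}$ for a large numerical constant $C$. Because the schedule forces $\sigma_0^2(1-\btwo)=\Theta(\varepsilon^4)$, the threshold obeys $\sqrt{C\sigma_0^2(1-\btwo)}=\Theta(\varepsilon^2)$, so on $\{\tau\le T\}$ we immediately get $\min_{t\in[T]}\Vert \nabla f(\bw_t)\Vert\le \sqrt{C}\sigma_0\sqrt{1-\btwo}\le\varepsilon$ and are finished. On the complementary event every $\Vert \bG_t\Vert^2$ with $t<\tau$ exceeds $C\sigma_0^2(1-\btwo)$, hence the first-order Taylor term $-\sum_{t\le T\wedge\tau}\eta\frac{\Vert \bG_t\Vert^2}{\sqrt{\bnu_t}}$ is bounded above by $-C\eta\sigma_0^2\sum_{t\le T\wedge\tau}(1-\btwo)\frac{1}{\sqrt{\bnu_t}}$, which cancels the problematic error once $C$ is large enough. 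Applying the optional stopping theorem~\cite{durrett2019probability} to the stopped descent process is precisely what preserves the closed-form conditional expectations (for instance $\dE^{|\fil_t}[\langle \bG_t,-\eta\bg_t/\sqrt{\bnut_t}\rangle]=-\eta\Vert \bG_t\Vert^2/\sqrt{\bnut_t}$) while the ``all gradients are large'' condition holds deterministically up to $\tau$.

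After this cancellation the surviving inequality telescopes into a bound on $\eta\sum_t\E\frac{\Vert \bG_t\Vert^2}{\sqrt{\bnut_t}}$ carrying only the optimal $\sigma_0^2$ factor and no spurious $L_1,\sigma_1$; I would then reuse the divide-and-conquer control of $\E\sqrt{\bnu_t}$ from Eq.~(\ref{eq: d_3}) and~(\ref{eq: estimation_2}) together with Cauchy--Schwarz to pass from $\sum_t\E\frac{\Vert \bG_t\Vert^2}{\sqrt{\bnut_t}}$ to a bound on $\big(\E\sum_t\Vert \nabla f(\bw_t)\Vert\big)^2$, finally substituting $\eta=(1-\bone)\sqrt{L_0(f(\bw_1)-f^*)}/\sqrt{T}$ and $T\ge\Theta(L_0\sigma_0^2(f(\bw_1)-f^*)/\varepsilon^4)$. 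The main obstacle is the third step: making the optional-stopping bookkeeping rigorous so that the ``large gradient'' event can absorb the telescoped error without destroying the closed-form conditional expectations — the stopping time is exactly the device reconciling these competing demands, and choosing the constant $C$ so that the cancellation is complete while the threshold stays below $\varepsilon^2$ is the delicate quantitative point.
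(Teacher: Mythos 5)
Your proposal is correct and follows essentially the same route as the paper's own proof: the descent lemma applied to the auxiliary iterate $\bu_t$, the approximate-telescoping bound $\frac{(1-\btwo)\Vert\bg_t\Vert^2}{\sqrt{\btwo\bnu_{t-1}}\bnu_t}\lesssim\frac{1}{\sqrt{\btwo\bnu_{t-1}}}-\frac{1}{\sqrt{\bnu_t}}$ in place of the $\lambda$-padding, the stopping time $\tau$ combined with the optional stopping theorem (which the paper implements by exhibiting an explicit submartingale $S_t$ and stopping it at $\tau$), absorption of the telescoped error by the pre-$\tau$ lower bound on $\Vert\bG_t\Vert^2$, and the concluding divide-and-conquer control of $\E\sqrt{\bnu_t}$, Cauchy--Schwarz, and the case split $\tau<T$ versus $\tau=T$. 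The one point needing care is that the ``good'' negative first-order term must be taken as $-\eta\Vert\bG_t\Vert^2/\sqrt{\btwo\bnu_{t-1}}$ with the unpadded conditioner (as the appendix does), not $-\eta\Vert\bG_t\Vert^2/\sqrt{\bnut_t}$ as one phrase of your plan suggests, because keeping the $(1-\btwo)\sigma_0^2$ padding in the main denominator reintroduces the $\sigma_0^2$-correlation error of Eq.~(\ref{eq: approximation_error}), which neither telescopes with a $(1-\btwo)$ weight nor can be absorbed by the stopping time, and whose treatment is precisely what costs the extra factor of $\sigma_0$ this theorem removes.
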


\begin{proof}
   Recall that $\bu_t \triangleq \frac{\bw_{t}-\frac{\bone}{\sqrt{\btwo}}\bw_{t-1}}{1-\frac{\bone}{\sqrt{\btwo}}}$. and the surrogate second-order momentum be defined as $\bnut_t \triangleq \btwo \bnu_{t-1}+(1-\btwo) \sigma_0^2$. Due to $\frac{\eta}{\sqrt{1-\btwo}} \le \frac{\sqrt{1-\bone}}{8L_1}$ and  following the similar routine as Theorem \ref{thm: stochastic  Adam}, one can easily verify that 
    \begin{equation*}
        \Vert \bu_{t}-\bw_t \Vert  \le \frac{1}{4L_1},
        \Vert \bu_{t+1}-\bw_t \Vert \le \frac{1}{4L_1}.
    \end{equation*}

    Therefore, if  Lemma \ref{lem: descent} can be applied with $\bw^1= \bw_t$, $\bw^2 = \bu_{t+1}$, and $\bw^3 = \bu_t$, we see the conditions of Lemma \ref{lem: descent} is satisfied, which after taking expectation gives
    \begin{align*}
        \E^{|\gF_t} f(\bu_{t+1})\le  f(\bu_t)+ \E^{|\gF_t}\langle \nabla f(\bw_t), \bu_{t+1}-\bu_t\rangle + \frac{1}{2}(L_0+L_1 \Vert \nabla f(\bw_t)\Vert) \E^{|\gF_t}(\Vert \bu_{t+1} -\bw_t\Vert+  \Vert \bu_t-\bw_t\Vert) \Vert\bu_{t+1}-\bu_t\Vert. 
    \end{align*}

    We call $\langle \nabla f(\bw_t), \bu_{t+1}-\bu_t\rangle$ the  first-order term and $ \frac{1}{2}(L_0+L_1 \Vert \nabla f(\bw_t)\Vert)(\Vert \bu_{t+1} -\bw_t\Vert+  \Vert \bu_t-\bw_t\Vert) \Vert\bu_{t+1}-\bu_t\Vert$ the second-order term, as they respectively correspond to the first-order and second-order Taylor's expansion. We then respectively bound these two terms as follows.

    \textbf{Analysis for the first-order term.} Similar to  bounding the first-order term in the proof of  Theorem \ref{thm: stochastic  Adam}, we have the following decomposition : \begin{align*}
    \bu_{t+1}-\bu_t=&\frac{\bw_{t+1}-\bw_t}{1-\frac{\bone}{\sqrt{\btwo}}}-\frac{\bone}{\sqrt{\btwo}} \frac{\bw_{t}-\bw_{t-1}}{1-\frac{\bone}{\sqrt{\btwo}}}
    \\
 =& -\eta \frac{1-\bone}{1-\frac{\bone}{\sqrt{\btwo}}} \frac{1}{\sqrt{\btwo\bnu_{t-1}}}  \bg_t
     -\frac{\eta}{1-\frac{\bone}{\sqrt{\btwo}}} \left(\frac{1}{\sqrt{\bnu_t}}-\frac{1}{\sqrt{\btwo\bnu_{t-1}}}\right)  \bom_t.
\end{align*}

According to the above decomposition, we have the first-order term can also be decomposed into
\small
\begin{align}
\nonumber
    &\dE^{|\gF_t} \left[ \left\langle \nabla f(\bw_t) , \bu_{t+1}-\bu_t \right\rangle \right]
    \\
\nonumber
    =& \frac{1-\bone}{1-\frac{\bone}{\sqrt{\btwo}}}\dE^{|\gF_t} \left[ \left\langle\bG_t, -\eta  \frac{1}{\sqrt{\btwo{\bnu}_{t-1}}}  \bg_t \right\rangle \right]+\dE^{|\gF_t} \left[ \left\langle\bG_t,-\frac{\eta}{1-\frac{\bone}{\sqrt{\btwo}}} \left(\frac{1}{\bnu_t}-\frac{1}{\sqrt{\btwo\bnu_{t-1}}}\right)  \bom_t\right\rangle \right].
\end{align}
\normalsize

As $\dE^{|\gF_t} \left[ \left\langle\bG_t, -\eta  \frac{1}{\sqrt{\btwo\bnu_{t-1}}}  \bg_t \right\rangle \right] = -\eta \frac{ \Vert \bG_t \Vert^2}{\sqrt{\btwo \bnu_{t-1}}}$, we have
\begin{equation*}
   \frac{1-\bone}{1-\frac{\bone}{\sqrt{\btwo}}} \dE^{|\gF_t} \left[ \left\langle\bG_t, -\eta  \frac{1}{\sqrt{\btwo\bnu_{t-1}}}  \bg_t \right\rangle \right] \le - \frac{ \Vert \bG_t \Vert^2}{\sqrt{\btwo \bnu_{t-1}}}.
\end{equation*}
We then  bound $\dE^{|\gF_t} \left[ \left\langle\bG_t,-\frac{\eta}{1-\frac{\bone}{\sqrt{\btwo}}} \left(\frac{1}{\bnu_t}-\frac{1}{\sqrt{\btwo\bnu_{t-1}}}\right)  \bom_t\right\rangle \right]$ as follows
\small
\begin{align}
\nonumber
    &\dE^{|\fil_t} \left[ \left\langle\bG_t,-\frac{\eta}{1-\frac{\bone}{\sqrt{\btwo}}} \left(\frac{1}{\sqrt{\bnu_t}}-\frac{1}{\sqrt{\btwo\bnu_{t-1}}}\right)    \bom_t\right\rangle \right] 
    \\
\nonumber
    =&  \dE^{|\fil_t} \left[ \left\langle\bG_t,-\frac{\eta}{1-\frac{\bone}{\sqrt{\btwo}}} \left(\frac{(1-\beta_2)\Vert \bg_t \Vert ^{    2}}{\sqrt{\bnu_t}\sqrt{\btwo\bnu_{t-1}}(\sqrt{\bnu_t}+\sqrt{\btwo\bnu_{t-1}})}\right)    \bom_t\right\rangle \right]
    \\
\nonumber
    \le &   \frac{\eta}{1-\frac{\bone}{\sqrt{\btwo}}}\dE^{|\fil_t} \left[ \Vert \bG_{t}\Vert  \left(\frac{(1-\beta_2)\Vert \bg_{t} \Vert^{ 2}}{\sqrt{\bnu_{t}}\sqrt{\btwo \bnu_{t-1}}(\sqrt{\bnu_{t}}+\sqrt{\btwo \bnu_{t-1}})}\right)\Vert  \bom_{t}\Vert  \right]
    \\
    \nonumber
    =&{  \frac{\eta}{1-\frac{\bone}{\sqrt{\btwo}}}\dE^{|\fil_t} \left[ \Vert \bG_{t}\Vert  \left(\frac{(1-\beta_2)\Vert \bg_{t}\Vert ^{ 2}}{\sqrt{\bnu_{t}}\sqrt{\btwo \bnu_{t-1}}(\sqrt{\bnu_{t}}+\sqrt{\btwo \bnu_{t-1}})}\right)\Vert  \bom_{t}\Vert  \right]}
.
\end{align}
\normalsize

 Due to Lemma \ref{lem: bounded_update}, the right-hand-side of the above inequality can be further bounded as 
\small
\begin{align}
\nonumber
&  \frac{\eta}{1-\frac{\beta_1}{\sqrt{\beta_2}}}\dE^{|\fil_t} \left[ \Vert \bG_{t}\Vert  \left(\frac{(1-\beta_2)\Vert \bg_{t} \Vert ^{ 2}}{\sqrt{\bnu_{t}}\sqrt{\btwo \bnu_{t-1}}(\sqrt{\bnu_{t}}+\sqrt{\btwo \bnu_{t-1}})}\right)\Vert  \bom_{t}\Vert  \right]
    \le
    \frac{\eta(1-\bone)}{\left(\sqrt{1-\frac{\bone}{\sqrt{\btwo}}}\right)^3}\dE^{|\fil_t} \left[ \Vert \bG_{t}\Vert  \left(\frac{\sqrt{1-\beta_2} \Vert \bg_{t} \Vert^2 }{\sqrt{\btwo \bnu_{t-1}}(\sqrt{\bnu_{t}}+\sqrt{\btwo \bnu_{t-1}})}\right)  \right]
    \\
    \nonumber
     \overset{(\circ)}{\le } &    \frac{\eta(1-\bone)}{\left(\sqrt{1-\frac{\bone}{\sqrt{\btwo}}}\right)^3} \frac{\Vert \bG_{t}\Vert}{\sqrt{\btwo \bnu_{t-1}} }\sqrt{\dE^{|\fil_t}  \Vert \bg_{t} \Vert ^2  }\sqrt{\dE^{|\fil_t} \frac{  \Vert \bg_{t} \Vert ^2}{(\sqrt{\bnu_{t}}+\sqrt{\btwo \bnu_{t-1}})^2}  }
     \overset{(\bullet)}{\le }  \frac{\eta(1-\bone)\sqrt{1-\btwo}}{\left(\sqrt{1-\frac{\bone}{\sqrt{\btwo}}}\right)^3} \frac{\Vert \bG_{t}\Vert}{\sqrt{\btwo \bnu_{t-1}} }\sqrt{\sigma_0^2+\sigma_1^2 \Vert\bG_{t} \Vert^2 }\sqrt{\dE^{|\fil_t} \frac{  \Vert \bg_{t} \Vert ^2}{(\sqrt{\bnu_{t}}+\sqrt{\btwo \bnu_{t-1}})^2}  }
     \\
     \nonumber
     \le &   \frac{\eta(1-\bone)\sqrt{1-\btwo}}{\left(\sqrt{1-\frac{\bone}{\sqrt{\btwo}}}\right)^3} \frac{\Vert \bG_{t}\Vert}{\sqrt{\btwo \bnu_{t-1}} }(\sigma_0+\sigma_1 \Vert \bG_{t}\Vert)\sqrt{\dE^{|\fil_t} \frac{  \Vert \bg_{t} \Vert ^2}{(\sqrt{\bnu_{t}}+\sqrt{\btwo \bnu_{t-1}})^2}  },
\end{align}
\normalsize
where  inequality $(\circ)$ is due to Holder's inequality, and inequality $(\bullet)$ is due to Assumption \ref{assum: noise}. Applying mean-value inequality respectively to  $  \frac{\eta(1-\bone)\sqrt{1-\btwo}}{ \left(\sqrt{1-\frac{\bone}{\sqrt{\btwo}}}\right)^3}\dE^{|\fil_t} \frac{\Vert \bG_t\Vert}{\sqrt{\btwo \bnu_{t-1}} }\sigma_0\sqrt{\dE^{|\fil_t} \frac{  \Vert \bg_{t} \Vert^2}{(\sqrt{\bnu_{t}}+\sqrt{\btwo \bnu_{t-1}})^2}  }$ and $  \frac{\eta(1-\bone)\sqrt{1-\btwo}}{\left(\sqrt{1-\frac{\bone}{\sqrt{\btwo}}}\right)^3}\dE^{|\fil_t} \frac{\Vert \bG_t\Vert}{\sqrt{\btwo \bnu_{t-1}} }\sigma_1 \Vert \bG_{t}\Vert\sqrt{\dE^{|\fil_t} \frac{  \Vert \bg_t \Vert^2}{(\sqrt{\bnu_{t}}+\sqrt{\btwo \bnu_{t-1}})^2}  }$ and due to $\bone\le \btwo$, we obtain that the right-hand-side of the above inequality can be bounded by 
\begin{align}
\nonumber
    &\frac{1}{16}  \eta \frac{1-\bone}{1-\frac{\bone}{\sqrt{\btwo}}}\frac{\Vert \bG_t\Vert^2}{\sqrt{\btwo \bnu_{t-1} }}+\frac{4\eta(1-\btwo)\sigma_0^2}{\left(1-\frac{\bone}{\sqrt{\btwo}}\right)^2\sqrt{\btwo \bnu_{t-1} } } \dE^{|\fil_t} \frac{  \Vert \bg_t \Vert^2}{(\sqrt{\bnu_{t}}+\sqrt{\btwo \bnu_{t-1}})^2}
    \\
\nonumber
    &+ \frac{1}{16}    \eta\frac{1-\bone}{1-\frac{\bone}{\sqrt{\btwo}}}\frac{\Vert \bG_t\Vert^2}{\sqrt{\btwo \bnu_{t-1}} }+4\eta \frac{(1-\btwo)(1-\bone)}{(1-\frac{\bone}{\sqrt{\btwo}})^2} \sigma_1^2\frac{\Vert \bG_t\Vert^2}{\sqrt{\btwo \bnu_{t-1}} }\dE^{|\fil_t}  \frac{  \Vert \bg_t \Vert^2}{(\sqrt{\bnu_{t}}+\sqrt{\btwo \bnu_{t-1}})^2}
    \\
    \nonumber
    \le & \frac{1}{8}  \eta \frac{\Vert \bG_t\Vert^2}{\sqrt{\btwo \bnu_{t-1} }}+\frac{8\eta (1-\btwo)\sigma_0^2 }{\left(1-{\bone}\right)^2} \dE^{|\fil_t} \frac{  \Vert \bg_t \Vert^2}{\sqrt{\btwo \bnu_{t-1}}(\sqrt{\bnu_{t}}+\sqrt{\btwo \bnu_{t-1}})^2}
    \\
    \nonumber
    &+ \frac{1}{8}    \eta\frac{\Vert \bG_t\Vert^2}{\sqrt{\btwo \bnu_{t-1}} }+16\eta \frac{(1-\btwo)}{(1-\bone)} \sigma_1^2\frac{\Vert \bG_t\Vert^2}{\sqrt{\btwo \bnu_{t-1}} }\dE^{|\fil_t}  \frac{  \Vert \bg_t \Vert^2}{(\sqrt{\bnu_{t}}+\sqrt{\btwo \bnu_{t-1}})^2}.
\end{align}
Meanwhile, we have
\begin{align*}
   &\left( \frac{1}{\sqrt{\btwo \bnu_{t-1}}}- \frac{1}{\sqrt{ \bnu_{t}}}\right) \Vert \bG_{t} \Vert^2
   \\
   =& \frac{\Vert \bG_{t} \Vert^2((1-\btwo)  \Vert \bg_t \Vert^2)}{\sqrt{\btwo \bnu_{t-1}}\sqrt{ \bnu_{t}} (\sqrt{\btwo \bnu_{t-1}}+\sqrt{ \bnu_{t}})} \ge \frac{\Vert \bG_{t} \Vert^2((1-\btwo)  \Vert \bg_t \Vert^2)}{\sqrt{\btwo \bnu_{t-1}} (\sqrt{\btwo \bnu_{t-1}}+\sqrt{ \bnu_{t}})^2},
\end{align*}
and
\begin{equation*}
    \frac{1}{\sqrt{\btwo \bnu_{t-1}}}- \frac{1}{\sqrt{ \bnu_{t}}} = \frac{(1-\btwo)  \Vert \bg_t \Vert^2}{\sqrt{\btwo \bnu_{t-1}}\sqrt{ \bnu_{t}} (\sqrt{\btwo \bnu_{t-1}}+\sqrt{ \bnu_{t}})} \ge \frac{(1-\btwo)  \Vert \bg_t \Vert^2}{\sqrt{\btwo \bnu_{t-1}}(\sqrt{\btwo \bnu_{t-1}}+\sqrt{ \bnu_{t}})^2}.
\end{equation*}
Combing the above two inequalities, we further obtain
\begin{align}
\nonumber
    &  \frac{\eta}{1-\beta_1}\dE^{|\fil_t} \left[ \Vert \bG_{t}\Vert  \left(\frac{(1-\beta_2)\bg_{t}^{ 2}}{\sqrt{\bnu_{t}}\sqrt{\btwo \bnu_{t-1}}(\sqrt{\bnu_{t}}+\sqrt{\btwo \bnu_{t-1}})}\right)\Vert  \bom_{t}\Vert  \right]
    \\
    \nonumber
    \le & \frac{1}{4}    \eta\frac{\Vert \bG_t\Vert^2}{\sqrt{\btwo \bnu_{t-1}} }+\frac{8\eta\sigma_0^2}{\left(1-\bone\right)^2} \dE^{|\fil_t} \left(\frac{1}{\sqrt{\btwo \bnu_{t-1}}}- \frac{1}{\sqrt{ \bnu_{t}}}\right)
    + \eta \frac{16}{(1-\bone) } \sigma_1^2 \dE^{|\fil_t}\left( \frac{1}{\sqrt{\btwo \bnu_{t-1}}}- \frac{1}{\sqrt{ \bnu_{t}}}\right) \Vert \bG_{t} \Vert^2.
\end{align}

Furthermore, due to Assumption \ref{assum: objective}, we have (we define $G_0\triangleq G_1$)
\begin{align*}
    \Vert \bG_{t+1} \Vert^2\le & \Vert \bG_{t}\Vert^2+2\Vert \bG_t\Vert \Vert \bG_{t+1}-\bG_{t}\Vert  + \Vert\bG_{t+1}-\bG_{t}\Vert ^2
    \\
    \le & \Vert \bG_{t}\Vert^2+2(L_0+L_1 \Vert \bG_t \Vert)\Vert \bG_t\Vert \Vert \bw_{t+1}-\bw_{t}\Vert  + 2(L_0^2+L_1^2\Vert \bG_t \Vert^2) \Vert \bw_{t+1}-\bw_{t}\Vert^2,
\end{align*}
which further leads to
\begin{align}
\nonumber
    &\frac{1}{\sqrt{ \bnu _{t}}} \Vert \bG_{t} \Vert^2
    \\
\nonumber
    \ge& \frac{1}{\sqrt{\bnu_{t}}} \left(\Vert \bG_{t+1}\Vert^2-2(L_0+L_1 \Vert \bG_t \Vert)\Vert \bG_t\Vert \Vert \bw_{t+1}-\bw_{t}\Vert  -2 (L_0^2+L_1^2\Vert \bG_t \Vert^2) \Vert \bw_{t+1}-\bw_{t}\Vert^2\right)
    \\
\nonumber
    \ge &  \frac{1}{\sqrt{\bnu_{t}}}\Vert \bG_{t+1}\Vert^2-\frac{1-\bone}{128\sigma_1^2} \frac{\Vert \bG_t\Vert^2}{\sqrt{\bnu_t}}-\frac{128L_0^2 \sigma_1^2}{(1-\bone)\sqrt{\bnu_t}} \Vert \bw_{t+1}-\bw_t \Vert^2 -2L_1\frac{\Vert \bG_t \Vert^2}{\sqrt{\bnu_t}} \Vert \bw_{t+1}-\bw_t \Vert -2\frac{L_0^2}{\sqrt{\bnu_t}} \Vert \bw_{t+1}-\bw_t \Vert^2
    \\
\nonumber
    &-\frac{2L_1^2 \Vert \bG_t \Vert^2 \Vert \bw_{t+1}-\bw_t \Vert^2}{\sqrt{\bnu_t}}
    \\
\nonumber
    \ge &  \frac{1}{\sqrt{\bnu_{t}}}\Vert \bG_{t+1}\Vert^2-\frac{1-\bone}{128\sigma_1^2} \frac{\Vert \bG_t\Vert^2}{\sqrt{\bnu_t}}-\frac{128L_0^2 \sigma_1^2}{(1-\bone)\sqrt{\bnu_t}} \Vert \bw_{t+1}-\bw_t \Vert^2 -\frac{1-\bone}{128\sigma_1^2} \frac{\Vert \bG_t\Vert^2}{\sqrt{\bnu_t}} -2\frac{L_0^2}{\sqrt{\bnu_t}} \Vert \bw_{t+1}-\bw_t \Vert^2
    \\
\label{eq: lower_bound_G_t}
    &-\frac{1-\bone}{128\sigma_1^2} \frac{\Vert \bG_t\Vert^2}{\sqrt{\bnu_t}},
\end{align}
where  the second inequality is due to Young's inequality, and the last inequality is due to $\Vert \bw_{t+1}-\bw_t \Vert \le \frac{\eta(1-\bone)}{\sqrt{1-\btwo}
\sqrt{1-\frac{\bone^2}{\btwo}}} \le \frac{\eta\sqrt{1-\bone}}{\sqrt{1-\btwo}
}\le \frac{1-\bone}{256\sigma_1^2L_1}$.

Applying the above inequality back to the estimation of $\frac{\eta}{1-\beta_1}\dE^{|\fil_t} \left[ \Vert \bG_{t}\Vert  \left(\frac{(1-\beta_2)\bg_{t}^{ 2}}{\sqrt{\bnu_{t}}\sqrt{\btwo \bnu_{t-1}}(\sqrt{\bnu_{t}}+\sqrt{\btwo \bnu_{t-1}})}\right)\Vert  \bom_{t}\Vert  \right]$ leads to that
\begin{align}
\nonumber
  &   \frac{\eta}{1-\bone}\dE^{|\fil_t} \left[ \Vert \bG_{t}\Vert  \left(\frac{(1-\beta_2)\bg_{t}^{ 2}}{\sqrt{\bnu_{t}}\sqrt{\btwo \bnu_{t-1}}(\sqrt{\bnu_{t}}+\sqrt{\btwo \bnu_{t-1}})}\right)\Vert  \bom_{t}\Vert  \right]
   \\
\nonumber
\le & \frac{5}{8}    \eta\frac{\Vert \bG_t\Vert^2}{\sqrt{\btwo \bnu_{t-1}} }+\frac{8\eta\sigma_0^2}{\left(1-\bone\right)^2} \dE^{|\fil_t} \left(\frac{1}{\sqrt{\btwo \bnu_{t-1}}}- \frac{1}{\sqrt{ \bnu_{t}}}\right)
    + \eta \frac{16}{(1-\bone) } \sigma_1^2 \dE^{|\fil_t}\left( \frac{\Vert \bG_{t} \Vert^2}{\sqrt{\btwo \bnu_{t-1}}}- \frac{\Vert \bG_{t+1} \Vert^2}{\sqrt{ \bnu_{t}}}\right)
    \\
\nonumber
    +& \frac{32768 L_0^2 \sigma_1^4 \eta^3 }{(1-\bone)(1-\btwo)} \E^{|\gF_t}   \left( \sum_{s=1}^t\frac{ \bone^{t-s} }{\sqrt[4]{\btwo^{3(t-s)}}}  \left(\frac{1}{\sqrt{\btwo \bnu_{s-1}}}- \frac{1}{\sqrt{ \bnu_{s}}}\right) \right).
\end{align}

All in all, we conclude that the first-order term can be bounded by
\small
\begin{align}
\nonumber
   \dE^{|\gF_t} \left[ \left\langle \nabla f(\bw_t) , \bu_{t+1}-\bu_t \right\rangle \right]\le  & - \frac{3}{8}    \eta\frac{\Vert \bG_t\Vert^2}{\sqrt{\btwo \bnu_{t-1}} }+\frac{8\eta\sigma_0^2}{\left(1-\bone\right)^2} \dE^{|\fil_t} \left(\frac{1}{\sqrt{\btwo \bnu_{t-1}}}- \frac{1}{\sqrt{ \bnu_{t}}}\right)
    + \eta \frac{16}{(1-\bone) } \sigma_1^2 \dE^{|\fil_t}\left( \frac{\Vert \bG_{t} \Vert^2}{\sqrt{\btwo \bnu_{t-1}}}- \frac{\Vert \bG_{t+1} \Vert^2}{\sqrt{ \bnu_{t}}}\right)
    \\
    \nonumber
    +& \frac{32768 L_0^2 \sigma_1^4 \eta^3 }{(1-\bone)(1-\btwo)} \E^{|\gF_t}   \left( \sum_{s=1}^t\frac{ \bone^{t-s} }{\sqrt[4]{\btwo^{3(t-s)}}}  \left(\frac{1}{\sqrt{\btwo \bnu_{s-1}}}- \frac{1}{\sqrt{ \bnu_{s}}}\right) \right).
\end{align}
\normalsize
    
    \textbf{Analysis for the second-order term.} To recall, the second order term is $ \frac{1}{2}(L_0+L_1 \Vert \nabla f(\bw_t)\Vert)(\Vert \bu_{t+1} -\bw_t\Vert+  \Vert \bu_t-\bw_t\Vert) \Vert\bu_{t+1}-\bu_t\Vert$. Before we start, we have the following expansion for $\bu_{t+1}-\bu_t$: (here the operations are all coordinate-wisely)
    \begin{align}
    \nonumber
         \bu_{t+1} -\bu_t  = &  \frac{\bw_{t+1}-\bw_t-\frac{\bone}{\sqrt{\btwo}}(\bw_t-\bw_{t-1})}{1-\frac{\bone}{\sqrt{\btwo}}} 
        \\
    \nonumber
        =& \frac{-\eta \frac{\bom_t}{\sqrt{\bnu_{t}}}+\eta\frac{\bone}{\sqrt{\btwo}}\frac{\bom_{t-1}}{\sqrt{\bnu_{t-1}}}}{1-\frac{\bone}{\sqrt{\btwo}}} = \frac{-\eta \frac{\bom_t}{\sqrt{\bnu_{t}}}+\eta\bone\frac{\bom_{t-1}}{\sqrt{\bnu_{t}}}-\eta\bone\frac{\bom_{t-1}}{\sqrt{\bnu_{t}}}+\eta\frac{\bone}{\sqrt{\btwo}}\frac{\bom_{t-1}}{\sqrt{\bnu_{t-1}}}}{1-\frac{\bone}{\sqrt{\btwo}}}
        \\
    \nonumber
        =&  \frac{-\eta \frac{(1-\bone)\bg_t}{\sqrt{\bnu_{t}}}+\eta\frac{\bone(1-\btwo) \Vert  \bg_t\Vert^2 }{\sqrt{\btwo}}\frac{\bom_{t-1}}{\sqrt{\bnu_{t-1}}\sqrt{\bnu_{t}}(\sqrt{\bnu_{t}}+\sqrt{\btwo\bnu_{t-1}})}}{1-\frac{\bone}{\sqrt{\btwo}}}
    \end{align} Then firstly, we have
    \small
    \begin{align*}
       &\frac{1}{2}L_0(\Vert \bu_{t+1} -\bw_t\Vert+  \Vert \bu_t-\bw_t\Vert) \Vert\bu_{t+1}-\bu_t\Vert
       \\
       \le & \frac{1}{2} L_0 \left(\Vert \bu_{t+1} -\bu_t\Vert^2+ \frac{1}{2} \Vert \bu_{t+1}-\bw_t \Vert^2 +\frac{1}{2}\Vert \bu_t -\bw_t \Vert^2\right)
       \\
       = & \frac{1}{2}L_0 \left(\left\Vert  \frac{-\eta \frac{(1-\bone)\bg_t}{\sqrt{\bnu_{t}}}+\eta\frac{\bone(1-\btwo) \Vert  \bg_t\Vert^2 }{\sqrt{\btwo}}\frac{\bom_{t-1}}{\sqrt{\bnu_{t-1}}\sqrt{\bnu_{t}}(\sqrt{\bnu_{t}}+\sqrt{\btwo\bnu_{t-1}})}}{1-\frac{\bone}{\sqrt{\btwo}}} \right\Vert^2 + \frac{1}{2} \left\Vert  \frac{\frac{\bone}{\sqrt{\btwo}} }{1- \frac{\bone}{\sqrt{\btwo}} }( \bw_t-\bw_{t-1} )\right\Vert^2+ \frac{1}{2} \left\Vert \frac{1 }{1- \frac{\bone}{\sqrt{\btwo}} } ( \bw_{t+1}-\bw_{t} ) \right\Vert^2 \right)
       \\
       \le & \frac{L_0\eta^2 }{2} \left(\left(\frac{1-\bone}{1-\frac{\bone}{\sqrt{\btwo}}}+ \frac{\bone(1-\bone)}{(\sqrt{\btwo}-\bone)\sqrt{1-\frac{\bone^2}{\btwo}}}\right)^2 \left\Vert \frac{\bg_t}{\sqrt{\bnu_t}} \right\Vert^2 +  \frac{1}{2}\left(\frac{\frac{\bone}{\sqrt{\btwo}} }{1- \frac{\bone}{\sqrt{\btwo}} }\right)^2\left\Vert \frac{\bom_{t-1}}{\sqrt{\bnu_{t-1}}} \right\Vert^2+\frac{1}{2}\left(\frac{1 }{1- \frac{\bone}{\sqrt{\btwo}} }\right)^2\left\Vert \frac{\bom_{t}}{\sqrt{\bnu_{t}}} \right\Vert^2 \right)
       \\
       \le  &\frac{L_0\eta^2 }{2} \left(2\left(\frac{1-\bone}{1-\frac{\bone}{\sqrt{\btwo}}}+ \frac{\bone(1-\bone)}{(\sqrt{\btwo}-\bone)\sqrt{1-\frac{\bone^2}{\btwo}}}\right)^2 \left\Vert \frac{\bg_t}{\sqrt{\bnu_t}} \right\Vert^2 +  \left(\frac{\frac{\bone}{\sqrt{\btwo}} }{1- \frac{\bone}{\sqrt{\btwo}} }\right)^2\left\Vert \frac{\bom_{t-1}}{\sqrt{\bnu_{t-1}}} \right\Vert^2\right)
       \\
       \le & \frac{L_0\eta^2 }{2} \left(\frac{32}{(1-\bone)^2}\left\Vert \frac{\bg_t}{\sqrt{\bnu_t}} \right\Vert^2 +  \frac{4}{(1-\bone)^2}\left\Vert \frac{\bom_{t-1}}{\sqrt{\bnu_{t-1}}} \right\Vert^2\right) .
    \end{align*}
    \normalsize
    Secondly, we have 
    \begin{align*}
         &\frac{1}{2}L_1\Vert \nabla f(\bw_t)\Vert (\Vert \bu_{t+1} -\bw_t\Vert+  \Vert \bu_t-\bw_t\Vert) \Vert\bu_{t+1}-\bu_t\Vert
         \\
         \le & \frac{1}{2}L_1\Vert \nabla f(\bw_t)\Vert (2\Vert \bu_{t+1} -\bw_t\Vert+  \Vert \bu_{t+1}-\bu_t\Vert)\left( \frac{\left\Vert\eta \frac{(1-\bone)\bg_t}{\sqrt{\bnu_{t}}}\right\Vert}{1-\frac{\bone}{\sqrt{\btwo}}}+\frac{\eta\frac{\bone(1-\btwo) \Vert  \bg_t\Vert^2 }{\sqrt{\btwo}}\frac{\Vert \bom_{t-1} \Vert}{\sqrt{\bnu_{t-1}}\sqrt{\bnu_{t}}(\sqrt{\bnu_{t}}+\sqrt{\btwo\bnu_{t-1}})}}{1-\frac{\bone}{\sqrt{\btwo}}}\right)
         \\
         \overset{(*)}{\le} & \frac{1}{2}L_1\Vert \nabla f(\bw_t)\Vert (2\Vert \bu_{t+1} -\bw_t\Vert+  \Vert \bu_{t+1}-\bu_t\Vert)\left( \frac{\left\Vert\eta \frac{(1-\bone)\bg_t}{\sqrt{\bnu_{t}}}\right\Vert}{1-\frac{\bone}{\sqrt{\btwo}}}+
         \frac{\eta\frac{\bone(1-\bone) }{\sqrt{\btwo}}\frac{\Vert \bg_{t} \Vert}{\sqrt{\bnu_{t}}}}{(1-\frac{\bone}{\sqrt{\btwo}})\sqrt{1-\frac{\bone^2}{\btwo}}}\right)
         \\
         =& \frac{L_1}{2}\eta \left(\frac{1-\bone}{1-\frac{\bone}{\sqrt{\btwo}}}+ \frac{\bone(1-\bone)}{(\sqrt{\btwo}-\bone)\sqrt{1-\frac{\bone^2}{\btwo}}}\right) \Vert \nabla f(\bw_t ) \Vert (2\Vert \bu_{t+1} -\bw_t\Vert+  \Vert \bu_t-\bu_{t+1}\Vert)  \frac{\Vert \bg_t \Vert}{\sqrt{\bnu_t}} 
         \\
         \overset{(\circ)}{=} &\frac{L_1}{2}\eta \left(\frac{1-\bone}{1-\frac{\bone}{\sqrt{\btwo}}}+ \frac{\bone(1-\bone)}{(\sqrt{\btwo}-\bone)\sqrt{1-\frac{\bone^2}{\btwo}}}\right) \Vert \bG_t \Vert \left(\Vert \bu_{t+1} -\bu_t \Vert +  2\frac{1 }{1- \frac{\bone}{\sqrt{\btwo}} } \eta \left\Vert \frac{\bom_t}{\sqrt{\bnu_t}}\right\Vert\right)  \frac{\Vert \bg_t \Vert}{\sqrt{\bnu_t}}
         \\
         \le &  \frac{2 L_1\eta}{\sqrt{1-\bone}} \Vert \bG_t \Vert \left(\Vert \bu_{t+1} -\bu_t \Vert +  4\frac{1 }{1- \bone } \eta \left\Vert \frac{\bom_t}{\sqrt{\bnu_t}}\right\Vert\right)  \frac{\Vert \bg_t \Vert}{\sqrt{\bnu_t}}.
    \end{align*}
    where inequality $(*)$ is due to that $\frac{\Vert \bom_{t-1} \Vert}{\sqrt{\bnu_{t-1}}} \le \frac{1-\bone}{\sqrt{1-\btwo} \sqrt{1-\frac{\bone^2}{\btwo}}}$, $\frac{\Vert \bg_t \Vert }{\sqrt{\bnu_t}} \le \frac{1}{\sqrt{1-\btwo}}$,  equation $(\circ)$ is due to $  \bu_{t}-\bw_t   = \frac{\frac{\bone}{\sqrt{\btwo}} }{1- \frac{\bone}{\sqrt{\btwo}} }( \bw_t-\bw_{t-1} ) $ and  $ \bu_{t+1}-\bw_t   = \frac{1 }{1- \frac{\bone}{\sqrt{\btwo}} } ( \bw_{t+1}-\bw_{t} )$, and the last inequality is due to $\bone \le \btwo$. As for the term $\Vert \bG_t \Vert \frac{ \Vert\bom_t\Vert}{\sqrt{\bnu_t}}  \frac{\Vert \bg_t \Vert }{\sqrt{\bnu_t}} $, we have
    \begin{align*}
        \E^{|\gF_t}\frac{\Vert \bG_t \Vert \Vert \bom_t \Vert  \Vert \bg_t \Vert  }{\bnu_t}  \le &    \E^{|\gF_t}\frac{\Vert \bG_t \Vert \Vert \bom_t \Vert  \Vert \bg_t \Vert  }{\sqrt{\bnu_t} \sqrt{\btwo \bnu_{t-1}}} 
        \le \frac{\Vert \bG_t \Vert }{\sqrt{\btwo \bnu_{t-1}}} \sqrt{\E^{|\gF_t} \Vert \bg_t \Vert^2}\sqrt{\E^{|\gF_t} \frac{\Vert \bom_t \Vert^2}{\bnu_t}}
        \\
        \le & \frac{\Vert \bG_t \Vert }{\sqrt{\btwo\bnu_{t-1}}} (\sigma_0+\sigma_1 \Vert \bG_t \Vert )\sqrt{\E^{|\gF_t} \frac{\Vert \bom_t \Vert^2}{\bnu_t}}
        \\
        \le &\frac{\sqrt{(1-\bone)^3}}{256\eta L_1}\frac{\Vert \bG_t \Vert^2 }{\sqrt{\btwo \bnu_{t-1}}} + \frac{64 \eta\sigma_0^2 L_1}{\sqrt{(1-\bone)^3}{\sqrt{\btwo \bnu_{t-1}}} } \E^{|\gF_t} \frac{\Vert \bom_t \Vert^2}{\bnu_t}+\frac{\sqrt{(1-\bone)^3}}{256\eta L_1}\frac{\Vert \bG_t \Vert^2 }{\sqrt{\btwo \bnu_{t-1}}}
        \\
        &+ \frac{64\eta\sigma_1^2 L_1 \Vert \bG_t \Vert^2}{\sqrt{(1-\bone)^3}{\sqrt{\btwo \bnu_{t-1}}}  } \E^{|\gF_t} \frac{\Vert \bom_t \Vert^2}{\bnu_t}.
    \end{align*}
    
    By applying Lemma \ref{lem: momentum_sum_2}, we further obtain
    \begin{align*}
        & \E^{|\gF_t}\frac{\Vert \bG_t \Vert \Vert \bom_t \Vert  \Vert \bg_t \Vert  }{\bnu_t}
         \\
         \le&  \frac{\sqrt{(1-\bone)^3}}{256\eta L_1}\frac{\Vert \bG_t \Vert^2 }{\sqrt{\btwo \bnu_{t-1}}} + \frac{64\eta\sigma_0^2 L_1}{\sqrt{(1-\bone)^3}{\sqrt{\btwo \bnu_{t-1}}} } \E^{|\gF_t} \frac{\Vert \bom_t \Vert^2}{\bnu_t}+\frac{\sqrt{(1-\bone)^3}}{256\eta L_1}\frac{\Vert \bG_t \Vert^2 }{\sqrt{\btwo \bnu_{t-1}}}
        \\
        &+ \frac{64\eta\sigma_1^2 L_1 }{\sqrt{(1-\bone)^3}  } \E^{|\gF_t} \left(4 (1-\bone) \left( \sum_{s=1}^t\frac{ \sqrt[8]{\bone^{t-s}} \Vert \bg_s\Vert^2\Vert \bG_s \Vert^2}{\sqrt{\btwo\bnu_{s-1}}\sqrt{\bnu_s^2}}\right) + 8 \frac{1-\bone}{1-\btwo} \frac{L_1^2}{L_0^2}\left(\sum_{s=1}^t\sqrt[8]{\bone^{t-s}}\left(\frac{1}{\sqrt{\btwo \bnu_{s-1}}} -\frac{1}{\sqrt{\bnu_s}}\right) \right)\right),
    \end{align*}
    which further indicates that
 \begin{align*}
        &\frac{8L_1 \eta^2}{(1-\bone)^{\frac{3}{2}}}\E^{|\gF_t}\Vert \bG_t \Vert  \frac{ \Vert \bg_t\Vert}{\sqrt{\bnu_t}}\frac{\Vert \bom_t \Vert }{\sqrt{\bnu_t}}
        \\
        \le &  
\frac{1}{16} \eta\frac{\Vert \bG_t \Vert^2 }{\sqrt{\btwo \bnu_{t-1}}} + \frac{8L_1 \eta^2}{(1-\bone)^{\frac{3}{2}}}\frac{64\eta\sigma_0^2 L_1}{\sqrt{(1-\bone)^3}{\sqrt{\btwo \bnu_{t-1}}} } \E^{|\gF_t} \frac{\Vert \bom_t \Vert^2}{\bnu_t}
        \\
        &+\frac{64\eta\sigma_1^2 L_1 }{\sqrt{(1-\bone)^3}  } \E^{|\gF_t}  \frac{32 L_1 \eta^2}{(1-\bone)^{\frac{1}{2}}}  \left( \sum_{s=1}^t\frac{ \sqrt[8]{\bone^{t-s}} \Vert \bg_s\Vert^2\Vert \bG_s \Vert^2}{\sqrt{\btwo\bnu_{s-1}}\sqrt{\bnu_s^2}}\right) 
        \\
        &+ \frac{64 L_1 \eta^2}{(1-\bone)^{\frac{3}{2}}}\frac{64\eta\sigma_1^2 L_1 }{\sqrt{(1-\bone)^3}  } \frac{1-\bone}{1-\btwo} \frac{L_1^2}{L_0^2}\left(\sum_{s=1}^t\sqrt[8]{\bone^{t-s}}\left(\frac{1}{\sqrt{\btwo \bnu_{s-1}}} -\frac{1}{\sqrt{\bnu_s}}\right) \right)
        \\
        \le & \frac{1}{16} \eta\frac{\Vert \bG_t \Vert^2 }{\sqrt{\btwo \bnu_{t-1}}} + \frac{8L_1 \eta^2}{(1-\bone)^{\frac{3}{2}}}\frac{64\eta\sigma_0^2 L_1}{\sqrt{(1-\bone)^3} } \E^{|\gF_t} 4 (1-\bone) \left( \sum_{s=1}^t\sqrt[4]{\bone^{t-s}} \frac{2}{1-\btwo} \left(\frac{1}{\sqrt{\btwo \bnu_{s-1}}}- \frac{1}{\sqrt{ \bnu_{s}}}\right) \right)
        \\
        &+\frac{64\eta\sigma_1^2 L_1 }{\sqrt{(1-\bone)^3}  } \E^{|\gF_t}  \frac{32 L_1 \eta^2}{(1-\bone)^{\frac{1}{2}}}  \left( \sum_{s=1}^t\frac{ \sqrt[8]{\bone^{t-s}} \Vert \bg_s\Vert^2\Vert \bG_s \Vert^2}{\sqrt{\btwo\bnu_{s-1}}\sqrt{\bnu_s^2}}\right) 
        \\
        &+ \frac{64 L_1 \eta^2}{(1-\bone)^{\frac{3}{2}}}\frac{64\eta\sigma_1^2 L_1 }{\sqrt{(1-\bone)^3}  } \frac{1-\bone}{1-\btwo} \frac{L_1^2}{L_0^2}\left(\sum_{s=1}^t\sqrt[8]{\bone^{t-s}}\left(\frac{1}{\sqrt{\btwo \bnu_{s-1}}} -\frac{1}{\sqrt{\bnu_s}}\right) \right).
    \end{align*}
    Here the last inequality is due to Lemma \ref{lem: momentum_sum_1}.


Following similar reasoning, we have $\Vert \bu_{t+1} -\bu_t \Vert \le \frac{4 \eta}{\sqrt{1-\btwo}} \frac{\Vert \bg_t \Vert }{\sqrt{\bnu_t}}$, and 

    \begin{align*}
        & \E^{|\gF_t}\frac{\Vert \bG_t \Vert \Vert \bg_t \Vert  \Vert \bg_t \Vert  }{\bnu_t}
         \\
         \le&  \frac{\sqrt{(1-\bone)^3}}{256\eta L_1}\frac{\Vert \bG_t \Vert^2 }{\sqrt{\btwo \bnu_{t-1}}} + \frac{64 \eta\sigma_0^2 L_1}{\sqrt{(1-\bone)^3}{\sqrt{\btwo \bnu_{t-1}}} } \E^{|\gF_t} \frac{\Vert \bg_t \Vert^2}{\bnu_t}+\frac{\sqrt{(1-\bone)^3}}{256\eta L_1}\frac{\Vert \bG_t \Vert^2 }{\sqrt{\btwo \bnu_{t-1}}}
        \\
        &+ \frac{64\eta\sigma_1^2 L_1 \Vert \bG_t \Vert^2}{\sqrt{(1-\bone)^3}{\sqrt{\btwo \bnu_{t-1}}}  } \E^{|\gF_t} \frac{\Vert \bg_t \Vert^2}{\bnu_t}
        \\
        \le &\frac{\sqrt{(1-\bone)^3}}{256\eta L_1}\frac{\Vert \bG_t \Vert^2 }{\sqrt{\btwo \bnu_{t-1}}} +\frac{128\eta\sigma_1^2 L_1 }{\sqrt{(1-\bone)^3}(1-\btwo)  } \E^{|\gF_t} \left(\frac{1}{\sqrt{\btwo \bnu_{t-1}}}-\frac{1}{\sqrt{ \bnu_{t}}}\right)+\frac{\sqrt{(1-\bone)^3}}{256\eta L_1}\frac{\Vert \bG_t \Vert^2 }{\sqrt{\btwo \bnu_{t-1}}}
        \\
        &+ \frac{128\eta\sigma_1^2 L_1 \Vert \bG_t \Vert^2}{\sqrt{(1-\bone)^3}(1-\btwo)  } \E^{|\gF_t} \left(\frac{1}{\sqrt{\btwo \bnu_{t-1}}}-\frac{1}{\sqrt{ \bnu_{t}}}\right).
    \end{align*}
Then, following the similar routine as Eq. (\ref{eq: lower_bound_G_t}) and due to $\frac{\eta }{\sqrt{1-\btwo}} \le \frac{1-\bone}{128 L_1\sigma_1}$, we have
    \begin{align*}
        &\frac{2L_1 \eta}{(1-\bone)^{\frac{1}{2}}}\E^{|\gF_t}\Vert \bG_t \Vert \Vert \bu_{t+1}-\bu_t \Vert \frac{\Vert \bg_t \Vert }{\sqrt{\bnu_t}} \le \frac{8L_1 \eta^2}{(1-\bone)}\E^{|\gF_t}\Vert \bG_t \Vert  \frac{\Vert \bg_t \Vert^2 }{\bnu_t}
        \\
        \le &  
\frac{1}{16}\frac{\Vert \bG_t \Vert^2 }{\sqrt{\btwo \bnu_{t-1}}} + \frac{8L_1 \eta^2}{(1-\bone)^{\frac{3}{2}}}\frac{128\eta\sigma_1^2 L_1 }{\sqrt{(1-\bone)^3}(1-\btwo)  } \E^{|\gF_t} \left(\frac{1}{\sqrt{\btwo \bnu_{t-1}}}-\frac{1}{\sqrt{ \bnu_{t}}}\right)
        \\
        &+ \frac{8L_1 \eta^2}{(1-\bone)^{\frac{3}{2}}}\frac{128\eta\sigma_1^2 L_1 }{\sqrt{(1-\bone)^3}(1-\btwo)  } \E^{|\gF_t} \left(\frac{\Vert\bG_t \Vert^2}{\sqrt{\btwo \bnu_{t-1}}}-\frac{\Vert\bG_{t+1} \Vert^2}{\sqrt{ \bnu_{t}}}\right)
        \\
        &+ \frac{1}{16}\eta  \frac{\Vert \bG_t \Vert^2 }{\sqrt{\btwo \bnu_{t-1}}}+ \eta^3  \frac{64 L_0^2}{(1-\bone)^2}\E^{|\gF_t}  4 (1-\bone) \left( \sum_{s=1}^t\sqrt[4]{\bone^{t-s}} \frac{2}{1-\btwo} \left(\frac{1}{\sqrt{\btwo \bnu_{s-1}}}- \frac{1}{\sqrt{ \bnu_{s}}}\right) \right)
        .
    \end{align*}

    Putting all the estimations together, we have that the second-order term can be bounded by (note here due to the complexity of coefficients, we use $\Poly(L_0,L_1,\sigma_0,\sigma_1, \frac{1}{1-\bone})$ to denote the polynomial function of $L_0,L_1,\sigma_0,\sigma_1$, $\frac{1}{1-\bone}$)

    \begin{align}
    \nonumber
        &\E^{|\gF_t}\frac{1}{2}(L_0+L_1 \Vert \nabla f(\bw_t)\Vert)(\Vert \bu_{t+1} -\bw_t\Vert+  \Vert \bu_t-\bw_t\Vert) \Vert\bu_{t+1}-\bu_t\Vert
    \\
    \nonumber
    \le & \frac{L_0\eta^2 }{2} \left(\frac{32}{(1-\bone)^2}\left\Vert \frac{\bg_t}{\sqrt{\bnu_t}} \right\Vert^2 +  \frac{4}{(1-\bone)^2}\left\Vert \frac{\bom_{t-1}}{\sqrt{\bnu_{t-1}}} \right\Vert^2\right)+  \frac{3}{16}  \eta  \frac{\Vert \bG_t \Vert^2 }{\sqrt{\btwo \bnu_{t-1}}} 
    \\
    \nonumber
    &+ \frac{\eta^3}{1-\btwo} \Poly \E^{|\gF_t}  \left( \sum_{s=1}^t\sqrt[8]{\bone^{t-s}} \left(\frac{1}{\sqrt{\btwo \bnu_{s-1}}}- \frac{1}{\sqrt{ \bnu_{s}}}\right) \right)
    \\
    \nonumber
     &+\frac{\eta^3}{1-\btwo} \Poly \E^{|\gF_t} \left(\frac{\Vert\bG_t \Vert^2}{\sqrt{\btwo \bnu_{t-1}}}-\frac{\Vert\bG_{t+1} \Vert^2}{\sqrt{ \bnu_{t}}}\right)
     \\
     \nonumber & + \frac{64\eta\sigma_1^2 L_1 }{\sqrt{(1-\bone)^3}  } \E^{|\gF_t}  \frac{32 L_1 \eta^2}{(1-\bone)^{\frac{1}{2}}}  \left( \sum_{s=1}^t\frac{ \sqrt[8]{\bone^{t-s}} \Vert \bg_s\Vert^2\Vert \bG_s \Vert^2}{\sqrt{\btwo\bnu_{s-1}}\sqrt{\bnu_s^2}}\right) 
    \end{align}

    Here in the second inequality we use $\btwo\ge \bone$, and in the last inequality we use $\frac{\eta}{\sqrt{1-\btwo}} =\frac{\sqrt{1-\frac{\bone^2}{\btwo}}(1- \frac{\bone}{\sqrt{\btwo}})^2}{1024\sigma_1^2(L_1+L_0)(1-\bone)}$.

Applying the estimations of the first-order term  and the  second-order term back into the descent lemma, we derive that 
\begin{align}
\nonumber
        \E^{|\gF_t} f(\bu_{t+1})\le & f(\bu_t) - \frac{3}{16}    \eta   \E^{|\gF_t}\frac{\Vert \bG_t\Vert^2}{\sqrt{\btwo \bnu_{t-1}} }+ \frac{L_0\eta^2 }{2} \left(\frac{32}{(1-\bone)^2}\left\Vert \frac{\bg_t}{\sqrt{\bnu_t}} \right\Vert^2 +  \frac{4}{(1-\bone)^2}\left\Vert \frac{\bom_{t-1}}{\sqrt{\bnu_{t-1}}} \right\Vert^2\right)  
    \\\nonumber
    &+ \frac{\eta^3}{1-\btwo} \Poly \E^{|\gF_t}  \left( \sum_{s=1}^t\sqrt[8]{\bone^{t-s}} \left(\frac{1}{\sqrt{\btwo \bnu_{s-1}}}- \frac{1}{\sqrt{ \bnu_{s}}}\right) \right)
    \\\nonumber
     &+\left(\frac{\eta^3}{1-\btwo}+\eta\right) \Poly \E^{|\gF_t} \left(\frac{\Vert\bG_t \Vert^2}{\sqrt{\btwo \bnu_{t-1}}}-\frac{\Vert\bG_{t+1} \Vert^2}{\sqrt{ \bnu_{t}}}\right)
     \\\nonumber
     &+ \eta \Poly \dE^{|\fil_t}\left( \frac{\Vert \bG_{t} \Vert^2}{\sqrt{\btwo \bnu_{t-1}}}- \frac{\Vert \bG_{t+1} \Vert^2}{\sqrt{ \bnu_{t}}}\right)
     \\
     \label{eq: martingale_evidence}
    & + \frac{64\eta\sigma_1^2 L_1 }{\sqrt{(1-\bone)^3}  } \E^{|\gF_t}  \frac{32 L_1 \eta^2}{(1-\bone)^{\frac{1}{2}}}  \left( \sum_{s=1}^t\frac{ \sqrt[8]{\bone^{t-s}} \Vert \bg_s\Vert^2\Vert \bG_s \Vert^2}{\sqrt{\btwo\bnu_{s-1}}\sqrt{\bnu_s^2}}\right). 
    \end{align}

Constructing stopping time as $\tau \triangleq \min\{t: \Vert \nabla f(\bw_{t+1}) \Vert \le \frac{\sqrt[4]{L_0\sigma_0^2(f(\bw_1)-f^*)}}{\sqrt[4]{T}}\} \wedge T $. Then, denote 
\begin{align*}
    x_t \triangleq& f(\bu_t)- f(\bu_{t+1}) - \frac{3}{16}    \eta\dE\frac{\Vert \bG_t\Vert^2}{\sqrt{\btwo \bnu_{t-1}} }+ \frac{L_0\eta^2 }{2} \left(\frac{32}{(1-\bone)^2}\left\Vert \frac{\bg_t}{\sqrt{\bnu_t}} \right\Vert^2 +  \frac{4}{(1-\bone)^2}\left\Vert \frac{\bom_{t-1}}{\sqrt{\bnu_{t-1}}} \right\Vert^2\right)  
    \\
    &+ \frac{\eta^3}{1-\btwo} \Poly  \left( \sum_{s=1}^t\sqrt[8]{\bone^{t-s}} \left(\frac{1}{\sqrt{\btwo \bnu_{s-1}}}- \frac{1}{\sqrt{ \bnu_{s}}}\right) \right)
    \\
     &+\left(\frac{\eta^3}{1-\btwo}+\eta\right) \Poly \left(\frac{\Vert\bG_t \Vert^2}{\sqrt{\btwo \bnu_{t-1}}}-\frac{\Vert\bG_{t+1} \Vert^2}{\sqrt{ \bnu_{t}}}\right)
     \\
    & + \frac{64\eta\sigma_1^2 L_1 }{\sqrt{(1-\bone)^3}  }   \frac{32 L_1 \eta^2}{(1-\bone)^{\frac{1}{2}}}  \left( \sum_{s=1}^t\frac{ \sqrt[8]{\bone^{t-s}} \Vert \bg_s\Vert^2\Vert \bG_s \Vert^2}{\sqrt{\btwo\bnu_{s-1}}\sqrt{\bnu_s^2}}\right),
\end{align*}
and due to Eq. (\ref{eq: martingale_evidence}), we have $\E^{|\gF_t } x_t  \ge 0$, and thus  $S_t\triangleq \sum_{s=1}^t x_s$ ($S_0 =0 $) is a submartingale with respect to $\{\gF_t\}_t$. Also, as $\tau$ is a bounding stopping theorem, by optional stopping time, we obtain that $\E S_{\tau} \ge 0$, which gives 
\begin{align*}
   \frac{3}{16} \eta \E \sum_{t=1}^\tau  \frac{\Vert \bG_t \Vert^2}{\sqrt{\btwo\bnu_{t-1}}} \le& f(\bu_1)- \E f(\bu_{\tau+1})+ \frac{L_0\eta^2 }{2}  \E \sum_{t=1}^\tau\left(\frac{32}{(1-\bone)^2}\left\Vert \frac{\bg_t}{\sqrt{\bnu_t}} \right\Vert^2 +  \frac{4}{(1-\bone)^2}\left\Vert \frac{\bom_{t-1}}{\sqrt{\bnu_{t-1}}} \right\Vert^2\right)  
    \\
    &+ \frac{\eta^3}{1-\btwo} \Poly   \E \sum_{t=1}^\tau\left( \sum_{s=1}^t\sqrt[8]{\bone^{t-s}} \left(\frac{1}{\sqrt{\btwo \bnu_{s-1}}}- \frac{1}{\sqrt{ \bnu_{s}}}\right) \right)
    \\
     &+ \E \sum_{t=1}^\tau\left(\frac{\eta^3}{1-\btwo}+\eta\right) \Poly \left(\frac{\Vert\bG_t \Vert^2}{\sqrt{\btwo \bnu_{t-1}}}-\frac{\Vert\bG_{t+1} \Vert^2}{\sqrt{ \bnu_{t}}}\right)
     \\
    & +  \E \sum_{t=1}^\tau\frac{64\eta\sigma_1^2 L_1 }{\sqrt{(1-\bone)^3}  }   \frac{32 L_1 \eta^2}{(1-\bone)^{\frac{1}{2}}}  \left( \sum_{s=1}^t\frac{ \sqrt[8]{\bone^{t-s}} \Vert \bg_s\Vert^2\Vert \bG_s \Vert^2}{\sqrt{\btwo\bnu_{s-1}}\sqrt{\bnu_s^2}}\right)
    \\
   \le & f(\bu_1)- \E f(\bu_{\tau+1})+ \frac{L_0\eta^2 }{2}  \E \sum_{t=1}^\tau\left(\frac{32}{(1-\bone)^2}\left\Vert \frac{\bg_t}{\sqrt{\bnu_t}} \right\Vert^2 +  \frac{4}{(1-\bone)^2}\left\Vert \frac{\bom_{t-1}}{\sqrt{\bnu_{t-1}}} \right\Vert^2\right)  
    \\
    &+ \frac{\eta^3}{1-\btwo} \Poly   \E \sum_{t=1}^\tau \left(\frac{1}{\sqrt{\btwo \bnu_{t-1}}}- \frac{1}{\sqrt{ \bnu_{t}}}\right)
    \\
     &+ \E \sum_{t=1}^\tau\left(\frac{\eta^3}{1-\btwo}+\eta\right) \Poly \left(\frac{\Vert\bG_t \Vert^2}{\sqrt{\btwo \bnu_{t-1}}}-\frac{\Vert\bG_{t+1} \Vert^2}{\sqrt{ \bnu_{t}}}\right)
     \\
    & +  \E \sum_{t=1}^\tau\frac{64\eta\sigma_1^2 L_1 }{\sqrt{(1-\bone)^3}  }   \frac{256 L_1 \eta^2}{(1-\bone)^{\frac{3}{2}}}  \left( \frac{  \Vert \bg_t\Vert^2\Vert \bG_t \Vert^2}{\sqrt{\btwo\bnu_{t-1}}\sqrt{\bnu_t^2}}\right)
    \\
    \le& f(\bu_1)- \E f(\bu_{\tau+1})+ \frac{L_0\eta^2 }{2}  \E \sum_{t=1}^\tau\left(\frac{32}{(1-\bone)^2}\left\Vert \frac{\bg_t}{\sqrt{\bnu_t}} \right\Vert^2 +  \frac{4}{(1-\bone)^2}\left\Vert \frac{\bom_{t-1}}{\sqrt{\bnu_{t-1}}} \right\Vert^2\right)  
    \\
    &+ \frac{\eta^3}{1-\btwo} \Poly   \E \sum_{t=1}^\tau \left(\frac{1}{\sqrt{\btwo \bnu_{t-1}}}- \frac{1}{\sqrt{ \bnu_{t}}}\right)
    \\
     &+ \E \sum_{t=1}^\tau\left(\frac{\eta^3}{1-\btwo}+\eta\right) \Poly \left(\frac{\Vert\bG_t \Vert^2}{\sqrt{\btwo \bnu_{t-1}}}-\frac{\Vert\bG_{t+1} \Vert^2}{\sqrt{ \bnu_{t}}}\right)
    \\
   &+  \frac{1}{16} \eta \E \sum_{t=1}^\tau  \frac{\Vert \bG_t \Vert^2}{\sqrt{\btwo\bnu_{t-1}}},
\end{align*}
where the last inequality is because due to $\frac{\eta }{\sqrt{1-\btwo}} \le \frac{1-\bone}{128 L_1\sigma_1}$, following the similar reasoning of Eq. (\ref{eq: lower_bound_G_t}), we have 
\begin{align*}
    \frac{64\eta\sigma_1^2 L_1 }{\sqrt{(1-\bone)^3}  }   \frac{256 L_1 \eta^2}{(1-\bone)^{\frac{3}{2}}}  \left( \frac{  \Vert \bg_t\Vert^2\Vert \bG_t \Vert^2}{\sqrt{\btwo\bnu_{t-1}}\sqrt{\bnu_t^2}}\right) \le &\frac{128 \eta\sigma_1^2 L_1 }{\sqrt{(1-\bone)^3}  }   \frac{256 L_1 \eta^2}{(1-\btwo)(1-\bone)^{\frac{3}{2}}}   \left(\frac{\Vert\bG_t \Vert^2}{\sqrt{\btwo \bnu_{t-1}}}-\frac{\Vert\bG_{t+1} \Vert^2}{\sqrt{ \bnu_{t}}}\right)
    \\
     &+ \frac{\eta^3}{1-\btwo} \Poly   \sum_{s=1}^t\sqrt[8]{\bone^{t-s}} \left(\frac{1}{\sqrt{\btwo \bnu_{t-1}}}- \frac{1}{\sqrt{ \bnu_{t}}}\right)
     \\
     &+  \frac{1}{16} \eta \E \sum_{t=1}^\tau  \frac{\Vert \bG_t \Vert^2}{\sqrt{\btwo\bnu_{t-1}}}.
\end{align*}

By rearranging the inequality and due to Lemma \ref{lem: sum_momentum}, we obtain
\begin{align*}
   &\frac{1}{8} \eta \E \sum_{t=1}^\tau  \frac{\Vert \bG_t \Vert^2}{\sqrt{\btwo\bnu_{t-1}}} 
   \\
   \le & f(\bu_1)- \E f(\bu_{\tau+1})+\frac{ 64L_0\eta^2}{(1-\btwo)(1-\bone)^2}  \E  \left(\ln \frac{\bnu_\tau}{\bnu_0} -\tau\ln \btwo \right)
    \\
    &+ \frac{\eta^3}{1-\btwo} \Poly \frac{1}{\sqrt{\btwo \bnu_0}}+ \eta^3 \Poly  \E \sum_{t=1}^{\tau-1}  \frac{1}{\sqrt{ \btwo \bnu_{t}}}
    \\
     &+\left(\frac{\eta^3}{1-\btwo}+\eta\right) \Poly \left(\frac{\Vert\bG_1 \Vert^2}{\sqrt{\btwo \bnu_{0}}}\right)
     \\
     &+\E \sum_{t=1}^\tau\left(\eta^3+\eta (1-\btwo)\right) \Poly \left(\frac{\Vert\bG_t \Vert^2}{\sqrt{\btwo \bnu_{t-1}}}\right),
\end{align*}

Furthermore, as $T \ge \Poly$, $\eta^3= \frac{\eta}{T} \Poly$,  $\eta(1-\btwo)= \frac{\eta}{T} \Poly$, and $\Vert \bG_t\Vert  \ge \frac{\Poly}{\sqrt[4]{T}}$, we have
\begin{gather*}
    \eta^3 \Poly  \E \sum_{t=1}^{\tau-1}  \frac{1}{\sqrt{ \btwo \bnu_{t}}} \le \frac{1}{32}  \eta \E \sum_{t=1}^\tau  \frac{\Vert \bG_t \Vert^2}{\sqrt{\btwo\bnu_{t-1}}} ,
    \\
    \E \sum_{t=1}^\tau\left(\eta^3+\eta (1-\btwo)\right) \Poly \left(\frac{\Vert\bG_t \Vert^2}{\sqrt{\btwo \bnu_{t-1}}}\right) \le  \frac{1}{32}  \eta \E \sum_{t=1}^\tau  \frac{\Vert \bG_t \Vert^2}{\sqrt{\btwo\bnu_{t-1}}},
\end{gather*}
which thus leads to
\begin{align}
\nonumber
 &  \frac{1}{16} \eta \E \sum_{t=1}^\tau  \frac{\Vert \bG_t \Vert^2}{\sqrt{\btwo\bnu_{t-1}}} \le  f(\bu_1)- \E f(\bu_{\tau+1})+\frac{ 64 L_0\eta^2}{(1-\btwo)(1-\bone)^2}   \E \left(\ln \frac{\bnu_\tau}{\bnu_0} -\tau\ln \btwo \right)
    \\
    \label{eq: stopping_time}
     & + \frac{\eta^3}{1-\btwo} \Poly \frac{1}{\sqrt{\btwo \bnu_0}}+\left(\frac{\eta^3}{1-\btwo}+\eta\right) \Poly \left(\frac{\Vert\bG_1 \Vert^2}{\sqrt{\btwo \bnu_{0}}}\right)
    .
\end{align}
Similar to the proof of Theorem \ref{thm: stochastic  Adam}, we  then transfer the above bound to the bound of $\sum_{t=1}^{\tau} \Vert \bG_t \Vert$ by two rounds of divide-and-conquer. In the first round, we will bound $\E \ln \bnu_{\tau}$. To start with, we have that
\begin{align*}
   & \frac{\Vert \bG_{t}\Vert^2}{\sqrt{\btwo \bnu_{t-1}}}\mathds{1}_{\Vert G_{t}\Vert \ge \frac{\sigma_0}{\sigma_1}}\ge \frac{\frac{1}{2\sigma_1^2}\mathbb{E}^{|\gF_t}\Vert \bg_{t}\Vert^2}{\sqrt{\btwo \bnu_{t-1}}}\mathds{1}_{\Vert G_{t}\Vert \ge \frac{\sigma_0}{\sigma_1}}
\end{align*}

Furthermore, we have 
\begin{align}
\nonumber
   & \E \frac{\frac{ \bnu_{0}}{1-\btwo}}{\sqrt{\sum_{t=1}^\tau \Vert \bg_t \Vert^2 +\frac{\bnu_0}{1-\btwo}}}+\mathbb{E}  \sum_{t=1}^{\tau}\frac{ \Vert \bg_{t}\Vert^2}{\sqrt{\sum_{t=1}^\tau \Vert \bg_t \Vert^2 +\frac{\bnu_0}{1-\btwo}}}\mathds{1}_{\Vert \bG_{t}\Vert  < \frac{\sigma_0}{\sigma_1}}
    \\
\nonumber
    \le &\E \frac{\frac{ \bnu_{0}}{1-\btwo}}{\sqrt{\sum_{t=1}^\tau \Vert \bg_t \Vert^2\mathds{1}_{\Vert \bG_{t}\Vert  < \frac{\sigma_0}{\sigma_1}} +\frac{\bnu_0}{1-\btwo}}}+\mathbb{E} \sum_{t=1}^\tau\frac{\Vert \bg_{t}\Vert^2}{\sqrt{\sum_{t=1}^\tau \Vert \bg_t \Vert^2\mathds{1}_{\Vert \bG_{t}\Vert  < \frac{\sigma_0}{\sigma_1}} +\frac{\bnu_0}{1-\btwo}}}\mathds{1}_{\Vert \bG_{t}\Vert  < \frac{\sigma_0}{\sigma_1}}
    \\
    =& \dE\sqrt{\sum_{t=1}^\tau \Vert \bg_t \Vert^2\mathds{1}_{\Vert \bG_{t}\Vert  < \frac{\sigma_0}{\sigma_1}} +\frac{\bnu_0}{1-\btwo}}
    \nonumber
    \le \sqrt{\E\sum_{t=1}^\tau \Vert \bg_t \Vert^2\mathds{1}_{\Vert \bG_{t}\Vert  < \frac{\sigma_0}{\sigma_1}} +\frac{ \bnu_{0}}{1-\btwo}}
    \\
    \nonumber
    \overset{(\star)}{ \le }&\sqrt{2\sigma_0^2\E \tau  +\frac{ \bnu_{0}}{1-\btwo}}\le \sqrt{2\sigma_0^2T +\frac{ \bnu_{0}}{1-\btwo}},
\end{align}
where inequality $(\star)$ is due to $E^{|\gF_t} \Vert \bg_t \Vert^2\mathds{1}_{\Vert \bG_{t}\Vert  < \frac{\sigma_0}{\sigma_1}} \le 2\sigma_0^2$ and optimal stopping theorem.

Conclusively, we obtain 
\begin{align*}
&\dE{\sqrt{\sum_{t=1}^\tau \Vert \bg_t \Vert^2 +\frac{\bnu_0}{1-\btwo}}}
\\
    =&\left(\E \frac{\frac{ \bnu_{0}}{1-\btwo}}{\sqrt{\sum_{t=1}^\tau \Vert \bg_t \Vert^2 +\frac{\bnu_0}{1-\btwo}}}+\mathbb{E} \sum_{t=1}^\tau \frac{ \Vert \bg_{t}\Vert^2}{\sqrt{\sum_{t=1}^\tau \Vert \bg_t \Vert^2 +\frac{\bnu_0}{1-\btwo}}}\mathds{1}_{\Vert \bG_{t}\Vert  < \frac{\sigma_0}{\sigma_1}}\right.
    \\
    &+\left. \mathbb{E} \sum_{t=1}^\tau\frac{ \Vert \bg_{t}\Vert^2}{\sqrt{\sum_{t=1}^\tau \Vert \bg_t \Vert^2 +\frac{\bnu_0}{1-\btwo}}}\mathds{1}_{\Vert \bG_{t}\Vert  \ge \frac{\sigma_0}{\sigma_1}}\right)
    \\
    \le & \sqrt{\frac{\bnu_{0}}{1-\btwo}  +2\sigma_0^2T}+2\sqrt{1-\btwo}\dE\sum_{t=1}^\tau\frac{\Vert \bg_{t}\Vert^2}{\sqrt{\btwo \bnu_{t-1}}}\mathds{1}_{\Vert \bG_{t}\Vert\ge \frac{\sigma_0}{\sigma_1}}
    \\
 \overset{(\circ)}{\le} & \sqrt{ \frac{\bnu_{0}}{1-\btwo} +2\sigma_0^2T}+2\sigma_1^2 \sqrt{1-\btwo}\dE\sum_{t=1}^\tau\frac{\Vert \bG_{t}\Vert^2}{\sqrt{\btwo \bnu_{t-1}}},
\end{align*}
where inequality $(\circ)$ is due to optimal stopping theorem.

Then by substituting $\dE\sum_{t=1}^\tau\frac{\Vert \bG_{t}\Vert^2}{\sqrt{\btwo \bnu_{t-1}}}$ we obtain that
\small
\begin{align*}
   &\dE{\sqrt{\sum_{t=1}^\tau \Vert \bg_t \Vert^2 +\frac{\bnu_0}{1-\btwo}}}
\\
    \le & \sqrt{  \frac{\bnu_{0}}{1-\btwo} +2\sigma_0^2T }+\frac{32\sigma_1^2 \sqrt{1-\btwo}}{\eta} \frac{1}{16}\eta \dE\sum_{t=1}^\tau\frac{\Vert \bG_{t}\Vert^2}{\sqrt{\btwo \bnu_{t-1}}}
    \\
    \le &  \sqrt{  \frac{\bnu_{0}}{1-\btwo} +2\sigma_0^2T }+\frac{32\sqrt{1-\btwo}\sigma_1^2}{\eta} \left(f(\bu_1)- f^*+\frac{ 64 L_0\eta^2}{(1-\btwo)(1-\bone)^2}   \E \left(\ln \frac{\bnu_\tau}{\bnu_0} -T\ln \btwo \right)
    \right.
    \\
     &+\left.+ \frac{\eta^3}{1-\btwo} \Poly \frac{1}{\sqrt{\btwo \bnu_0}}\left(\frac{\eta^3}{1-\btwo}+\eta\right) \Poly \left(\frac{\Vert\bG_1 \Vert^2}{\sqrt{\btwo \bnu_{0}}}\right)\right)
    \\
  \le &   \sqrt{  \frac{\bnu_{0}}{1-\btwo} +2\sigma_0^2T }+\frac{32\sqrt{1-\btwo}\sigma_1^2}{\eta} \left(f(\bw_1)- f^*+\frac{ 128 L_0\eta^2}{(1-\btwo)(1-\bone)^2}   \left(\ln \frac{  \E \sqrt{1-\btwo}\sqrt{\sum_{t=1}^\tau \Vert \bg_t \Vert^2 +\frac{\bnu_0}{1-\btwo}}}{\sqrt{\bnu_0}} -T\ln \btwo \right)\right.
 \\
     &   \left. + \frac{\eta^3}{1-\btwo} \Poly \frac{1}{\sqrt{\btwo \bnu_0}}
   +\left(\frac{\eta^3}{1-\btwo}+\eta\right) \Poly \left(\frac{\Vert\bG_1 \Vert^2}{\sqrt{\btwo \bnu_{0}}}\right)\right).
\end{align*}
\normalsize
Multiplying both sides of the above inequality by $\sqrt{1-\btwo}$ then gives 
\begin{align*}
   \sqrt{1-\btwo} \dE{\sqrt{\sum_{t=1}^\tau \Vert \bg_t \Vert^2 +\frac{\bnu_0}{1-\btwo}}}& \le 3 \sqrt{  \bnu_{0} +2\sigma_0^2T(1-\btwo) } +\frac{1}{4} \ln \E \sqrt{1-\btwo}\sqrt{\sum_{t=1}^\tau \Vert \bg_t \Vert^2 +\frac{\bnu_0}{1-\btwo}},
\end{align*}
where we use $\eta = \frac{\Poly}{\sqrt{T}}$, $1-\beta_2 = \frac{\Poly}{T}$, and $T \ge \Poly$. Therefore, we obtain  $ \sqrt{1-\btwo} \dE{\sqrt{\sum_{t=1}^\tau \Vert \bg_t \Vert^2 +\frac{\bnu_0}{1-\btwo}}}\le  6  \sqrt{  \bnu_{0} +2\sigma_0^2T(1-\btwo) }$.

Therefore, Eq. (\ref{eq: stopping_time}) can be rewritten as 
\begin{align}
     \nonumber
        &\frac{1}{16} \eta \E \sum_{t=1}^\tau\frac{\Vert \bG_t \Vert^2}{\sqrt{\btwo\bnu_{t-1}}}\le  f(\bu_1)- \E f(\bu_{\tau+1})+\frac{ 128 L_0\eta^2}{(1-\btwo)(1-\bone)^2}   \left(\ln \frac{6  \sqrt{  \bnu_{0} +2\sigma_0^2T(1-\btwo) }}{\sqrt{\bnu_0}} -T\ln \btwo \right)
        \\
        \nonumber
   & + \frac{\eta^3}{1-\btwo} \Poly \frac{1}{\sqrt{\btwo \bnu_0}}
    \\
\nonumber
     &+\left(\frac{\eta^3}{1-\btwo}+\eta\right) \Poly \left(\frac{\Vert\bG_1 \Vert^2}{\sqrt{\btwo \bnu_{0}}}\right). 
    \end{align}

We then execute the second round of divide-and-conquer. To begin with, we have that
\begin{equation}
\nonumber
   \dE \sum_{t=1}^\tau\left[  \frac{\Vert \bG_{t}\Vert^2}{\sqrt{\btwo \bnu_{t-1}}}\mathds{1}_{\Vert G_{t}\Vert \ge \frac{\sigma_0}{\sigma_1}} \right]\le \dE\sum_{t=1}^\tau\left[  \frac{\Vert \bG_{t}\Vert^2}{\sqrt{\btwo \bnu_{t-1}}} \right].
\end{equation}

On the other hand, we have that
\begin{align*}
    \frac{\Vert \bG_{t}\Vert^2}{\sqrt{\btwo \bnu_{t-1}}}\mathds{1}_{\Vert G_{t}\Vert \ge \frac{\sigma_0}{\sigma_1}}\ge
   \frac{1}{2\sigma_1^2} \dE^{|\fil_t}\frac{\Vert \bg_t\Vert^2}{\sqrt{\bnu_{t}}+\sqrt{\btwo \bnu_{t-1}}}\mathds{1}_{\Vert G_{t}\Vert \ge \frac{\sigma_0}{\sigma_1}} \ge \frac{1}{2(1-\btwo)\sigma_1^2} \dE^{|\gF_t}\left[\left(\sqrt{\bnu_{t}}-\sqrt{\btwo\bnu_{t-1}}\right)\mathds{1}_{\Vert G_{t}\Vert \ge \frac{\sigma_0}{\sigma_1}}\right].
\end{align*}

Meanwhile, recall $\{\bar{\bnu}_{t}\}_{t=0}^{\infty}$ as $\bar{\bnu}_{0}= \bnu_{0}$, $\bar{\bnu}_{t}= \btwo\bar{\bnu}_{t-1}+(1-\btwo)\vert g_{t} \vert^2\mathds{1}_{\Vert \bG_{t}\Vert  < \frac{\sigma_0^2}{\sigma_1^2}}$ and $\bar{\bnu}_{t}\le \bnu_{t} $. Therefore
\begin{align*}
    & \dE\sum_{t=1}^\tau\left[\left(\sqrt{\bnu_{t}}-\sqrt{\btwo\bnu_{t-1}}\right)\mathds{1}_{\Vert \bG_{t}\Vert  < \frac{\sigma_0^2}{\sigma_1^2}} \right]
    \\
    =& \dE \sum_{t=1}^\tau\left(\sqrt{\btwo \bnu_{t-1}+(1-\btwo)\Vert g_{t}\Vert^2 }-\sqrt{\btwo\bnu_{t-1}}\right)\mathds{1}_{\Vert \bG_{t}\Vert  < \frac{\sigma_0^2}{\sigma_1^2}}
    \\
    \le & \dE\sum_{t=1}^\tau\left(\sqrt{\btwo^2 \bar{\bnu}_{t-1}+ \btwo(1-\btwo)\Vert g_{t}\Vert^2 + (1-\btwo )\sigma_0^2}-\sqrt{\btwo(\btwo\bar{\bnu}_{t-1}+ (1-\btwo )\sigma_0^2)}\right)\mathds{1}_{\Vert \bG_{t}\Vert  < \frac{\sigma_0^2}{\sigma_1^2}}
    \\
    \le &  \dE\sum_{t=1}^\tau\left(\sqrt{\btwo \bar{\bnu}_{t-1}+ (1-\btwo)\Vert g_{t}\Vert^2\mathds{1}_{\Vert \bG_{t}\Vert  < \frac{\sigma_0^2}{\sigma_1^2}} }-\sqrt{\btwo\bar{\bnu}_{t-1}}\right)
    \\
    =&  \dE\sum_{t=1}^\tau\left(\sqrt{\bar{\bnu}_{t} }-\sqrt{\btwo\bar{\bnu}_{t-1}}\right)
    \\
    =& \dE \sqrt{\bar{\bnu}_{\tau}}+(1-\sqrt{\btwo})\sum_{t=1}^{\tau-1} \dE\sqrt{\bar{\bnu}_{t}} - \dE\sqrt{\btwo\bnu_0}
    \\
    \le & \dE \sqrt{\bar{\bnu}_{\tau}}+(1-\sqrt{\btwo})\sum_{t=1}^{T} \dE\sqrt{\bar{\bnu}_{t}} - \dE\sqrt{\btwo\bnu_0}
    \\
    \le & \dE \sqrt{\bar{\bnu}_{\tau}}+(1-\sqrt{\btwo})T \sigma_0 - \dE\sqrt{\btwo\bnu_0}.
\end{align*}
All in all, summing the above two inequalities together, we obtain that
\small
\begin{align}
\nonumber
&\dE \sqrt{\bnu_{\tau} }+(1-\sqrt{\btwo}) \dE\sum_{t=1}^{\tau-1}\sqrt{\bnu_{t} } - \sqrt{\btwo{\bnu}_{0}}
\\
\nonumber
    =& \dE \sum_{t=1}^\tau \left(\sqrt{\bnu_{t} }-\sqrt{\btwo{\bnu}_{t-1}}\right)
    \\
\nonumber
    \le & \dE  \sum_{t=1}^\tau\left(\sqrt{\bnu_{t} }-\sqrt{\btwo{\bnu}_{t-1}}\right)\mathds{1}_{\Vert G_{t}\Vert \ge \frac{\sigma_0}{\sigma_1}}
    +\dE \sum_{t=1}^\tau \left(\sqrt{\bnu_{t} }-\sqrt{\btwo{\bnu}_{t-1}}\right)\mathds{1}_{\Vert \bG_{t}\Vert  < \frac{\sigma_0^2}{\sigma_1^2}}
    \\
\nonumber
\le & 2(1-\btwo)\sigma_1^2\sum_{t=1}^\tau\dE\left[  \frac{\Vert \bG_{t}\Vert^2}{\sqrt{\btwo \bnu_{t-1}}} \right]+\dE \sqrt{\bar{\bnu}_{\tau}}+(1-\sqrt{\btwo})T \sigma_0 - \dE\sqrt{\btwo\bnu_0}.
\end{align}
\normalsize
Since  $\bnu_0 = \bar{\bnu}_0$ and $\dE \sqrt{\bnu_{\tau} }  \ge \dE\sqrt{\bar{\bnu}_{\tau} } $, we obtain

\begin{align}
\nonumber
    (1-\sqrt{\btwo}) \dE\sum_{t=0}^{\tau-1}\sqrt{\bnut_{t} }
\le&  {2(1-\btwo)\sigma_1^2}\dE\sum_{t=1}^\tau\left[  \frac{\Vert \bG_{t}\Vert^2}{\sqrt{\btwo \bnu_{t-1}}} \right]+ (1-\sqrt{\btwo}) T \sigma_0
.
\end{align}
\normalsize

Dividing both sides of the above equation by $1-\sqrt{\btwo}$ then gives 
\begin{align}
\nonumber
   \dE  \sum_{t=0}^{\tau-1}\sqrt{\bnut_{t} }
\le&   {4\sigma_1^2}\dE\sum_{t=1}^\tau\left[  \frac{\Vert \bG_{t}\Vert^2}{\sqrt{\btwo \bnu_{t-1}}} \right]+ T \sigma_0.
\end{align}
By applying Eq. (\ref{eq: first_stage_renewed}) and the constraint of $\tau$, we obtain that 
\begin{align*}
    \dE \sum_{t=0}^{\tau-1}\sqrt{\bnu_{t} } \le &T \sigma_0+64\frac{\sigma_1^2}{\eta} \left( f(\bu_1)- \E f(\bu_{\tau+1})+\frac{ 64 L_0\eta^2}{1-\btwo}   \left(\ln \frac{6  \sqrt{  \bnu_{0} +2\sigma_0^2T(1-\btwo) }}{\sqrt{\bnu_0}} -T\ln \btwo \right)\right.
        \\
        \nonumber
   & + \left. \frac{\eta^3}{1-\btwo} \Poly \frac{1}{\sqrt{\btwo \bnu_0}}\right.
    \\
\nonumber
     &+\left. \left(\frac{\eta^3}{1-\btwo}+\eta\right) \Poly \left(\frac{\Vert\bG_1 \Vert^2}{\sqrt{\btwo \bnu_{0}}}\right)\right)
         \\
         \le & 4T\sigma_0.
\end{align*}
Here the last inequality is due to $\eta = \frac{\Poly}{\sqrt{T}}$, $1-\beta_2 = \frac{\Poly}{T}$, and $T \ge \Poly$.
Therefore, by Cauchy's inequality, we obtain that
\begin{align*}
    \left(\E \sum_{t=1}^{\tau} \Vert \nabla f(\bw_t) \Vert\right)^2 \le & \left(  \dE \sum_{t=0}^{\tau-1}\sqrt{\bnu_{t} } \right) \left(\dE\sum_{t=1}^{\tau}\left[  \frac{\Vert \bG_{t}\Vert^2}{\sqrt{\btwo \bnu_{t-1}}} \right]\right)
    \\
    \le & 4T\sigma_0\times \frac{16}{\eta} \left(f(\bw_1)- f^*+\frac{ 64 L_0\eta^2}{1-\btwo}   \left(\ln \frac{6  \sqrt{  \bnu_{0} +2\sigma_0^2T(1-\btwo) }}{\sqrt{\bnu_0}} -T\ln \btwo \right)\right.
        \\
        \nonumber
   & + \left. \frac{\eta^3}{1-\btwo} \Poly \frac{1}{\sqrt{\btwo \bnu_0}}\right.
    \\
\nonumber
     &+\left. \left(\frac{\eta^3}{1-\btwo}+\eta\right) \Poly \left(\frac{\Vert\bG_1 \Vert^2}{\sqrt{\btwo \bnu_{0}}}\right)\right)
     \\
     \overset{(*)}{\le }& 4T\sigma_0\times \frac{16}{\eta} \left(3(f(\bw_1)- f^*)+\frac{ 128 L_0\eta^2}{(1-\btwo)(1-\bone)^2}   \left(\ln \frac{6  \sqrt{  \bnu_{0} +2\sigma_0^2T(1-\btwo) }}{\sqrt{\bnu_0}} -T\ln \btwo \right)\right)
     \\
    \overset{(\bullet)}{\le} &  64T\sigma_0\left(3\sqrt{TL_0(f(\bw_1)- f^*)}+\frac{ 128 L_0\eta}{(1-\btwo)(1-\bone)^2T} T  \left(\ln \frac{6  \sqrt{  \bnu_{0} +2\sigma_0^2T(1-\btwo) }}{\sqrt{\bnu_0}} \right)\right)
    \\
    \le & 64T\sigma_0\left(387(1-\bone)\sqrt{TL_0(f(\bw_1)- f^*)}\right),
\end{align*}
where inequality $(*)$ is due to   $\eta = \frac{\Poly}{\sqrt{T}}$, $1-\beta_2 = \frac{\Poly}{T}$, and $T \ge \Poly$, inequality $(\bullet)$ is due to that $\eta = \frac{\sqrt{f(\bw_1)-f^*}}{\sqrt{L_0}}$, and last inequality is due to $\frac{1}{(1-\btwo) T}\ln \frac{6  \sqrt{  \bnu_{0} +2\sigma_0^2T(1-\btwo) }}{\sqrt{\bnu_0}}\le 6 $.

We then consider two cases: $\tau <T$ and $\tau =T$: for the first case, we have that according to the definition of $\tau$ 
\begin{equation*}
    \E \min_{t\in [1,T]} \Vert \bG_t \Vert \mathds{1}_{\tau<T} \le \frac{\sqrt[4]{\sigma_0^2 L_0 (f(\bw_1)-f^*)}}{\sqrt[4]{T}} .
\end{equation*}
For the latter case, we have 
\begin{align*}
    \E \min_{t\in [1,T]} \Vert \bG_t \Vert \mathds{1}_{\tau=T} \le  \frac{1}{T}\left(\E \sum_{t=1}^{T} \Vert \nabla f(\bw_t) \Vert \mathds{1}_{\tau=T}\right) \le  \frac{1}{T}\left(\E \sum_{t=1}^{\tau} \Vert \nabla f(\bw_t) \Vert \right)\le \frac{256\sqrt[4]{\sigma_0^2 L_0 (f(\bw_1)-f^*)} }{\sqrt[4]{T}}.
\end{align*}

Summing the two inequalities above complete the proof.
 \end{proof}

\subsection{Proof of Theorem \ref{thm: parameter_agnostic}}

\begin{proof}
   As described in Section \ref{sec: reach_lower_bound}, the proof immediately follows by several modifications of the proof of Theorem \ref{thm: attain_lower_bound_appen}: 
    \begin{itemize}
        \item We start our analysis for $t \ge \frac{L_1^4128^4\sigma_1^4}{(1-\bone)^4}$. For $t\le \frac{L_1^4128^4\sigma_1^4}{(1-\bone)^4}$, the function value can be bounded by constant as $\frac{L_1^4128^4\sigma_1^4}{(1-\bone)^4}$ is independent of $t$;
        \item For $t \ge \frac{L_1^4128^4\sigma_1^4}{(1-\bone)^4}$, we have $\max\{\Vert \bu_{t}-\bw_t \Vert,\Vert \bu_{t+1}-\bw_t \Vert\} \le \frac{1}{L_1}$ since $\Vert \bw_{t+1}-\bw_t \Vert \le \frac{1}{\sqrt[4]{t}}$ which also can be used to prove Eq. (\ref{eq: lower_bound_G_t}). 
    \end{itemize}

    The proof is completed.
\end{proof}


\end{document}